\documentclass[twoside,11pt]{article}

\usepackage{jmlr2e}
\usepackage{hyperref}
\usepackage{courier}
\usepackage{epsfig}
\usepackage{graphicx}
\usepackage{amsmath}
\usepackage{mathtools}
\usepackage{amssymb}
\usepackage{verbatim}
\usepackage{booktabs}
\usepackage{algpseudocode}
\usepackage{amscd}
\usepackage{times}
\usepackage{subfigure} 
\usepackage{algorithm}
\usepackage{color}
\usepackage{enumerate}
\usepackage{tikz}
\usepackage{multirow}

\algrenewcommand\textproc{}

\newtheorem{thm}{Theorem}

\newtheorem{lem}[thm]{Lemma}

\newtheorem{dfn}[thm]{Definition}

\def\A{\mathcal{A}}
\def\bA{\boldsymbol{A}}
\def\bB{\boldsymbol{B}}
\def\C{\boldsymbol{\mathcal{C}}}
\def\bC{\boldsymbol{C}}

\def\bI{\boldsymbol{I}}
\def\bL{\boldsymbol{L}}

\def\N{\mathcal{N}}
\def\bO{\boldsymbol{\mathcal{O}}}

\def\Y{\mathcal{Y}}
\def\bY{\boldsymbol{\Y}}
\def\bX{\boldsymbol{\mathcal{X}}}

\def\cS{\mathbb{S}}
\def\Sp{\mathbb{S}}
\def\V{\mathcal{V}}
\def\H{\mathcal{H}}
\def\I{\boldsymbol{I}}
\def\J{\mathcal{J}}

\def\bP{\boldsymbol{P}}

\def\Re{\mathbb{R}}
\def\R{\mathcal{R}}

\def\U{\mathcal{U}}
\def\bU{\boldsymbol{U}}

\def\0{\boldsymbol{0}}
\def\1{\boldsymbol{1}}

\def\b{\boldsymbol{b}}

\def\e{\boldsymbol{e}}

\def\bzeta{\boldsymbol{\zeta}}
\def\boldeta{\boldsymbol{\eta}}
\def\bxi{\boldsymbol{\xi}}

\def\bchi{\boldsymbol{\chi}}
\def\f{\boldsymbol{f}}
\def\h{\boldsymbol{h}}
\def\bn{\boldsymbol{n}}

\def\y{\boldsymbol{y}}
\def\x{\boldsymbol{x}}

\def\w{\boldsymbol{w}}

\def\e{\boldsymbol{e}}

\def\transpose{\top}
\def\st{\text{s.t.}}

\DeclareMathOperator*{\Sgn}{Sgn}
\DeclareMathOperator*{\Span}{Span}
\DeclareMathOperator*{\Sign}{Sign}

\DeclareMathOperator*{\argmin}{argmin}

\DeclareMathOperator*{\Trace}{Trace}
\DeclareMathOperator*{\Card}{Card}

\newcommand{\myparagraph}[1]{\noindent\textbf{#1.}}

\newcommand{\ra}[1]{\renewcommand{\arraystretch}{#1}}

\begin{document}
	
\title{Hyperplane Clustering Via Dual Principal Component Pursuit}

\author{Manolis C. Tsakiris and Ren\'e Vidal \email m.tsakiris,rvidal@jhu.edu \\
	Center for Imaging Science \\
	Johns Hopkins University \\
	Baltimore, MD, 21218, USA}

\editor{TBD}

\maketitle

\begin{abstract}
State-of-the-art methods for clustering data drawn from a union of subspaces are based on sparse and low-rank representation theory. Existing results guaranteeing the correctness of such methods require the dimension of the subspaces to be small relative to the dimension of the ambient space. When this assumption is violated, as is, for example, in the case of hyperplanes, existing methods are either computationally too intense (e.g., algebraic methods) or lack theoretical support (e.g., K-hyperplanes or RANSAC). The main theoretical contribution of this paper is to extend the theoretical analysis of a recently proposed single subspace learning algorithm, called Dual Principal Component Pursuit (DPCP), to the case where the data are drawn from of a union of hyperplanes. 
To gain insight into the expected properties of the non-convex $\ell_1$ problem associated with DPCP (\emph{discrete problem}), we develop a geometric analysis of a closely related \emph{continuous} optimization problem. Then transferring this analysis to the discrete problem, our results state that as long as the hyperplanes are sufficiently separated, the dominant hyperplane is sufficiently dominant and the points are uniformly distributed (in a deterministic sense) inside their associated hyperplanes, then the non-convex DPCP problem has a unique (up to sign) global solution, equal to the normal vector of the dominant hyperplane. This suggests a sequential 
hyperplane learning algorithm, which first learns the dominant hyperplane by applying DPCP to the data. In order to avoid hard thresholding of the points which is sensitive to the choice of the thresholding parameter, all points are weighted according to their distance to that hyperplane, and a second hyperplane is computed by applying DPCP to the weighted data, and so on. Experiments on
corrupted synthetic data show that this DPCP-based sequential algorithm dramatically improves over similar sequential algorithms, which learn the dominant hyperplane via state-of-the-art single subspace learning methods (e.g., with RANSAC or REAPER). Finally, 3D plane clustering experiments on real 3D point clouds show that a K-Hyperplanes DPCP-based scheme, which computes the normal vector of each cluster via DPCP, instead of the classic SVD, is very competitive to state-of-the-art approaches (e.g., RANSAC or SVD-based K-Hyperplanes).
\end{abstract}

\section{Introduction}\label{section:Introduction} 

\indent \myparagraph{Subspace Clustering} Over the past fifteen years the model of a union of linear subspaces, also called a \emph{subspace arrangement} \citep{Derksen:JPAA07}, has gained significant popularity in pattern recognition and computer vision \citep{Vidal:SPM11-SC}, often replacing the classical model of a single linear subspace, associated to the well-known Principal Component Analysis (PCA) \citep{Hotelling-1933,Pearson-1901,Jolliffe-2002}. This has led
to a variety of algorithms that attempt to cluster a collection of data drawn from a subspace arrangement, giving rise to the challenging field of subspace clustering \citep{Vidal:SPM11-SC}. Such techniques can be iterative \citep{Bradley:JGO00,Tseng:JOTA00,Zhang:WSM09}, statistical \citep{Tipping-mixtures:99,Gruber-Weiss:CVPR04}, information-theoretic \citep{Ma:PAMI07}, algebraic \citep{Vidal:CVPR03-gpca,Vidal:PAMI05,Tsakiris:SIAM17}, spectral \citep{Boult:WMU91,Costeira:IJCV98,Kanatani:ICCV01,Chen:IJCV09}, or based on sparse  \citep{Elhamifar:CVPR09,Elhamifar:ICASSP10,Elhamifar:TPAMI13} and low-rank \citep{Liu:ICML10,Favaro:CVPR11,Liu:TPAMI13,Vidal:PRL14} representation theory. 

\indent \myparagraph{Hyperplane Clustering} A special class of subspace clustering is that of \emph{hyperplane clustering}, which arises when the data are drawn from a union of hyperplanes, i.e., a 
\emph{hyperplane arrangement}. Prominent applications include projective motion segmentation \citep{Vidal:IJCV06-multibody,Vidal:PAMI08}, 3D point cloud analysis \citep{Sampath:2010segmentation} and hybrid system identification \citep{Bako:Automatica11,Ma-Vidal:HSCC05}. Even though in some ways hyperplane clustering is simpler than general subspace clustering, since, e.g., the dimensions of the subspaces are equal and known a-priori, modern \emph{self-expressiveness-based} subspace clustering methods, such as \cite{Liu:TPAMI13,Lu:ECCV12,Elhamifar:TPAMI13}, in principle do not apply in this case, because they require small \emph{relative} subspace dimensions. \footnote{The relative dimension of a linear subspace is the ratio $d/D$, where $d$ is the dimension of the subspace and $D$ is the ambient dimension.}

From a theoretical point of view, one of the most appropriate methods for hyperplane clustering is Algebraic Subspace Clustering (ASC) \citep{Vidal:CVPR03-gpca,Vidal:PAMI05,Vidal:IJCV08,Tsakiris:Asilomar14,Tsakiris:FSASCICCV15,Tsakiris:SIAM17,Tsakiris:AffinePAMI17}, which gives closed-form solutions by means of factorization \citep{Vidal:CVPR03-gpca} or differentiation \citep{Vidal:PAMI05} of polynomials. However, the main drawback of ASC is its exponential complexity\footnote{The issue of robustness to noise for ASC has been recently addressed in \cite{Tsakiris:FSASCICCV15,Tsakiris:SIAM17}.} in the number $n$ of hyperplanes and the ambient dimension $D$, which makes it impractical in many settings. Another method that is theoretically justifiable for clustering hyperplanes is Spectral Curvature Clustering (SCC) \citep{Chen:IJCV09}, which is based on computing a $D$-fold affinity between all $D$-tuples of points in the dataset. As in the case of ASC, SCC is characterized by combinatorial complexity and becomes cumbersome for large $D$; even though it is possible to reduce its complexity, this comes at the cost of significant performance degradation. On the other hand, the intuitive classical method of $K$-hyperplanes (KH) \citep{Bradley:JGO00}, which alternates between assigning clusters and fitting a new normal vector to each cluster with PCA, is perhaps the most practical method for hyperplane clustering, since it is simple to implement, it is robust to noise and its complexity depends on the maximal allowed number of iterations. However, KH is sensitive to outliers and is guaranteed to converge only to a local minimum; hence multiple restarts are in general required. Median $K$-Flats (MKF) \citep{Zhang:WSM09} shares a similar objective function as KH, but uses the $\ell_1$-norm instead of the $\ell_2$-norm, in an attempt to gain robustness to outliers. MKF minimizes its objective function via a stochastic gradient descent scheme, and searches directly for a basis of each subspace, which makes it slower to converge for hyperplanes. Finally, we note that any single subspace learning method, such as RANSAC \citep{RANSAC} or REAPER \citep{Lerman:FCM15}, can be applied in a sequential fashion to learn a union of hyperplanes, by first learning the first dominant hyperplane, then removing the points lying close to it, then learning a second dominant hyperplane, and so on.

\indent \myparagraph{DPCP: A single subspace method for high relative dimensions} Recently, an $\ell_1$ method was introduced in the context of single subspace learning with outliers, called Dual Principal Component Pursuit (DPCP) \citep{Tsakiris:DPCPICCV15,Tsakiris:DPCP-ArXiv17}, which aims at recovering the orthogonal complement of a subspace in the presence of outliers. Since the orthogonal complement of a hyperplane is one-dimensional, DPCP is particularly suited for hyperplanes. DPCP searches for the normal to a hyperplane by solving a non-convex $\ell_1$ minimization problem on the sphere, or alternatively a recursion of linear programming relaxations. Assuming the dataset is normalized to unit $\ell_2$-norm and consists of points uniformly distributed on the great circle defined by a hyperplane (inliers), together with arbitrary points uniformly distributed on the sphere (outliers), \cite{Tsakiris:DPCPICCV15,Tsakiris:DPCP-ArXiv17} gave conditions under which the normal to the hyperplane is the unique global solution to the non-convex $\ell_1$ problem, as well as the limit point of a recursion of linear programming relaxations, the latter being reached after a finite number of iterations. 

\indent \myparagraph{Contributions} Motivated by the robustness of DPCP to outliers--DPCP was shown to be the only method capable of recovering the normal to the hyperplane in the presence of about $70\%$ outliers inside a $30$-dimensional ambient space \footnote{We note that the problem of robust PCA or subspace clustering for subspaces of large relative dimension becomes very challenging as the ambient dimension increases; see Section \ref{subsection:ExperimentsSynthetic}.} --one could naively use it for hyperplane clustering by recovering 
the normal to a hyperplane one at a time, while treating points from other hyperplanes as outliers. However, such a scheme is not a-priori guaranteed to succeed, since the outliers are now clearly structured, contrary to the theorems of correctness of \cite{Tsakiris:DPCPICCV15,Tsakiris:DPCP-ArXiv17} that assume that the outliers are uniformly distributed on the sphere. It is precisely this theoretical gap that we bridge in this paper: we show that as long as the hyperplanes are sufficiently separated, the dominant hyperplane is sufficiently dominant and the points are uniformly distributed (in a deterministic sense) inside their associated hyperplanes, then the non-convex DPCP problem has a unique (up to sign) global solution, equal to the normal vector of the dominant hyperplane. This suggests a sequential 
hyperplane learning algorithm, which first learns the dominant hyperplane, and weights all points according to their distance to that hyperplane. Then DPCP applied on the weighted data yields the second dominant hyperplane, and so on. Experiments on
corrupted synthetic data show that this DPCP-based sequential algorithm dramatically improves over similar sequential algorithms, which learn the dominant hyperplane via state-of-the-art single subspace learning methods (e.g., with RANSAC). Finally, 3D plane clustering experiments on real 3D point clouds show that a K-Hyperplanes DPCP-based scheme, which computes the normal vector of each cluster via DPCP, instead of the classic SVD, is very competitive to state-of-the-art approaches (e.g., RANSAC or SVD-based K-Hyperplanes).

\indent \myparagraph{Notation} For a positive integer $n$,  $[n]:=\left\{1,\dots,n\right\}$. For a vector $\w \in \Re^D$ we let $\hat{\w} = \w / \left\| \w \right\|_2$ if $\w \neq \0$, and $\hat{\w}=\0$ otherwise. $\Sp^{D-1}$ is the unit sphere of $\Re^D$. For two vectors $\w_1,\w_2 \in \Sp^{D-1}$, the \emph{principal angle} between $\w_1$ and $\w_2$ is the unique angle $\theta \in [0,\pi/2]$ with $\cos(\theta) = |\w_1^\transpose \w_2|$. $\1$ denotes the vector of all ones, LHS stands for \emph{left-hand-side} and RHS stands for \emph{right-hand-side}. Finally, for a set $\bX$ we denote by $\Card(\bX)$ the cardinality of $\bX$.

\indent \myparagraph{Paper Organization} 
The rest of the paper is organized as follows. In \S \ref{section:PriorArt} we review the prior-art in generic hyperplane clustering. In \S \ref{section:TheoreticalContributions} we discuss the theoretical contributions of this paper; proofs are given in \S \ref{section:Proofs}. \S \ref{section:Algorithms} describes the algorithmic contributions of this paper, and \S \ref{section:Experiments} contains experimental evaluations of the proposed methods. 

\section{Prior Art} \label{section:PriorArt}
Suppose given a dataset $\bX = [\x_1,\dots, \x_N ]$ of $N$ points of $\Re^D$, 
such that $N_i$ points of $\bX$, say $\bX_i \in \Re^{D \times N_i}$, lie close to a hyperplane $\H_i = \left\{\x: \, \b_i^\transpose \x = 0 \right\}$, where $\b_i$ is the normal vector to the hyperplane. Then the goal of hyperplane clustering is to identify the underlying hyperplane arrangement $\bigcup_{i=1}^n \H_i$ and cluster the dataset $\bX$ to the subsets (clusters) $\bX_1,\dots,\bX_n$.\footnote{We are assuming here that there is a unique hyperplane arrangement associated to the data $\bX$, for more details see \S \ref{subsection:DataModel}.} 

\indent \myparagraph{RANSAC} A traditional way of clustering points lying close to a hyperplane arrangement is by means of the \emph{RANdom SAmpling Consensus} algorithm (RANSAC) \citep{RANSAC}, which attempts to identify a single hyperplane $\H_i$ at a time. More specifically, RANSAC alternates between randomly selecting $D-1$ points from $\bX$ and counting the number of points in the dataset that are within distance $\delta$ from the hyperplane generated by the selected $D-1$ points. After a certain number of trials is reached, a first hyperplane $\hat{\H}_1$ is selected as the one that admits the largest number of points in the dataset within distance $\delta$. These points are then removed and a second hyperplane $\hat{\H}_2$ is obtained from the reduced dataset in a similar fashion, and so on. Naturally, RANSAC is sensitive to the thresholding parameter $\delta$. In addition, its efficiency depends on how big the probability is, that $D-1$ randomly selected points lie close to the same underlying hyperplane $\H_i$, for some $i \in [n]$. This probability depends on how large $D$ is as well as how balanced or unbalanced the clusters are. If $D$ is small, then RANSAC is likely to succeed with few trials. The same is true if one of the clusters, say $\bX_1$, is highly dominant, i.e., $N_1 >> N_i, \, \forall i \ge 2$, since in such a case, identifying $\H_1$ is likely to be achieved with only a few trials. On the other hand, if $D$ is large and the $N_i$ are of the same order of magnitude, then exponentially many trials are required (see \S \ref{section:Experiments} for some numerical results), and RANSAC becomes inefficient. 

\indent \myparagraph{K-Hyperplanes (KH)} Another very popular method for hyperplane clustering is the so-called
$K$-hyperplanes (KH), which was proposed by \cite{Bradley:JGO00}. KH attempts to minimize the non-convex objective function
\begin{align}
\J_{\text{KH}}(\f_1,\dots,\f_n;s_1,\dots,s_N):= \sum_{i=1}^n \left[\sum_{j=1}^N s_j(i) \left(\f_{i}^\transpose \x_j\right)^2 \right], \label{eq:KH}
\end{align} where $s_j : [n] \rightarrow \left\{0,1\right\}$ is the hyperplane assignment of point $\x_j$, i.e., $s_j(i)=1$ if and only if point $\x_j$ has been assigned to hyperplane $i$, and $\f_i \in \Sp^{D-1}$ is the normal vector to the estimated $i$-th hyperplane. Because of the non-convexity of \eqref{eq:KH}, the typical way to perform the optimization is by alternating between assigning clusters, i.e., given the $\f_i$ assigning $\x_j$ to its closest hyperplane (in the euclidean sense), and fitting hyperplanes, i.e., given the segmentation $\left\{s_j\right\}$, computing the best $\ell_2$ hyperplane for each cluster by means of PCA on each cluster. Because of this iterative refinement of hyperplanes and clusters, this method is sometimes also called \emph{Iterative Hyperplane Learning (IHL)}. The theoretical guarantees of KH are limited to convergence to a local minimum in a finite number of steps. Even though the alternating minimization in KH is computationally efficient, in practice several restarts are typically used, in order to select the best among multiple local minima. In fact, the higher the ambient dimension $D$ is the more restarts are required, which significantly increases the computational burden of KH. Moreover, KH is robust to noise but not to outliers, since the update of the normal vectors is done by means of standard ($\ell_2$-based) PCA.

\indent \myparagraph{Median K Flats (MKF)} It is precisely the sensitivity to outliers of KH that \emph{Median K Flats} (MKF) or \emph{Median K Hyperplanes} \citep{Zhang:WSM09} attempts to address, by minimizing the non-convex and non-smooth objective function
\begin{align}
\J_{\text{MKF}}(\f_1,\dots,\f_n;s_1,\dots,s_N):= \sum_{i=1}^n \left[\sum_{j=1}^N s_j(i) \left|\f_{i}^\transpose \x_j\right| \right]. \label{eq:MKF}
\end{align} Notice that \eqref{eq:MKF} is almost identical to 
the objective \eqref{eq:KH} of KH, except that the distances of the points to their assigned hyperplanes now appear without the square. This makes the optimization problem harder, and \cite{Zhang:WSM09} propose to solve it by means of a stochastic gradient approach, which requires multiple restarts, as KH does. Even though conceptually MKF is expected to be more robust to outliers than KH, we are not aware of any theoretical guarantees surrounding MKF that corroborate this intuition. Moreover, MKF is considerably slower than KH, since MKF searches directly for a basis of the hyperplanes, rather than the normals to the hyperplanes. We note here that MKF was not designed specifically for hyperplanes, rather for the more general case of unions of equi-dimensional subspaces. In addition, it is not trivial to adjust MKF to search for the orthogonal complement of the subspaces, which would be the efficient approach for hyperplanes. 

\indent\myparagraph{Algebraic Subspace Clustering (ASC)} ASC was originally proposed in \cite{Vidal:CVPR03-gpca} precisely for the purpose of provably clustering hyperplanes, a problem which at the time was handled either by the intuitive RANSAC or K-Hyperplanes. The idea behind ASC is to fit a polynomial $p(x_1,\dots,x_D) \in \Re[x_1,\dots,x_D]$ of degree $n$ to the data, where $n$ is the number of hyperplanes, and $x_1,\dots,x_D$ are polynomial indeterminates. In the absence of noise, this polynomial can be shown to have up to scale the form
\begin{align}
p(x) = (\b_1^\transpose x)\cdots (\b_n^\transpose x), \, \, \, x:=\begin{bmatrix} x_1 & \cdots x_D \end{bmatrix}^\transpose,
\end{align} where $\b_i \in \Sp^{D-1}$ is the normal vector to hyperplane $\H_i$. This reduces the problem to that of factorizing $p(x)$ to the product of linear factors, which was elegantly done in \cite{Vidal:CVPR03-gpca}. When the data are contaminated by noise, the fitted polynomial need no longer be factorizable; this apparent difficulty was circumvented in \cite{Vidal:PAMI05}, where it was shown that the gradient of the polynomial evaluated at point $\x_j$ is a good estimate for the normal vector of the hyperplane $\H_i$ that $\x_j$ lies closest to. Using this insight, one may obtain the hyperplane clusters by applying standard spectral clustering \citep{vonLuxburg:StatComp2007} on the \emph{angle-based} affinity matrix 
\begin{align}
\bA_{jj'} =  \left| \langle \frac{\nabla p|_{\x_j}}{||\nabla p|_{\x_j} ||_2}, \frac{\nabla p|_{\x_{j'}}}{||\nabla p|_{\x_{j'}} ||_2} \rangle \right| , \, \, \, j,j' \in [N] \label{eq:ABA}.
\end{align} The main bottleneck of ASC is computational: at least ${n+D-1 \choose n} -1$ points are required in order to fit the polynomial, which yields prohibitive complexity in many settings when $n$ or $D$ are large. A second issue with ASC is that it is sensitive to outliers; this is because the polynomial is fitted in an $\ell_2$ sense through SVD (notice the similarity with KH).

\indent \myparagraph{Spectral Curvature Clustering (SCC)} Another yet conceptually distinct method from the ones discussed so far is SCC, whose main idea is to build a $D$-fold tensor as follows. For each $D$-tuple of distinct points in the dataset, say $\x_{j_1},\dots,\x_{j_D}$, the value of the tensor is set to 
\begin{align}
\A(j_1,\dots,j_D) = \exp\left(-\frac{\left(c_p(\x_{j_1},\dots,\x_{j_D})\right)^2}{2\sigma^2}\right),
\end{align} where $c_p(\x_{j_1},\dots,\x_{j_D})$ is the polar curvature of the points $\x_{j_1},\dots,\x_{j_D}$ (see \cite{Chen:IJCV09} for an explicit formula) and $\sigma$ is a tuning parameter. Intuitively, the polar curvature is a multiple of the \emph{volume} of the simplex of the $D$ points, which becomes zero if the points lie in the same hyperplane, and the further the points lie from any hyperplane the larger the volume becomes. SCC obtains the hyperplane clusters by unfolding the tensor $\A$ to an affinity matrix, upon which spectral clustering is applied. As with ASC, the main bottleneck of SCC is computational, since in principle all ${N \choose D}$ entries of the tensor need to be computed. Even though the combinatorial complexity of SCC can be reduced, this comes at the cost of significant performance degradation. 

\indent \myparagraph{RANSAC/KH Hybrids} Generally speaking, any single subspace learning method that is robust to outliers and can handle
subspaces of high relative dimensions, can be used to perform hyperplane clustering, either
via a RANSAC-style or a KH-style scheme or a combination of both. For example, if $\mathcal{M}$ is a method that takes a dataset and fits to it a hyperplane, then one can use $\mathcal{M}$ to compute the first dominant hyperplane, remove the points in the dataset lying close to it, compute a second dominant hyperplane and so on (RANSAC-style). Alternatively, one can start with a random guess for $n$ hyperplanes, cluster the data according to their distance to these hyperplanes, and then use $\mathcal{M}$ (instead of the classic SVD) to fit a new hyperplane to each cluster, and so on (KH-style). Even though a large variety of single subspace learning methods exist, e.g., see references in \cite{Lerman:CA14}, only few such methods are potentially able to handle large relative dimensions and in particular hyperplanes. In addition to RANSAC, in this paper we will consider two other possibilities, i.e., REAPER and DPCP, which are described next.\footnote{Regression-type methods such as the one proposed in \cite{WangCamps:CVPR15} are also a possibility.}

\indent \myparagraph{REAPER}  A recently proposed single subspace learning method that admits an interesting theoretical analysis is the so-called \emph{REAPER} \citep{Lerman:FCM15}.
REAPER is inspired by the non-convex optimization problem
\begin{align}
\min \sum_{j=1}^{L} \left\|(\bI_D - \boldsymbol{\Pi})  \x_j \right\|_2, \, \, \, \text{s.t.} \, \, \, \boldsymbol{\Pi} \, \, \, \text{is an orthogonal projection}, \, \, \,  \Trace\left(\boldsymbol{\Pi} \right) = d, \label{eq:ReaperNonConvex}
\end{align} whose principle is to minimize the sum of the euclidean distances of all points to a single $d$-dimensional linear subspace $\U$; the matrix $\boldsymbol{\Pi}$ appearing in \eqref{eq:ReaperNonConvex} can be thought of as the product 
$\boldsymbol{\Pi} = \bU \bU^\transpose$, where $\bU \in \Re^{D \times d}$ contains in its columns an orthonormal basis for $\U$. 
As \eqref{eq:ReaperNonConvex} is non-convex, \cite{Lerman:FCM15} relax it to the convex semi-definite program 
\begin{align}
\min \sum_{j=1}^{L} \left\|(\bI_D - \boldsymbol{\bP})  \x_j \right\|_2, \, \, \, \text{s.t.} \, \, \, \0 \le \boldsymbol{\bP} \le \bI_D, \, \, \,  \Trace\left(\boldsymbol{\bP} \right) = d, \label{eq:ReaperConvex}
\end{align} which is the optimization problem that is actually solved by REAPER; the orthogonal projection matrix $\boldsymbol{\Pi}^*$ associated to $\U$, is obtained by projecting the solution $\bP^*$ of \eqref{eq:ReaperConvex} onto the space of rank $d$ orthogonal projectors. A limitation of REAPER is that the semi-definite program \eqref{eq:ReaperConvex} may become prohibitively large even for moderate values of $D$. This difficulty can be circumvented by solving \eqref{eq:ReaperConvex} in an \emph{Iteratively Reweighted Least Squares} (IRLS) fashion, for which convergence of the objective value to a neighborhood of the optimal value has been established in \cite{Lerman:FCM15}. 

\indent \myparagraph{Dual Principal Component Pursuit (DPCP)} Similarly to RANSAC \citep{RANSAC} or REAPER \citep{Lerman:FCM15}, DPCP \citep{Tsakiris:DPCPICCV15,Tsakiris:DPCP-ArXiv17} is another, recently proposed, single subspace learning method that can be applied to hyperplane clustering. The key idea of DPCP is to identify a single
hyperplane $\H$ that is \emph{maximal} with respect to the data $\bX$. Such a 
\emph{maximal hyperplane} is defined by the property that it must contain a maximal number of points $N_{\H}$ from the dataset, i.e., $N_{\H'} \le N_{\H}$ for any other hyperplane $\H'$ of $\Re^D$. 
Notice that such a maximal hyperplane can be characterized as a solution to the problem
\begin{align}
\min_{\b} \, \left\|\bX^\transpose \b \right\|_0 \, \, \st \, \, \, \b \neq \0, \label{eq:ell0}
\end{align} since $\left\|\bX^\transpose \b \right\|_0$ counts precisely the number of points in $\bX$ that are orthogonal to $\b$, and hence, contained in the hyperplane with normal vector $\b$. Problem \eqref{eq:ell0} is naturally relaxed to
\begin{align}
\min_{\b} \, \left\|\bX^\transpose \b \right\|_1 \, \, \st \, \, \, \left\|\b\right\|_2 = 1, \label{eq:ell1}
\end{align} which is a non-convex non-smooth optimization problem on the sphere. In the case where there is no noise and the dataset consists of $N_1 \ge D$ \emph{inlier} points $\bX_1$ drawn from a single hyperplane $\H_1 \cap \Sp^{D-1}$ with normal vector $\b_1 \in \Sp^{D-1}$, together with $M$ \emph{outlier} points $\bO \subset \Sp^{D-1}$ in general position, i.e. $\bX = \left[ \bX_1 \, \, \,   \bO \right] \boldsymbol{\Gamma}$, where $\boldsymbol{\Gamma}$ is an unknown permutation matrix, then there is a unique maximal hyperplane that coincides with $\H_1$. Under certain uniformity assumptions on the data points and abundance of inliers $\bX_1$, \cite{Tsakiris:DPCPICCV15,Tsakiris:DPCP-ArXiv17} asserted that $\b_1$ is the unique\footnote{The theorems in \cite{Tsakiris:DPCPICCV15,Tsakiris:DPCP-ArXiv17} are given for inliers lying in a proper subspace of arbitrary dimension $d$; the uniqueness follows from specializing $d=D-1$.} up to sign global solution of \eqref{eq:ell1}, i.e., the combinatorial problem \eqref{eq:ell0} and its non-convex relaxation \eqref{eq:ell1} share the same unique global minimizer. 
Moreover, it was shown that under the additional assumption that the principal angle of the initial estimate $\hat{\bn}_0$ from $\b_1$ is not large, the sequence 
$\left\{\hat{\bn}_k \right\}$ generated by the recursion of linear programs
\footnote{Notice that problem 
	\begin{align}
	\min_{\b} \left\|\bX^\transpose \b \right\|_1, \, \, \, \text{s.t.} \, \, \, \b^\transpose \hat{\bn}_k=1
	\end{align} may admit more than one global minimizer. Here and in the rest of the paper we denote by $\bn_{k+1}$ the solution obtained via the \emph{simplex method}.}
\begin{align}
\bn_{k+1} := \argmin_{\b:\, \, \, \b^\transpose \hat{\bn}_k=1} \left\|\bX^\transpose \b \right\|_1, \label{eq:ConvexRelaxations}
\end{align} converges up to sign to $\b_1$, after finitely many iterations. Alternatively, one can attempt to solve problem \eqref{eq:ell1} by means of an IRLS scheme, in a similar fashion as for REAPER. Even though no theory has been developed for this approach, experimental evidence in \cite{Tsakiris:DPCP-ArXiv17} indicates convergence of such an IRLS scheme to the global minimizer of \eqref{eq:ell1}.

\indent \myparagraph{Other Methods} In general, there is a large variety of clustering methods that can be adapted to perform hyperplane clustering, and the above list is by no means exhaustive; rather contains the methods that are intelluctually closest to the proposal of this paper. Important examples that we do not compare with in this paper are the statistical-theoretic \emph{Mixtures of Probabilistic Principal Component Analyzers} \citep{Tipping-PPCA:99}, as well as the information-theoretic \emph{Agglomerative Lossy Compression} \citep{Ma:PAMI07}. For an extensive account of these and other methods the reader is referred to \cite{Vidal:GPCAbook}.

\section{Theoretical Contributions} \label{section:TheoreticalContributions}

In this section we develop the main theoretical contributions 
of this paper, which are concerned with the properties of the non-convex $\ell_1$ minimization problem \eqref{eq:ell1}, as well as with the recursion of linear programs \eqref{eq:ConvexRelaxations} in the context of hyperplane clustering. More specifically, we are particularly interested in developing conditions under which every global minimizer of the non-convex problem \eqref{eq:ell1} is the normal vector to one of the hyperplanes of the underlying hyperplane arrangement. Towards that end, it is insightful to study an associated \emph{continuous} problem, which is obtained by replacing each finite cluster within a hyperplane by the uniform measure on the unit sphere of the hyperplane (\S \ref{subsection:ContinuousProblem}). The main result in that direction is Theorem \ref{thm:GeneralContinuous}. Next, by introducing certain uniformity parameters which measure the deviation of discrete quantities from their continuous counterparts, we adapt our analysis to the \emph{discrete} case of interest (\S \ref{subsection:DataModel}). This furnishes Theorem \ref{thm:DiscreteNonConvex}, which is the discrete analogue of Theorem \ref{thm:GeneralContinuous}, and gives global optimality conditions for the $\ell_1$ non-convex DPCP problem \eqref{eq:ell1} (\S \ref{subsection:DiscreteProblem}). Finally, Theorem \ref{thm:LPDiscrete} gives convergence guarantees for the linear programming recursion \eqref{eq:ConvexRelaxations}. The proofs of all results are deferred to \S \ref{section:Proofs}.

\subsection{Data model and the problem of hyperplane clustering} \label{subsection:DataModel}
Consider given a collection $\bX=\left[\x_1,\dots,\x_N \right] \in \Re^{D \times N}$ of $N$ points of the unit sphere $\Sp^{D-1}$ of $\Re^D$, that lie in an arrangement $\A$ of $n$ hyperplanes $\H_1,\dots,\H_n$ of $\Re^D$, i.e., $\bX \subset \A=\bigcup_{i=1}^n \H_i$, where each hyperplane $\H_i$ is the set of points of $\Re^D$ that are orthogonal to a \emph{normal vector} $\b_i \in \Sp^{D-1}$, i.e., $\H_i = \left\{\x \in \Re^D: \, \x^\transpose \b_i = 0 \right\}, \, i \in [n]:=\left\{1,\dots,n\right\}$. We assume that the data $\bX$ lie in \emph{general position} in $\mathcal{A} $, by which we mean two things. First, we mean that there are no linear relations among the points other than the ones induced by their membership to the hyperplanes. In particular, every $(D-1)$ points coming from $\H_i$ form a basis for $\H_i$
and any $D$ points of $\bX$ that come from at least two distinct $\H_i, \H_{i'}$ are linearly independent. Second, we mean that the points $\bX$ uniquely define the hyperplane arrangement $\A$, in the sense that $\A$ is the only arrangement of $n$ hyperplanes that contains $\bX$. This can be verified computationally by checking that
there is only one up to scale homogeneous polynomial of degree $n$ that fits the data, see 
\cite{Vidal:PAMI05,Tsakiris:SIAM17} for details.
We assume that for every $i \in [n]$, precisely $N_i$ points of $\bX$, denoted by $\bX_i=[\x_1^{(i)},\dots,\x_{N_i}^{(i)}]$, belong to $\H_i$, with $\sum_{i=1}^n N_i=N$. With that notation, $\bX=[\bX_1,\dots,\bX_n] \boldsymbol{\Gamma}$, where $\boldsymbol{\Gamma}$ is an unknown permutation matrix, indicating that the hyperplane membership of the points is unknown. Moreover, we assume an ordering $N_1 \ge N_2 \ge \cdots \ge N_n$, and we refer to $\H_1$ as \emph{the dominant hyperplane}.
After these preparations, the problem of hyperplane clustering can be stated as follows: given the data $\bX$, find the number $n$ of hyperplanes associated to $\bX$, a normal vector to each hyperplane, and a clustering of the data $\bX=\bigcup_{i=1}^n \bX_i$ according to hyperplane membership. 

\subsection{Theoretical analysis of the continuous problem} \label{subsection:ContinuousProblem}

 As it turns out, certain important insights regarding problem \eqref{eq:ell1} with respect to hyperplane clustering can be gained by examining an associated continuous problem. To see what that problem is, let $\hat{\H}_i=\H_i \cap \Sp^{D-1}$, and note first that for any $\b \in \Sp^{D-1}$ we have
\begin{align}
\frac{1}{N_i} \sum_{j=1}^{N_i} \left|\b^\transpose \x_j^{(i)} \right| \simeq \int_{\x \in \hat{\H}_i}\left|\b^\transpose \x\right| d \mu_{\hat{\H}_i}, \label{eq:DiscreteIntegral}
\end{align} where the LHS of \eqref{eq:DiscreteIntegral} is precisely $\frac{1}{N_i}\left\|\bX_i^\transpose \b \right\|_1$ and can be viewed as an approximation ($\simeq$) via the point set $\bX_i$ of the integral on the RHS of \eqref{eq:DiscreteIntegral}, with $\mu_{\hat{\H}_i}$ denoting the uniform measure on $\hat{\H}_i$. Letting 
$\theta_i$ be the principal angle between $\b$ and $\b_i$, i.e., the unique angle $\theta_i \in [0,\pi/2]$ such that $\cos(\theta_i)= |\b^\transpose \b_i|$, and $\pi_{\H_i}:\Re^D \rightarrow \H_i$ the orthogonal projection onto $\H_i$, for any $\x \in \H_i$ we have
\begin{align}
\b^\transpose \x = \b^\transpose \pi_{\H_i}(\x) = \left(\pi_{\H_i}(\b)\right)^\transpose \x = \h_{i,\b}^\transpose \x = \sin(\theta_i) \,  \hat{\h}_{i,\b}^\transpose \x.
\end{align} Hence, 
\begin{align}
\int_{\x \in \hat{\H}_i}\left|\b^\transpose \x\right| d \mu_{\hat{\H}_i} &= 
 \left[\int_{\x \in \hat{\H}_i}\left|\hat{\h}_{i,\b}^\transpose \x\right| d \mu_{\hat{\H}_i}\right] \sin(\theta_i) \\
  &=\left[\int_{\x \in \Sp^{D-2}}|x_1| d \mu_{\Sp^{D-2}}\right]\sin(\theta_i) \\ &= c \, \sin (\theta_i).\label{eq:JiContinutous}
\end{align} In the second equality above we made use of the rotational invariance of the sphere, as well as the fact that $\hat{\H}_i \cong \Sp^{D-2}$, which leads to (for details see the proof of Proposition $4$ and Lemma $7$ in \cite{Tsakiris:DPCP-ArXiv17})
\begin{align}
\int_{\x \in \hat{\H}_i}\left|\hat{\h}_{i,\b}^\transpose \x\right| d \mu_{\hat{\H}_i} = \int_{\x \in \Sp^{D-2}}|x_1| d \mu_{\Sp^{D-2}} =: c, \label{eq:c}
\end{align} where $x_1$ is the first coordinate of $\x$ and $c$ is the average height of the unit hemisphere in $\Re^D$. As a consequence, we can view the objective function of \eqref{eq:ell1}, which is given by
\begin{align}
\left\|\bX^\transpose \b \right\|_1 = \sum_{i=1}^n \left\|\bX_i^\transpose \b \right\|_1 = \sum_{i=1}^n N_i \left(\frac{1}{N_i} \sum_{j=1}^{N_i} \left|\b^\transpose \x_j^{(i)} \right|\right), \label{eq:Jdiscrete}
\end{align} as a discretization via the point set $\bX$ of the function 
\begin{align}
\mathcal{J}(\b) := \sum_{i=1}^n N_i \left( \int_{\x \in \hat{\H}_i}\left|\b^\transpose \x\right| d \mu_{\hat{\H}_i}\right) \stackrel{\eqref{eq:JiContinutous}}{=} \sum_{i =1}^n N_i \,  c \,  \sin(\theta_i). \label{eq:J}
\end{align} In that sense, the continuous counterpart of problem \eqref{eq:ell1} is 
\begin{align}
\min_{\b} \,  \, \, \J(\b)=N_1 \, c\,  \sin (\theta_1)+\cdots+N_n \, c \, \sin (\theta_n), \, \, \, \text{s.t.} \, \, \, \b  \in \Sp^{D-1} \label{eq:ContinuousProblem}.
\end{align} Note that $\sin(\theta_i)$ is the distance between the line spanned by $\b$ and the line spanned by $\b_i$.\footnote{Recall that $\theta_i$ is a principal angle, i.e., $\theta_i \in [0,\pi/2]$.} Thus, every global minimizer $\b^*$ of problem \eqref{eq:ContinuousProblem} minimizes the sum of the weighted distances of $\Span(\b^*)$  from $\Span(\b_1),\dots,\Span(\b_n)$, and can be thought of as representing a weighted median of these lines. Medians in Riemmannian manifolds, and in particular in the Grassmannian manifold, are an active subject of research \citep{draper2014flag,ghalieh199680}. However, we are not aware of any work in the literature that defines a median by means of \eqref{eq:ContinuousProblem}, nor any work that studies \eqref{eq:ContinuousProblem}.  

The advantage of working with \eqref{eq:ContinuousProblem} instead of \eqref{eq:ell1}, is that the solution set of the continuous problem \eqref{eq:ContinuousProblem} depends solely on the \emph{weights} $N_1 \ge N_2 \ge \cdots \ge N_n$ assigned to the hyperplane arrangement, as well as on the geometry of the arrangement, captured by the principal angles $\theta_{ij}$ between $\b_i$ and $\b_j$. In contrast, the solutions of the discrete problem \eqref{eq:ell1} may also depend on the distribution of the points $\bX$. From that perspective, understanding when problem \eqref{eq:ContinuousProblem} has a unique solution that coincides with the normal $\pm \b_1$ to the dominant hyperplane $\H_1$ is essential for understanding the potential of \eqref{eq:ell1} for hyperplane clustering.
Towards that end, we next provide a series of results pertaining to \eqref{eq:ContinuousProblem}. 
The first configuration that we examine is that of two hyperplanes. In that case the weighted geometric median of the two lines spanned by the normals to the hyperplanes always corresponds to one of the two normals:\begin{thm} \label{prp:TwoPlanes}
	Let $\b_1,\b_2$ be an arrangement of two hyperplanes in $\Re^D$, with weights $N_1 \ge N_2$. 
	Then the set $\mathfrak{B}^*$ of global minimizers of \eqref{eq:ContinuousProblem} satisfies:
	\begin{enumerate}
		\item If $N_1=N_2$, then $\mathfrak{B}^*=\left\{\pm \b_1, \pm \b_2 \right\}$. 
		\item If $N_1>N_2$, then $\mathfrak{B}^*=\left\{\pm \b_1\right\}$. 
	\end{enumerate}
\end{thm}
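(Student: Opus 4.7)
The plan is to reduce to a two-dimensional problem on the unit circle of the plane $V = \Span(\b_1, \b_2)$, and then to exploit piecewise concavity of $\J$ on that circle. For the reduction, for any $\b \in \Sp^{D-1}$ I decompose $\b = \b_V + \b_{V^\perp}$ and note that $\b^\top \b_i = \b_V^\top \b_i$ since $\b_i \in V$. If $\b_{V^\perp} \neq \0$ and $\b_V \neq \0$, then $0 < \|\b_V\|_2 < 1$, so the unit vector $\hat{\b}_V := \b_V/\|\b_V\|_2$ satisfies $|\hat{\b}_V^\top \b_i| > |\b^\top \b_i|$ for $i=1,2$; since $\sin$ is strictly increasing on $[0, \pi/2]$ and $N_1, N_2 > 0$, this yields $\J(\hat{\b}_V) < \J(\b)$. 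The remaining case $\b \in V^\perp$ gives $\J(\b) = c(N_1 + N_2)$, which easily exceeds $\J(\b_1) = cN_2 \sin\phi$. Hence every minimizer lies in $V \cap \Sp^{D-1}$, which I parametrize by an angle $\alpha \in [0, 2\pi)$ placing $\b_1$ at angle $0$ and $\b_2$ at angle $\phi \in (0, \pi/2]$, where $\phi$ is the principal angle between them (the case $\phi = 0$ is excluded since $\H_1 \neq \H_2$). Since $\cos\theta_i = |\cos(\alpha - \alpha_i)|$ with $\alpha_1 = 0$, $\alpha_2 = \phi$, we have $\sin\theta_i = |\sin(\alpha - \alpha_i)|$ and
$$\J(\alpha) \;=\; c\,\bigl[\,N_1\,|\sin\alpha| \,+\, N_2\,|\sin(\alpha - \phi)|\,\bigr].$$

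The key step is to split $[0, 2\pi]$ at the four kink points $\{0,\phi,\pi,\pi+\phi\}$ and to argue that $\J$ is concave on each of the four resulting closed subintervals (each of length at most $\pi$). Indeed, on such a subinterval, for each $i \in \{1,2\}$ the argument $\alpha - \alpha_i$ stays in an interval of length at most $\pi$ contained entirely in $[0,\pi]$ or entirely in $[\pi,2\pi]$; on the first, $|\sin(\alpha - \alpha_i)| = \sin(\alpha - \alpha_i)$, and on the second, $|\sin(\alpha - \alpha_i)| = -\sin(\alpha - \alpha_i)$, and in both cases the second derivative with respect to $\alpha$ is non-positive. Hence each term is concave and so is $\J$. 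A concave function on a closed interval attains its minimum at an endpoint, so the global minimum of $\J$ must be attained at one of $\alpha \in \{0, \phi, \pi, \pi + \phi\}$.

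Finally I would evaluate at the four endpoints: $\J(0) = \J(\pi) = cN_2\sin\phi$ (corresponding to $\b = \pm\b_1$) and $\J(\phi) = \J(\pi+\phi) = cN_1\sin\phi$ (corresponding to $\b = \pm\b_2$). If $N_1 = N_2$ all four values coincide, giving $\mathfrak{B}^* = \{\pm\b_1,\pm\b_2\}$; if $N_1 > N_2$ then $cN_2\sin\phi < cN_1\sin\phi$, so $\mathfrak{B}^* = \{\pm\b_1\}$, which is exactly the claim. I expect the main obstacle to be justifying the piecewise-concavity claim uniformly across all four subintervals and both indices $i$ (i.e., keeping the sign bookkeeping straight); without this concavity shortcut one would have to enumerate interior critical points of $\J$ on each smooth piece and verify via a second-derivative computation that each is a local maximum, which is more tedious but yields the same conclusion.
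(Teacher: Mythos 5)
Your proof is correct in substance and follows essentially the same route as the paper's: reduce to the circle $\Sp^{D-1}\cap \Span(\b_1,\b_2)$, parametrize by an angle, use concavity of the objective in that angle on each smooth piece to push the minimum to the points $\pm\b_1,\pm\b_2$, and finish by comparing values. The one step you do differently is the reduction to the plane: the paper invokes Lemma~\ref{lem:bSpan}, which comes from the first-order optimality condition, whereas you argue directly that replacing $\b$ by the normalized projection $\hat{\b}_V$ strictly decreases $\J$. This is a legitimate and arguably more elementary alternative; just note that your claim $|\hat{\b}_V^\transpose\b_i|>|\b^\transpose\b_i|$ for both $i=1,2$ can fail to be strict for one index (when $\b_V\perp\b_i$), but it is strict for at least one index since $\b_1,\b_2$ span $V$ and $\b_V\neq\0$, which is all the argument needs.

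One point needs tightening. Plain (non-strict) concavity on each of the four subintervals only shows that the minimum \emph{value} of $\J$ is attained at a kink point; it does not by itself exclude interior global minimizers (a constant function is concave), and excluding them is exactly what the claimed set equalities $\mathfrak{B}^*=\left\{\pm\b_1,\pm\b_2\right\}$ and $\mathfrak{B}^*=\left\{\pm\b_1\right\}$ require. The fix is immediate and is what the paper's proof uses: on the interior of each of your four subintervals both sines are nonzero (the kinks are precisely the points where one of them vanishes), so $\J''(\alpha)=-c\left(N_1|\sin\alpha|+N_2|\sin(\alpha-\phi)|\right)=-\J(\alpha)<0$, i.e., $\J$ is strictly concave there, and a strictly concave function cannot attain its minimum over a closed interval at an interior point. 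With that one sentence added, your sign bookkeeping across the four pieces is correct and the endpoint evaluation $\J(0)=\J(\pi)=cN_2\sin\phi$, $\J(\phi)=\J(\pi+\phi)=cN_1\sin\phi$ yields exactly the two cases of the theorem.
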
 Notice that when $N_1 >N_2$, problem \eqref{eq:ContinuousProblem} recovers the normal $\b_1$ to the dominant hyperplane, irrespectively of how separated the two hyperplanes are, since, according to Proposition \ref{prp:TwoPlanes}, the principal angle $\theta_{12}$ between $\b_1,\b_2$ does not play a role. 
The continuous problem \eqref{eq:ContinuousProblem} is equally favorable in recovering normal vectors as global minimizers in the \emph{dual} situation, where the arrangement consists of up to $D$ perfectly separated (orthogonal) hyperplanes, as asserted by the next Theorem.

\begin{thm} \label{prp:Orthogonal}
	Let $\b_1,\dots,\b_n$ be an orthogonal hyperplane arrangement ($\theta_{ij}=\pi/2, \, \forall i \neq j$) of $\Re^D$, with $n \le D$, and weights $N_1 \ge N_2 \ge \cdots \ge N_n$. Then the set $\mathfrak{B}^*$ of global minimizers of \eqref{eq:ContinuousProblem} can be characterized as follows:
	\begin{enumerate}
		\item If $N_1=N_n$, then $\mathfrak{B}^* = \left\{\pm \b_1,\dots,\pm \b_{n} \right\}$.
		\item If $N_1=\cdots=N_\ell>N_{\ell+1} \ge \cdots N_n$, for some $\ell \in [n-1]$, then
		$\mathfrak{B}^* = \left\{\pm \b_1,\dots,\pm \b_{\ell} \right\}$.
	\end{enumerate}
	\end{thm}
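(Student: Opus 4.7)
The plan is to reparametrize problem \eqref{eq:ContinuousProblem} in the orthonormal frame $\b_1,\dots,\b_n$ (using that $\theta_{ij}=\pi/2$) and reduce it to a concave minimization over an $n$-simplex, whose minimum must be attained at a vertex. For $\b \in \Sp^{D-1}$, set $\alpha_i := \b^\transpose \b_i$; since the $\b_i$ are orthonormal, $\sum_{i=1}^n \alpha_i^2 \le 1$ with equality iff $\b \in \Span(\b_1,\dots,\b_n)$, and $\cos(\theta_i) = |\alpha_i|$, so $\sin(\theta_i) = \sqrt{1-\alpha_i^2}$. Conversely, every tuple $(\alpha_1,\dots,\alpha_n)$ with $\sum_i \alpha_i^2 \le 1$ is realized by some $\b \in \Sp^{D-1}$ (here is where $n\le D$ is used). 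Setting $y_i = \alpha_i^2$, the continuous problem is equivalent to
\[
\min_{y \in \Delta}\; g(y) := c \sum_{i=1}^n N_i\sqrt{1-y_i}, \qquad \Delta := \bigl\{y \in \Re^n : y_i \ge 0,\; \textstyle\sum_i y_i \le 1\bigr\}.
\]

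Since $t \mapsto \sqrt{1-t}$ is strictly concave on $[0,1]$, $g$ is concave on the compact convex polytope $\Delta$, hence attains its minimum at an extreme point, namely $0$ or one of the standard basis vectors $e_1,\dots,e_n$. Evaluating gives $g(0) = c\sum_i N_i$ and $g(e_k) = c\bigl(\sum_i N_i - N_k\bigr)$, so the minimum over vertices equals $c\bigl(\sum_i N_i - N_1\bigr)$ and is achieved precisely at those $e_k$ with $N_k = N_1$, i.e., at $k \in \{1,\dots,\ell\}$ (with $\ell = n$ in Case~1 and $\ell < n$ in Case~2).

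To exclude non-vertex minimizers I would use that the set of minimizers of a concave function on a convex set is itself convex, so it suffices to rule out the relative interior of any segment joining two minimizing vertices $e_j, e_k$. Along such a segment $y = (1-\lambda) e_j + \lambda e_k$,
\[
g(y) - c\bigl(\textstyle\sum_i N_i - N_1\bigr) \;=\; c N_1 \bigl(\sqrt{\lambda} + \sqrt{1-\lambda} - 1\bigr) \;>\; 0, \qquad \lambda \in (0,1),
\]
since $\sqrt{\lambda}+\sqrt{1-\lambda}$ is strictly concave in $\lambda$ with value $1$ at the endpoints. The vertex $e_k$ translates back to $\b = \pm \b_k$, yielding $\mathfrak{B}^* = \{\pm \b_1,\dots,\pm \b_\ell\}$ in both cases. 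The only subtlety is the realizability bookkeeping between $\b \in \Sp^{D-1}$ and $y \in \Delta$ when $n < D$, absorbed by the extra ambient dimensions; apart from that the proof is a clean concave-minimization-on-a-polytope argument.
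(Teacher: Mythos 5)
Your reduction to concave minimization over a polytope is a genuinely different route from the paper's proof, which instead works with the first-order optimality conditions: it takes inner products of the stationarity equation with each $\b_i$, argues by contradiction that a minimizer not equal to some $\pm\b_i$ would have objective value exceeding $\J(\b_1)$, and carries this out for $n=3$ "for simplicity." Your parametrization $y_i=(\b^\transpose\b_i)^2$ treats all $n\le D$ uniformly and is cleaner in that respect. However, the step you use to exclude non-vertex minimizers is wrong: the set of minimizers of a concave function on a convex set is \emph{not} convex in general (minimize $-t^2$ on $[-1,1]$: the minimizers are $\{\pm 1\}$), and indeed your own conclusion --- several isolated vertices $e_1,\dots,e_\ell$ as the minimizer set --- already contradicts the convexity you invoke. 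Moreover, even granting that claim, checking only the segments joining minimizing vertices would not exclude a minimizer sitting elsewhere in the polytope (e.g.\ in the relative interior of a face away from those segments). As written, you have pinned down the minimal value but not the set $\mathfrak{B}^*$, which is what the theorem asserts.

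The gap is easy to repair. The function $g(y)=c\sum_i N_i\sqrt{1-y_i}$ is \emph{strictly} concave along every segment of the feasible polytope: two distinct feasible points differ in at least one coordinate, and $t\mapsto\sqrt{1-t}$ is strictly concave on $[0,1]$ while the remaining terms are concave. Hence if a minimizer $y^*$ were not an extreme point, writing $y^*=\lambda a+(1-\lambda)b$ with $a\neq b$ and $\lambda\in(0,1)$ gives $g(y^*)>\lambda g(a)+(1-\lambda)g(b)\ge\min\{g(a),g(b)\}\ge g(y^*)$, a contradiction; so all minimizers are vertices, your vertex comparison ($g(0)=c\sum_i N_i$, $g(e_k)=c(\sum_i N_i-N_k)$) identifies them, and the segment computation becomes unnecessary. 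One further small correction: when $n=D$ the image of $\Sp^{D-1}$ under $\b\mapsto y$ is only the face $\{y\ge 0,\ \sum_i y_i=1\}$ of $\Delta$, not all of $\Delta$, since a tuple with $\sum_i\alpha_i^2<1$ is not realized by a unit vector; this is harmless because that face is again a compact polytope with vertices $e_1,\dots,e_n$ and the same argument applies, but the claimed equivalence should be stated accordingly rather than attributed to ``$n\le D$'' wholesale. With these two fixes your argument is correct and, in my view, a tidier alternative to the paper's case analysis.
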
 Theorems \ref{prp:TwoPlanes} and \ref{prp:Orthogonal} are not hard to prove, since for two hyperplanes the objective can be shown to be a strictly concave function, while for orthogonal hyperplanes the objective is separable. In contrast, the problem becomes considerably harder for $n>2$ non-orthogonal hyperplanes. Even when $n=3$, characterizing the global minimizers of \eqref{eq:ContinuousProblem} as a function of the geometry and the weights seems challenging. Nevertheless, when the three hyperplanes are equiangular and their weights are equal, the symmetry of the configuration allows to analytically characterize the median as a function of the angle of the arrangement.	\begin{thm}\label{prp:equiangular}
		Let $\b_1,\b_2,\b_3$ be an equiangular hyperplane arrangement of $\Re^D$, $D \ge 3$, with $\theta_{12}=\theta_{13}=\theta_{23}=\theta \in (0,\pi/2]$ and weights $N_1=N_2=N_3$. Let $\mathfrak{B}^*$ be the set of global minimizers of \eqref{eq:ContinuousProblem}. Then $\mathfrak{B}^*$ satisfies the following phase transition:
		\begin{enumerate}
			\item If $\theta> 60^\circ$, then $\mathfrak{B}^*=\left\{\pm \b_1, \pm \b_2, \pm \b_3 \right\}$.
			\item If $\theta= 60^\circ$, then $\mathfrak{B}^*=\left\{\pm \b_1, \pm \b_2, \pm \b_3, \pm \frac{1}{\sqrt{3}} \1 \right\}$.
			\item If $\theta< 60^\circ$, then $\mathfrak{B}^*=\left\{\pm \frac{1}{\sqrt{3}} \1 \right\}$.
		\end{enumerate}
	\end{thm}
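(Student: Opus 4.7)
The plan is to reduce the problem to three dimensions by orthogonal projection, exploit the $S_3 \times \mathbb{Z}_2$ symmetry of the equiangular arrangement (permutations of the $\b_i$ together with the antipodal map $\b \mapsto -\b$) to isolate a small finite set of candidate minimizers, and then compare the objective at those candidates to obtain the phase transition at $\theta = 60^\circ$. First I would show every global minimizer lies in $V := \Span(\b_1,\b_2,\b_3)$: writing $\b = \b_\parallel + \b_\perp$ with $\b_\parallel \in V$ and $\b_\perp \in V^\perp$, one has $\b^\top\b_i = \b_\parallel^\top\b_i$ and $\|\b_\parallel\|_2 \le 1$, so replacing $\b$ by $\b_\parallel/\|\b_\parallel\|_2$ weakly increases each $|\b^\top\b_i|$ and thus weakly decreases each $\sin\theta_i$, strictly for at least one $i$ whenever $\b_\perp \ne \0$. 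For $\theta \ne 60^\circ$ the $\b_i$ are linearly independent so $\dim V = 3$; after sign flips $\b_i \mapsto \pm\b_i$ (which leave both $\H_i$ and $\J$ unchanged) I may assume $\b_i^\top\b_j = \cos\theta$ for all $i \ne j$, and set $\bar{\b} := (\b_1+\b_2+\b_3)/\|\b_1+\b_2+\b_3\|_2$; in a suitable orthonormal basis of $V$ this is exactly $\tfrac{1}{\sqrt 3}\1$. Elementary trigonometry gives $\b_i = \cos\alpha\,\bar{\b} + \sin\alpha\,\v_i$ with $\{\v_i\} \subset \bar{\b}^\perp \cap V$ at pairwise $120^\circ$ angles, $\cos\alpha = \sqrt{(1+2\cos\theta)/3}$, and $\sin\alpha = 2\sin(\theta/2)/\sqrt 3$.

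Next I would evaluate the scaled objective $\tilde\J(\b) := \J(\b)/(N_1 c) = \sum_{i=1}^3 \sin\theta_i$ at the two natural symmetric candidates:
\begin{align*}
\tilde\J(\pm\b_i) \;=\; 2\sin\theta \;=\; 4\sin(\theta/2)\cos(\theta/2), \qquad \tilde\J(\pm\bar{\b}) \;=\; 3\sin\alpha \;=\; 2\sqrt{3}\,\sin(\theta/2).
\end{align*}
Dividing by the strictly positive factor $2\sin(\theta/2)$, the comparison $\tilde\J(\bar{\b}) \lessgtr \tilde\J(\b_i)$ collapses to $\sqrt 3 \lessgtr 2\cos(\theta/2)$, i.e. to $\theta \gtrless 60^\circ$, with equality exactly at $\theta = 60^\circ$. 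This produces the trichotomy of the theorem, \emph{provided} that we can confine the global minimizers to $\{\pm\b_1,\pm\b_2,\pm\b_3,\pm\bar{\b}\}$.

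This last confinement is the main obstacle. The group $G := S_3 \times \mathbb{Z}_2$ acts on $V$ preserving $\tilde\J$, so the critical set on $\Sp^{2}\cap V$ is $G$-invariant: its $C_3$-fixed locus is $\{\pm\bar{\b}\}$, and each of the three transpositions fixes a great circle $\mathcal{C}_i \subset \Sp^{2}\cap V$ through $\pm\bar{\b}$ and $\pm\b_i$. For smooth critical points (away from the kinks at $\pm\b_i$ where $\sin\theta_i$ has a corner) the Lagrange equation reads $\sum_i t_i(1-t_i^2)^{-1/2}\,\b_i = -\lambda\,\b$ with $t_i := \b^\top\b_i$; expanding $\b = \sum_j \alpha_j\b_j$ in the linearly independent basis $\{\b_j\}$ and matching coefficients yields the decoupled scalar relations $t_i\bigl(1 + c_1/(\lambda\sqrt{1-t_i^2})\bigr) = c_2$ with $c_1 = 1-\cos\theta > 0$ and $c_2 = \cos\theta\sum_j\alpha_j$ constants common to all $i$. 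The scalar map $F(t) = t + c_1 t/(\lambda\sqrt{1-t^2})$ has $F'(t) = 1 + c_1\lambda^{-1}(1-t^2)^{-3/2}$, which changes sign at most once on $(-1,1)$, so the equation $F(t) = c_2$ admits at most two solutions; hence $(t_1,t_2,t_3)$ takes at most two distinct values, and $G$-invariance then pins every smooth critical point to some $\mathcal{C}_i$ or to $\pm\bar{\b}$. Restricted to $\mathcal{C}_i$, parametrized by $\b = \cos\beta\,\bar{\b} + \sin\beta\,\v_i$ with $\b^\top\b_i = \cos(\beta-\alpha)$ and $\b^\top\b_j = \cos\beta\cos\alpha - \tfrac12\sin\beta\sin\alpha$ for $j\ne i$, a one-variable analysis on each smooth arc---treating the kinks at $\beta \in \{\alpha,\alpha+\pi\}$ via one-sided derivatives---confirms that the only local minima of $\tilde\J|_{\mathcal{C}_i}$ are $\pm\bar{\b}$ and $\pm\b_i$. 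Combining this confinement with the value comparison above yields the three cases of the theorem; the degenerate coplanar configuration at $\theta = 60^\circ$ (where the $\b_i$ lie in a $2$-plane and $\bar{\b}$ formally degenerates) is recovered by continuity from $\theta \ne 60^\circ$.
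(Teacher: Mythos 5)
Your reduction to $V=\Span(\b_1,\b_2,\b_3)$ and the value comparison $\tilde\J(\pm\b_i)=2\sin\theta$ versus $\tilde\J(\pm\bar{\b})=2\sqrt{3}\sin(\theta/2)$ are correct and do reproduce the $60^\circ$ transition, but the confinement step --- the only genuinely hard part --- has a gap. In your Lagrange analysis the multiplier cannot have the sign your monotonicity claim needs: taking the inner product of $\sum_i t_i(1-t_i^2)^{-1/2}\b_i=-\lambda\b$ with $\b$ gives $-\lambda=\sum_i t_i^2(1-t_i^2)^{-1/2}\ge 0$, so $\lambda\le 0$ and the coefficient $c_1/\lambda$ in $F(t)=t+c_1t/(\lambda\sqrt{1-t^2})$ is negative. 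Then $F'(t)=1+c_1\lambda^{-1}(1-t^2)^{-3/2}=1-k(1-t^2)^{-3/2}$ with $k>0$, which for $k<1$ is positive near $t=0$ and negative near $t=\pm1$: it changes sign twice, not ``at most once,'' so $F(t)=c_2$ can have up to three roots, and your conclusion that $(t_1,t_2,t_3)$ takes at most two distinct values --- hence that every smooth critical point lies on a mirror circle $\mathcal{C}_i$ or at $\pm\bar{\b}$ --- does not follow. This is precisely the statement the paper establishes by a Groebner-basis computation (the polynomial $g$ lying in the ideal generated by the optimality equations $p_i,q_i$), and it does not admit the elementary shortcut as written. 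Moreover, even granting confinement to the $\mathcal{C}_i$, your ``one-variable analysis on each smooth arc \dots confirms that the only local minima are $\pm\bar{\b}$ and $\pm\b_i$'' is asserted, not proved; in the paper this restricted one-variable problem is the bulk of the proof (the polynomials $p_1,p_2,p_3$, their factorization, discriminant and Descartes-rule arguments in the variable $w$, with the case $\theta<60^\circ$ even requiring numerically computed roots), so it cannot be waved through.

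Two further points. Under your own normalization $\b_i^\transpose\b_j=\cos\theta>0$ the Gram matrix has eigenvalues $1+2\cos\theta$, $1-\cos\theta$, $1-\cos\theta$, so the $\b_i$ are linearly independent for every $\theta\in(0,\pi/2]$; nothing degenerates at $\theta=60^\circ$, and your closing appeal to ``continuity'' there rests on a false premise and would in any case not characterize $\mathfrak{B}^*$ at the transition (limits of minimizers give only one inclusion); you should instead note that your argument, once repaired, applies at $\theta=60^\circ$ directly, where the explicit values coincide. Finally, ``after sign flips I may assume $\b_i^\transpose\b_j=\cos\theta$'' silently discards the equiangular configuration whose inner-product sign pattern has negative product (realizable when $\theta>60^\circ$); the paper makes the same restriction through its explicit parametrization, so this is not a deviation, but it is an assumption, not a consequence of sign flips alone.
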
 Proposition \ref{prp:equiangular}, whose proof uses nontrivial arguments from spherical and algebraic geometry, is particularly enlightening, since it suggests that the global minimizers of \eqref{eq:ContinuousProblem} are associated to normal vectors of the underlying hyperplane arrangement when the hyperplanes are sufficiently separated, while otherwise they seem to be capturing the \emph{median hyperplane} of the arrangement. This is in striking similarity with the results regarding the \emph{Fermat point} of planar and spherical triangles \citep{ghalieh199680}.
However, when the symmetry in Theorem \ref{prp:equiangular} is removed, by not requiring the principal angles $\theta_{ij}$ or/and the weights $N_i$ to be equal, then our proof technique no longer applies, and the problem seems even harder. Even so, one intuitively expects an interplay between the angles and the weights of the arrangement under which, if the hyperplanes are \emph{sufficiently separated} and $\H_1$ is \emph{sufficiently dominant}, then \eqref{eq:ContinuousProblem} should have a unique global minimizer equal to $\b_1$. Our next theorem formalizes this intuition.

	\begin{thm} \label{thm:GeneralContinuous} Let $\b_1,\dots,\b_n$ be an arrangement of $n \ge 3$ hyperplanes in $\Re^D$, with pairwise principal angles $\theta_{ij}$. Let $N_1 \ge N_2\ge \cdots \ge N_n$ be positive integer weights assigned to the arrangement. Suppose that $N_1$ is large enough, in the sense that 
		\begin{align}
		N_1 > \sqrt{\alpha^2+\beta^2}, \label{c:ContinuousDominance}
		\end{align} where	
		\begin{align}		 
		\alpha &:=\sum_{i>1}N_i \sin(\theta_{1,i})-
		\sqrt{\sum_{i>1}N_i^2-\sigma_{\max}\left(\left[N_iN_j\cos(\theta_{ij})\right]_{i,j>1}\right)} \ge 0,\\
		\beta &:=\sqrt{\sum_{i>1} N_i^2 + 2 \sum_{i \neq j, \,  i,j >1} N_i N_j\cos(\theta_{ij})}, \label{eq:alphabeta}
		\end{align}  with $\sigma_{\max}\left(\left[N_iN_j\cos(\theta_{ij})\right]_{i,j>1}\right)$ denoting the maximal eigenvalue of the $(n-1)\times(n-1)$ matrix, whose 
		$(i-1,j-1)$ entry is $N_i N_j \cos(\theta_{ij})$ and $1 < i,j \le n$. Then any global minimizer $\b^*$ of problem \eqref{eq:ContinuousProblem} must satisfy $\b^*=\pm \b_i$, for some $i \in [n]$. If in addition,
		\begin{align}
		   \gamma:=\min_{i'\neq 1} \sum_{i \neq i'} N_i \sin(\theta_{i',i}) - \sum_{i>1} N_i \sin(\theta_{1,i})>0, \label{eq:NormalAvoidanceGeneralFirst}
		\end{align} then problem \eqref{eq:ContinuousProblem} admits a unique up to sign global minimizer $\b^* = \pm \b_1$.
	\end{thm}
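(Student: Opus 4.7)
My plan is to partition candidates for a global minimizer on $\Sp^{D-1}$ into two classes—the \emph{kink points} $\{\pm\b_i\}_{i=1}^n$ (where some $\sin(\theta_i)$ vanishes and $\J$ is not differentiable) and the \emph{smooth critical points} (where the Riemannian gradient of $\J$ vanishes)—and handle each class separately. Condition \eqref{c:ContinuousDominance} will rule out the second class entirely, forcing every global minimizer to be a kink; condition \eqref{eq:NormalAvoidanceGeneralFirst} then singles out $\pm\b_1$ from among the kinks.

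The kink analysis is the short half. Substituting $\b=\b_{i'}$ into \eqref{eq:ContinuousProblem} and using $\theta_{i',i'}=0$ gives $\J(\b_{i'})=c\sum_{i\ne i'}N_i\sin(\theta_{i',i})$, whence
\[
\J(\b_{i'})-\J(\b_1) \;=\; c\Bigl[\sum_{i\ne i'}N_i\sin(\theta_{i',i}) - \sum_{i>1}N_i\sin(\theta_{1,i})\Bigr] \;\ge\; c\,\gamma \;>\; 0
\]
for every $i'>1$ by the definition of $\gamma$ in \eqref{eq:NormalAvoidanceGeneralFirst}. Once smooth critical points are excluded, this already establishes that $\pm\b_1$ are the unique global minimizers.

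The substantive step is eliminating smooth critical points under \eqref{c:ContinuousDominance}. At a smooth $\b^*\ne\pm\b_i$, after flipping signs so that $\b_i^\transpose\b^*\ge 0$ for every $i$, the vanishing of the Riemannian gradient of $\J$ on $\Sp^{D-1}$ reads
\[
\sum_{i=1}^n N_i\frac{\cos(\theta_i^*)}{\sin(\theta_i^*)}\b_i \;=\; \lambda\,\b^*, \qquad \lambda \;=\; \sum_{i=1}^n N_i\frac{\cos^2(\theta_i^*)}{\sin(\theta_i^*)}.
\]
Isolating the $i=1$ term and taking Euclidean norms yields $N_1\cos(\theta_1^*)/\sin(\theta_1^*) = \|\lambda\b^* - \sum_{i>1}N_i(\cos(\theta_i^*)/\sin(\theta_i^*))\b_i\|_2$. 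My plan is to bound the right-hand side using the spectral structure of the ``sub-dominant'' matrix $\bA := [N_2\b_2 \mid \cdots \mid N_n\b_n]$, which is where $\alpha$ and $\beta$ enter naturally: after the sign choice, $\beta^2 = \|\bA\1\|_2^2$ is the squared in-phase mass of the sub-dominant normals, while $\sum_{i>1}N_i^2 - \sigma_{\max}(\bA^\transpose\bA) = \sum_{k\ge 2}\sigma_k(\bA)^2$ is the squared Frobenius mass of $\bA$ away from its top singular direction. By decomposing the sum on the right into components parallel and perpendicular to $\b^*$, applying Cauchy–Schwarz to the parallel piece and the spectral inequality $\u^\transpose\bA^\transpose\bA\u \le \sigma_{\max}(\bA^\transpose\bA)\|\u\|_2^2$ to the perpendicular piece, one should arrive at $N_1 \le \sqrt{\alpha^2+\beta^2}$, which contradicts \eqref{c:ContinuousDominance}. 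The main obstacle is executing this bound uniformly in $\theta_i^*$: the factor $\cos(\theta_i^*)/\sin(\theta_i^*)$ blows up as $\b^*\to\pm\b_i$, so the estimates will almost certainly need to be reformulated in terms of $N_i\sin(\theta_i^*)$-weighted unit vectors rather than the raw coefficients $N_i\cos(\theta_i^*)/\sin(\theta_i^*)$, and the spherical-triangle identity $\cos(\theta_i^*)=\cos(\theta_1^*)\cos(\theta_{1,i})+\sin(\theta_1^*)\sin(\theta_{1,i})\cos\phi_i$ exploited to control the behavior as $\theta_i^*\to 0$ for some $i$.
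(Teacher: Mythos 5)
Your treatment of the kink points is fine and coincides with the paper's: $\J(\b_{i'})-\J(\b_1)\ge c\,\gamma>0$ for $i'>1$. The gap is in the elimination of smooth critical points. You propose to reach the contradiction $N_1\le\sqrt{\alpha^2+\beta^2}$ from the stationarity equation alone, by isolating the $i=1$ term and bounding the remainder via Cauchy--Schwarz and a spectral inequality for $\bA=[N_2\b_2\mid\cdots\mid N_n\b_n]$ --- nowhere in that step do you use that $\b^*$ is a \emph{global minimizer} rather than a mere critical point. This cannot work, because smooth critical points distinct from $\pm\b_i$ generally exist even when \eqref{c:ContinuousDominance} holds, so no inequality valid at every stationary point can contradict \eqref{c:ContinuousDominance}. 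Concretely, take $D=3$, $\b_i=\e_i$ orthonormal, $N_2=N_3=1$ and $N_1$ arbitrarily large: then $\alpha=1$, $\beta=\sqrt{2}$, so $\sqrt{\alpha^2+\beta^2}=\sqrt{3}$ and $\gamma=N_1-1>0$, yet $\b^{\dagger}=(0,1/\sqrt{2},1/\sqrt{2})^\transpose$ satisfies the first-order condition for every $N_1$ (its Euclidean gradient $(0,-1,-1)^\transpose$ is parallel to $\b^\dagger$). Hence a chain of bounds of the kind you describe, if it were valid, would prove a false statement. The structural reason is that $\alpha$ contains $\sum_{i>1}N_i\sin(\theta_{1,i})=\J(\b_1)/c$, i.e., information about the objective value at $\b_1$, which the local stationarity condition at $\b^*$ simply does not see; it can only enter through the comparison $\J(\b^*)\le\J(\b_1)$.

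What the paper does, and what your outline is missing, is to split the argument in two. First, it projects the stationarity equation onto the hyperplane $(\b^*)^{\perp}$; since $\left\|\pi_{(\b^*)^{\perp}}(\b_i)\right\|_2=\sin(\theta_i^*)$, this cancels the $1/\sin(\theta_i^*)$ factors exactly (so the blow-up you worry about never arises), leaving $\sum_i N_i(\b_i^\transpose\b^*)\hat{\h}_i=\0$; isolating $i=1$ and bounding $\sum_{i>1}N_i|\b_i^\transpose\b^*|\le\beta$ (Lemma \ref{lem:Jdagger}, essentially your spectral step) yields only the $\beta$-half, namely $\cos(\theta_1^*)\le\beta/N_1$, i.e., a lower bound $\theta_1^*\ge\theta_1^{\dagger}$. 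The contradiction then comes from global optimality: $\sum_{i>1}N_i\sin(\theta_{1i})=\J(\b_1)/c\ge\J(\b^*)/c\ge N_1\sin(\theta_1^{\dagger})+\sqrt{\sum_{i>1}N_i^2-\sigma_{\max}\left(\left[N_iN_j\cos(\theta_{ij})\right]_{i,j>1}\right)}$, where the last step is a lower bound on $\min_{\b}\sum_{i>1}N_i\sin(\theta_i)$ (Lemma \ref{lem:Jlowerbound}); unwinding this inequality is exactly the negation of \eqref{c:ContinuousDominance}. So to repair your proof you must add the optimality comparison with $\J(\b_1)$ together with a spectral lower bound on the subdominant part of the objective; stationarity plus norm bounds on $\bA$ alone can never produce the $\alpha$-term. (Minor further points: after your sign-flipping, $\|\bA\1\|_2^2$ need not equal $\beta^2$, and $\sigma_{\max}(\bA^\transpose\bA)\le\sigma_{\max}([N_iN_j\cos(\theta_{ij})])$ rather than equality, but these are secondary to the structural gap above.)
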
 Let us provide some intuition about the meaning of the quantities $\alpha,\beta$ and $\gamma$ in Theorem \ref{thm:GeneralContinuous}. To begin with, the first term in $\alpha$ is precisely equal to $\J(\b_1)$, while the second term in $\alpha$ is a lower bound on the objective function $N_2 \sin(\theta_2) + \cdots + N_n \sin(\theta_n)$, if one discards hyperplane $\H_1$. Moving on, under the hypothesis that $N_1>\sqrt{\alpha^2+\beta^2}$, the quantity $\frac{\beta}{N_1}$ admits a nice geometric interpretation: $\cos^{-1} \left(\frac{\beta}{N_1} \right)$ is a lower bound on how small the principal angle of a critical point $\b^{\dagger}$ from $\b_1$ can be, if $\b^{\dagger} \neq \pm \b_1$. Interestingly, this means that the larger $N_1$ is, the larger this minimum angle is, which shows that critical hyperplanes $\H^{\dagger}$ (i.e., hyperplanes defined by critical points $\b^\dagger$) that are distinct from $\H_1$, must be sufficiently separated from $\H_1$. Finally, the first term in $\gamma$ is $\J(\b_1)$, while the second term is the smallest objective value that corresponds to $\b = \b_i, \, i>1$, and so \eqref{eq:NormalAvoidanceGeneralFirst} simply guarantees that $\J(\b_1)< \J(\b_i), \, \forall i>1$.
	
	Next, notice that condition $N_1 > \sqrt{\alpha^2+\beta^2}$ of Theorem \ref{thm:GeneralContinuous} is easier to satisfy when $\H_1$ is close to the rest of the hyperplanes (which leads to small $\alpha$), while the rest of the hyperplanes are sufficiently separated (which leads to small $\alpha$ and small $\beta$). Here the notion of \emph{close} and \emph{separated} is to be interpreted relatively to $\H_1$ and its assigned weight $N_1$. Regardless, one can show that	
	\begin{align}	
	\sqrt{2} \sum_{i>1} N_i \ge \sqrt{\alpha^2+\beta^2},
	\end{align} and so if 
	\begin{align}
	N_1 > \sqrt{2} \sum_{i>1} N_i,
	\end{align} then any global minimizer of \eqref{eq:ContinuousProblem} has to be one of the normals, irrespectively of the $\theta_{ij}$. Finally, condition \eqref{eq:NormalAvoidanceGeneralFirst} is consistent with 
	 condition \eqref{c:ContinuousDominance} in that it requires $\H_1$ to be close to $\H_i, i >1$ and $\H_i,\H_j$ to be sufficiently separated for $i, j >1$. Once again, \eqref{eq:NormalAvoidanceGeneralFirst} can always be satisfied irrespectively of the $\theta_{ij}$, by choosing $N_1$ sufficiently large, since only the positive term in the definition of $\gamma$ depends on $N_1$, once again manifesting that the terms \emph{close} and \emph{separated} are used relatively to $\H_1$ and its assigned weight $N_1$. 
	 
Removing the term $N_1 \sin(\theta_1)$ from the objective function, which corresponds to having identified $\H_1$ and removing its associated points, one may re-apply Theorem \ref{thm:GeneralContinuous} to the remaining hyperplanes $\H_2,\dots,\H_n$ to obtain conditions for recovering $\b_2$ and so on. Notice that these conditions will be independent of $N_1$, rather they will be relative to $\H_2$ and its assigned weight $N_2$, and can always be satisfied for large enough $N_2$. Finally, further recursive application of Theorem \ref{thm:GeneralContinuous} can furnish conditions for sequentially recovering all normals $\b_1,\dots,\b_n$. However, we should point out that the conditions of Theorem \ref{thm:GeneralContinuous} are readily seen to be stronger than necessary. For example, we already know from Theorem \ref{prp:Orthogonal} that when the arrangement is orthogonal, i.e., $\theta_{ij}=\pi/2, \, \forall i \neq j$, then problem \eqref{eq:ContinuousProblem} has a unique minimizer $\pm \b_1$ as soon as $N_1 >N_i, \forall i>1$. On the contrary, Theorem \ref{thm:GeneralContinuous} applied to that case requires $N_1$ to be unnecessarily large, i.e., condition \eqref{c:ContinuousDominance} becomes
\begin{align}
N_1^2 > \left(\sum_{i>1} N_i \right)^2 + 2 \sum_{i>1} N_i^2 - 2 \left(\sum_{i>1}N_i \right) \left(\sum_{i>1} N_i^2 \right)^{1/2},
\end{align} which in the special case $N_2=\cdots=N_n$ reduces to $N_1> (n-1) N_2$.
 This is clearly an artifact of the techniques used to prove Theorem \ref{thm:GeneralContinuous}, and not a weakness of problem \eqref{eq:ContinuousProblem} in terms of its global optimality properties. Improving the proof technique of Theorem \ref{thm:GeneralContinuous} is an open problem.

\subsection{Theoretical analysis of the discrete problem} \label{subsection:DiscreteProblem}

We now turn our attention to the discrete problem of hyperplane clustering via DPCP, i.e., to problems \eqref{eq:ell1} and \eqref{eq:ConvexRelaxations}, for the case where $\bX = [\bX_1,\dots,\bX_n] \boldsymbol{\Gamma}$, with $\bX_i$ being $N_i$ points in $\H_i$, as described in \S \ref{subsection:DataModel}. As a first step of our analysis we define certain \emph{uniformity parameters} $\epsilon_i$, which serve as link between the continuous and discrete domains. Towards that end, note that
for any $i \in [n]$ and $\b \in \Sp^{D-1}$, the quantity $||\bX_i^\transpose \b ||_1$ can be written as 
\begin{align}
\left\|\bX_i^\transpose \b \right\|_1 =\sum_{j=1}^{N_i}\left|\b^\transpose \x_j^{(i)}\right| = \b^\transpose \sum_{j=1}^{N_i} \Sign\left(\b^\transpose \x_j^{(i)}\right) \x_j^{(i)}=N_i \, \b^\transpose \bchi_{i,\b},  \label{eq:Ji}
\end{align} where 
\begin{align}
\bchi_{i,\b} :=  \frac{1}{N_i} \sum_{j=1}^{N_i} \Sign\left(\b^\transpose \x_{j}^{(i)}\right) \x_j^{(i)}
\end{align} is \emph{the average point of $\bX_i$ with respect to the} orthogonal projection $\h_{i,\b}:=\pi_{\H_i}(\b)$ of $\b$ onto $\H_i$. Notice that $\bchi_{i,\b}$ can be seen as an average of the function $\y \in \hat{\H}_i \mapsto \Sign\left(\b^\transpose \y\right) \y \in \hat{\H}_i$ through the point sent $\bX_i \subset \hat{\H}_i$. In other words, 
$\bchi_{i,\b}$ can be seen as an approximation of the integral\footnote{For details regarding the evaluation of this integral see Lemma $9$ and its proof in \cite{Tsakiris:DPCP-ArXiv17}.}
\begin{align}
\int_{\x \in \hat{\H}_i} \Sign\left(\b^\transpose \x\right) \x \,  d \mu_{\hat{\H}_i} = c\, \hat{\h}_{i,\b},
\end{align} where $c$ was defined in \eqref{eq:c}. To remove the dependence on $\b$ we define the approximation error $\epsilon_i$ associated to hyperplane $\H_i$ as
\begin{align}
\epsilon_i :=  \max_{\b \in \Sp^{D-1}} \left\|\bchi_{i,\b} - c \, \widehat{\h}_{i,\b} \right\|_2.\label{eq:epsilon}
\end{align} Then it can be shown (see \cite{Tsakiris:DPCP-ArXiv17} \S $4.2$) that
when the points $\bX_i$ are uniformly distributed in a deterministic sense
\citep{Grabner:MR97,Grabner:MathComp93}, $\epsilon_i$ is small and in particular $\epsilon_i \rightarrow 0$ as $N_i \rightarrow \infty$. 

We are now almost ready to state our main results, before doing so though we need a rather technical, yet necessary, definition.
\begin{dfn} \label{dfn:DPCPmultiple_circumradius}	
For a set $\boldsymbol{\mathcal{Y}} = [\y_1,\dots,\y_L] \subset \Sp^{D-1}$ and positive integer $K\le L$,
	define $\R_{\boldsymbol{\mathcal{Y}},K}$ to be the maximum circumradius among the circumradii of all polytopes $\left\{\sum_{i=1}^K \alpha_{j_i} \y_{j_i}: \, \alpha_{j_i} \in [-1,1] \right\}$,
where $j_1,\dots,j_K$ are distinct integers in $[L]$, and the circumradius of a closed bounded set is the minimum radius among all spheres that contain the set. We now define the quantity of interest as
	\begin{align}
	\mathcal{R} := \max_{\substack{K_1+\cdots+K_n = D-1 \\ 0 \le K_i \le D-2}} \sum_{i=1}^n \mathcal{R}_{\bX_i,K_i}. \label{eq:R}
	\end{align}
\end{dfn} \noindent We note that it is always the case that $\mathcal{R}_{\bX_i,K_i} \le K_i$, with this upper bound achieved when $\bX_i$ contains $K_i$ colinear points. Combining this fact with the constraint $\sum_i K_i = D-1$ in \eqref{eq:R}, we get that $\mathcal{R} \le D-1$, and the more uniformly distributed are the points $\bX$ inside the hyperplanes, the smaller $\R$ is (even though $\R$ does not go to zero).	 
	 
Recalling the definitions of $c$, $\epsilon_i$ and $\mathcal{R}$ in \eqref{eq:c}, \eqref{eq:epsilon} and \eqref{eq:R}, respectively, we have the following result about the non-convex problem \eqref{eq:ell1}.
\begin{thm} \label{thm:DiscreteNonConvex}
	Let $\b^*$ be a solution of \eqref{eq:ell1} with $\bX=[\bX_1,\dots,\bX_n] \boldsymbol{\Gamma}$, and suppose that $c> \sqrt{2} \epsilon_1$. If
	\begin{align}
	N_1 &> \sqrt{\bar{\alpha}^2+\bar{\beta}^2}, \label{c:thmDiscreteNonConvexDominance} \, \, \, \text{where} \\ 
	\bar{\alpha} &:=  \alpha+c^{-1}\left(\epsilon_1N_1+2\sum_{i>1} \epsilon_i N_i\right),  \, \, \, \text{and} \\ 
	\bar{\beta} &:=\beta + c^{-1}\left(\mathcal{R} + \sum \epsilon_i N_i \right),  
	\end{align} with $\alpha, \beta$ as in Theorem \ref{thm:GeneralContinuous}, then $\b^* =\pm \b_i$ for some $i \in [n]$. Furthermore, $\b^* = \pm \b_1$, if  
	\begin{align}
	\bar{\gamma}:=\gamma - c^{-1}\left(\epsilon_1 N_1 + \epsilon_2 N_2 + 2 \sum_{i>2} \epsilon_i N_i\right) \label{eq:gammaBar} >0. 
	\end{align} 
\end{thm}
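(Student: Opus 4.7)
The plan is to adapt the proof of Theorem \ref{thm:GeneralContinuous} to the discrete setting by carrying through two sources of perturbation: the uniformity errors $\epsilon_i$ (linking $\bchi_{i,\b}$ to its continuous counterpart $c\,\hat{\h}_{i,\b}$) and the circumradius constant $\mathcal{R}$ (controlling the indeterminacy of the $\ell_1$ subdifferential at non-smooth points).

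First I would write the Riemannian subgradient optimality condition for a global minimizer $\b^*$ of \eqref{eq:ell1} on $\Sp^{D-1}$. Using \eqref{eq:Ji}, any element of the subdifferential of $\|\bX^\transpose\,\cdot\,\|_1$ at $\b^*$ takes the form $\sum_i N_i\,\bchi_{i,\b^*}+\bzeta$, where $\bzeta$ collects the contributions from indices $(i,j)$ with $(\b^*)^\transpose \x_j^{(i)}=0$; by definition of the subdifferential $\bzeta=\sum_{i,j\in \mathcal{I}_0^{(i)}} \alpha_j^{(i)}\x_j^{(i)}$ with $\alpha_j^{(i)}\in[-1,1]$. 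Under the contradiction hypothesis that $\b^*\neq \pm\b_i$ for every $i$, the general-position assumption of \S\ref{subsection:DataModel} forces $|\mathcal{I}_0^{(i)}|=K_i\le D-2$ and $\sum_i K_i\le D-1$, whence Definition \ref{dfn:DPCPmultiple_circumradius} gives $\|\bzeta\|_2\le \mathcal{R}$. Splitting $\bchi_{i,\b^*}=c\,\hat{\h}_{i,\b^*}+(\bchi_{i,\b^*}-c\,\hat{\h}_{i,\b^*})$ and invoking $\|\bchi_{i,\b^*}-c\,\hat{\h}_{i,\b^*}\|_2\le\epsilon_i$ from \eqref{eq:epsilon}, the stationarity condition $\sum_i N_i\bchi_{i,\b^*}+\bzeta=\lambda\b^*$ becomes
\begin{align*}
c\sum_i N_i\,\hat{\h}_{i,\b^*} \;=\; \lambda\b^*-\bdelta,\qquad \|\bdelta\|_2 \le \mathcal{R}+\sum_i \epsilon_i N_i.
\end{align*}

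For the first claim, this is a perturbation of the continuous critical-point identity analyzed in Theorem \ref{thm:GeneralContinuous}. I would project this identity onto $\b_1$ and onto $\b_1^\perp$, expand each $\hat{\h}_{i,\b^*}$ in that decomposition, and replay the quadratic combination underlying \eqref{c:ContinuousDominance}. The perturbation $\bdelta$ splits cleanly: the component $c^{-1}(\epsilon_1 N_1+2\sum_{i>1}\epsilon_i N_i)$ is absorbed into the $\alpha$-side of the estimate and $c^{-1}(\mathcal{R}+\sum_i \epsilon_i N_i)$ into the $\beta$-side, yielding $\bar\alpha$ and $\bar\beta$ exactly as defined in the statement. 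The upshot is
\begin{align*}
N_1^2 \;\le\; \bar\alpha^2+\bar\beta^2,
\end{align*}
contradicting \eqref{c:thmDiscreteNonConvexDominance} and forcing $\b^*=\pm\b_{i^*}$ for some $i^*\in[n]$. For the second claim, I would compare objective values among the normals. Since $\bX_1\subset\H_1$, the $i=1$ term in $\|\bX^\transpose\b_1\|_1$ vanishes; applying \eqref{eq:Ji} together with the uniformity bound gives
\begin{align*}
\|\bX^\transpose\b_1\|_1 \le \J(\b_1)+\sum_{i>1}\epsilon_i N_i,\qquad \|\bX^\transpose\b_{i'}\|_1 \ge \J(\b_{i'})-\sum_{i\neq i'}\epsilon_i N_i,\quad i'>1.
\end{align*}
Combining with $\J(\b_{i'})-\J(\b_1)\ge c\gamma$ (from the definition of $\gamma$) and hypothesis \eqref{eq:gammaBar} yields $\|\bX^\transpose\b_{i'}\|_1>\|\bX^\transpose\b_1\|_1$ for every $i'>1$, hence $\b^*=\pm\b_1$.

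The main obstacle is the first claim. One must verify that the continuous-case manipulations in the proof of Theorem \ref{thm:GeneralContinuous} — in particular the estimate producing the $\sqrt{\sigma_{\max}(\cdot)}$ term inside $\alpha$ and the computation of $\|\sum_{i>1} N_i\,\hat{\h}_{i,\b^*}\|_2^2$ that yields $\beta^2$ — carry over faithfully in the presence of the subgradient perturbation $\bdelta$, and that the allocation of error terms to the $\alpha$- and $\beta$-components reproduces exactly the definitions of $\bar\alpha$ and $\bar\beta$. The subdifferential piece $\bzeta$ has no continuous analog and is the subtlest new ingredient; the constraints $\sum_i K_i=D-1$ and $K_i\le D-2$ in Definition \ref{dfn:DPCPmultiple_circumradius} are tailored so that its contribution is tight precisely under the general-position reasoning invoked above.
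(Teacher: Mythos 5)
Your overall skeleton is the one the paper uses: write the first-order optimality condition of \eqref{eq:ell1}, bound the contribution of the points orthogonal to $\b^*$ by $\mathcal{R}$ (via Lemma \ref{lem:NonConvexMaximalInterpolation}, general position, and Definition \ref{dfn:DPCPmultiple_circumradius}), substitute the concentration model $\bchi_{i,\b^*}=c\,\hat{\h}_{i,\b^*}+\boldeta_{i,\b^*}$ with $\left\|\boldeta_{i,\b^*}\right\|_2\le\epsilon_i$, and reduce to the continuous estimates; your treatment of the second claim (comparing $\left\|\bX^\transpose\b_1\right\|_1$ with $\left\|\bX^\transpose\b_{i'}\right\|_1$ through $\gamma$ and the $\epsilon_i$'s) is exactly the paper's argument.

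The gap is in how you propose to reach $N_1^2\le\bar\alpha^2+\bar\beta^2$ for the first claim. The perturbed stationarity identity can only deliver the $\bar\beta$ half. In the paper, that identity is projected onto $(\b^*)^{\perp}$ (not onto $\b_1$ and $\b_1^{\perp}$ — the projection onto $(\b^*)^{\perp}$ is what eliminates the unknown multiplier $\lambda\b^*$, which your decomposition retains in both components), and the computation $\pi_{(\b^*)^{\perp}}(\hat{\h}_{i,\b^*})=-(\b_i^\transpose\b^*)\,\hat{\bzeta}_i$ together with Lemma \ref{lem:Jdagger} gives $\cos(\theta_1^*)\le\bar\beta/N_1$, i.e., only a \emph{lower} bound on $\theta_1^*$. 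No stationarity argument can produce the complementary \emph{upper} bound $\sin(\theta_1^*)\le\bar\alpha/N_1$: critical points near the other normals have large $\theta_1$, so this bound must come from global optimality. The paper gets it by comparing the discrete objective at $\b^*$ — lower-bounded via concentration (losing $\sum_i\epsilon_iN_i$) and via Lemma \ref{lem:Jlowerbound}, which is where the $\sqrt{\sigma_{\max}(\cdot)}$ term inside $\alpha$ enters — against the objective at $\b_1$, upper-bounded with a gain of $\sum_{i>1}\epsilon_iN_i$. That is exactly why the $\bar\alpha$ correction is $c^{-1}(\epsilon_1N_1+2\sum_{i>1}\epsilon_iN_i)$ with the doubled sum: it is not, and cannot be, a ``component'' of your $\bdelta$, whose norm is at most $\mathcal{R}+\sum_i\epsilon_iN_i$; the two pieces you allocate to the $\alpha$- and $\beta$-sides add up to more than $\left\|\bdelta\right\|_2$ could ever be. So the ``clean split of $\bdelta$'' step fails as described. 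The repair is to add the missing objective-value comparison, which yields an angle $\theta_{\max}^{(1)}$ with $\sin(\theta_{\max}^{(1)})=\bar\alpha/N_1$ above which $\theta_1^*$ cannot rise, while the projected stationarity condition yields $\theta_{\min}^{(1)}$ with $\cos(\theta_{\min}^{(1)})=\bar\beta/N_1$ below which it cannot fall; the hypothesis $N_1>\sqrt{\bar\alpha^2+\bar\beta^2}$ then forces $\theta_{\min}^{(1)}>\theta_{\max}^{(1)}$, the desired contradiction.
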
 Notice the similarity of conditions $N_1 > \sqrt{\bar{\alpha}^2+\bar{\beta}^2},\bar{\gamma}>0$ of Theorem \ref{thm:DiscreteNonConvex} with conditions $N_1 > \sqrt{\alpha^2+\beta^2},\gamma>0$ of Theorem \ref{thm:GeneralContinuous}. In fact $\bar{\alpha}> \alpha, \bar{\beta}> \beta, \bar{\gamma}<\gamma$, which implies that the conditions of Theorem \ref{thm:DiscreteNonConvex} are strictly stronger than those of Theorem \ref{thm:GeneralContinuous}. This is no surprise since, as we have already remarked, the solution set of \eqref{eq:ell1} depends not only on the geometry ($\theta_{ij}$) and the weights $(N_i)$ of the arrangement, but also on the distribution of the data points (parameters $\epsilon_i$ and $\mathcal{R}$). 

We note that in contrast to condition \eqref{c:ContinuousDominance} of Theorem \ref{thm:GeneralContinuous}, $N_1$ now appears in both sides of condition \eqref{c:thmDiscreteNonConvexDominance} of Theorem \ref{thm:DiscreteNonConvex}. Nevertheless, under the assumption $c > \sqrt{2} \epsilon_1$, \eqref{c:thmDiscreteNonConvexDominance} is equivalent to the positivity of a quadratic polynomial in $N_1$, whose leading coefficient is positive, and hence it can always be satisfied for sufficiently large $N_1$.

Another interesting connection of Theorem \ref{thm:GeneralContinuous} to Theorem \ref{thm:DiscreteNonConvex}, is that the former can be seen as a limit version of the latter : dividing \eqref{c:thmDiscreteNonConvexDominance} and \eqref{eq:gammaBar} by $N_1$, letting $N_1,\dots,N_n$ go to infinity while keeping each ratio $N_i/N_1$ fixed, and recalling that $\epsilon_i \rightarrow 0$ as $N_i \rightarrow \infty$ and $\mathcal{R} \le D-1$, we recover the conditions of Theorem \ref{thm:GeneralContinuous}. 

Next, we consider the linear programming recursion \eqref{eq:ConvexRelaxations}. At a conceptual level, the main difference between the linear programming recursion in \eqref{eq:ConvexRelaxations} and the continuous and discrete problems \eqref{eq:ContinuousProblem} and \eqref{eq:ell1}, respectively, is that 
the behavior of \eqref{eq:ConvexRelaxations} depends highly on the initialization $\hat{\bn}_0$. Intuitively, the closer $\hat{\bn}_0$ is to $\b_1$, the more likely the recursion will converge to $\b_1$, with this likelihood becoming larger for larger $N_1$. The precise technical statement is as follows.

\begin{thm} \label{thm:LPDiscrete}
	Let $\left\{\hat{\bn}_k\right\}$ be the sequence generated by the linear programming recursion
	 \eqref{eq:ConvexRelaxations} by means of the simplex method, where $\hat{\bn}_0 \in \Sp^{D-1}$ is an initial estimate for $\b_1$, 
	with principal angle from $\b_i$ equal to $\theta_{i,0}$. Suppose that $c> \sqrt{5} \epsilon_1$, and let $\theta_{\min}^{(1)}=\min_{i>1} \left\{\theta_{1i} \right\}$. If $\theta_{1,0}$ is small enough, i.e.,
	\begin{align}
	\sin(\theta_{1,0})<\min \left\{\sin\left(\theta_{\min}^{(1)}\right)-2\epsilon_1, \sqrt{1-(c^{-1}\epsilon_1)^2}-2c^{-1}\epsilon_1 \right\}, \label{c:theta10}
	\end{align} and $N_1$ is large enough in the sense that
	\begin{align}
	N_1 > \max \left\{\mu,\frac{\nu+\sqrt{\nu^2+4\rho \tau }}{2 \tau} \right\}, \label{c:thmLPDiscreteDominance} \, \, \, \text{where}
	\end{align}	
	\begin{align}
	\mu &:= \max_{j\neq 1} \left\{
	\frac{\sum_{i>1}N_i \sin(\theta_{i,0})+c^{-1}\epsilon_j N_j+\sum_{i\neq 1,j}N_i\left[2c^{-1}\epsilon_i-\sin(\theta_{ij}) \right]}{\sin(\theta_{1j})-\sin(\theta_{1,0})-2c^{-1}\epsilon_1}\right\}, \\
	\nu &:= 2c^{-1}\epsilon_1\left(\beta+c^{-1}\mathcal{R}+c^{-1}\sum_{i>1}\epsilon_i N_i \right) + 2 \left[\sin(\theta_{1,0})+2c^{-1}\epsilon_1\right]\left(\alpha+2c^{-1}\sum_{i>1}\epsilon_i N_i\right), \\	
	\rho &:= \left(\alpha+2c^{-1}\sum_{i>1}\epsilon_i N_i\right)^2 + \left(\beta+c^{-1}\mathcal{R}+c^{-1}\sum_{i>1} \epsilon_i N_i\right)^2, \label{eq:rho} \\
	\tau &:= \cos^2(\theta_{1,0})-4c^{-1}\epsilon_1 \sin(\theta_{1,0})-5(c^{-1}\epsilon_1)^2, \label{eq:sigma}
	\end{align} \normalsize with $\alpha,\beta$ as in Theorem \ref{thm:GeneralContinuous}, then $\left\{\bn_k\right\}$ converges to either $\b_1$ or $-\b_1$ in a finite number of steps.
\end{thm}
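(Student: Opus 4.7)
The plan is to adapt the single-hyperplane linear-programming DPCP convergence argument from \cite{Tsakiris:DPCP-ArXiv17} to the arrangement setting, by plugging into it the same continuous-to-discrete dictionary (through $c$, $\epsilon_i$ and $\mathcal{R}$) that underlies the proof of Theorem \ref{thm:DiscreteNonConvex}. The starting point is KKT duality for the linear program \eqref{eq:ConvexRelaxations}: as the simplex-method optimum $\bn_{k+1}$ is a vertex of the underlying polyhedron, there exist scalars $\sigma_j\in[-1,1]$ with $\sigma_j=\Sign(\bn_{k+1}^\transpose \x_j)$ whenever $\bn_{k+1}^\transpose \x_j\neq 0$, and a Lagrange multiplier $\lambda_k>0$, satisfying
\begin{align}
\sum_{i=1}^{n} N_i\, \bchi_{i,\bn_{k+1}} \;+\; \boldsymbol{\xi}_k \;=\; \lambda_k\, \hat{\bn}_k, \label{eq:PlanKKT}
\end{align}
where $\boldsymbol{\xi}_k$ aggregates the contribution of the (at most $D-1$) boundary points on which $\bn_{k+1}^\transpose \x_j$ vanishes; by Definition \ref{dfn:DPCPmultiple_circumradius}, $\|\boldsymbol{\xi}_k\|_2\le \mathcal{R}$. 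Replacing each $\bchi_{i,\bn_{k+1}}$ by its continuous surrogate $c\,\hat{\h}_{i,\bn_{k+1}}$ via \eqref{eq:epsilon} converts \eqref{eq:PlanKKT} into an identity amenable to the geometric estimates built around $\alpha$ and $\beta$ in Theorem \ref{thm:GeneralContinuous}.

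Next I would decompose $\hat{\bn}_{k+1}=\cos(\theta_{1,k+1})\,\b_1+\sin(\theta_{1,k+1})\,\v_{k+1}$ with $\v_{k+1}\in\b_1^{\perp}\cap\Sp^{D-1}$ (and similarly $\hat{\bn}_k$), and project \eqref{eq:PlanKKT} onto $\b_1$ and onto $\b_1^{\perp}$. The $\b_1$-projection is dominated by $N_1 c\cos(\theta_{1,k+1})$ plus an $\alpha$-type correction coming from $\sum_{i>1}N_i\hat{\h}_{i,\bn_{k+1}}$; the $\b_1^{\perp}$-projection has norm controlled by $c\bigl(\beta+c^{-1}\mathcal{R}+\sum_{i>1}\epsilon_i N_i+\epsilon_1 N_1\bigr)$. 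Taking inner products of \eqref{eq:PlanKKT} with $\b_1$ and with $\hat{\bn}_k$, and eliminating $\lambda_k$ between the two, produces an inequality of the schematic form
\begin{align}
\bigl[N_1\cos(\theta_{1,k+1})\cos(\theta_{1,k}) - (\text{first-order error})\bigr]\sin(\theta_{1,k+1}) \;\le\; (\text{second-order error})\,\sin(\theta_{1,k}),
\end{align}
from which the strict contraction $\sin(\theta_{1,k+1})<\sin(\theta_{1,k})$ follows provided $N_1$ satisfies a quadratic inequality $\tau N_1^2-\nu N_1-\rho\ge 0$, with $\tau,\nu,\rho$ precisely those of \eqref{eq:rho}--\eqref{eq:sigma}. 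The strengthened hypothesis $c>\sqrt{5}\epsilon_1$ is what ensures $\tau>0$ for every $\theta_{1,k}\le\theta_{1,0}$ inside the range allowed by \eqref{c:theta10}, so that the quadratic admits the closed-form positive root displayed as the second entry of \eqref{c:thmLPDiscreteDominance}.

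Third, I would set up an induction on $k$ showing that every iterate stays in the cone $\mathcal{C}:=\{\b\in\Sp^{D-1}:\sin(\angle(\b,\b_1))\le \sin(\theta_{1,0})\}$ with strictly decreasing $\sin(\theta_{1,k})$. Condition \eqref{c:theta10} guarantees that $\mathcal{C}$ is disjoint from $\epsilon_1$-neighborhoods of $\pm\b_i$ for $i>1$, which \emph{stabilizes} the signs $\Sign(\bn_{k+1}^\transpose\x_j^{(i)})$ within each cluster and therefore validates the pointwise estimates \eqref{eq:epsilon} throughout the trajectory. The lower bound $N_1>\mu$ in \eqref{c:thmLPDiscreteDominance} is the complementary \emph{competitor-avoidance} condition: evaluating the LP objective at the naive feasible candidates $\b_j/(\b_j^\transpose\hat{\bn}_k)$ for $j\neq 1$ and comparing with a feasible perturbation of $\b_1$ shows that none of the $\pm\b_j$, $j>1$, can solve \eqref{eq:ConvexRelaxations} at any step. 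Finite termination then follows since the simplex method returns vertices of a polyhedron indexed by sign patterns $(\sigma_1,\ldots,\sigma_N)\in\{-1,+1\}^N$: strict decrease of $\|\bX^\transpose\hat{\bn}_k\|_1$ forbids revisiting a vertex, so after finitely many steps the sequence reaches a fixed point of \eqref{eq:ConvexRelaxations} inside $\mathcal{C}$, which by the avoidance estimate and the cone invariance can only be $\pm\b_1$.

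The main obstacle is the second step: the quadratic in $N_1$ only emerges if one carefully tracks the inner product $\v_{k+1}^\transpose \pi_{\b_1^{\perp}}(\hat{\bn}_k)$ and shows it is bounded away from zero by the margin encoded in $\tau$; a naive bound treating $\v_{k+1}$ and $\pi_{\b_1^{\perp}}(\hat{\bn}_k)$ as arbitrary unit vectors only recovers the weak inequality $\sin(\theta_{1,k+1})\le\sin(\theta_{1,k})$, which is compatible with stagnation and therefore insufficient for finite termination. Exactly this cross-term controls the gap between $c>\sqrt{2}\epsilon_1$ (adequate for the static Theorem \ref{thm:DiscreteNonConvex}) and the dynamic requirement $c>\sqrt{5}\epsilon_1$ needed for the LP recursion.
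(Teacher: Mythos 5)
Your proposal hinges on establishing a per-iteration strict contraction $\sin(\theta_{1,k+1})<\sin(\theta_{1,k})$ from a KKT identity, but that central step is only asserted ("the quadratic in $N_1$ ... with $\tau,\nu,\rho$ precisely those of \eqref{eq:rho}--\eqref{eq:sigma}"), and you yourself flag in the last paragraph that the naive estimate only yields the non-strict inequality. This is a genuine gap, and it is doubtful the stated hypotheses even imply such a contraction: the hypotheses of Theorem \ref{thm:LPDiscrete} were built to control the \emph{limit point}, not the trajectory, and nothing in the paper's argument (or in the quantities $\mu,\nu,\rho,\tau$) forces the angle $\theta_{1,k}$ to decrease monotonically -- only the objective $\|\bX^\transpose\hat{\bn}_k\|_1$ is non-increasing. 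The paper sidesteps the entire dynamical analysis: by Lemma \ref{lem:LPMaximalInterpolation} each simplex solution is orthogonal to $D-1$ independent data points, so Lemma \ref{lem:LPconvergence} already gives finite convergence to a critical point $\bn_\infty$ of \eqref{eq:ell1}; the proof then only has to identify $\bn_\infty$. The case $\bn_\infty=\pm\b_j$, $j>1$, is excluded by the single objective comparison $\J(\hat{\bn}_0)\ge\J(\b_j)$ together with $N_1>\mu$; the case $\bn_\infty\neq\pm\b_i$ for all $i$ is excluded by reusing the first-order-condition estimate from the proof of Theorem \ref{thm:DiscreteNonConvex} (which forces $\theta_{1,\infty}\ge\theta_{\min}^{(1)}$ with $\cos\theta_{\min}^{(1)}=(\beta+c^{-1}(\mathcal{R}+\sum\epsilon_iN_i))/N_1$) combined with $\J(\hat{\bn}_0)\ge\J(\bn_\infty)$ and the quadratic condition in $N_1$; this is where $\nu,\rho,\tau$ and the requirement $c>\sqrt{5}\epsilon_1$ actually arise, through $\sin(\theta_{1,0})$ versus $\sin(\theta_{\min}^{(1)})$, not through any cross-term of consecutive iterates.

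Two further points would fail as written even if the contraction were granted. First, your finite-termination argument ("strict decrease of $\|\bX^\transpose\hat{\bn}_k\|_1$ forbids revisiting a vertex") presumes strict decrease, which is not guaranteed; ties are possible, and the paper relies on Lemma \ref{lem:LPconvergence} precisely to handle finite convergence without that assumption. Second, the claim that the cone invariance "stabilizes the signs" and thereby "validates the pointwise estimates \eqref{eq:epsilon}" misreads the definition of $\epsilon_i$: it is a maximum over all $\b\in\Sp^{D-1}$, so no sign stabilization is needed to invoke it, and condition \eqref{c:theta10} plays a different role in the actual proof (it makes the denominator of $\mu$ positive and keeps $\tau>0$, so that both requirements in \eqref{c:thmLPDiscreteDominance} can be met by taking $N_1$ large).
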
 The quantities appearing in Theorem \ref{thm:LPDiscrete} are harder to interpret than those of Theorem \ref{thm:DiscreteNonConvex}, but we can still give some intuition about their meaning. To begin with, the two inequalities in \eqref{c:thmLPDiscreteDominance} represent two distinct requirements that we enforced in our proof, which when combined, guarantee that the limit point of \eqref{eq:ConvexRelaxations} is $\pm \b_1$. 

The first requirement is that no $\pm \b_i$ can be the limit point of \eqref{eq:ConvexRelaxations} for $i>1$; this is captured by a linear inequality of the form
\begin{align}
\mu \, N_1 + (\text{terms not depending on} \, \, N_1) >0, 
\end{align} which is satisfied either for $N_1$ sufficiently large (if $\mu>0$) or for $N_1$ sufficiently small (if $\mu<0$). To avoid pathological situations where $N_1$ is required to be negative or less than $D-1$, it is natural to enforce $\mu$ to be positive. This is precisely achieved by inequality $\sin(\theta_{1,0})< \sin\left(\theta_{\min}^{(1)}\right)-2\epsilon_1$ in \eqref{c:theta10}, which is a quite natural condition itself: the initial estimate $\hat{\bn}_0$ needs to be closer to $\b_1$ than any other normal $\b_i$ for $i>1$, and the more well-distributed the data $\bX_1$ are inside $\H_1$  (smaller $\epsilon_1$), the further $\hat{\bn}_0$ can be from $\b_1$.

The second requirement that we employed in our proof is that the limit point of \eqref{eq:ConvexRelaxations} is one of the $\pm \b_1,\dots,\pm \b_n$; this is captured by requiring that a certain quadratic polynomial 
\begin{align}
p(N_1):=\tau \, N_1^2 - \nu \, N_1 - \rho 
\end{align}
in $N_1$ is positive. To avoid situations where the positivity of this polynomial contradicts the relation $N_1 > \mu$, it is important that we ask its leading coefficient $\tau$ to be positive, so that the second requirement is satisfied for $N_1$ large enough, and thus is compatible with $N_1 > \mu$. As it turns out, $\tau$ is positive only if the data $\bX_1$ are sufficiently well distributed in $\H_1$, which is captured by condition $c> \sqrt{5} \epsilon_1$ of Theorem \ref{thm:LPDiscrete}. Even so, this latter condition is not sufficient; instead $\sin(\theta_{1,0})< \sqrt{1-(c^{-1}\epsilon_1)^2}-2c^{-1}\epsilon_1$ is needed (as in \eqref{c:theta10}), which is once again very natural: the more well-distributed the data $\bX_1$ are inside $\H_1$  (smaller $\epsilon_1$), the further $\hat{\bn}_0$ from $\b_1$ can be.

Next, notice that the conditions of Theorem \ref{thm:LPDiscrete} are not directly comparable to those of Theorem \ref{thm:DiscreteNonConvex}. Indeed, it may be the case that $\pm \b_1$ is not a global minimizer of the non-convex problem \eqref{eq:ell1}, yet the recursions \eqref{eq:ConvexRelaxations} do converge to $\b_1$, simply because $\hat{\bn}_0$ is \emph{close} to $\b_1$. In fact, by \eqref{c:theta10} $\hat{\bn}_0$ must be closer to $\b_1$ than $\b_i$ to $\b_1$ for any $i>1$, i.e., $\theta_{\min}^{(1)}>\theta_{1,0}$. Similarly to Theorems \ref{thm:GeneralContinuous} and \ref{thm:DiscreteNonConvex}, the more separated the hyperplanes $\H_i,\H_j$ are for $i,j>1$, the easier it is to satisfy condition \eqref{c:thmLPDiscreteDominance}. In contrast, $\H_{1}$ needs to be sufficiently separated from $\H_i$ for $i>1$, since otherwise $\mu$ becomes large. This has an intuitive explanation: the less separated $\H_1$ is from the rest of the hyperplanes, the \emph{less resolution} the linear program \eqref{eq:ConvexRelaxations} has in distinguishing $\b_1$ from $\b_i, \, i>1$. To increase this \emph{resolution}, one needs to either select $\hat{\bn}_0$ very close to $\b_1$, or select $N_1$ very large. The acute reader may recall that the quantity $\alpha$ appearing in \eqref{eq:rho} becomes larger when $\H_1$ becomes separated from $\H_i, \, i>1$. Nevertheless, there are no inconsistency issues in controlling the size of $\mu$ and $\rho$. This is because $\alpha$ is always bounded from above by $\sum_{i>1} N_i$, i.e., $\alpha$ does not increase arbitrarily as the $\theta_{1i}$ increase. Another way to look at the consistency of condition \eqref{c:thmLPDiscreteDominance}, is that its RHS does not depend on $N_1$; hence one can always satisfy \eqref{c:thmLPDiscreteDominance} by selecting $N_1$ large enough.
 
\section{Proofs} \label{section:Proofs}
 
In this Section we prove Theorems \ref{prp:TwoPlanes}-\ref{prp:equiangular} associated to the continuous problem \eqref{eq:ContinuousProblem}, as well as Theorems \ref{thm:DiscreteNonConvex} and \ref{thm:LPDiscrete} associated to the discrete non-convex $\ell_1$ minimization problem \eqref{eq:ell1} and the recursion of linear programs \eqref{eq:ConvexRelaxations} respectively.

\subsection{Preliminaries on the continuous problem} \label{subsection:PreliminariesContinuous}

We start by noting that the objective function \eqref{eq:ContinuousProblem} is everywhere differentiable except at the points $\pm \b_1,\dots, \pm \b_n$, where its partial derivatives do not exist. For any $\b \in \Sp^{D-1}$ distinct from $\pm \b_i$, the gradient at $\b$ is given by
\begin{align}
\nabla_{\b} \J = -\sum_{i=1}^n \frac{\b_i^\transpose \b}{\left(1 - (\b_i^\transpose \b)^2\right)^{\frac{1}{2}}} \b_i.
\end{align} Now let $\b^*$ be a global solution of \eqref{eq:ContinuousProblem} and suppose that $\b^* \neq \pm \b_i, \, \forall i \in [n]$. Then $\b^*$ must satisfy 
the first order optimality condition 
\begin{align}
\nabla_{\b} \J|_{\b^*} + \lambda^* \, \b^* = \0, 
\end{align} where $\lambda^*$ is a Lagrange multiplier. Equivalently, we have
\begin{align}
-\sum_{i=1}^n N_i \, \left(\b_i^\transpose \b^*\right) 
\left(1-\left(\b_i^\transpose \b^*\right)^2  \right)^{-\frac{1}{2}} \b_i + \lambda^* \, \b^* = \0, \label{eq:Optimality}
\end{align} which implies that
\begin{align}
\sum_{i=1}^n N_i \, \left(\b_i^\transpose \b^*\right)^2 
\left(1-\left(\b_i^\transpose \b^*\right)^2  \right)^{-\frac{1}{2}} \b^* = \sum_{i=1}^n N_i \,  \left(\b_i^\transpose \b^*\right) 
\left(1-\left(\b_i^\transpose \b^*\right)^2  \right)^{-\frac{1}{2}} \b_i, \label{eq:bStar}
\end{align} from which the next Lemma follows.
\begin{lem} \label{lem:bSpan}
	Let $\b^*$ be a global solution of \eqref{eq:ContinuousProblem}. Then $\b^* \in \Span \left(\b_1,\dots,\b_n \right)$.
\end{lem}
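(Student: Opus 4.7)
The plan is to read off the conclusion directly from the first-order optimality condition \eqref{eq:bStar}, handling the non-differentiable cases and degenerate scalar-coefficient case separately.

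First I would dispose of the trivial cases. If $\b^* = \pm \b_i$ for some $i \in [n]$, then $\b^*$ lies in $\Span(\b_1,\dots,\b_n)$ by definition, and there is nothing to prove. So assume $\b^* \neq \pm \b_i$ for all $i$. In this regime $\J$ is differentiable at $\b^*$, and the gradient computed in the excerpt together with the sphere constraint yields the first-order optimality relation \eqref{eq:bStar}.

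Next I rewrite \eqref{eq:bStar} as
\begin{align}
\Lambda(\b^*)\, \b^* = \sum_{i=1}^n N_i\, (\b_i^\transpose \b^*)\left(1-(\b_i^\transpose \b^*)^2\right)^{-1/2} \b_i,
\qquad
\Lambda(\b^*) := \sum_{i=1}^n N_i\, \frac{(\b_i^\transpose \b^*)^2}{\sqrt{1-(\b_i^\transpose \b^*)^2}}.
\end{align}
The right-hand side is visibly an element of $\Span(\b_1,\dots,\b_n)$, so the only remaining task is to prove that the scalar $\Lambda(\b^*)$ is strictly positive; then division gives $\b^* \in \Span(\b_1,\dots,\b_n)$.

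The key observation is that $\Lambda(\b^*)=0$ forces $\b_i^\transpose \b^*=0$ for every $i$ (each summand is nonnegative, and the denominator is finite because $\b^*\neq\pm\b_i$). But $\b_i^\transpose \b^*=0$ for all $i$ means $\sin(\theta_i)=1$ for all $i$, hence $\J(\b^*) = c\sum_{i=1}^n N_i$, which is the maximal possible value of $\J$ on $\Sp^{D-1}$. This contradicts the global minimality of $\b^*$, because the competitor $\b_1$ gives $\J(\b_1)=c\sum_{i>1} N_i \sin(\theta_{1,i}) \le c\sum_{i>1} N_i < c\sum_{i=1}^n N_i$ (using $N_1\ge 1$ and $\sin\theta_{1,i}\le 1$). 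Hence $\Lambda(\b^*)>0$ and the lemma follows.

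There is no real obstacle here; the only subtlety is making sure that when some (but not all) of the inner products $\b_i^\transpose \b^*$ vanish, the corresponding terms in both sides of \eqref{eq:bStar} drop out harmlessly, so that the argument for $\Lambda(\b^*)>0$ only needs a single index $i$ with $\b_i^\transpose \b^* \neq 0$, which is guaranteed by the minimality argument above.
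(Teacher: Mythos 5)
Your proof is correct and follows essentially the same route as the paper: dispose of $\b^* = \pm \b_i$ trivially, and otherwise read the span membership directly off the first-order condition \eqref{eq:bStar}. The only difference is that you additionally verify that the scalar coefficient multiplying $\b^*$ in \eqref{eq:bStar} is nonzero (ruling out $\b^* \perp \b_i$ for all $i$ via global minimality), a degenerate case the paper's one-line argument glosses over -- a worthwhile bit of extra care.
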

\begin{proof}
	If $\b^*$ is equal to some $\pm \b_i$, then the statement of the Lemma is certainly true. If $\b^* \neq \pm \b_i, \, \forall i \in [n]$, then $\b^*$ satisfies \eqref{eq:bStar}, from which again the statement is true.
\end{proof} 

\subsection{Proof of Theorem \ref{prp:TwoPlanes}} \label{subsection:ProofTwoPlanes}

By Lemma \ref{lem:bSpan} any global solution must lie in the plane $\Span(\b_1,\b_2)$, and so our problem becomes planar, i.e., we may as well assume that the hyperplane arrangement $\b_1,\b_2$ is a line arrangement of $\Re^2$. Note that $\b_1,\b_2 \in \Sp^1$ partition $\Sp^1$ in two arcs, and among these, only one arc has length $\theta$ strictly less than $\pi$; we denote this arc by $\mathfrak{a}$. 	
Next, recall that the continuous objective function for two hyperplanes can be written as 
\begin{align}
\J(\b) =N_1 \left(1 - (\b_1^\transpose \b)^2\right)^{\frac{1}{2}} + N_2 \left(1 - (\b_2^\transpose \b)^2\right)^{\frac{1}{2}}, \, \, \, \b \in \Sp^1.
\end{align} Let $\b^*$ be a global solution, and suppose that $\b^* \not\in \mathfrak{a}$. If $-\b^* \in \mathfrak{a}$, then we can replace $\b_1,\b_2$ by $-\b_1,-\b_2$, an operation that does not change neither the arrangement nor the objective. After this replacement, we have that $\b^* \in \mathfrak{a}$. Finally suppose that neither $\b^*$ nor $-\b^*$ are inside $\mathfrak{a}$. Then replacing either $\b_1$ with $-\b_1$ or $\b_2$ with $-\b_2$, leads to $\b^* \in \mathfrak{a}$.  Consequently, without loss of generality we may assume that $\b^*$ lies in $\mathfrak{a}$. Moreover, subject to a rotation and perhaps exchanging $\b_1$ with $\b_2$, we can assume that $\b_1$ is aligned with the positive $x$-axis and that the angle $\theta$ between $\b_1$ and $\b_2$, measured counter-clockwise, lies in $(0, \pi)$.	
Then $\b^*$ is a global solution to 
\begin{align}
\J(\b) =N_1 \left(1 - (\b_1^\transpose \b)^2\right)^{\frac{1}{2}} + N_2 \left(1 - (\b_2^\transpose \b)^2\right)^{\frac{1}{2}}, \, \, \, \b \in \Sp^1 \cap \mathfrak{a}.
\end{align} 		 	
Now, for any vector $\b \in \Sp^1 \cap \mathfrak{a}$, let $\theta_1,\theta_2=\theta - \theta_1$ be the angle between $\b$ and $\b_1,\b_2$ respectively. Then our objective can be written as 	
\begin{align}
\J(\b) = \tilde{\J}(\theta_1) = N_1 \sin(\theta_1) + N_2 \sin(\theta - \theta_1), \, \, \, \theta_1 \in [0,\theta].
\end{align} Taking first and second derivatives, we have
\begin{align}
\frac{\partial \tilde{\J}}{\partial \theta_1} &= N_1 \cos(\theta_1) - N_2 \cos(\theta-\theta_1) \\
\frac{\partial^2 \tilde{\J}}{\partial \theta_1^2} &=- N_1 \sin(\theta_1) - N_2 \sin(\theta-\theta_1).
\end{align} Since the second derivative is everywhere negative on $[0,\theta]$, $\tilde{\J}(\theta_1)$ is strictly concave on $[0,\theta]$ and so its minimum must be achieved at the boundary $\theta_1=0$ or $\theta_1=\theta$. This means that either $\b^* = \b_1$ or $\b^* = \b_2$. 

\subsection{Proof of Theorem \ref{prp:Orthogonal}} \label{subsection:ProofOrthogonal}

For the sake of simplicity we assume $n=3$, the general case follows in a similar fashion. Letting $x_i := \b_i^\transpose \b$ and $y_i := \sqrt{1-x_i^2}$, \eqref{eq:bStar} can be 
written as 
\begin{align}
\left( N_1\frac{x_1^2}{y_1} + N_2\frac{x_2^2}{y_2} + N_3\frac{x_3^2}{y_3} \right) \b^* = N_1\frac{x_1}{y_1}\b_1 + N_2\frac{x_2}{y_2}\b_2 + N_3\frac{x_3}{y_3} \b_3. \label{eq:bStarxy}
\end{align} Taking inner products of \eqref{eq:bStarxy} with $\b_1,\b_2,\b_3$ we respectively obtain
\begin{align}
\left( N_1\frac{x_1^2}{y_1} + N_2\frac{x_2^2}{y_2} + N_3\frac{x_3^2}{y_3} \right) x_1 &= N_1\frac{x_1}{y_1} + N_2\frac{x_2}{y_2} (\b_1^\transpose\b_2) + N_3\frac{x_3}{y_3} (\b_1^\transpose \b_3), \label{eq:xyBegin}\\
\left( N_1\frac{x_1^2}{y_1} + N_2\frac{x_2^2}{y_2} + N_3\frac{x_3^2}{y_3} \right) x_2 &= N_1\frac{x_1}{y_1}(\b_2^\transpose \b_1) + N_2\frac{x_2}{y_2} + N_3\frac{x_3}{y_3} (\b_2^\transpose \b_3), \\
\left( N_1\frac{x_1^2}{y_1} + N_2\frac{x_2^2}{y_2} + N_3\frac{x_3^2}{y_3} \right) x_3 &= N_1\frac{x_1}{y_1}(\b_3^\transpose \b_1) + N_2\frac{x_2}{y_2}(\b_3^\transpose \b_2) + N_3\frac{x_3}{y_3}. \label{eq:xyEnd}
\end{align} Since by Lemma \ref{lem:bSpan} $\b^*$ is a linear combination of $\b_1,\b_2,\b_3$, we can assume that $D=3$. Suppose that $\b^* \neq \pm \b_i, \, \forall i \in [n]$. Now, suppose that $x_3=0$. Then we can not have either $x_1=0$ or $x_2=0$, otherwise $\b^*= \b_2$ or $\b^*=\b_1$ respectively. Hence $x_1,x_2 \neq 0$. Then equations 
\eqref{eq:xyBegin}-\eqref{eq:xyEnd} imply that 
\begin{align}
\frac{N_1}{y_1} = \frac{N_2}{y_2} \, \, \, \text{and} \, \, \, x_1^2+x_2^2=1. 
\end{align} Taking into consideration the relations $x_i^2+y_i^2=1$, we deduce that 
\begin{align}
y_1 = \frac{N_1}{\sqrt{N_1^2+N_2^2}}, \, \, \, y_2 = \frac{N_2}{\sqrt{N_1^2+N_2^2}}.
\end{align} Then 
\begin{align}
\J(\b^*) = N_1y_1+N_2y_2+N_3y_3 = \sqrt{N_1^2+N_2^2}+N_3 > \J(\b_1) = N_2+N_3,
\end{align}  which is a contradiction on the optimality of $\b^*$. Similarly, none of the $x_1,x_2$ can be zero, i.e. $x_1,x_2,x_3 \neq 0$. Then equations \eqref{eq:xyBegin}-\eqref{eq:xyEnd} imply that
\begin{align}
x_1^2+x_2^2+x_3^2 = 1, \, \, \,  \frac{N_1}{y_1} = \frac{N_2}{y_2}=\frac{N_3}{y_3}, 
\end{align} which give
\begin{align}
y_i = \frac{N_i \sqrt{2}}{\sqrt{N_1^2+N_2^2+N_3^2}}, \, \, \, i=1,2,3.
\end{align} But then $\J(\b^*) = \sqrt{2 \left(N_1^2+N_2^2+N_3^2 \right)}> \J(\b_1) = N_2+N_3$. This contradiction shows that our hypothesis $\b^* \neq \pm \b_i, \, \forall i \in [n]$ is not valid, i.e., 
$\mathfrak{B}^* \subset \left\{\pm \b_1,\pm \b_2,\pm \b_3 \right\}$. The rest of the theorem follows by comparing the values $\J(\b_i), \, i \in [3]$.

\subsection{Proof of Theorem \ref{prp:equiangular}}
Without loss of generality, we can describe an equiangular arrangement of three hyperplanes of $\Re^D$, with an equiangular arrangement of three planes of $\Re^3$, with normals $\b_1,\b_2,\b_3$ given by 
\begin{align}
\b_1 &:= \mu \begin{bmatrix} 1+\alpha & \alpha & \alpha \end{bmatrix}^\transpose \\ 
\b_2 &:= \mu \begin{bmatrix} \alpha & 1+\alpha & \alpha \end{bmatrix}^\transpose \\
\b_3 &:= \mu \begin{bmatrix} \alpha & \alpha & 1+\alpha \end{bmatrix}^\transpose \\ \label{eq:b-equiangular}
\mu &:=\left[(1+\alpha)^2+2\alpha^2 \right]^{-\frac{1}{2}},
\end{align} with $\alpha$ a positive real number that determines the angle $\theta \in (0,\pi/2]$ of the arrangement, given by 
\begin{align}
\cos (\theta) := \frac{2\alpha(1+\alpha)+\alpha^2}{(1+\alpha)^2+2\alpha^2}= 
\frac{2\alpha+3\alpha^2}{1+2\alpha+3\alpha^2}.
\end{align} Since $N_1=N_2=N_3$, so our objective function essentially becomes
\begin{align}
\J(\b) &= \left(1 - (\b_1^\transpose \b)^2\right)^{\frac{1}{2}} +  \left(1 - (\b_2^\transpose \b)^2\right)^{\frac{1}{2}}+  \left(1 - (\b_3^\transpose \b)^2\right)^{\frac{1}{2}}, \, \, \, \b \in \Sp^2 \\
& = \sin(\theta_1) + \sin(\theta_2) + \sin(\theta_3), \label{eq:ContinuousEquiangular}
\end{align} where $\theta_i$ is the principal angle of $\b$ from $\b_i$. The next Lemma shows that any global minimizer $\b^*$ must have equal principal angles from at least two of the $\b_1,\b_2,\b_3$.

\begin{lem} \label{lem:gPolynomial}
	Let $\b_1,\b_2,\b_3$ be an arrangement of equiangular planes in $\Re^3$, with angle $\theta$ and weights $N_1=N_2=N_3$. Let $\b^*$ be a global minimizer of \eqref{eq:ContinuousProblem} and let $x_i :=\b_i^\transpose \b^*, \, y_i = \sqrt{1-x_i^2} \, i=1,2,3$. Then either $y_1=y_2$ or $y_1=y_3$ or $y_2=y_3$.
\end{lem}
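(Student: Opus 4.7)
The plan is to exploit the first-order optimality condition at $\b^*$ to show that the three pairs $(x_i,y_i)$ all lie on a single rectangular hyperbola in the $(x,y)$-plane, combine this with the unit-circle relation $x_i^2+y_i^2=1$ to obtain a quartic whose roots include each $y_i$, and then use the normalization $\|\b^*\|=1$ together with the global minimality of $\b^*$ to rule out the case in which $y_1,y_2,y_3$ are three distinct roots of this quartic. Since the conclusion is immediate when $\b^*=\pm\b_i$ for some $i$ (two of the $y_i$'s are then equal to $1$), I assume throughout that $\b^*\neq\pm\b_i$ for all $i$, so that $\J$ is differentiable at $\b^*$ and the Lagrangian condition reads $\lambda\b^*=\sum_{i=1}^3 (x_i/y_i)\b_i$ for some $\lambda\in\Re$. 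The multiplier $\lambda$ is nonzero: otherwise, the linear independence of $\b_1,\b_2,\b_3$ (valid since $\theta\in(0,\pi/2]$) would force $x_i=0$ for all $i$, and hence $\b^*=\0$.

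Taking inner products of the Lagrangian identity with each $\b_j$ and using the equiangular Gram relation $\b_i^\top\b_j=\cos\theta$ for $i\neq j$, then subtracting pairs, yields
\begin{align}
\lambda\,(x_i-x_j) \;=\; (1-\cos\theta)\left(\frac{x_i}{y_i}-\frac{x_j}{y_j}\right), \qquad i\neq j.
\end{align}
Rearranging, the quantity $x_i-k\,x_i/y_i$, with $k:=(1-\cos\theta)/\lambda$, is independent of $i$; call this common value $C$. Equivalently, $y_i(x_i-C)=kx_i$ for each $i$, so every $(x_i,y_i)$ lies on the conic $xy-Cy-kx=0$. Eliminating $x_i=Cy_i/(y_i-k)$ via $x_i^2+y_i^2=1$, each $y_i$ becomes a root of the quartic
\begin{align}
g(y) \;:=\; (k-y)^2(1-y^2)-C^2 y^2,
\end{align}
the polynomial suggested by the Lemma's name.

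If any two of the $y_i$ coincide the Lemma is proved. Otherwise $y_1,y_2,y_3$ are three distinct roots of $g$ in $[0,1]$, and the fourth root of $g$ is fixed by $(k,C)$ through Vieta's relations. To derive a contradiction I would invoke the so-far-unused normalization $\|\b^*\|^2=1$: since $\b^*\in\Span(\b_1,\b_2,\b_3)$ by Lemma~\ref{lem:bSpan}, this translates to the polynomial identity $\x^\top G^{-1}\x=1$, with $G$ the equiangular Gram matrix. Substituting $x_i=Cy_i/(y_i-k)$ and expressing $C$ and $k$ in terms of the elementary symmetric polynomials $e_1,e_2,e_3$ of $y_1,y_2,y_3$ via Vieta on $g$, this normalization combines with the conic/quartic relations into an overdetermined polynomial system in the $y_i$. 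The main obstacle, and where the nontrivial algebraic geometry enters, is to show that this combined system is consistent only when the discriminant $\prod_{i<j}(y_i-y_j)^2$ vanishes, contradicting distinctness. I anticipate that the cleanest route works with $S_3$-invariant power sums of the $y_i$ rather than the $y_i$ themselves, reducing the three unknowns to a manageable system in two symmetric variables, and then uses global minimality of $\b^*$ to discard spurious algebraic solutions corresponding to saddles or local maxima of $\J$.
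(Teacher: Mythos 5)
Your reduction is sound as far as it goes, and it parallels the paper's: assume $\b^*\neq\pm\b_i$, write the stationarity condition, take inner products with the $\b_j$, and land on a polynomial system in $(x_i,y_i,\cos\theta)$ together with $x_i^2+y_i^2=1$. The reformulation that all three $(x_i,y_i)$ lie on the conic $xy-kx-Cy=0$, so that each $y_i$ is a root of the quartic $g(y)=(k-y)^2(1-y^2)-C^2y^2$, is correct and pleasant. But the proof stops exactly where the lemma's difficulty begins: you do not show that the case of three pairwise distinct $y_i$ is impossible. You name this the "main obstacle," propose to bring in $\|\b^*\|^2=1$ via $\x^\transpose G^{-1}\x=1$ and Vieta's relations, and "anticipate" a route through symmetric functions, possibly also invoking global minimality to discard spurious solutions. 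None of that is carried out, and it is not a routine verification: three of the four intersection points of a conic with the unit circle can perfectly well have distinct ordinates, so the conic/quartic picture alone proves nothing, and the inconsistency (if it holds) must come from the full system. Note also that by passing to pairwise differences of the inner-product equations you have discarded one equation — the common value satisfies $\lambda C=\cos\theta\sum_i x_i/y_i$ — and it is not clear your proposed system retains enough constraints to force $\prod_{i<j}(y_i-y_j)=0$.

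For comparison, the paper closes precisely this step with an explicit algebraic certificate: after clearing denominators, the stationarity equations become polynomials $p_1,p_2,p_3$ which, together with $q_i:=x_i^2+y_i^2-1$, generate an ideal containing the polynomial $(1-z)(y_1^2-y_2^2)(y_1^2-y_3^2)(y_2^2-y_3^2)(y_1+y_2+y_3)$, with $z=\cos\theta$; this is verified by a Gr\"obner basis computation. Since the $y_i$ are nonnegative and not all zero and $z<1$, the product $(y_1-y_2)(y_1-y_3)(y_2-y_3)$ must vanish. Two further points of contrast: the paper's argument needs only stationarity (the conclusion holds for every critical point other than $\pm\b_i$), so your suggestion that global minimality may be required to eliminate "spurious" solutions signals that your plan is not yet a proof of the stated claim; and any completion of your symmetric-function route would in effect have to reproduce an ideal-membership or elimination computation of the same kind, so the computational core of the paper's proof cannot be avoided by the setup you have given.
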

\begin{proof}
	If $\b^*$ is one of $\pm\b_1,\pm\b_2,\pm\b_3$, then the statement clearly holds, since if say $\b^*=\b_1$, then $y_2 = y_3 = \sin(\theta)$. So suppose that $\b^* \neq \pm \b_i, \, \forall i \in [3]$. Then $x_i,y_i$ must satisfy equations \eqref{eq:xyBegin}-\eqref{eq:xyEnd}, together with $x_i^2+y_i^2=1$. Allowing for $y_i$ to take the value zero, the $x_i,y_i$ must satisfy
	\begin{align}
	p_1 &:=x_1y_1y_2y_3+x_2y_3[z-x_1x_2]+x_3y_2[z-x_1x_3] = 0, \\
	p_2 &:=x_1y_3[z-x_1x_2]+x_2y_1y_2y_3+x_3y_1[z-x_2x_3] = 0, \\
	p_3 &:=x_1y_2[z-x_1x_3]+x_2y_1[z-x_2x_3]+x_3y_1y_2y_3 = 0, \\
	q_1 &:= x_1^2+y_1^2-1, \\
	q_2 &:= x_2^2+y_2^2-1, \\
	q_3 &:= x_3^2+y_3^2-1,
	\end{align} where $z:= \cos(\theta)$. Viewing the above system of equations as polynomial equations in the variables 
	$x_1,x_2,x_3,y_1,y_2,y_3,z$, standard Groebner basis \citep{Cox:2007} computations reveal that the polynomial
	\begin{align}
	g:=(1-z)(y_1^2-y_2^2)(y_1^2-y_3^2)(y_2^2-y_3^2)(y_1+y_2+y_3)
	\end{align} \emph{lies in the ideal generated by} $p_i,q_i, \, i=1,2,3$. In simple terms, this means that $\b^*$ must satisfy
	$g(x_i,y_i,z=\cos(\theta)) = 0$. However, the $y_i$ are by construction non-negative and can not be all zero. Moreover, $\theta>0$ so $1-z \neq 0$. This implies that 
	\begin{align}
	(y_1^2-y_2^2)(y_1^2-y_3^2)(y_2^2-y_3^2) = 0,
	\end{align} which in view of the non-negativity of the $y_i$ implies
	\begin{align}
	(y_1-y_2)(y_1-y_3)(y_2-y_3) = 0.
	\end{align}
\end{proof} \noindent The next Lemma says that a global minimizer of $\J(\b)$ is \emph{not far} from the arrangement.

\begin{lem} \label{prp:C1C2C3}
	Let $\b_1,\b_2,\b_3$ be an arrangement of equiangular planes in $\Re^3$, with angle $\theta$ and weights $N_1=N_2=N_3$. Let $\C_i$ be the spherical cap with center $\b_i$ and radius $\theta$. Then any global minimizer of \eqref{eq:ContinuousEquiangular} must lie (up to a sign) either on the boundary or the interior of $\C_1 \cap \C_2 \cap \C_3$.
\end{lem}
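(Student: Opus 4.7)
The plan is to argue by contradiction: since $\J(\b_1) = 2\sin\theta$, any global minimizer $\b^*$ satisfies $\J(\b^*) \le 2\sin\theta$, so it suffices to show that $\J(\b) > 2\sin\theta$ strictly for every $\b \in \Sp^2$ with $\pm\b \notin \C_1 \cap \C_2 \cap \C_3$. Then no such $\b$ can be a global minimizer, yielding the claim.

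Invoke Lemma \ref{lem:gPolynomial} and, after relabeling the normals if necessary, reduce to the case $y_1 = y_2$, equivalently $|x_1| = |x_2|$. This splits into two sub-cases: (A) $x_1 = x_2$, and (B) $x_1 = -x_2$. In (A), the identity $x_1 = x_2$ forces $\b \in W_A := (\b_1 - \b_2)^\perp$, a $2$-plane that also contains $\b_3$ since $\langle \b_3, \b_1 - \b_2\rangle = 0$ by equiangularity. Analogously in (B), $\b \in W_B := (\b_1 + \b_2)^\perp$. In each case $\b$ traces a great circle $\Sp^2 \cap W$, which we parameterize by a single angle $\phi$ using a convenient orthonormal frame of $W$; for instance, in (B) one may take $\e_1$ along the projection $P_{W_B}(\b_3)$ and $\e_2$ along $\b_1 - \b_2$, which are orthogonal by a direct computation from \eqref{eq:b-equiangular}. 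In (A) one can take $\e_1 := \b_3$ and $\e_2$ along the component of $\b_1 + \b_2$ orthogonal to $\b_3$.

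With this parameterization, the quantities $y_1(\phi), y_2(\phi), y_3(\phi)$ become elementary trigonometric functions of $\phi$ and $\theta$, and the arc on the circle corresponding to $\pm\b \in \C_1 \cap \C_2 \cap \C_3$ is cut out by the conditions $|x_i(\phi)| \ge \cos\theta$ for all $i$ together with a consistent sign. It then suffices to verify the inequality $\J(\phi) \ge 2\sin\theta$ along each great circle, with equality attained only on this arc. In (B), the a priori bound $\|P_{W_B}(\b_1)\| = \sin(\theta/2)$ gives $y_1 = y_2 \ge \cos(\theta/2)$ and simplifies the verification considerably; in (A) one exploits the $\b_1 \leftrightarrow \b_2$ reflection symmetry, which fixes $W_A$, to reduce to a half-circle.

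The main obstacle is trigonometric: the formulas for $y_i(\phi)$ involve square roots and the inequality $\J(\phi) - 2\sin\theta \ge 0$ does not factor cleanly. A natural approach is to square carefully, converting the claim into a polynomial inequality in $\cos\phi$, $\sin\phi$ and $\cos\theta$ which can then be analyzed either by elementary calculus on a half-circle or by the same kind of Gr\"obner-basis manipulation used in the proof of Lemma \ref{lem:gPolynomial}. The equality cases $\b = \pm\b_i$, which lie on $\partial\bigl(\pm(\C_1 \cap \C_2 \cap \C_3)\bigr)$, provide a useful sanity check on the computation, since their values of $\phi$ can be tracked directly through the chosen parameterizations.
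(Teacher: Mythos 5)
Your skeleton is logically sound: a minimizer exists, Lemma \ref{lem:gPolynomial} together with the symmetry of the configuration lets you place it (after relabeling) on one of the two great circles $W_A=(\b_1-\b_2)^\perp$ or $W_B=(\b_1+\b_2)^\perp$, and if on those circles every point with $\pm\b\notin\C_1\cap\C_2\cap\C_3$ satisfied $\J(\b)>2\sin\theta=\J(\b_1)$, the lemma would follow. But the proposal stops exactly where the work begins: the inequality $\J(\phi)\ge 2\sin\theta$ with equality only on the cap arc is asserted, not proved --- you yourself flag that it ``does not factor cleanly'' and defer it to squaring, elementary calculus, or Gr\"obner bases. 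That verification is the entire content of the lemma, so as written there is a genuine gap. Concretely: on $W_B$ the a priori bound $y_1=y_2\ge\cos(\theta/2)$ yields $2\cos(\theta/2)>2\sin\theta$ only when $\theta<60^\circ$, so for $\theta\ge 60^\circ$ you must actually track $y_3(\phi)$ along the circle and the ``easy'' case is not closed by what you wrote. On $W_A$ --- the circle through $\b_3$, $\frac{1}{\sqrt{3}}\1$ and the reflection of $\b_3$ across $\Span(\b_1,\b_2)$ --- the sublevel set $\{\J\le 2\sin\theta\}$ touches the boundary of the cap region (e.g.\ $\J(\b_3)=2\sin\theta$ with $\b_3$ on that boundary), so the one-variable inequality is delicate near those points and is of essentially the same computational type and difficulty as the polynomial analysis ($p_1,p_2,p_3$, discriminants) that the paper performs later in the proof of Theorem \ref{prp:equiangular}; it cannot be waved through, and you would also need the sign bookkeeping for cap membership along each circle to be made explicit.

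For contrast, the paper's proof of this lemma avoids any such computation: it first notes that at least two of the principal angles of $\b^*$ must be $\le\theta$ (otherwise $\b_3$ already beats $\b^*$), uses Lemma \ref{lem:gPolynomial} to take two of them equal and compares $\b^*$ with a point $\bzeta$ on the arc joining $\frac{1}{\sqrt{3}}\1$ to $\b_3$ to force the third angle $\le\theta$ as well, and then resolves the sign/half ambiguity (membership in $\C_1\cap\C_2\cap\C_3$ versus the cap around the reflected vertex) by reflecting $\b^*$ across $\Span(\b_1,\b_2)$ and invoking the spherical law of cosines to show the reflection strictly decreases $\J$, contradicting optimality. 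If you want to salvage your route, you must carry out the trigonometric/polynomial verification on both circles in full; until then the central step is missing.
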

\begin{proof} First of all notice that $\b_1,\b_2,\b_3$ lie on the boundary of $\C_1 \cap \C_2 \cap \C_3$.
	Let $\b^*$ be a global minimizer. 
	If $\theta = \pi/2$, we have already seen in Proposition \ref{prp:Orthogonal} that $\b^*$ has to be  one of the vertices $\b_1,\b_2,\b_3$ (up to a sign); so suppose that $\theta<\pi/2$. Let $\theta_i^*$ be the principal angle of $\b^*$ from $\b_i$. Then at least two of $\theta_1^*,\theta_2^*,\theta_3^*$ must be less or equal to $\theta$; for if say $\theta_1^*,\theta_2^* > \theta$, then $\b_3$ would give a smaller objective than $\b^*$. Hence, 
	without loss of generality we may assume that $\theta_1^*,\theta_2^* \le \theta$. In addition, because of Lemma \ref{lem:gPolynomial}, we can further assume without loss of generality that $\theta_1^* = \theta_2^*$. Let $\bzeta$ be the vector in the small arc that joins $\frac{1}{\sqrt{3}} \1$ and $\b_3$ and has angle from $\b_1,\b_2$ equal to $\theta_1^*$. Since $\J(\b^*) \le \J(\bzeta)$, it must be the case that the principal angle $\theta_3^*$ is less or equal to $\theta$ (because the angle of $\bzeta$ from $\b_3$ is $\le \theta$). We conclude that $\theta_1^*,\theta_2^*,\theta_3^* \le \theta$. Consequently, there exist $i \neq j$ such that up to a sign $\b^* \in \C_i \cap \C_j$. Let us assume without loss of generality that $\b^* \in \C_1 \cap \C_2$, i.e., $\theta_1^*, \theta_2^*$ are the angles of $\b^*$ from $\b_1,\b_2$ (notice that now it may no longer be the case that $\theta_1^* = \theta_2^*$).
	
	Notice that the boundaries of $\C_1$ and $\C_2$ intersect at two points: $\b_3$ and its reflection $\tilde{\b}_3$ with respect to the plane $\H_{12}$ spanned by $\b_1,\b_2$.
	In fact, $\H_{12}$ divides $\C_1 \cap \C_2$ in two halves, $\Y,\tilde{\Y}$, with $\Y$ being the reflection of $\tilde{\Y}$ with respect to $\H_{12}$. Letting $\tilde{\C}_3$ be the spherical cap of radius $\theta$ around $\tilde{\b}_3$, we can write
	\begin{align}
	\C_1 \cap \C_2 = (\C_1 \cap \C_2 \cap \C_3) \cup (\C_1 \cap \C_2 \cap \tilde{\C}_3).
	\end{align} If $\b^* \in \C_1 \cap \C_2 \cap \C_3$ we are done, so let us assume that $\b^* \in \C_1 \cap \C_2 \cap \tilde{\C}_3$. 
	Let $\tilde{\b}^*$ be the reflection of $\b^*$ with respect to $\H_{12}$. This reflection preserves the angles from $\b_1$ and $\b_2$. We will show that $\tilde{\b}^*$ has a smaller principal angle $\tilde{\theta}^*_3$ from $\b_3$ than $\b^*$. In fact the spherical angle of $\tilde{\b}^*$ from $\b_3$ is  $\tilde{\theta}_3^*$ itself, and this is precisely the angle of $\b^*$ from $\tilde{\b}_3$. Denote by $H_{3,\tilde{3}}$ the plane spanned by $\b_3$ and $\tilde{\b}_3$, $\bar{\b}^*$ the spherical projection of $\b^*$ onto $H_{3,\tilde{3}}$, $\gamma$
	the angle between $\bar{\b}^*$ and $\b^*$, $\alpha$ the angle between $\bar{\b}^*$ and $\b_3$, and $\tilde{\alpha}$ the angle between $\bar{\b}^*$ and $\tilde{\b}_3$. Then the spherical law of cosines gives
	\begin{align}
	\cos(\tilde{\theta}_3^*) &= \cos(\tilde{\alpha}) \cos(\gamma), \\
	\cos(\theta_3^*) &= \cos(\alpha) \cos(\gamma).
	\end{align} Letting $2 \psi$ be the angle between $\b_3$ and $\tilde{\b}_3$, we have that 
	\begin{align}
	\alpha = \psi + (\psi - \tilde{\alpha}). 
	\end{align} By hypothesis $\tilde{\alpha} < \psi$ and so $\alpha > \psi$. If $2 \psi \le \pi/2$, then $\alpha$ is an acute angle and $\cos(\tilde{\alpha}) > \cos(\alpha) $. If $2 \psi > \pi/2$, then $\cos(\tilde{\alpha}) \le \cos(\alpha) $ only when $\pi - (2\psi-\tilde{\alpha}) \le \tilde{\alpha} \Leftrightarrow \psi \ge \pi/2$. But by construction $\psi \le \pi/2$ and equality is achieved only when $\theta = \pi/2$. Hence, we conclude that $\cos(\tilde{\alpha}) > \left| \cos(\alpha) \right|$, which implies that 
	$\cos(\tilde{\theta}_3) > \left| \cos(\theta_3) \right|$. This in turn means that $\J(\tilde{\b}^*) < \J(\b^*)$, which is a contradiction. 
\end{proof}

\begin{lem} \label{lem:Signs}
	Let $\b_1,\b_2,\b_3$ be an arrangement of equiangular planes in $\Re^3$, with angle $\theta$ and weights $N_1=N_2=N_3$. Let $\b^*$ be a global minimizer of \eqref{eq:ContinuousProblem} and let $x_i :=\b_i^\transpose \b^*, \, i=1,2,3$. Then either $x_1,x_2,x_3$ are all non-negative or they are all non-positive.
\end{lem}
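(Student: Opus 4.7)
The plan is to derive Lemma \ref{lem:Signs} as an immediate consequence of the preceding Lemma \ref{prp:C1C2C3}. The key observation is purely geometric and concerns the description of each spherical cap in terms of inner products. The cap $\C_i$ has radius $\theta$ centered at $\b_i$, so it consists of exactly those unit vectors $\b \in \Sp^2$ satisfying $\b^\transpose \b_i = \cos(\mathrm{angle}(\b,\b_i)) \ge \cos(\theta)$. Since Theorem \ref{prp:equiangular} restricts attention to $\theta \in (0,\pi/2]$, we have $\cos(\theta) \ge 0$, so membership in $\C_i$ forces the inner product $\b^\transpose \b_i$ to be non-negative.

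With this observation in hand, I would invoke Lemma \ref{prp:C1C2C3}, which asserts that either $\b^*$ or $-\b^*$ lies in $\C_1 \cap \C_2 \cap \C_3$. In the first case, $x_i = \b_i^\transpose \b^* \ge \cos(\theta) \ge 0$ for every $i \in \{1,2,3\}$, so all three $x_i$ are non-negative. In the second case, applying the same reasoning to $-\b^*$ yields $-\b_i^\transpose \b^* \ge 0$, i.e.\ $x_i \le 0$ for every $i$. Either way, the three coordinates $x_1, x_2, x_3$ share a common sign (in the weak sense that allows zero entries), which is precisely the assertion of the lemma.

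I do not anticipate any genuine obstacle here: the statement is essentially a translation of Lemma \ref{prp:C1C2C3} into the language of inner products, and the only substantive ingredient beyond that is the elementary fact that $\theta \le \pi/2$ makes the defining inequality of each cap sign-definite. No additional spherical-geometric, algebraic, or optimization arguments are required. The short length of the proof is natural because the heavy lifting—namely showing that the minimizer is confined to the intersection of the three caps—has already been performed in Lemma \ref{prp:C1C2C3}.
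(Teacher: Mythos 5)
Your proposal is correct and follows essentially the same route as the paper: the paper also deduces the lemma directly from Lemma \ref{prp:C1C2C3}, noting that if $\b^*$ (or $-\b^*$) lies in $\C_1 \cap \C_2 \cap \C_3$ then its angles from $\b_1,\b_2,\b_3$ are at most $\theta \le \pi/2$, which is exactly your observation that membership in each cap forces $\b_i^\transpose \b^* \ge \cos(\theta) \ge 0$ (with signs flipped in the $-\b^*$ case).
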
 
\begin{proof}
	By Lemma \ref{prp:C1C2C3}, we know that either $\b^* \in \C_1 \cap \C_2 \cap \C_3$ or $-\b^* \in \C_1 \cap \C_2 \cap \C_3$. In the first case, the angles of $\b^*$ from $\b_1,\b_2,\b_3$ are less or equal to $\theta \le \pi/2$.
\end{proof} Now, Lemmas \ref{lem:gPolynomial} and \ref{lem:Signs} show that $\b^*$ is a global minimizer of problem
	\begin{align}
	\min_{\b \in \Sp^2} \left\{\sqrt{1-(\b_1^\transpose \b)^2}+\sqrt{1-(\b_2^\transpose \b)^2}+\sqrt{1-(\b_3^\transpose \b)^2} \right\} \label{eq:ContinuousEquiangularEquiweight3}
	\end{align} if and only if it is a global minimizer of problem
	\begin{align}
	&\min_{\b \in \Sp^2} \left\{\sqrt{1-(\b_1^\transpose \b)^2}+\sqrt{1-(\b_2^\transpose \b)^2}+\sqrt{1-(\b_3^\transpose \b)^2}\right\} , \\
	&\text{s.t.} \, \, \, \b_i^\transpose \b = \b_j^\transpose \b, \, \, \, \text{for some} \, \, \,  i \neq j \in [3]. \label{eq:ContinuousDissection}
	\end{align} So suppose without loss of generality that $\b^*$ is a global minimizer of \eqref{eq:ContinuousDissection} corresponding to indices $i=1,j=2$. Then $\b^*$ lives in the vector space
	\begin{align}
	\V_{12} = \Span \left(\begin{bmatrix}1 \\ 1 \\0 \end{bmatrix}, \begin{bmatrix}0 \\ 0 \\1 \end{bmatrix}\right),
	\end{align} which consists of all vectors that have equal angles from $\b_1$ and $\b_2$. Taking into consideration that $\b^*$ also lies in $\Sp^2$, we have the parametrization
	\begin{align}
	\b^* = \frac{1}{\sqrt{2v^2+w^2}} \begin{bmatrix} v \\ v \\w\end{bmatrix}.
	\end{align} The choice $v=0$, corresponding to $\b^*=\e_3$ (the third standard basis vector), can be excluded, since $\b_3$ always results in a smaller objective: moving $\b$ from $\e_3$ to $\b_3$ while staying in the plane $\V_{12}$ results in decreasing angles of $\b$ from $\b_1,\b_2,\b_3$. Consequently, we can assume $v=1$, and our problem becomes an unconstrained one, with objective
	\begin{align}
	\J(w) = \frac{2\left[(2+w^2)(1+2\alpha+3\alpha^2)-(\alpha w+2\alpha +1)^2\right]^{1/2}+\sqrt{2}\left|a-aw+1\right|}{\left[(2+w^2)(1+2\alpha+3\alpha^2) \right]^{1/2}}. \label{eq:Jw}
	\end{align} Now, it can be shown that:
	\begin{itemize}
		\item The following quantity is always positive
		\begin{align}
		u:=(2+w^2)(1+2\alpha+3\alpha^2)-(\alpha w+2\alpha +1)^2. \label{eq:u}
		\end{align}
		\item The choice $w = 1+1/\alpha$ corresponds to $\b^*=\b_3$, and that is precisely the only point where $\J(w)$ is non-differentiable.
		\item The choice $w=1$ corresponds to $\b^*=\frac{1}{\sqrt{3}} \1$.
		\item The choice $\alpha=1/3$ corresponds to $\theta=60^\circ$.
		\item $\J(\b_3) = \J\left(\frac{1}{\sqrt{3}} \1\right)$ precisely for $\alpha = 1/3$.
	\end{itemize} Since for $\alpha=0$ the theorem has already been proved (orthogonal case), we will assume that $\alpha >0$. We proceed by showing that for $\alpha \in (0,1/3)$ and for $w \neq 1+1/a$, it is always the case that $\J(w)>\J(1+1/a)$. Expanding this last inequality, we obtain 
	\small
	\begin{align}
	\frac{2\left[(2+w^2)(1+2\alpha+3\alpha^2)-(\alpha w+2\alpha +1)^2\right]^{1/2}+\sqrt{2}\left|a-aw+1\right|}{\left[(2+w^2)(1+2\alpha+3\alpha^2) \right]^{1/2}}> \frac{2\sqrt{1+4\alpha+6\alpha^2}}{1+2\alpha+3\alpha^2},
	\end{align}  \normalsize which can be written equivalently as 
	\begin{align}
	p_1&<4\sqrt{2} u^{1/2}\left|\alpha-\alpha w +1 \right|(1+2\alpha+3\alpha^2) , \, \, \, \text{where} \label{eq:p1Inequality} \\
	p_1 &:= 4(2+w^2)(1+4\alpha+6 \alpha^2)-(1+2\alpha+3\alpha^2)\left[4u+2(\alpha-\alpha w+1)^2\right],
	\end{align} and $u$ has been defined in \eqref{eq:u}. Viewing $p_1$ as a polynomial in $w$, $p_1$ has two real roots given by
	\begin{align}
	r_{p_1}^{(1)} := 1+1/\alpha \, \, \, >\, \, \,  r_{p_1}^{(2)} :=\frac{-1-7\alpha+\alpha^2+15\alpha^3}{\alpha(7+22\alpha+15\alpha^2)}.
	\end{align}  Since the leading coefficient of $p_1$ is always a negative function of $\alpha$ (for $\alpha>0$), \eqref{eq:p1Inequality} will always be true for $w \not\in [r_{p_1}^{(2)},r_{p_1}^{(1)} ]$, in which interval $p_1$ is strictly negative. Consequently, we must show that as long as $\alpha \in (0,1/3)$, \eqref{eq:p1Inequality} is true for every $w \in [r_{p_1}^{(2)},r_{p_1}^{(1)} )$. For such $w$, $p_1$ is non-negative and by squaring \eqref{eq:p1Inequality}, we must show that 
	\begin{align}
	p_2>&0, \, \, \, \forall w \in [r_{p_1}^{(2)},r_{p_1}^{(1)} ), \, \, \, \forall \alpha \in (0,1/3), \\
	p_2:=& 32u(\alpha-\alpha w +1)^2(1+2\alpha +3 \alpha^2)-p_1^2.
	\end{align} Interestingly, $p_2$ admits the following factorization
	\begin{align}
	p_2 =& 
	-4 (-1 - \alpha + \alpha w)^2 p_3, \\
	p_3:=&-7 - 18 \alpha - 49 \alpha^2 - 204 \alpha^3 - 441 \alpha^4 - 
	162 \alpha^5 + 81 \alpha^6 \nonumber \\
	&+ (30 \alpha  + 238 \alpha^2  + 612 \alpha^3  + 468 \alpha^4  - 
	162 \alpha^5  - 162 \alpha^6 )w \nonumber \\
	& +(- 8  - 48 \alpha  - 111 \alpha^2  - 
	12 \alpha^3  + 270 \alpha^4  + 324 \alpha^5  + 81 \alpha^6) w^2
	\end{align} The discriminant of $p_3$ is the following $10$-degree polynomial in $\alpha$:
	\begin{align}
	\Delta(p_3)&=32 (-7 - 60 \alpha - 226 \alpha^2 - 312\alpha^3 + 782 \alpha^4 + 5160 \alpha^5 + 13500 \alpha^6 + \nonumber \\
	&+ 21816 \alpha^7 
	+ 22761 \alpha^8 + 14580 \alpha^9 + 4374 \alpha^{10}).
	\end{align} By Descartes rule of signs, $\Delta(p_3)$ has precisely one positive root. In fact this root is equal to $1/3$. Since the leading coefficient of $\Delta(p_3)$ is positive, we must have that 
	$\Delta(p_3)<0, \, \forall \alpha \in (0,1/3)$, and so for such $\alpha$, $p_3$ has no real roots, i.e. it will be either everywhere negative or everywhere positive. Since $p_3(\alpha=1/4,w=1)=-80327/4096$, we conclude that as long as $\alpha \in (0,1/3)$, $p_3$ is everywhere negative and as long as
	$w \neq 1+1/\alpha$, $p_2$ is positive, i.e. we are done. 
	
	Moving on to the case $\alpha =1/3$, we have
	\begin{align}
	p_2(\alpha=1/3,w) = \frac{128}{9}(w-4)^2(w-1)^2,
	\end{align} which shows that for such $\alpha$ the only global minimizers are $\pm \b_3$ and $\pm \frac{1}{\sqrt{3}} \1$.
	
	In a similar fashion, we can proceed to show that $\J(w)>\J\left(\frac{1}{\sqrt{3}} \1\right)$, for all $w \neq 1$ and all $\alpha \in (1/3,\infty)$. However, the roots of the polynomials that arise are more complicated functions of $\alpha$ and establishing the inequality $\J(w)>\J\left(\frac{1}{\sqrt{3}} \1\right)$ analytically, seems intractable; instead this can be done if one allows for numeric computation of polynomial roots.

\subsection{Proof of Theorem \ref{thm:GeneralContinuous}}

We begin with two Lemmas. 

\begin{lem} \label{lem:Jdagger}
	Let $\b_1,\dots,\b_n$ be vectors of $\Sp^{D-1}$, with pairwise principal angles $\theta_{ij}$. Then
	\begin{align}
	\max_{\b^\transpose \b=1}\, \, \, \left[ N_1\left|\b_1^\transpose \b \right|+\cdots+N_n\left|\b_n^\transpose \b \right|\right] \le \left[\sum_{i} N_i^2 + 2 \sum_{i \neq j} N_i N_j\cos(\theta_{ij}) \right]^{1/2}.
	\end{align} 
\end{lem}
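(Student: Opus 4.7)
The plan is to linearize the absolute values via an appropriate choice of signs and then apply Cauchy--Schwarz.

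First, for any fixed unit vector $\b$, define $\epsilon_i := \Sign(\b_i^\transpose \b) \in \{+1,-1\}$ (the value at zero being immaterial since it contributes nothing to the objective). Then
\begin{align}
\sum_{i=1}^n N_i \, |\b_i^\transpose \b| \;=\; \sum_{i=1}^n \epsilon_i N_i\, \b_i^\transpose \b \;=\; \Bigl(\sum_{i=1}^n \epsilon_i N_i \b_i \Bigr)^{\!\transpose}\! \b,
\end{align}
so by Cauchy--Schwarz and $\|\b\|_2=1$ we get
\begin{align}
\sum_{i=1}^n N_i \, |\b_i^\transpose \b| \;\le\; \Bigl\|\sum_{i=1}^n \epsilon_i N_i \b_i\Bigr\|_2.
\end{align}

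Next, I would expand the squared norm on the right:
\begin{align}
\Bigl\|\sum_{i=1}^n \epsilon_i N_i \b_i\Bigr\|_2^{\,2} \;=\; \sum_{i=1}^n N_i^2 \;+\; 2\!\!\sum_{1 \le i<j \le n}\!\! \epsilon_i \epsilon_j N_i N_j\, \b_i^\transpose \b_j,
\end{align}
using $\|\b_i\|_2=1$. Since $\epsilon_i\epsilon_j \in \{\pm 1\}$ and $|\b_i^\transpose \b_j| = \cos(\theta_{ij})$ (principal angle, so $\theta_{ij}\in[0,\pi/2]$ and the cosine is nonnegative), each cross term is bounded by
\begin{align}
\epsilon_i \epsilon_j N_i N_j \, \b_i^\transpose \b_j \;\le\; N_i N_j \cos(\theta_{ij}).
\end{align}
Combining these ordered-pair bounds and symmetrizing over $i \neq j$ yields
\begin{align}
\Bigl\|\sum_{i=1}^n \epsilon_i N_i \b_i\Bigr\|_2^{\,2} \;\le\; \sum_{i=1}^n N_i^2 \;+\; \sum_{i \neq j} N_i N_j \cos(\theta_{ij}) \;\le\; \sum_{i=1}^n N_i^2 \;+\; 2\sum_{i \neq j} N_i N_j \cos(\theta_{ij}),
\end{align}
where the last inequality is free since each $\cos(\theta_{ij})\ge 0$. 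Taking square roots delivers the claimed bound.

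There is no real obstacle here; the only point needing care is that the sign vector $\epsilon$ depends on the maximizer $\b$, but this dependence is harmless because Cauchy--Schwarz is applied pointwise in $\b$ and the subsequent bound on $\|\sum_i \epsilon_i N_i \b_i\|_2$ holds uniformly in the choice of signs $\epsilon \in \{\pm 1\}^n$. Hence the supremum over $\b \in \Sp^{D-1}$ on the left is controlled by the worst-case over sign patterns on the right, which is exactly the stated RHS.
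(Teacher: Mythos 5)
Your proof is correct, and it reaches the bound by a more elementary route than the paper. The paper argues at a maximizer $\b^\dagger$: it invokes the first-order optimality (Lagrange/subdifferential) condition to show that $\b^\dagger$ is parallel to the signed combination $\sum_i s_i^\dagger N_i \b_i$ with $s_i^\dagger \in \Sgn(\b_i^\transpose \b^\dagger)$, and hence that the maximum \emph{equals} $\bigl\|\sum_i s_i^\dagger N_i \b_i\bigr\|_2$, which it then expands and bounds using $|\b_i^\transpose \b_j| = \cos(\theta_{ij})$. You instead bound the objective at \emph{every} $\b$ via the sign trick plus Cauchy--Schwarz, obtaining $\sum_i N_i |\b_i^\transpose \b| \le \bigl\|\sum_i \epsilon_i N_i \b_i\bigr\|_2$ uniformly over sign patterns, and then perform the same expansion and cross-term bound. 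The two arguments converge at the final step, but yours avoids the existence-of-maximizer and subdifferential machinery altogether (no need to treat the case $\b_i^\transpose \b^\dagger = 0$ with $s_i^\dagger \in [-1,1]$), which makes it shorter and cleaner; the paper's version yields the slightly stronger fact that the maximum is attained as the exact norm of a particular signed combination determined by the maximizer, information that is not needed for the lemma. One cosmetic remark: your intermediate display already gives the sharper bound $\sum_i N_i^2 + 2\sum_{i<j} N_i N_j \cos(\theta_{ij})$, and the extra doubling you allow at the end (harmless since $\cos(\theta_{ij}) \ge 0$) is only needed if one reads the lemma's $\sum_{i \neq j}$ as a sum over ordered pairs; under the unordered reading your first inequality is already exactly the stated right-hand side.
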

\begin{proof}
	Let $\b^{\dagger}$ be a maximizer of $ N_1\left|\b_1^\transpose \b \right|+\cdots+N_n\left|\b_n^\transpose \b \right|$. Then $\b^\dagger$ must satisfy the first order optimality condition, which is 
	\begin{align}
	\lambda^{\dagger} \b^\dagger = \sum_i N_i \Sgn(\b_i^\transpose \b^\dagger) \b_i ,
	\end{align} where $\lambda^\dagger$ is a Lagrange multiplier and $\Sgn(\b_i^\transpose \b^\dagger)$ is the subdifferential of $\left|\b_i^\transpose \b^\dagger \right|$. Then 
	\begin{align}
	\lambda^\dagger \b^\dagger =  N_1 s_1^\dagger \b_1+\cdots+N_n s_n^\dagger \b_n, \label{eq:bDaggerGeneric}
	\end{align} where $s_i^\dagger = \Sign(\b_i^\transpose \b^\dagger)$, if $\b_i^\transpose \b^\dagger \neq 0$, and $s_i^\dagger \in [-1,1]$ otherwise. Recalling that $\left\| \b^\dagger \right\|_2=1$, and taking equality of $2$-norms on both sides of \eqref{eq:bDagger}, we get 
	\begin{align}
	\b^\dagger = \frac{N_1s_1^\dagger \b_1 + \cdots + N_ns_n^\dagger \b_n}{\left\|N_1s_1^\dagger \b_1 + \cdots + N_ns_n^\dagger \b_n \right\|_2}. \label{eq:bDagger}
	\end{align} Now 
	\begin{align}
	\sum_{i} N_i\left|\b_i^\transpose \b^\dagger \right| &=  \sum_{i: \b^\dagger \not\perp \b_i} N_i\left|\b_i^\transpose \b^\dagger \right| = \sum_{i: \b^\dagger \not\perp \b_i} N_is_i^\dagger \b_i^\transpose \b^\dagger = \left(\b^\dagger\right)^\transpose \left( \sum_{i: \b^\dagger \not\perp \b_i} N_is_i^\dagger \b_i\right) \nonumber \\
	&= \left(\b^\dagger\right)^\transpose \left( \sum_{i: \b^\dagger \not\perp \b_i} N_is_i^\dagger \b_i + \sum_{i: \b^\dagger \perp \b_i} N_is_i^\dagger \b_i\right) = \left(\b^\dagger\right)^\transpose \left( \sum_{i} N_is_i^\dagger \b_i \right) \nonumber \\
	&\stackrel{\eqref{eq:bDagger}}{=} \left\|N_1s_1^\dagger \b_1 + \cdots + N_ns_n^\dagger \b_n \right\|_2 = \left[\sum_i \left(s_i^\dagger N_i\right)^2 +2 \sum_{i \neq j} N_i N_j s_i^\dagger s_j^\dagger \b_i^\transpose \b_j \right]^{1/2} \nonumber \\
	&\le \left[\sum_{i} N_i^2 + 2 \sum_{i \neq j} N_i N_j\cos(\theta_{ij}) \right]^{1/2}.
	\end{align}
\end{proof} 

\begin{lem} \label{lem:Jlowerbound}
	Let $\b_1,\dots,\b_n$ be a hyperplane arrangement of $\Re^D$ with integer weights $N_1,\dots,N_n$ assigned. For $\b \in \Sp^{D-1}$, let $\theta_i$ be the principal angle between $\b$ and $\b_i$. Then 
	\begin{align}
	\min_{\b \in \Sp^{D-1}} \sum N_i \sin(\theta_i) \ge \sqrt{\sum N_i^2-\sigma_{\max}\left(\left[N_iN_j\cos(\theta_{ij})\right]_{i,j}
		\right)},
	\end{align} where $\sigma_{\max}\left(\left[N_iN_j\cos(\theta_{ij})\right]_{i,j}\right)$ denotes the maximal eigenvalue of the $n \times n$ matrix, whose 
	$(i,j)$ entry is $N_i N_j \cos(\theta_{ij})$ and $1 \le i,j \le n$.
\end{lem}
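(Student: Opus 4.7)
The plan is to square the sum $\sum N_i \sin(\theta_i)$, reduce the resulting quadratic form to an eigenvalue bound on a $D\times D$ matrix, and then transfer the bound to the $n\times n$ Gram-type matrix appearing in the statement.

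First, writing $a_i := \b^\transpose \b_i$ so that $\cos(\theta_i) = |a_i|$ and $\sin(\theta_i) = \sqrt{1-a_i^2}$, I would exploit non-negativity of all summands to obtain the elementary inequality
\begin{align}
\left(\sum_i N_i \sin(\theta_i)\right)^2 \ge \sum_i N_i^2 \sin^2(\theta_i) = \sum_i N_i^2 - \sum_i N_i^2 (\b^\transpose \b_i)^2,
\end{align}
where the cross terms dropped are non-negative. The final sum can be rewritten as $\b^\transpose \bA \b$, where $\bA := \sum_i N_i^2 \b_i \b_i^\transpose$. Since $\b \in \Sp^{D-1}$, we immediately have $\b^\transpose \bA \b \le \sigma_{\max}(\bA)$.

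Next, I would observe that if $\bV := [N_1 \b_1, \dots, N_n \b_n] \in \Re^{D\times n}$, then $\bA = \bV \bV^\transpose$ and $\bV^\transpose \bV = [N_i N_j \b_i^\transpose \b_j]_{i,j}$ share the same non-zero eigenvalues; hence
\begin{align}
\sigma_{\max}(\bA) = \sigma_{\max}\!\left(\left[N_i N_j \b_i^\transpose \b_j\right]_{i,j}\right).
\end{align}
To replace the signed inner product $\b_i^\transpose \b_j$ with $\cos(\theta_{ij}) = |\b_i^\transpose \b_j|$, I would invoke the standard fact that for any symmetric matrix $\bM$, its largest eigenvalue is bounded above by that of its entrywise-absolute-value $|\bM|$. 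This follows from the variational characterization: for any unit $\x$,
\begin{align}
\x^\transpose \bM \x \;\le\; |\x|^\transpose |\bM|\, |\x| \;\le\; \sigma_{\max}(|\bM|),
\end{align}
where the last inequality uses that $|\x|$ is also a unit vector and the fact that $\sigma_{\max}(|\bM|)$ is a supremum over \emph{all} unit vectors (note that the Perron eigenvector of the entrywise non-negative symmetric matrix $|\bM|$ can be taken non-negative, so this supremum is attained). Applying this with $\bM = [N_i N_j \b_i^\transpose \b_j]_{i,j}$ yields the target bound; taking square roots completes the proof.

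The main obstacle is not computational but conceptual: recognizing that one must pass from the signed Gram matrix (which arises naturally from the quadratic form $\b^\transpose \bA \b$) to its entrywise-absolute-value counterpart (which features $\cos(\theta_{ij})$). Everything else is a short chain of inequalities. I would also note that strict concavity or tightness is not needed here, since the lemma is only a lower bound used to control $\alpha$ in Theorem \ref{thm:GeneralContinuous}.
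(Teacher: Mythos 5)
Your proof is correct and follows essentially the same route as the paper: dropping the non-negative cross terms (the paper phrases this as $\left\|\cdot\right\|_1 \ge \left\|\cdot\right\|_2$), rewriting $\sum_i N_i^2 \cos^2(\theta_i)$ as the quadratic form $\b^\transpose \left(\sum_i N_i^2 \b_i \b_i^\transpose\right)\b$, identifying its maximal eigenvalue with that of the Gram matrix $\left[N_i N_j \b_i^\transpose \b_j\right]_{i,j}$, and then passing to the entrywise-absolute-value matrix $\left[N_i N_j \cos(\theta_{ij})\right]_{i,j}$. The only difference is cosmetic: the paper cites the fact $\sigma_{\max}(\left|\bA\right|) \ge \sigma_{\max}(\bA)$ as known, whereas you supply its one-line variational proof.
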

\begin{proof}
	For any vector $\bxi$ we have that 
	$\left\|\bxi \right\|_1 \ge \left\|\bxi \right\|_2$. Let $\psi_i \in [0,180^\circ]$ be the angle between $\b$ and $\b_i$. Then 
	\begin{align}
	&\sum N_i \sin(\theta_i) = \sum N_i \left| \sin(\psi_i) \right| \stackrel{\left\|\cdot \right\|_1 \ge \left\|\cdot \right\|_2}{\ge } \sqrt{\sum N_i^2 \sin^2(\psi_i)} \\
	&= \sqrt{\sum N_i^2 - \sum N_i^2 \cos^2(\psi_{i})}.
	\end{align} Hence $N_i \sin(\theta_i)$ is minimized when 
	$\sum N_i^2 \cos^2(\psi_{i})$ is maximized. But
	\begin{align}
	\sum N_i^2 \cos^2(\psi_{i}) = \b^\transpose \left (\sum N_i^2   \b_i \b_i^\transpose \right) \b,
	\end{align} and the maximum value of $\sum N_i^2 \cos^2(\psi_{i})$ is equal to the maximal eigenvalue 
	of the matrix 
	\begin{align}
	\sum N_i^2   \b_i \b_i^\transpose=\begin{bmatrix} N_1 \b_1 & \cdots & N_n \b_n \end{bmatrix} \begin{bmatrix} N_1 \b_1 & \cdots & N_n \b_n \end{bmatrix}^\transpose,
	\end{align} which is the same as the maximal eigenvalue of the matrix
	\begin{align} 
	\begin{bmatrix} N_1 \b_1 & \cdots & N_n \b_n \end{bmatrix}^\transpose \begin{bmatrix} N_1 \b_1 & \cdots & N_n \b_n \end{bmatrix} = \left[N_i N_j \cos(\psi_{ij})\right]_{i,j},
	\end{align} where $\psi_{ij}$ is the angle between $\b_i,\b_j$. Now, if $\bA$ is a matrix and we denote by 
	$\left|\bA \right|$ the matrix that arises by taking absolute values of each element in the matrix $\bA$, then it is known that $\sigma_{\max}(\left|\bA \right|) \ge \sigma_{\max}(\bA)$. Hence the result follows by recalling that $\left|\cos(\psi_{ij})\right| = \cos(\theta_{ij})$.
\end{proof} Now, let $\b^*$ be a global solution of \eqref{eq:ContinuousProblem}.
	Suppose for the sake of a contradiction that $\b^* \not\perp \H_i, \forall i \in [n]$, i.e., $\b^* \neq \pm \b_i, \, \forall i \in [n]$. Consequently, $\J$ is differentiable at $\b^*$ and so $\b^*$ must satisfy \eqref{eq:Optimality}, which we repeat here for convenience:
	\begin{align}
	-\sum_{i=1}^n N_i \left(\b_i^\transpose \b^*\right) 
	\left(1-\left(\b_i^\transpose \b^*\right)^2  \right)^{-\frac{1}{2}} \b_i + \lambda^* \, \b^* = \0. \label{eq:AbstractOptimalityRepeated}
	\end{align}Projecting \eqref{eq:AbstractOptimalityRepeated}  orthogonally onto the hyperplane $\H^*$ defined by $\b^*$ we get
	\begin{align}
	-\sum_{i=1}^n N_i  \, \left(\b_i^\transpose \b^*\right) 
	\left(1-\left(\b_i^\transpose \b^*\right)^2  \right)^{-\frac{1}{2}} \pi_{\H^*} \left(\b_i\right)=\0. \label{eq:AbstractOptimalityProjected}
	\end{align} Since $\b^* \neq \b_i, \, \forall i \in [n]$, it will be the case that $\h_i:=  \pi_{\H^*} \left(\b_i\right) \neq \0, \, \forall i \in [n]$. Since 
	\begin{align}
	\left\|\pi_{\H^*} \left(\b_i\right) \right\|_2 = \left(1-\left(\b_i^\transpose \b^*\right)^2  \right)^{\frac{1}{2}}>0,
	\end{align} equation \eqref{eq:AbstractOptimalityProjected} can be written as
	\begin{align}
	\sum_{i=1}^n N_i \,  \left(\b_i^\transpose \b^*\right) \hat{\h}_i = \0, \label{eq:OptimalityProjectedNormalized}
	\end{align} which in turn gives
	\begin{align}
	N_1 \, \left|\b_1^\transpose \b^* \right| \le \left\|\sum_{i>1} N_i \left(\b_i^\transpose \b^*\right) \hat{\h}_i \right\|_2 \le  \sum_{i>1} N_i \, \left|\b_i^\transpose \b^* \right| \le \max_{\b^\transpose \b=1} \sum_{i>1} N_i \, \left|\b_i^\transpose \b \right| 
	\stackrel{Lem. \ref{lem:Jdagger}}{\le} \beta. \label{eq:theta1thetadagger}
	\end{align}  Since by hypothesis $N_1 > \beta$, we can define an angle $\theta_1^\dagger$ by
	\begin{align}
	\cos(\theta_1^\dagger):= \frac{\beta}{N_1},
	\end{align} and so \eqref{eq:theta1thetadagger} says that $\theta_1$ can not drop below $\theta_1^\dagger$. Hence $\J(\b^*)$ can be bounded from below as follows:
	\begin{align}
	\J(\b^*) &= N_1\sin(\theta_1^*) + \sum_{i>1} N_i\sin(\theta_i^*)  \ge N_1\sin(\theta_1^\dagger)+\min_{\b^\transpose \b=1} \sum_{i>1} N_i\sin(\theta_i) \\
	& \stackrel{Lem. \ref{lem:Jlowerbound}}{\ge} N_1\sin(\theta_1^\dagger) + \sqrt{\sum_{i>1}N_i^2-\sigma_{\max}\left(\left[N_iN_j\cos(\theta_{ij})\right]_{i,j>1}\right)}. \label{eq:AbstractJbstar_FirstBound}
	\end{align} By the optimality of $\b^*$, we must also have $\J(\b_1) \ge \J(\b^*)$, which in view of \eqref{eq:AbstractJbstar_FirstBound} gives
	\begin{align}
	\sum_{i>1} N_i \sin(\theta_{1i}) \ge N_1\sin(\theta_1^\dagger) + \sqrt{\sum_{i>1}N_i^2-\sigma_{\max}\left(\left[N_iN_j\cos(\theta_{ij})\right]_{i,j>1}\right)}.
	\end{align} Now, a little algebra reveals that this latter inequality is precisely the negation of hypothesis $N_1 > \sqrt{\alpha^2 + \beta^2}$. This shows that $\b^*$ has to be $\pm \b_i$, for some $i \in [n]$. For the last statement of the Theorem, notice that condition $\gamma >0$ is equivalent to saying that $\J(\b_1) < \J(\b_i), \, \forall i >1$.

\subsection{Proof of Theorem \ref{thm:DiscreteNonConvex}}

 Let us first derive an upper bound $\theta^{(1)}_{\max}$ on how large $\theta_1^*$ can be. Towards that end, we derive a lower bound on the objective function $\J(\b)$ in terms of $\theta_1$: For any vector $\b \in \Sp^{D-1}$ we can write 
	\begin{align}
	&\J(\b) =\left\| \bX^\transpose \b \right\|_1= \sum \left\|\bX_i^\transpose \b \right\|_1  = \sum N_i \b^\transpose \bchi_{i,\b} \\
	&= \sum c N_i \sin(\theta_i) + \sum N_i\b^\transpose \boldeta_{i,\b}, \, \, \, \left\|\boldeta_{i,\b} \right\|_2 \le \epsilon_i \\
	& \ge c \sum N_i \sin(\theta_i) - \sum \epsilon_i N_i \\
	& = c N_1 \sin(\theta_{1}) + c\sum_{i>1} N_i \sin(\theta_i) - \sum \epsilon_i N_i \\
	& \ge c N_1 \sin(\theta_{1}) + c \min_{\b^\transpose \b =1} \left[\sum_{i>1} N_i \sin(\theta_i)\right]- \sum \epsilon_i N_i \\
	& \stackrel{Lem. \ref{lem:Jlowerbound}}{\ge} c N_1 \sin(\theta_{1}) + c\sqrt{\sum_{i>1}N_i^2-\sigma_{\max}\left(\left[N_iN_j\cos(\theta_{ij})\right]_{i,j>1}\right)} -  \sum \epsilon_i N_i. \label{eq:DiscreteNonConvexJbLB}
	\end{align}	Next, we derive an upper bound on $\J(\b_1)$:
	\begin{align}
	&\J(\b_1) = \sum_{i>1} \left\|\bX_i^\transpose \b_1 \right\|_1 =
	\sum_{i>1} N_i \b_1^\transpose \bchi_{i,\b_1} \\
	&= \sum_{i>1} c N_i \sin(\theta_{1i}) + \sum_{i>1} N_i\b_1^\transpose \boldeta_{i,\b_1}, \, \, \, \left\|\boldeta_{i,\b_1} \right\|_2 \le \epsilon_i \\
	&\le c\sum_{i>1}  N_i \sin(\theta_{1i}) + \sum_{i>1} \epsilon_i N_i. \label{eq:DiscreteNonConvexJb1LB}
	\end{align}
	Since any vector $\b$ for which the corresponding lower bound \eqref{eq:DiscreteNonConvexJbLB} on $\J(\b)$ is strictly larger than the upper bound \eqref{eq:DiscreteNonConvexJb1LB} on $\J(\b_1)$, can not be a global minimizer (because it gives a larger objective than $\b_1$), $\theta_1^*$ must be bounded above by $\theta_{\max}^{(1)}$, where the latter is defined, in view of \eqref{c:thmDiscreteNonConvexDominance}, by
	\begin{align}
	\sin\left(\theta_{\max}^{(1)}\right) := \frac{ \alpha+c^{-1}\left(\epsilon_1N_1+2\sum_{i>1} \epsilon_i N_i\right)}{N_1}, \label{eq:thmDiscreteNonConvexTheta1MAX}
	\end{align} where $\alpha$ is as in Theorem \ref{thm:DiscreteNonConvex}. Now let $\b^*$ be a global minimizer, and suppose for the sake of contradiction that $\b^* \not\perp \H_i, \forall i \in [n]$. We will show that there exists a lower bound $\theta_{\min}^{(1)}$ on $\theta_1$, such that $\theta_{\min}^{(1)} > \theta_{\max}^{(1)}$, which is of course a contradiction.	
	Towards that end, the first order optimality condition for $\b^*$ can be written as 
	\begin{align}
	\0 \in \bX \Sgn(\bX^\transpose \b^*) + \lambda \b^*, \label{eq:FirstOptimality}
	\end{align} where $\lambda$ is a Lagrange multiplier and
	$\Sgn(\alpha) = \Sign(\alpha)$ if $\alpha \neq 0$ and $\Sgn(0) = [-1,1]$, is the subdifferential of the function $|\cdot|$.
	Since the points $\bX$ are general, any hyperplane $\H$ of $\Re^D$ spanned by $D-1$ points of $\bX$ such that at most $D-2$ points come from $\bX_i, \, \forall i \in [n]$, does not contain any of the remaining points of $\bX$. Consequently, by Lemma \ref{lem:NonConvexMaximalInterpolation} $\b^*$ will be orthogonal to precisely $ D-1$ points $\left\{\bxi_1,\dots,\bxi_{D-1}\right\} \subset \bX$, from which at most $K_i \le D-2$ lie in $\H_i$. Thus, we can write relation \eqref{eq:FirstOptimality} as
	\begin{align}
	\sum_{j=1}^{D-1} \alpha_j \bxi_j + \sum_{i=1}^n N_i \, \bchi_{i,\b^*} + \lambda \b^*=\0, \label{eq:OptimalityExpanded}
	\end{align} for real numbers $-1 \le \alpha_j \le 1, \, \forall j \in [D-1]$. Using the definition of $\epsilon_i$, we can write
	\begin{align}
	\bchi_{i,\b^*} = c \, \hat{\h}_{i,\b^*} + \boldeta_{i,\b^*}, \, \forall i \in [n],  \label{eq:Concentration}
	\end{align} with $\left\|\boldeta_{i,\b^*}\right\|_2 \le \epsilon_i$. Note that since 
	$\b^* \not\perp \H_i, \, \forall i \in [n]$, we have $\hat{\h}_{i,\b^*} \neq \0$. 
	Substituting \eqref{eq:Concentration} in \eqref{eq:OptimalityExpanded} we get
	\begin{align}
	\sum_{j=1}^{D-1} \alpha_j \bxi_j + c\sum_{i=1}^n N_i  \, \hat{\h}_{i,\b^*} + \sum_{i=1}^n N_i \, \boldeta_{i,\b^*} + \lambda \b^*=\0, \label{eq:OptimalityConcentration}
	\end{align}  and projecting \eqref{eq:OptimalityConcentration} onto the hyperplane $\H_{\b^*}$ with normal $\b^*$, we obtain
	\begin{align}
	\pi_{\H_{\b^*}}\left(\sum_{j=1}^{D-1} \alpha_j \bxi_j \right) + c\sum_{i=1}^n N_i \pi_{\H_{\b^*}}\left(\hat{\h}_{i,\b^*}\right) + \sum_{i=1}^n N_i \, \pi_{\H_{\b^*}}\left(\boldeta_{i,\b^*}\right) =\0.
	\label{eq:OptimalityConcentrationProjected}
	\end{align} Let us analyze the term $\pi_{\H_{\b^*}}\left(\hat{\h}_{i,\b^*}\right)$. We have 
	\begin{align}
	&\pi_{\H_{\b^*}}\left(\hat{\h}_{i,\b^*}\right)  = \pi_{\H_{\b^*}}\left(\frac{\pi_{\H_i}(\b^*)}{\left\|\pi_{\H_i}(\b^*) \right\|_2}\right) = \pi_{\H_{\b^*}}\left(\frac{\b^*- \left(\b_i^\transpose \b^*\right) \b_i}{\left\|\b^*-\left(\b_i^\transpose \b^*\right) \b_i\right\|_2}\right) \\
	& = \pi_{\H_{\b^*}}\left(\frac{\b^*-\left(\b_i^\transpose \b^*\right) \b_i}{\sin(\theta_i)}\right) = \frac{\b^*-\left(\b_i^\transpose \b^*\right) \b_i}{\sin(\theta_i)} - \left( \frac{1-\cos^2(\theta_i)}{\sin(\theta_i)} \right)\b^* \\
	&= \frac{\left(\b_i^\transpose \b^*\right)\left( \left(\b_i^\transpose \b^*\right)\b^* -\b_i\right)}{\sin(\theta_i)} = - \left(\b_i^\transpose \b^*\right) \hat{\bzeta}_i, \, \, \, \bzeta_i = \pi_{\H_{\b^*}}(\b_i). \label{eq:DoublePi}
	\end{align} Using \eqref{eq:DoublePi}, \eqref{eq:OptimalityConcentrationProjected} becomes 
	\begin{align}
	\pi_{\H_{\b^*}}\left(\sum_{j=1}^{D-1} \alpha_j \bxi_j \right) - \sum_{i=1}^n N_i \, c \, \left(\b_i^\transpose \b^*\right) \hat{\bzeta}_i + \sum_{i=1}^n N_i \, \pi_{\H_{\b^*}}\left(\boldeta_{i,\b^*}\right) =\0.
	\label{eq:OptimalityConcentrationProjectedZeta}
	\end{align} Isolating the term that depends on $i=1$ to the LHS and moving everything else to the RHS, and taking norms, we get
	\begin{align}
	&c \, N_1 \, \cos(\theta_1) \le \sum_{i>1}^n c \, N_i \, \cos(\theta_i) + \nonumber \\
	&+ \left\|\pi_{\H_{\b^*}}\left(\sum_{j=1}^K \alpha_j \bxi_j \right) \right\|_2 + \sum_{i=1}^n N_i \, \left\|\pi_{\H_{\b^*}}\left(\boldeta_{i,\b^*}\right) \right\|_2. \label{eq:OptimalityConcentrationProjectedZetaTheta1Bound}
	\end{align} Since $\left\| \boldeta_{i,\b^*} \right\|_2 \le \epsilon_i$, we have that $ \left\|\pi_{\H_{\b^*}}\left(\boldeta_{i,\b^*}\right) \right\|_2 \le \epsilon_i$. Next, the quantity $\sum_{j=1}^K \alpha_j \bxi_j$ can be decomposed along the index $i$, based on the hyperplane membership of the $\bxi_j$. For instance, if $\bxi_1 \in \H_1$, then replace the term $\alpha_1 \bxi_1$ with $\alpha_1^{(1)} \bxi_1^{(1)}$, where the superscript $\cdot^{(1)}$ denotes association to hyperplane $\H_1$. Repeating this for all $\bxi_j$ and after a possible re-indexing, we have
	\begin{align}
	\sum_{j=1}^{D-1} \alpha_j \bxi_j = \sum_{i=1}^n \sum_{j=1}^{K_i}  \alpha_j^{(i)} \bxi_j^{(i)}.  
	\end{align} Now, by Definition \ref{dfn:DPCPmultiple_circumradius} we have that
	\begin{align}
	\left\|\sum_{j=1}^{K_i} \alpha_j^{(i)} \bxi_j^{(i)} \right\|_2 \le \mathcal{R}_{i,K_i}, 
	\end{align} and as a consequence, the upper bound \eqref{eq:OptimalityConcentrationProjectedZetaTheta1Bound} can be extended to 
	\begin{align}
	c \, N_1 \, \cos(\theta_1) \le \sum_{i>1}^n c \, N_i \, \cos(\theta_i) + \sum_{i} \epsilon_i \, N_i + \mathcal{R}. \label{eq:OptimalityConcProjZetaTheta1BoundCompact}
	\end{align} Finally, Lemma \ref{lem:Jdagger} provides a bound 
	\begin{align}
	\sum_{i>1}^n \, N_i \, \cos(\theta_i)  \le \beta,
	\end{align} where $\beta$ is as in Theorem \ref{thm:GeneralContinuous}. In turn, this can be used to extend \eqref{eq:OptimalityConcProjZetaTheta1BoundCompact} to 
	\small
	\begin{align}
	\cos(\theta_1) \le \frac{\beta+c^{-1}\left(\mathcal{R}+\sum  \epsilon_i  N_i\right)}{N_1}=: \cos\left(\theta_{\min}^{(1)}\right) \label{eq:thmDiscreteNonConvexTheta1MIN}. 
	\end{align} \normalsize Note that the angle $\theta_{\min}^{(1)}$ of 
	\eqref{eq:thmDiscreteNonConvexTheta1MIN} is well-defined, since by hypothesis $N_1 > \bar{\beta}$, and that what \eqref{eq:thmDiscreteNonConvexTheta1MIN} effectively says, is that $\theta_1$ never drops below $\theta_{\min}^{(1)}$. It is then straightforward to check that hypothesis $N_1 > \sqrt{\bar{\alpha}^2+\bar{\beta}^2}$ implies $\theta_{\min}^{(1)} > \theta_{\max}^{(1)}$, which is a contradiction. In other words, $\b^*$ must be equal up to sign to one of the $\b_i$, which proves the first part of the Theorem. The second part follows from noting that condition $\bar{\gamma}>0$ guarantees that $\J(\b_1) < \min_{i>1} \J(\b_i)$.

\subsection{Proof of Theorem \ref{thm:LPDiscrete}}

	First of all, it follows from the theory of the simplex method, that if $\bn_{k+1}$ is obtained via the simplex method, then it will satisfy the conclusion of Lemma \ref{lem:LPMaximalInterpolation} in Appendix \ref{appendix:Spath}. Then Lemma \ref{lem:LPconvergence} guarantees that $\left\{\bn_k\right\}$ converges to a critical point of 
	problem \eqref{eq:ell1} in a finite number of steps; denote that point by $\bn_{\infty}$. In other words, $\bn_{\infty}$ will satisfy equation \eqref{eq:AbstractOptimalityRepeated} and it will have unit $\ell_2$ norm. Now, if $\bn_{\infty} = \pm \b_j$ for some $j>1$, then 
	\begin{align}
	\J(\hat{\bn}_0) \ge \J(\b_j), 
	\end{align} or equivalently 
	\begin{align}
	\sum N_i \hat{\bn}_0^\transpose \bchi_{i,\hat{\bn}_0} \ge 
	\sum_{i \neq j} N_i \b_j^\transpose \bchi_{i,\b_j}. \label{eq:LP_not_bi}
	\end{align} Substituting the concentration model 
	\begin{align}
	\bchi_{i,\hat{\bn}_0} &= c \widehat{\pi_{\H_i}\left(\bn_0\right)} + \boldeta_{i,0}, \, \, \, \left\|\boldeta_{i,0} \right\|_2 \le \epsilon_i, \\
	\bchi_{i,\b_j} &= c \widehat{\pi_{\H_i}\left(\b_j\right)} + \boldeta_{ij}, \, \, \, \left\|\boldeta_{ij} \right\|_2 \le \epsilon_i,
	\end{align} into \eqref{eq:LP_not_bi}, we get 
	\begin{align}
	\sum N_i c \sin(\theta_{i,0}) + \sum N_i \hat{\bn}_0^\transpose \boldeta_{i,0} \ge \sum_{i \neq j} N_i c \sin(\theta_{ij}) + \sum N_i \b_j^\transpose \boldeta_{ij}. \label{eq:LP_not_bi_concentration}
	\end{align} Bounding the LHS of \eqref{eq:LP_not_bi_concentration} from above and the RHS from below, we get
	\begin{align}
	\sum N_i \, c \, \sin(\theta_{i,0}) + \sum \epsilon_i N_i  \ge \sum_{i \neq j} N_i \, c \, \sin(\theta_{ij}) - \sum \epsilon_i N_i.
	\end{align} But this very last relation is contradicted by hypothesis $N_1 > \mu$, i.e., none of the $\pm \b_j$ for $j>1$ can be $\bn_{\infty}$. We will show that $\bn_{\infty}$ has to be $\pm \b_1$. So suppose for the sake of a contradiction that that $\bn_{\infty}$ is not colinear with $\b_1$, i.e., $\bn_{\infty} \not\perp \H_i, \, \forall i \in [n]$. Since $\bn_{\infty} $ satisfies \eqref{eq:AbstractOptimalityRepeated}, we can use part of the proof of Theorem \ref{thm:DiscreteNonConvex}, according to which the principal angle $\theta_{1,\infty}$ of $\bn_{\infty}$ from $\b_1$ does not become less than $\theta_{\min}^{(1)}$, where $\theta_{\min}^{(1)}$ is as in \eqref{eq:thmDiscreteNonConvexTheta1MIN}. Consequently, and using once again the concentration model, we obtain
	\begin{align}
	\sum N_i \, c \, \sin(\theta_{i,0}) + \sum \epsilon_i N_i \ge \J(\hat{\bn}_0) \ge \J(\bn_{\infty}) \ge \sum N_i \, c \, \sin(\theta_{i,\infty}) - \sum \epsilon_i N_i \nonumber \\ 
	\ge N_1 \, c \, \sin\left(\theta_{\min}^{(1)} \right) + c \, \sqrt{\sum_{i>1}N_i^2-\sigma_{\max}\left(\left[N_iN_j\cos(\theta_{ij})\right]_{i,j>1}\right)} - \sum \epsilon_i N_i. \label{eq:LP_b1}
	\end{align} Now, a little algebra reveals that the outermost inequality in \eqref{eq:LP_b1} contradicts \eqref{c:thmLPDiscreteDominance}.

\section{Algorithmic Contributions} \label{section:Algorithms}
There are at least two ways in which DPCP can be used to learn a hyperplane
arrangement; either through a sequential (RANSAC-style) scheme, or through 
an iterative (K-Subspaces-style) scheme. These two cases are described next.

\subsection{Sequential hyperplane learning via DPCP} \label{subsection:MultipleDPCPAlgorithmsRANSACstyle}

Since at its core DPCP is a single subspace learning method, we may as well use it to learn 
$n$ hyperplanes in the same way that RANSAC \citep{RANSAC} is used: learn one hyperplane from the entire dataset, remove the points close to it, then learn a second hyperplane and so on. The main weakness of this 
technique is well known, and consists of its sensitivity to the thresholding parameter, which is necessary 
in order to remove points. 
To alleviate the need of knowing a good threshold, we propose to replace the process of removing points
by a process of appropriately weighting the points. In particular, suppose we solve the DPCP problem
\eqref{eq:ell1} on the entire dataset $\bX$ and obtain a unit $\ell_2$-norm vector $\b_1$. Now, instead of
removing the points of $\bX$ that are close to the hyperplane with normal vector $\b_1$ (which would require a threshold parameter), we weight each and every point $\x_j$ of $\bX$ by its distance $\left| \b_1^\transpose \x_j \right|$ from that hyperplane. Then to compute a second hyperplane with normal $\b_2$ we apply DPCP on the weighted dataset $\left\{\left| \b_1^\transpose \x_j \right| \x_j \right\}$. To compute a third hyperplane, 
the weight of point $\x_j$ is chosen as the smallest distance of $\x_j$ from the already computed two hyperplanes, i.e., DPCP is now applied to $\left\{\min_{i=1,2} \left| \b_i^\transpose \x_j \right| \x_j \right\}$. After $n$ hyperplanes have been computed, the clustering of the points is obtained based on their distances to the $n$ hypeprlanes; see Algorithm \ref{alg:MultipleDPCPsequential}.

\begin{algorithm}[t!] \caption{Sequential Hyperplane Learning via DPCP}\label{alg:MultipleDPCPsequential} \begin{algorithmic}[1] 
		\Procedure{SHL-DPCP}{$\bX=\left[\x_1, \, \x_2, \dots, \, \x_N \right] \in \Re^{D \times N},n$}	
		\State $ i \gets 0$;
		\State $w_j \gets 1, \, j=1,\dots,N$; 		
		\For{$i=1 : n$}	
		      \State $\bY \gets \left[ w_1 \x_1 \, \cdots \, w_N \x_N \right]$;		
			  \State $\b_i \gets \argmin_{\b \in \Re^D} \, \, \left[ \left\| \bY^\transpose \b \right\|_1, \, \, \, \text{s.t.} \, \, \, \b^\transpose \b = 1\right]$; 			 
			  \State $w_j \gets \min_{k=1,\dots,i} \left| \b_k^\transpose \x_j \right|, \, j=1,\dots,N$;
			  		\EndFor			
		\State $\bC_i \gets \left\{\x_j \in \bX: \, i = \argmin_{k=1,\dots,n}  \left| \b_k^\transpose \x_j \right| \right\}, i=1,\dots,n$;
		\State \Return $\left\{(\b_i,\bC_i)\right\}_{i=1}^n$;
		\EndProcedure 				
	\end{algorithmic} 
\end{algorithm}

\subsection{Iterative hyperplane learning via DPCP} \label{subsection:AlgorithmsKHstyle}
Another way to do hyperplane clustering via DPCP, is to modify the classic K-Subspaces \citep{Bradley:JGO00,Tseng:JOTA00,Zhang:WSM09} by computing the normal vector of each cluster by DPCP. We call the resulting method IHL-DPCP; see Algorithm \ref{alg:IHL-DPCP}. It is worth noting
that since DPCP minimizes the $\ell_1$-norm of the distances of the points to a hyperplane, consistency dictates that the stopping criterion for IHL-DPCP be governed by the sum over all points of the distance of each point to its
assigned hyperplane (instead of the traditional sum of squares \citep{Bradley:JGO00,Tseng:JOTA00}); in other words the global objective function minimized by
IHL-DPCP is the same as that of Median K-Flats \citep{Zhang:WSM09}. 

\begin{algorithm}[t!] \caption{Iterative Hyperplane Learning via Dual Principal Component Pursuit}\label{alg:IHL-DPCP} \begin{algorithmic}[1] 
		\Procedure{IHL-DPCP}{$\bX=\left[\x_1, \, \x_2, \dots, \, \x_N \right],\b_1,\dots,\b_n,\varepsilon,T_{\max}$}	
		\State $\J_{\text{old}} \gets \infty, \,  \Delta \J \gets \infty, \, t \gets 0$;
		\While{ $t < T_{\max}$ and $\Delta \J > \varepsilon$} 
		\State $\J_{\text{new}} \gets 0, \, t = t+1$;		
			\State $\bC_i \gets \left\{\x_j \in \bX: \, i = \argmin_{k=1,\dots,n}  \left| \b_k^\transpose \x_j \right| \right\}, i=1,\dots,n$;
			\State $\J_{\text{new}} = \sum_{i=1}^n \sum_{\x_j \in \C_i} \left|\b_i^\transpose \x_j \right|$;			
		\State $\Delta \J \gets 
		                      (\J_{\text{old}} - \J_{\text{new}}) / (\J_{\text{old}} + 10^{-9}), \, \J_{\text{old}} \gets \J_{\text{new}}$ ;		
				\State $\b_i \gets \argmin_{\b } \, \, \left[ \left\| \C_i^\transpose \b \right\|_1, \, \, \, \text{s.t.} \, \, \, \b^\transpose \b = 1\right], \, i=1,\dots,n$; 		
		\EndWhile				
		\State \Return $\left\{(\b_i,\bC_i)\right\}_{i=1}^n$;
		\EndProcedure 				
	\end{algorithmic} 
\end{algorithm}	

\subsection{Solving the DPCP problem} \label{subsection:DPCPcomputation}

\begin{algorithm}[t!] \caption{Relaxed Dual Principal Component Pursuit}\label{alg:DPCP-r} \begin{algorithmic}[1] 
		\Procedure{DPCP-r}{$\bX,\varepsilon,T_{\max}$}				
		\State $k \gets 0; \Delta \mathcal{J} \gets \infty$; 
		\State $\hat{\bn}_0 \gets \argmin_{ \left\|\b\right\|_2=1} \left\|\bX^\transpose \b \right\|_2$;
		\While{$k < T_{\max}$ and $\Delta \mathcal{J} > \varepsilon$}
		\State $k \gets k+1$;
		\State $\bn_k \gets \argmin_{\b^\transpose \hat{\bn}_{k-1}=1} \left\| \bX^\transpose \b \right\|_1$;
		\State $\Delta \mathcal{J} \gets \left(\left\| \bX^\transpose \hat{\bn}_{k-1} \right\|_1-\left\| \bX^\transpose \hat{\bn}_{k} \right\|_1 \right) / \left(\left\| \bX^\transpose \hat{\bn}_{k-1} \right\|_1+10^{-9}\right)$;
		\EndWhile				
		\State \Return $\hat{\bn}_{k}$;
		\EndProcedure 				
	\end{algorithmic} 
\end{algorithm} 

\begin{algorithm}[t!] \caption{Dual Principal Component Pursuit via Iteratively Reweighted Least Squares}\label{alg:DPCP-IRLS} \begin{algorithmic}[1] 
		\Procedure{DPCP-IRLS}{$\bX,c,\varepsilon,T_{\max},\delta$}				
		\State $k \gets 0; \Delta \mathcal{J} \gets \infty$; 
		\State $\bB_0 \gets \argmin_{ \bB \in \Re^{D \times c}} \, \, \left\|\bX^\transpose \bB \right\|_2, \, \, \, \text{s.t.} \, \, \, \bB^\transpose \bB = \I_c$;
		\While{$k < T_{\max}$ and $\Delta \mathcal{J} > \varepsilon$}
		\State $k \gets k+1$;	
		\State $w_{\x} \gets 1/\max\left\{\delta, \left\|\bB_{k-1}^\transpose \x\right\|_2 \right\}, \, \x \in \bX$;
				\State $\bB_k \gets \argmin_{\bB \in \Re^{D \times c}} \, \, \sum_{\x \in \bX}  w_{\x}\left\|\bB^\transpose \x\right\|_2^2   \, \, \, \text{s.t.} \, \, \, \bB^\transpose \bB = \I_c$;
		\State $\Delta \mathcal{J} \gets \left(\left\| \bX^\transpose \bB_{k-1} \right\|_1-\left\| \bX^\transpose \bB_{k} \right\|_1\right) / \left(\left\| \bX^\transpose \bB_{k-1} \right\|_1+10^{-9}\right)$;
		\EndWhile				
		\State \Return $\bB_{k}$;
		\EndProcedure 				
	\end{algorithmic} 
\end{algorithm}

\begin{algorithm} \caption{Denoised Dual Principal Component Pursuit 
 }\label{alg:DPCP-d} \begin{algorithmic}[1] 
		\Procedure{DPCP-d}{$\bX,\varepsilon,T_{\max},\delta,\tau$}				
		\State Compute a Cholesky factorization $\bL \bL^\transpose  = \bX \bX^\transpose+\delta \bI_{D}$;			\State $k \gets 0; \Delta \mathcal{J} \gets \infty$; 
		\State $\b \gets \argmin_{\b \in \Re^{D}: \, \left\|\b\right\|_2=1} \, \, \, \left\|\bX^\transpose \b \right\|_2$;	
		\State $\J_0 \gets \tau \left\|\bX^\transpose \b \right\|_1$;		
		\While{$k < T_{\max}$ and $\Delta \mathcal{J} > \varepsilon$}
		\State $k \gets k+1$;	
		\State $\y \gets \cS_{\tau} \left(\bX^\transpose \b \right)$;
		\State $\b \gets$ solution of $\bL \bL^\transpose \bxi = \bX \y$ by backward/forward propagation;
		\State $\b \gets \b / \left\| \b \right\|_2$;
		\State $\J_k \gets \tau \left\| \y \right\|_1 + \frac{1}{2}  \left\|\y - \bX^\transpose \b \right\|_2^2$;
		\State $\Delta \J \gets \left(\J_{k-1} - \J_k \right) / \left(\J_{k-1} + 10^{-9} \right)$;				
		\EndWhile				
		\State \Return $(\y,\b)$;
		\EndProcedure 				
	\end{algorithmic} 
\end{algorithm}

 Recall that the DPCP problem \eqref{eq:ell1} that appears in Algorithms \ref{alg:MultipleDPCPsequential} and \ref{alg:IHL-DPCP} (with data matrices $\bY$ and $\bC_i$, respectively)
is non-convex. In \cite{Tsakiris:DPCP-ArXiv17} we described four distinct methods for solving it,
which we briefly review here. 

The first method, which was first proposed in \cite{Spath:Numerische87}, consists of solving 
the recursion of linear programs \eqref{eq:ConvexRelaxations} using any standard solver, 
such as Gurobi \citep{gurobi}; we refer to such a method as DPCP-r, standing for \emph{relaxed DPCP} (see Algorithm \ref{alg:DPCP-r}). A second approach, called DPCP-IRLS, is to solve \eqref{eq:ell1} using a standard 
 \emph{Iteratively Reweighted Least-Squares (IRLS)} technique (\citep{Candes:JFAA08,Daubechies:CPAM10,Chartrand:ICASSP08}) as in Algorithm \ref{alg:DPCP-IRLS}.
 A third method, first proposed in \cite{Qu:NIPS14}, is to solve \eqref{eq:ell1} approximately by applying alternative minimization on 
 its \emph{denoised} version
 \begin{align}
\min_{\b,\y: \, ||\b||_2=1} \, \left[\tau  \, \left\|\y\right\|_1 + \frac{1}{2}\left\|\y - \bX^\transpose \b \right\|_2^2  \right] \label{eq:DPCP-d}.
\end{align} We refer to such a method as \emph{DPCP-d}, standing for \emph{denoised DPCP}; see Algorithm 
\ref{alg:DPCP-d}. Finally, the fourth method is \emph{relaxed and denoised DPCP (DPCP-r-d)}, which 
replaces each problem of recursion \eqref{eq:ConvexRelaxations} with its denoised version
\begin{align}
\min_{\y,\b} \left[ \tau \left\|\y \right\|_1 + \frac{1}{2}\left\|\y - \bX^\transpose \b \right\|_2^2 \right], \, \, \, \text{s.t.} \, \, \, \b^\transpose \hat{\bn}_{k-1}=1; \label{eq:LassoFirst}
\end{align} which is in turn solved via alternating minimization; see \cite{Tsakiris:DPCP-ArXiv17} for details.
 
\section{Experimental evaluation} \label{section:Experiments}
In this section we evaluate experimentally Algorithms \ref{alg:MultipleDPCPsequential}
and \ref{alg:IHL-DPCP} using both synthetic (\S \ref{subsection:ExperimentsSynthetic})
 and real data (\S \ref{subsection:ExperimentsReal}).

\subsection{Synthetic data} \label{subsection:ExperimentsSynthetic}

\begin{figure}[t!]
	\centering
	\subfigure[RANSAC, $\alpha=1$]{\label{figure:RANSACstyle_D_vs_n_Ns300_T50_RANSAC1}\includegraphics[width=0.29\linewidth]{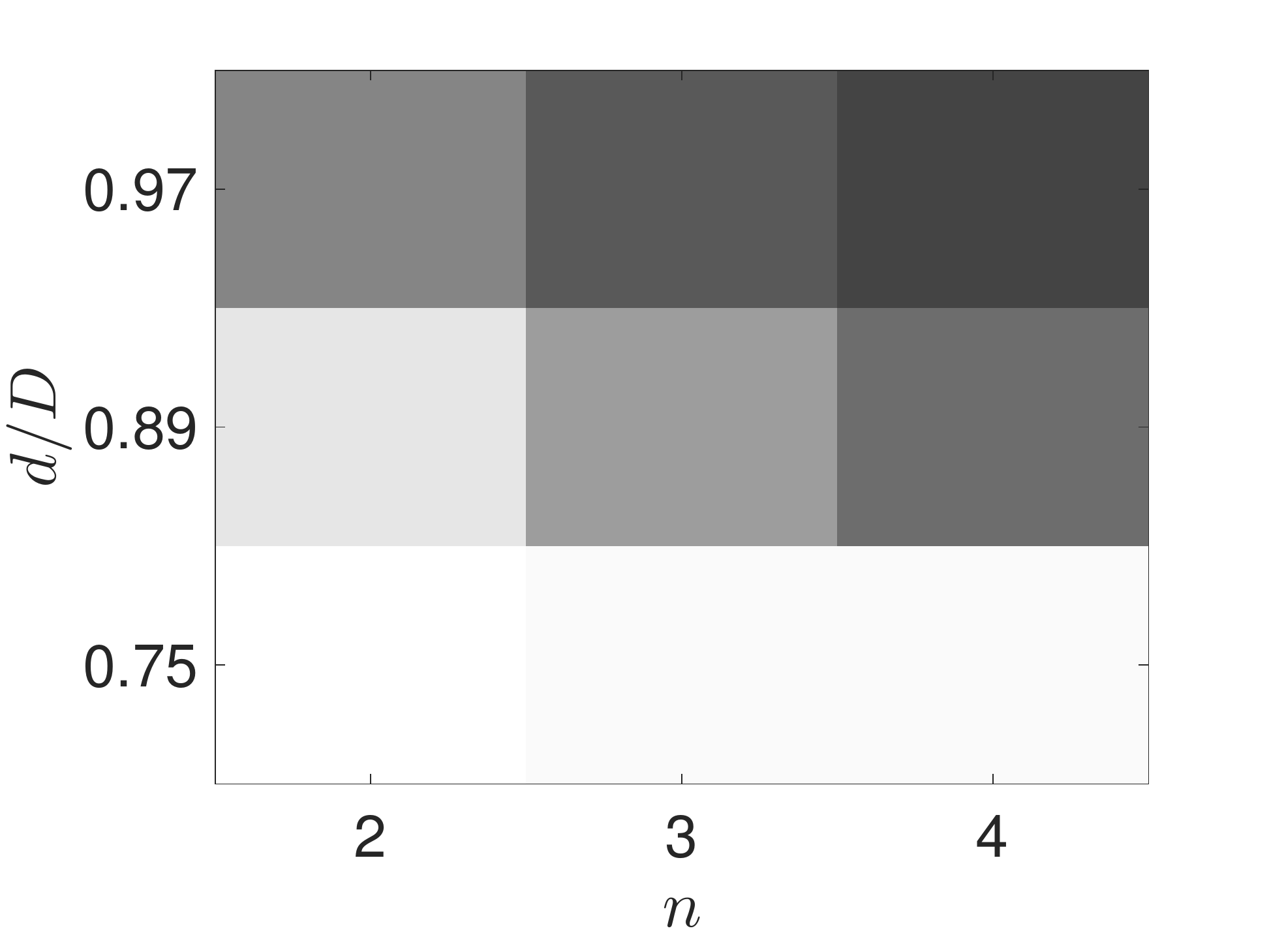}} 
	\subfigure[RANSAC, $\alpha=0.8$]{\label{figure:RANSACstyle_D_vs_n_Ns300_T50_RANSAC08}\includegraphics[width=0.29\linewidth]{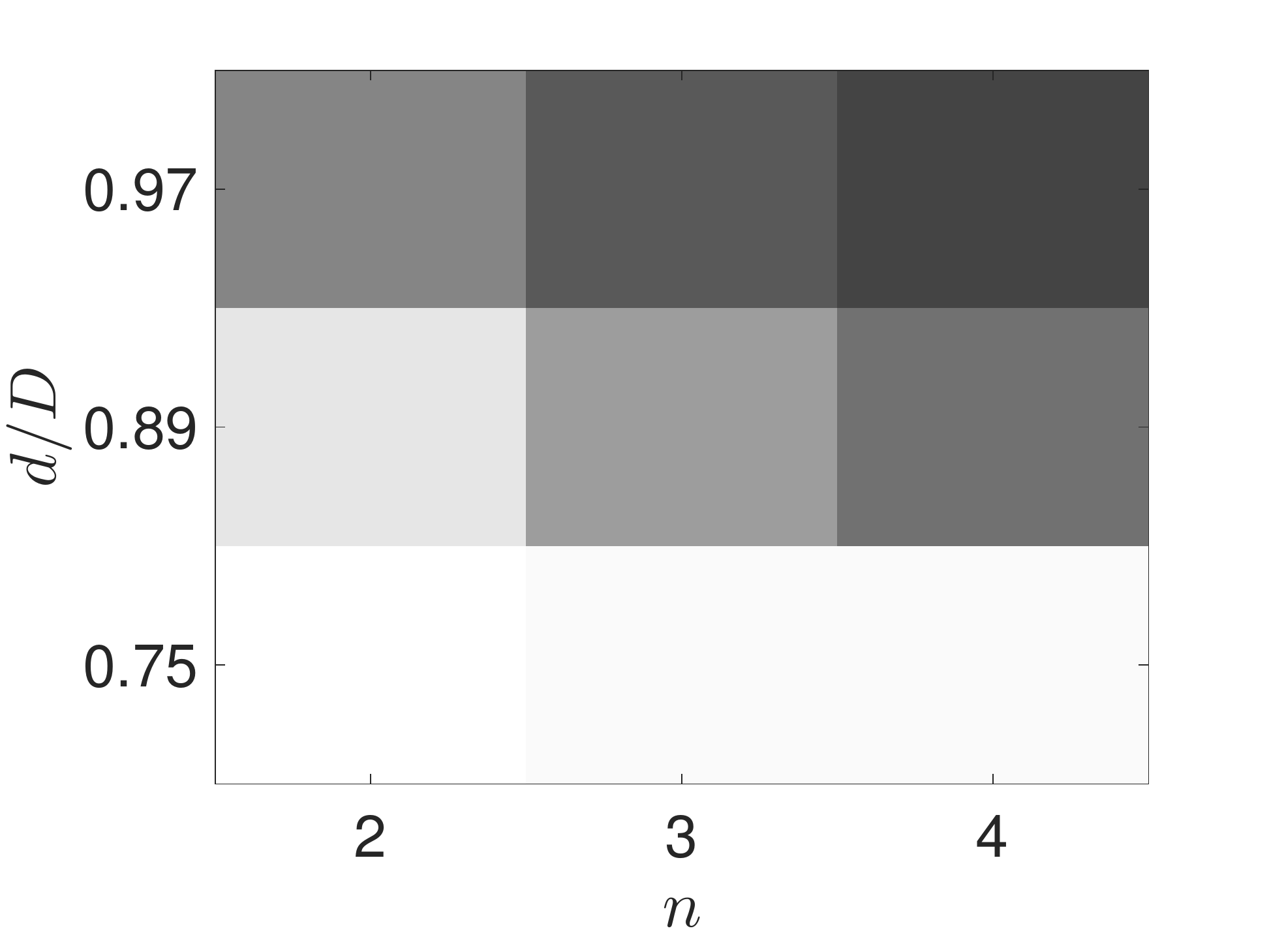}} 	
	\subfigure[RANSAC, $\alpha=0.6$]{\label{figure:RANSACstyle_D_vs_n_Ns300_T50_RANSAC06}\includegraphics[width=0.29\linewidth]{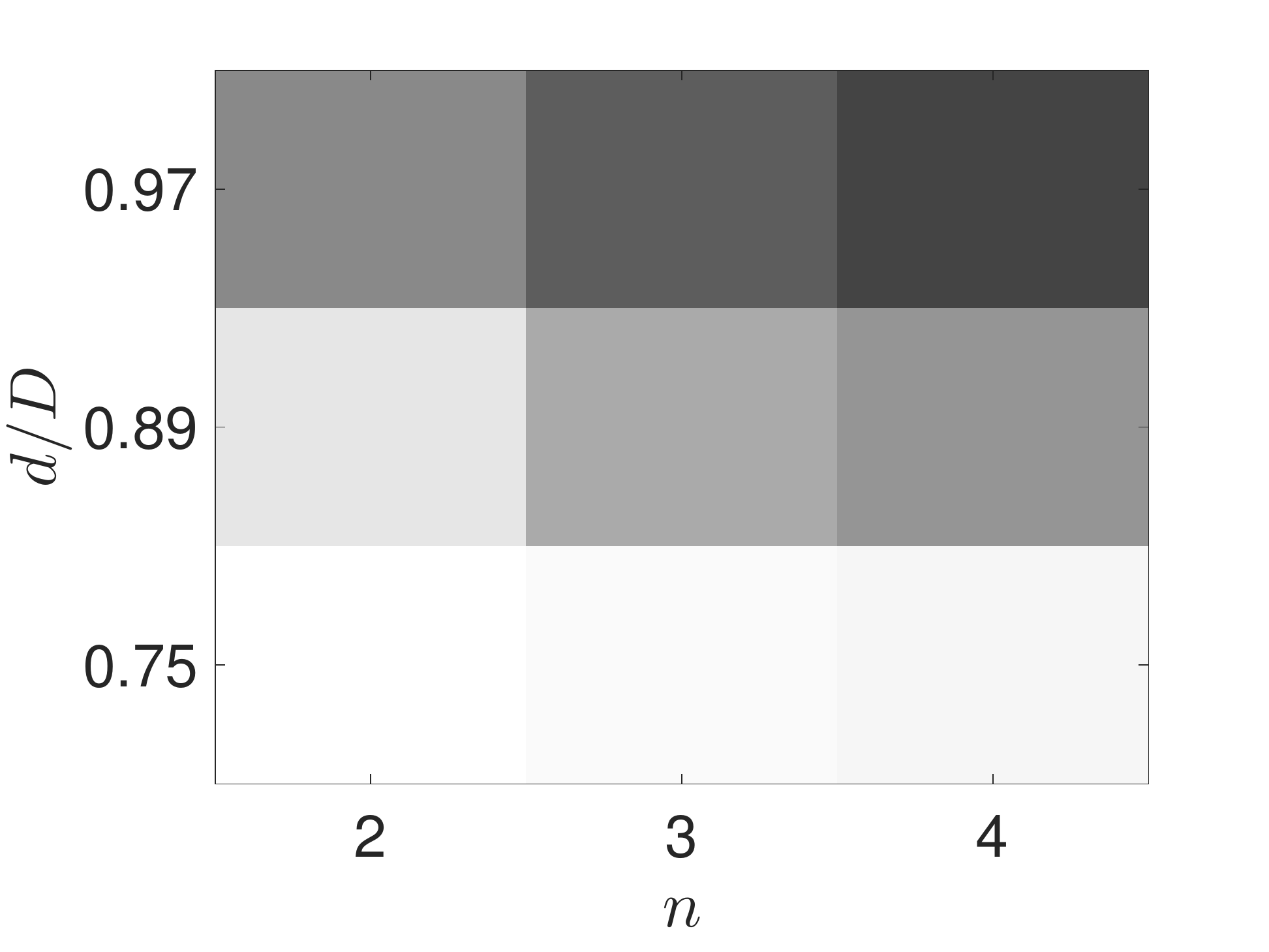}} 
	\subfigure[REAPER, $\alpha=1$]{\label{figure:RANSACstyle_D_vs_n_Ns300_T50_REAPER1}\includegraphics[width=0.29\linewidth]{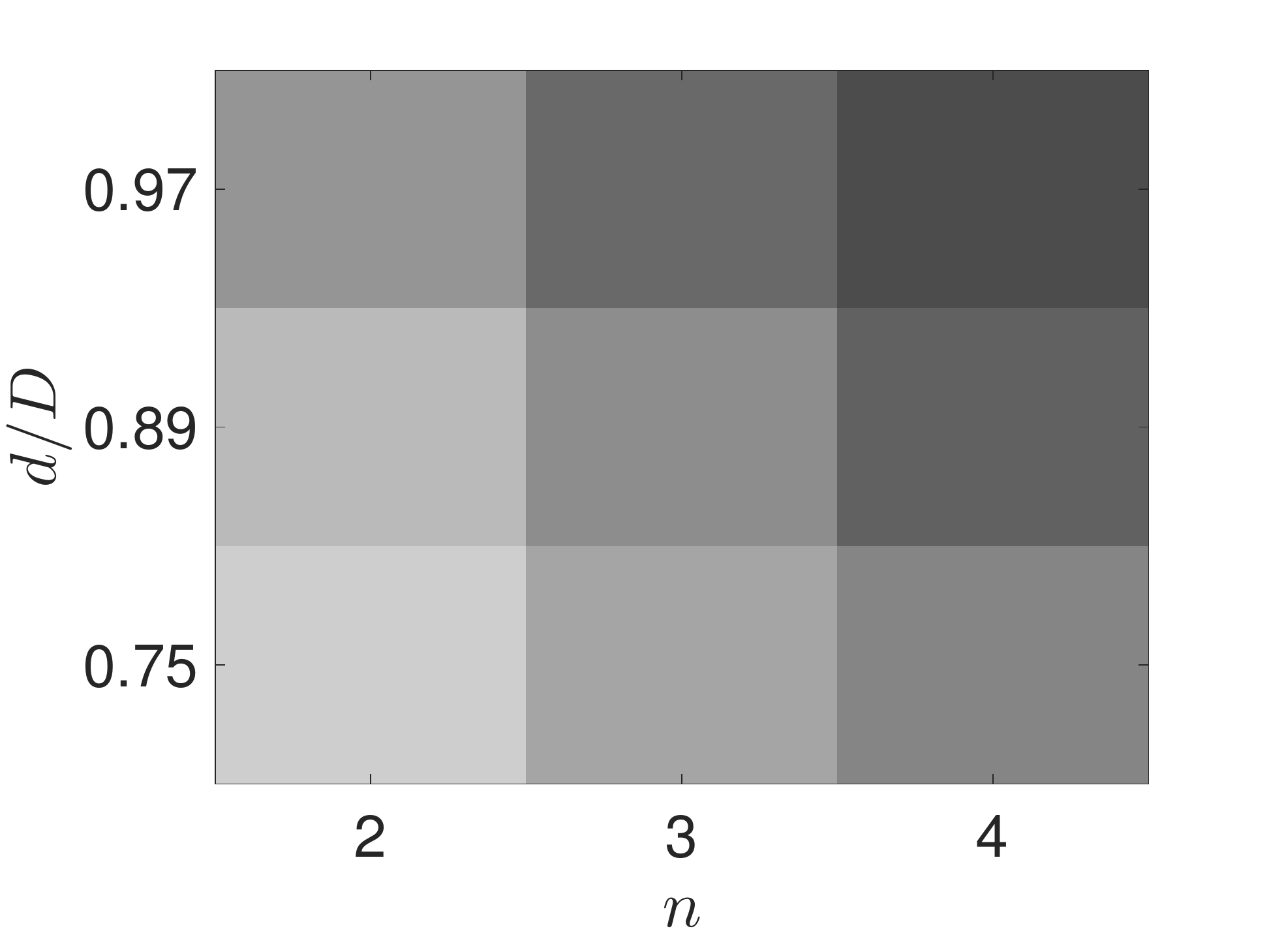}} 
	\subfigure[REAPER, $\alpha=0.8$]{\label{figure:RANSACstyle_D_vs_n_Ns300_T50_REAPER08}\includegraphics[width=0.29\linewidth]{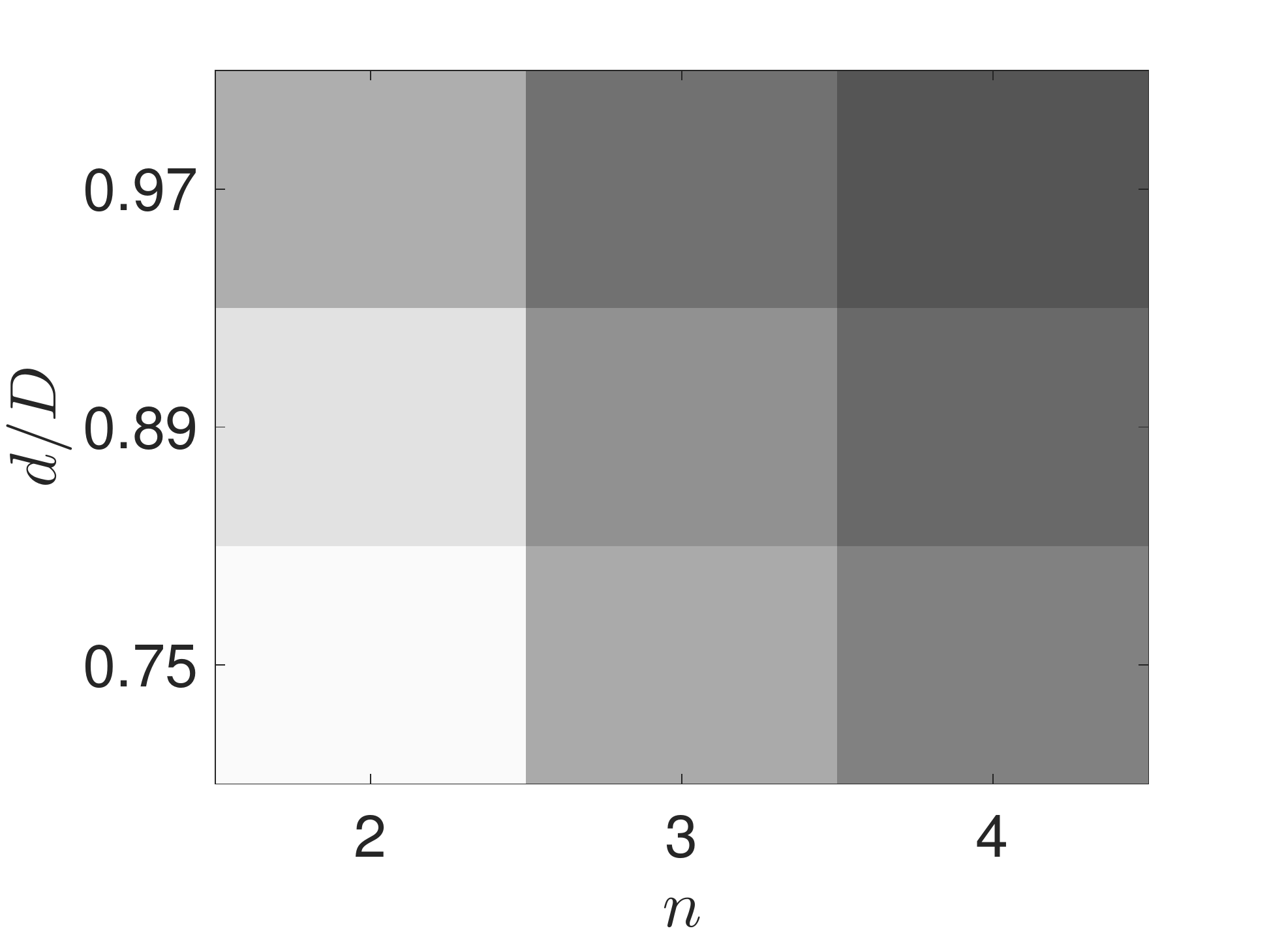}} 	
	\subfigure[REAPER, $\alpha=0.6$]{\label{figure:RANSACstyle_D_vs_n_Ns300_T50_REAPER06}\includegraphics[width=0.29\linewidth]{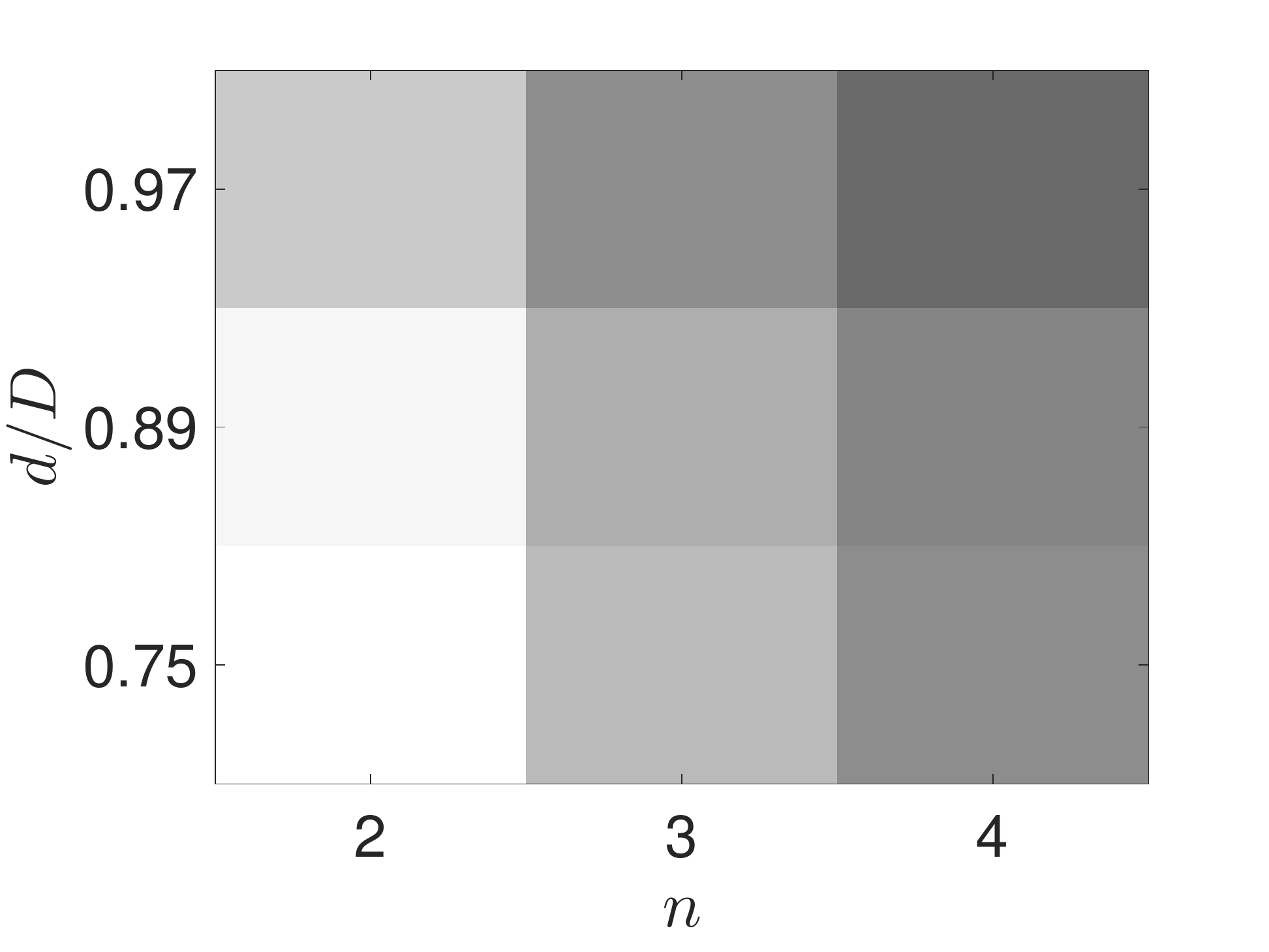}} 
	\subfigure[DPCP-r, $\alpha=1$]{\label{figure:RANSACstyle_D_vs_n_Ns300_T50_DPCP-r1}\includegraphics[width=0.29\linewidth]{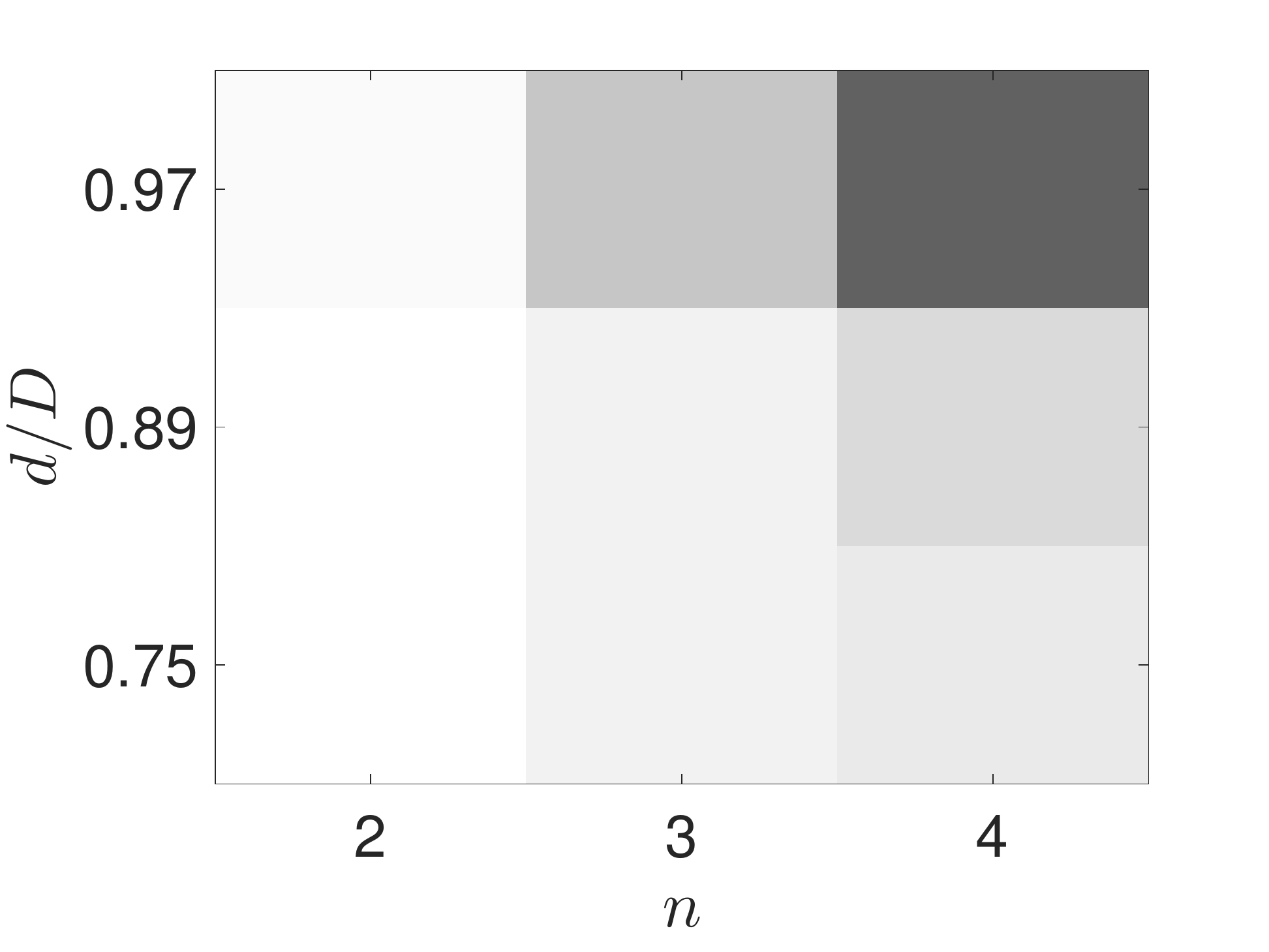}} 
	\subfigure[DPCP-r, $\alpha=0.8$]{\label{figure:RANSACstyle_D_vs_n_Ns300_T50_DPCP-r
	08}\includegraphics[width=0.29\linewidth]{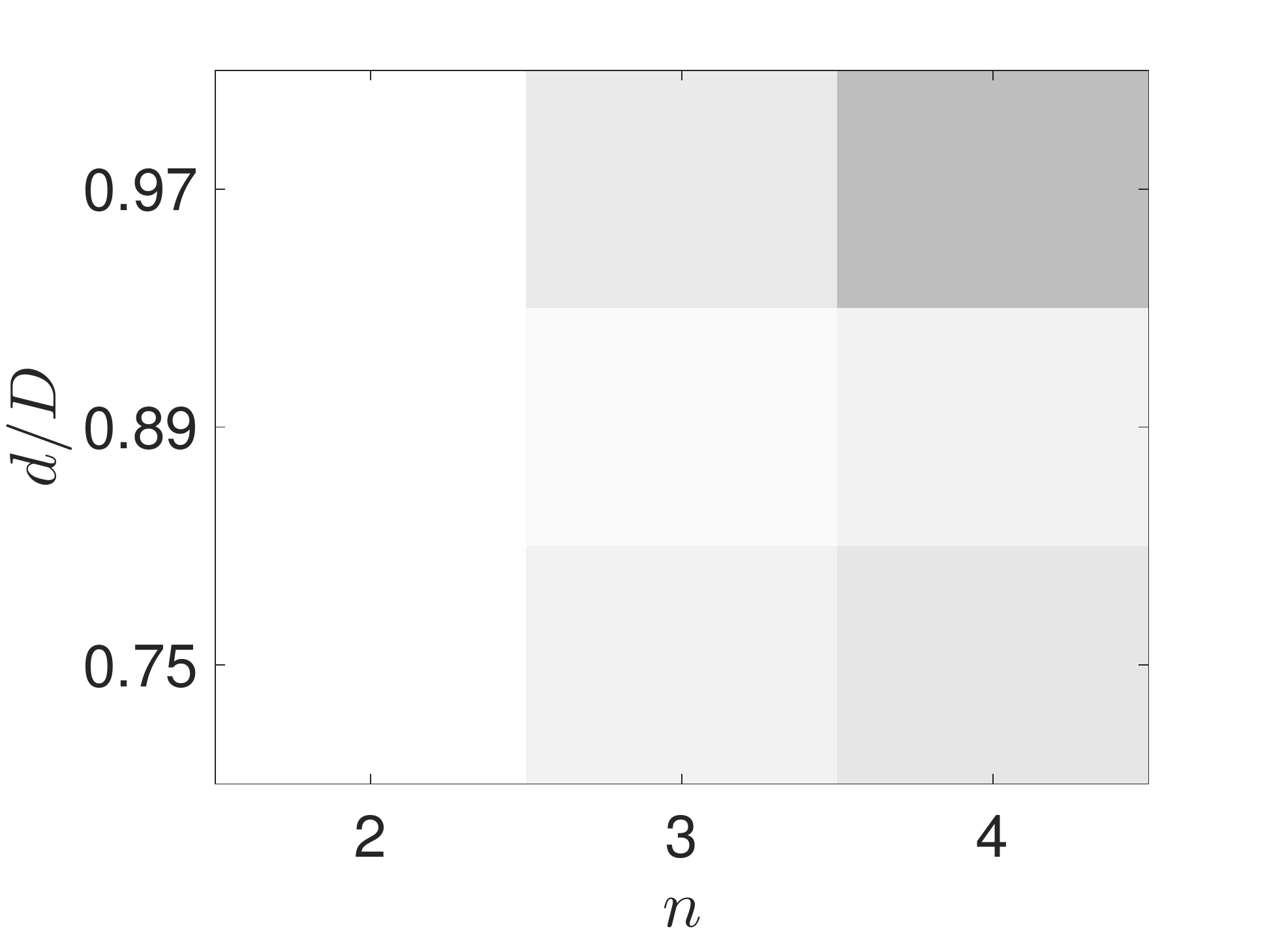}} 	
	\subfigure[DPCP-r, $\alpha=0.6$]{\label{figure:RANSACstyle_D_vs_n_Ns300_T50_DPCP-r06}\includegraphics[width=0.29\linewidth]{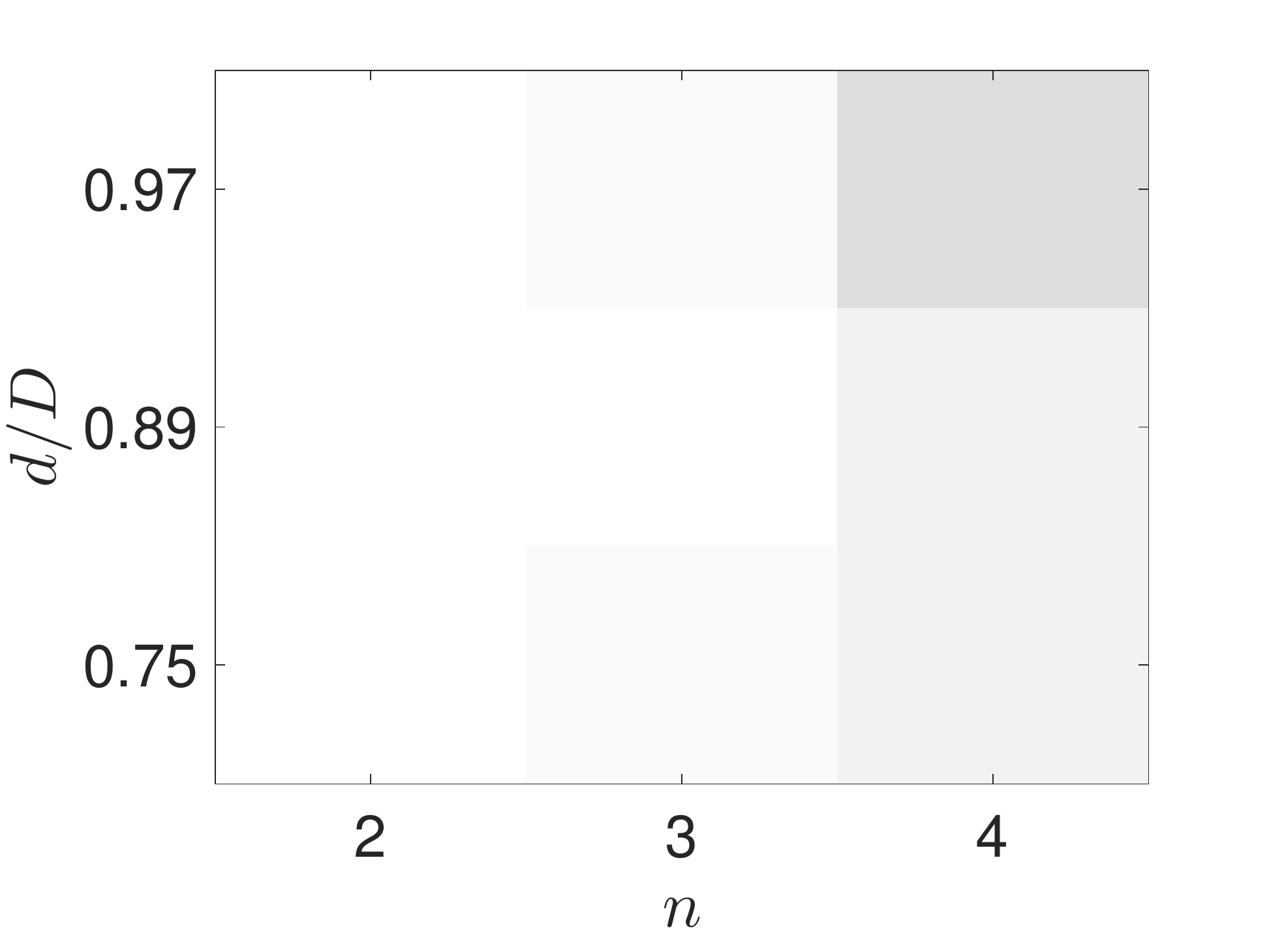}} 		
	\subfigure[DPCP-IRLS, $\alpha=1$]{\label{figure:RANSACstyle_D_vs_n_Ns300_T50_DPCP-IRLS1}\includegraphics[width=0.3\linewidth]{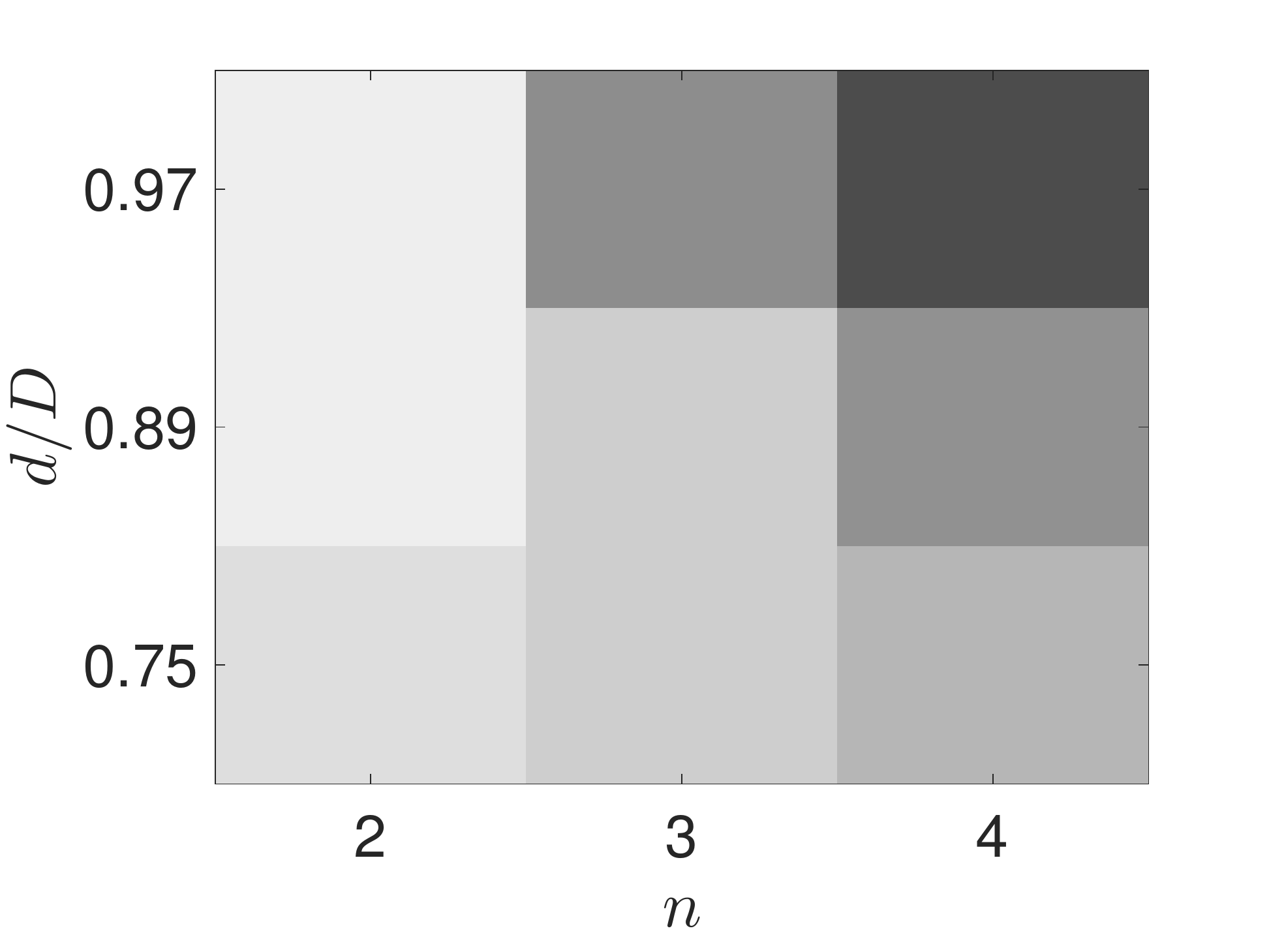}} 
	\subfigure[DPCP-IRLS, $\alpha=0.8$]{\label{figure:RANSACstyle_D_vs_n_Ns300_T50_DPCP-IRLS
	08}\includegraphics[width=0.3\linewidth]{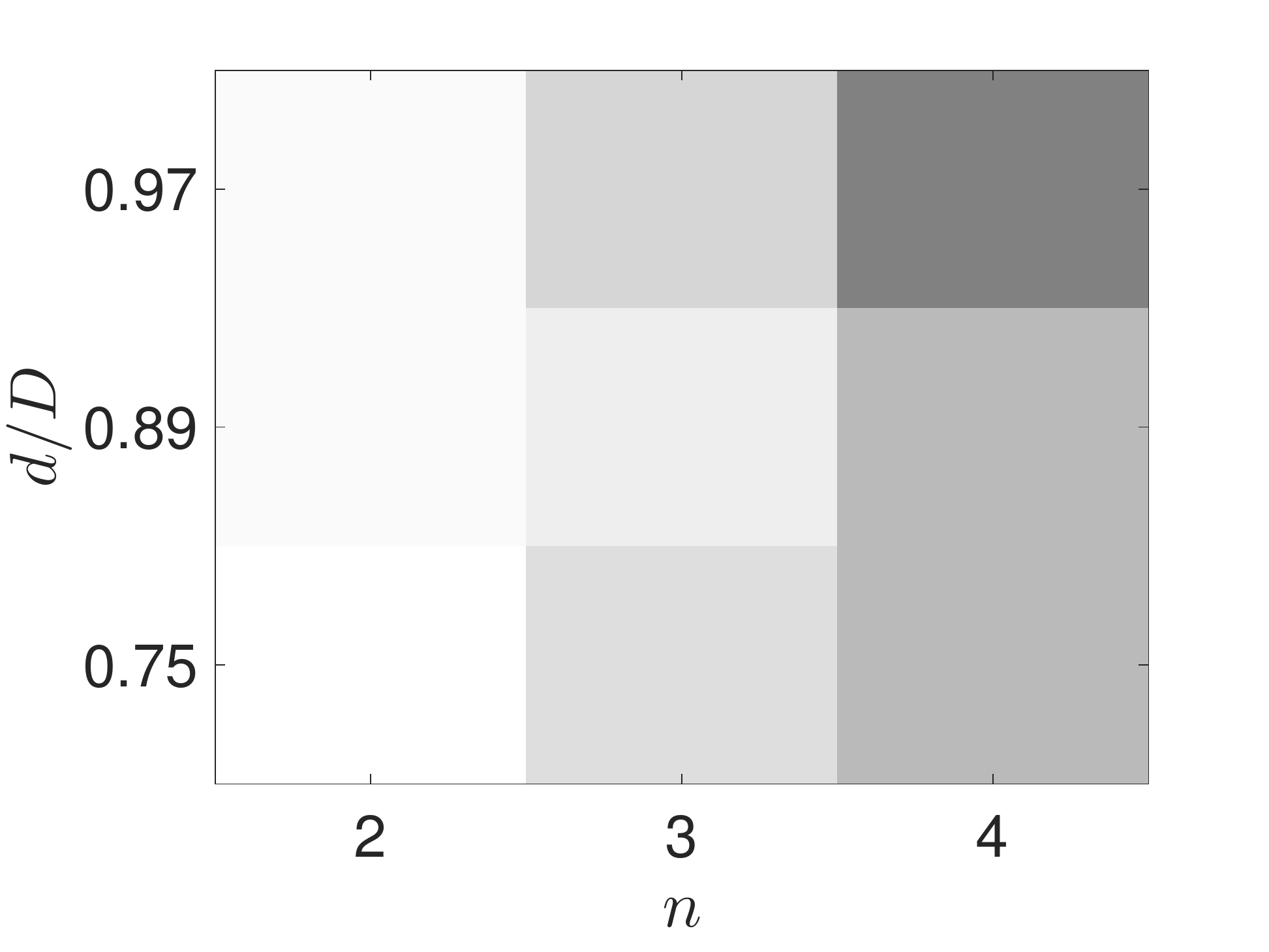}} 	
	\subfigure[DPCP-IRLS, $\alpha=0.6$]{\label{figure:RANSACstyle_D_vs_n_Ns300_T50_DPCP-IRLS06}\includegraphics[width=0.3\linewidth]{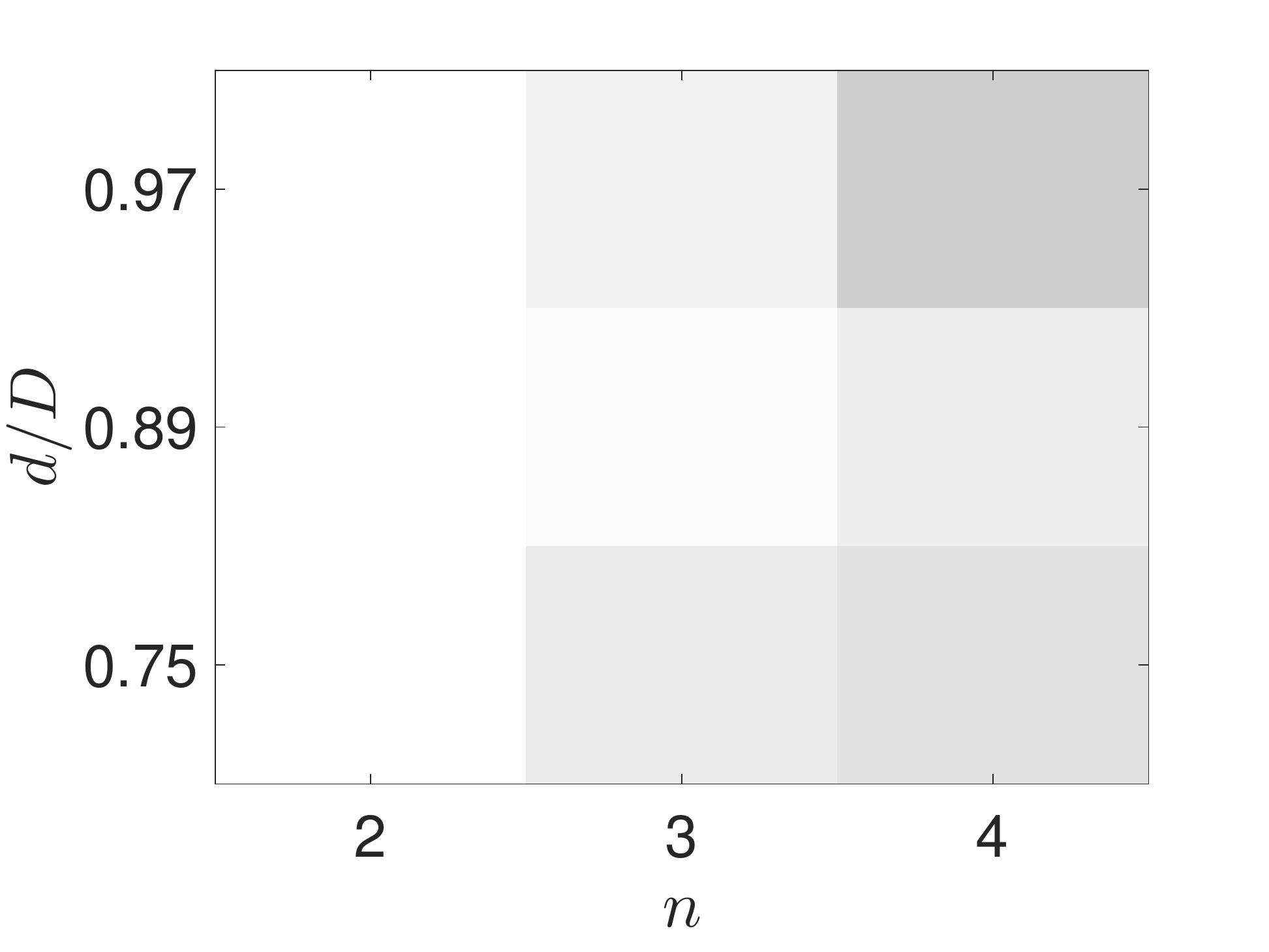}} 
	\caption{Sequential Hyperplane Learning: Clustering accuracy as a function of the number of 
	hyperplanes $n$ vs relative-dimension $d/D$ vs data balancing ($\alpha$). White corresponds to $1$, black to $0$.}
	\label{fig:MultipleDPCPRansacStyleDvsnalpha}
	\end{figure} 

\begin{figure}[t!]
	\centering
	\subfigure[RANSAC]{\label{figure:RANSACstyle_D_vs_n_Ns300_T50_RANSAC08}\includegraphics[width=0.29\linewidth]{RANSACstyle_D_vs_n_Ns300_T50_RANSAC08-eps-converted-to.pdf}} 	
	\subfigure[REAPER]{\label{figure:RANSACstyle_D_vs_n_Ns300_T50_REAPER08}\includegraphics[width=0.29\linewidth]{RANSACstyle_D_vs_n_Ns300_T50_REAPER08-eps-converted-to.pdf}} 	
		\subfigure[DPCP-r]{\label{figure:RANSACstyle_D_vs_n_Ns300_T50_DPCP-r
	08}\includegraphics[width=0.29\linewidth]{RANSACstyle_D_vs_n_Ns300_T50_DPCP-r08-eps-converted-to.pdf}} 	
	\subfigure[DPCP-r-d]{\label{figure:RANSACstyle_D_vs_n_Ns300_T50_DPCP-r-d
	08}\includegraphics[width=0.3\linewidth]{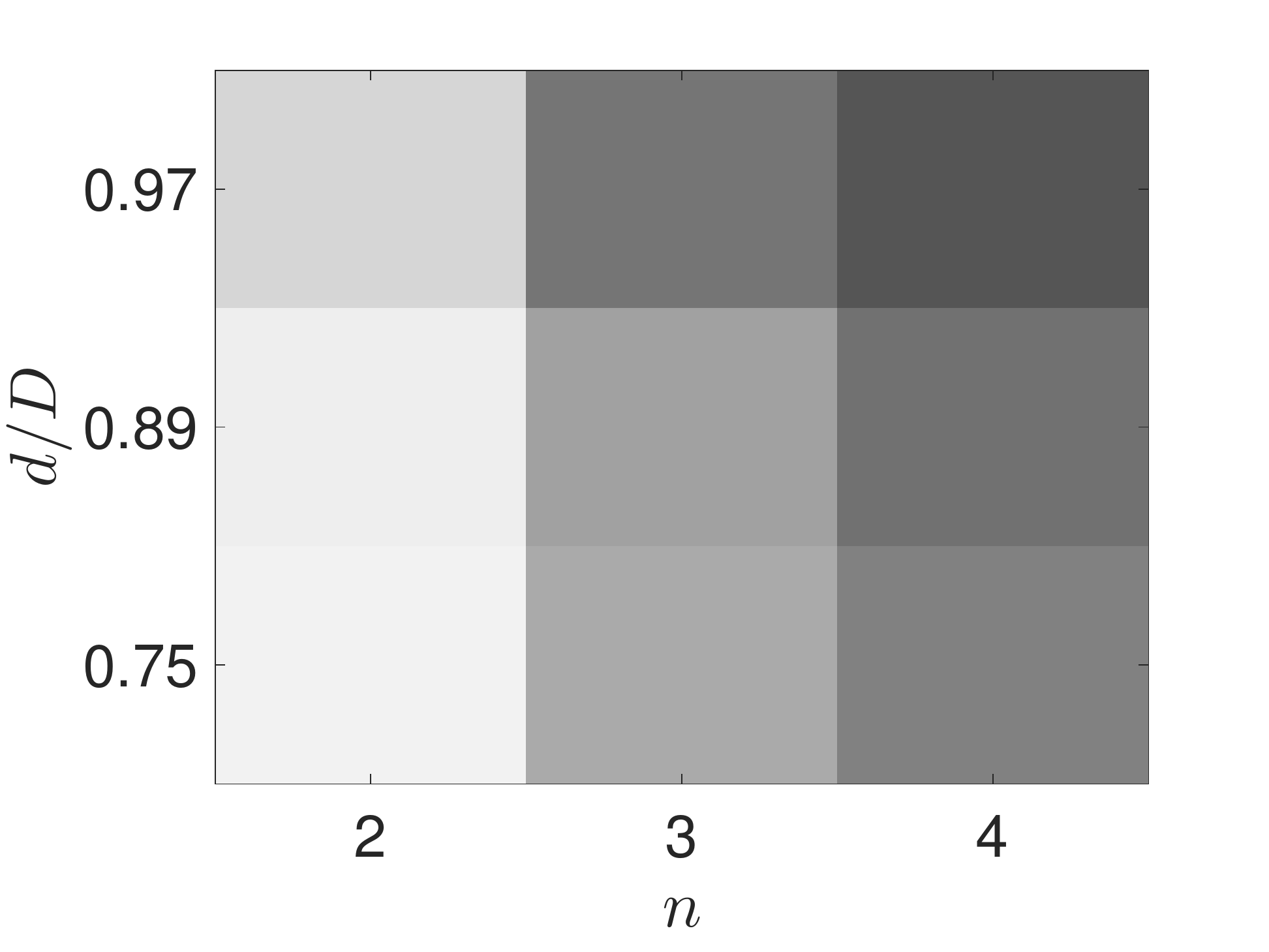}} 		
	\subfigure[DPCP-d]{\label{figure:RANSACstyle_D_vs_n_Ns300_T50_DPCP-d
	08}\includegraphics[width=0.29\linewidth]{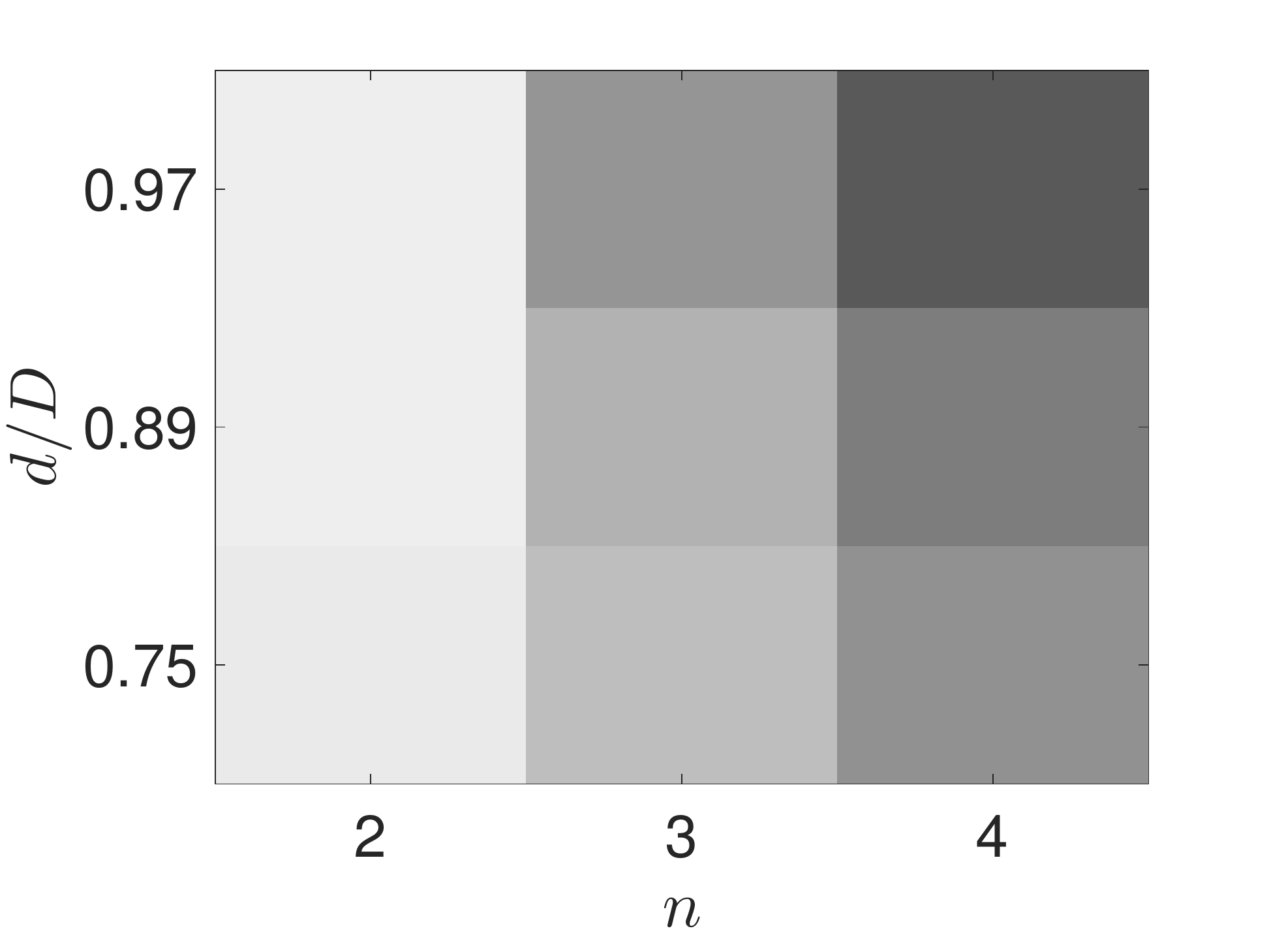}} 	
	\subfigure[DPCP-IRLS]{\label{figure:RANSACstyle_D_vs_n_Ns300_T50_DPCP-IRLS 08}\includegraphics[width=0.3\linewidth]{RANSACstyle_D_vs_n_Ns300_T50_DPCP-IRLS08-eps-converted-to.pdf}} 	
	\caption{Sequential Hyperplane Learning: Clustering accuracy as a function of the number of 
	hyperplanes $n$ vs relative-dimension $d/D$. Data balancing parameter is set to $\alpha = 0.8$.}
	\label{fig:MultipleDPCPRansacStyleDvsn08}
			\end{figure}
			
\begin{figure}[t!]
	\centering
		\subfigure[RANSAC, $\alpha=1$]{\label{figure:RANSACstyle_R_vs_n_Ns300_T50_RANSAC1}\includegraphics[width=0.29\linewidth]{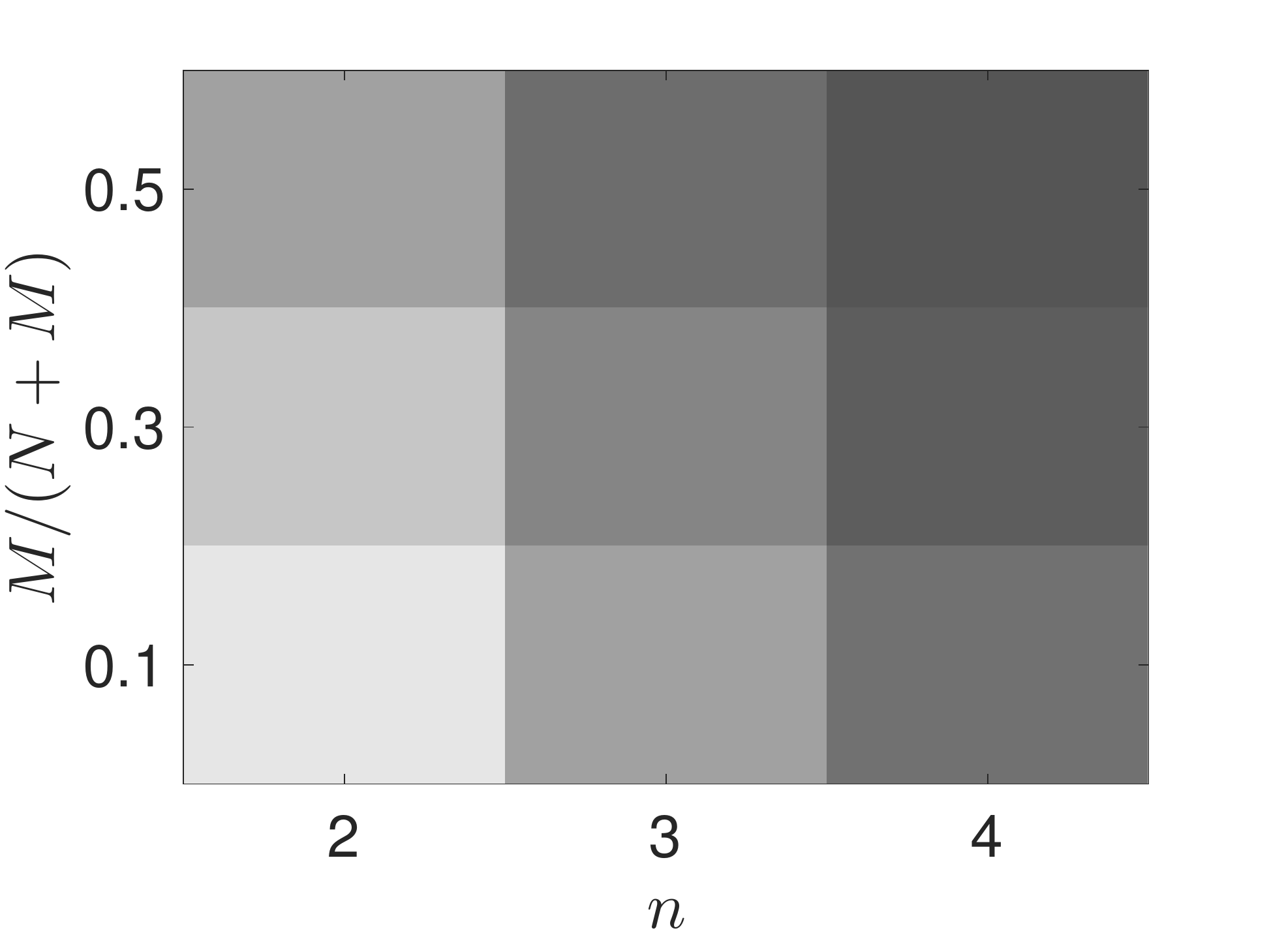}} 
	\subfigure[RANSAC, $\alpha=0.8$]{\label{figure:RANSACstyle_R_vs_n_Ns300_T50_RANSAC08}\includegraphics[width=0.29\linewidth]{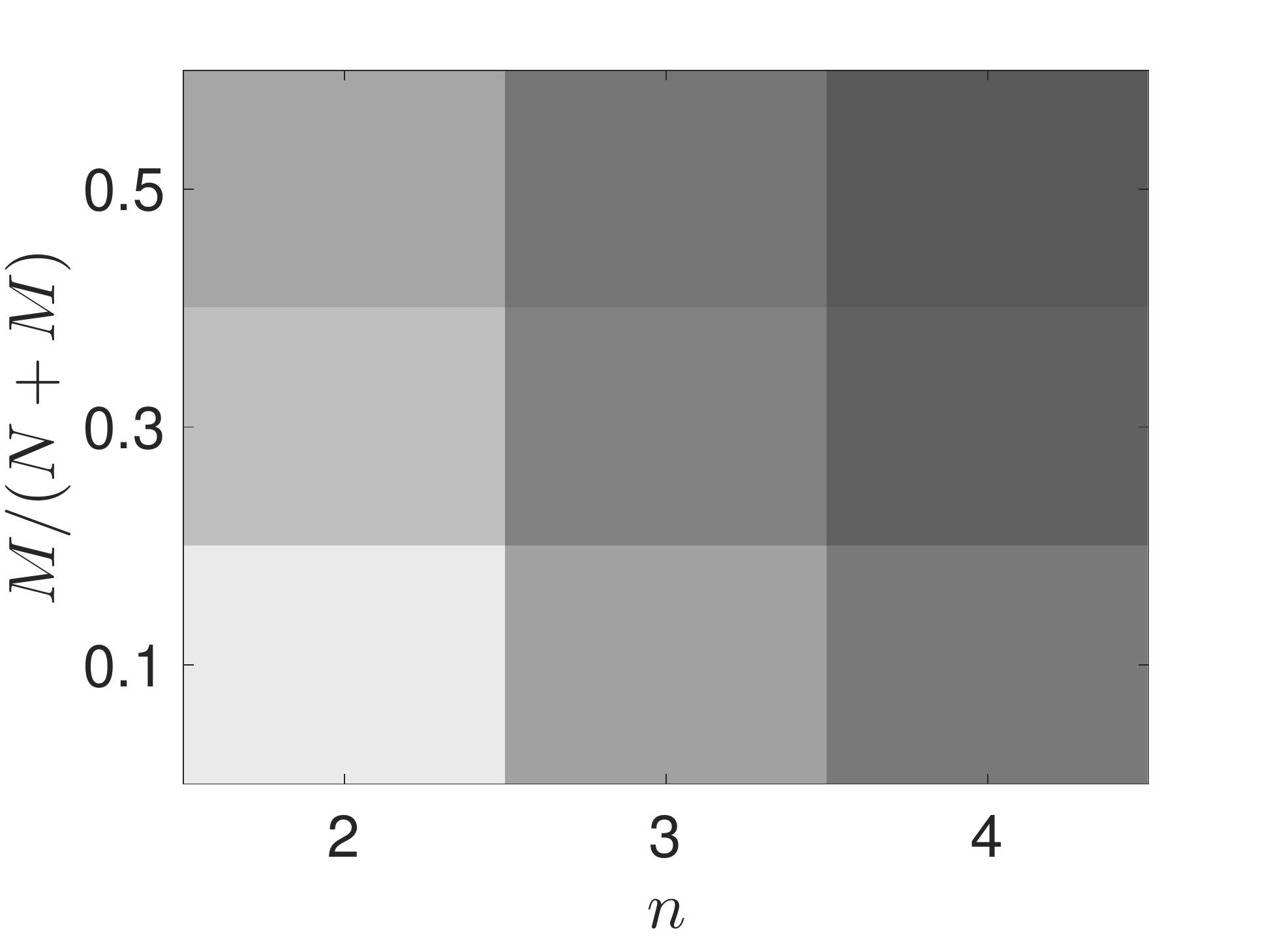}} 	
	\subfigure[RANSAC, $\alpha=0.6$]{\label{figure:RANSACstyle_R_vs_n_Ns300_T50_RANSAC06}\includegraphics[width=0.29\linewidth]{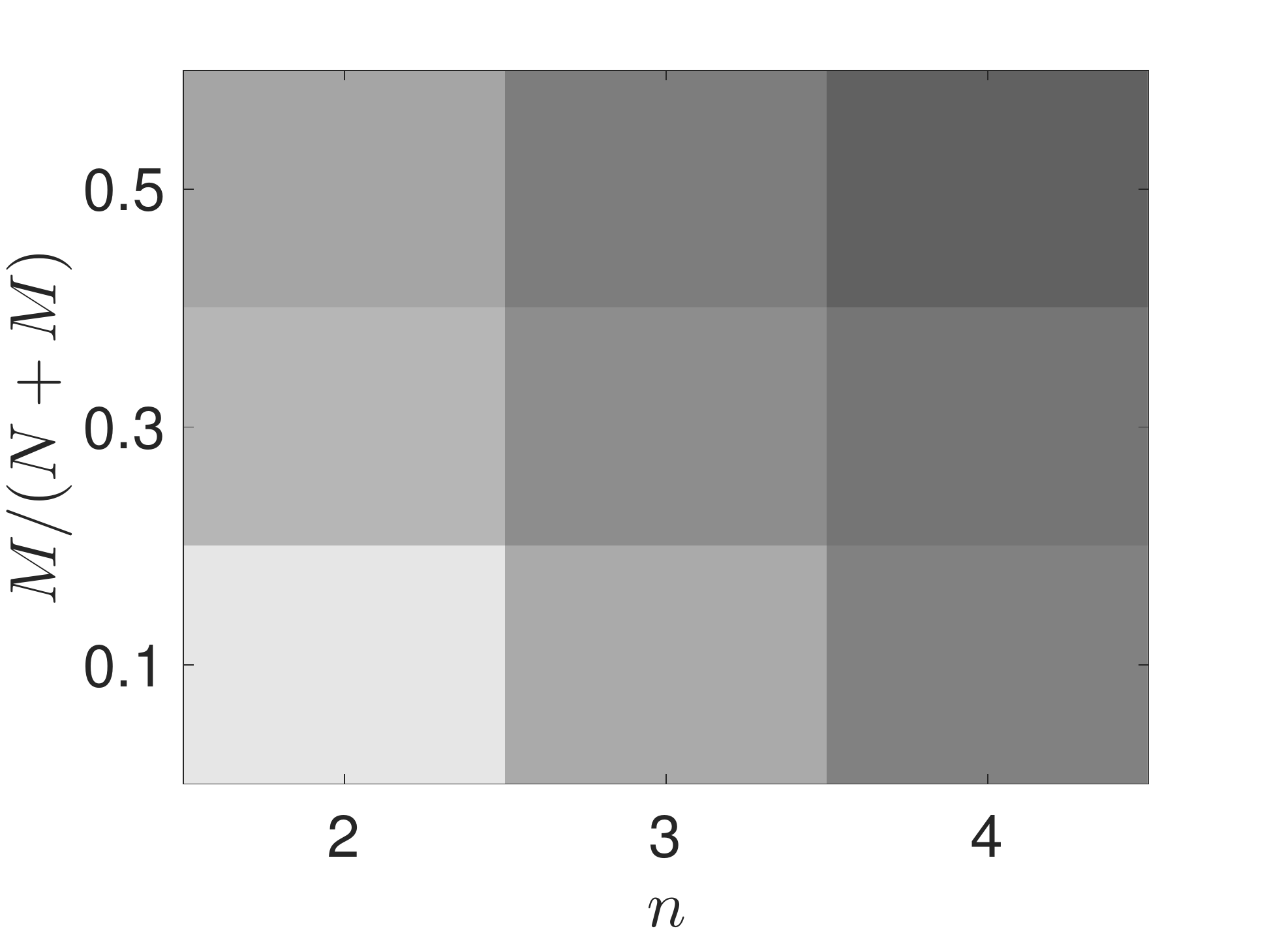}} 
	\subfigure[REAPER, $\alpha=1$]{\label{figure:RANSACstyle_R_vs_n_Ns300_T50_REAPER1}\includegraphics[width=0.29\linewidth]{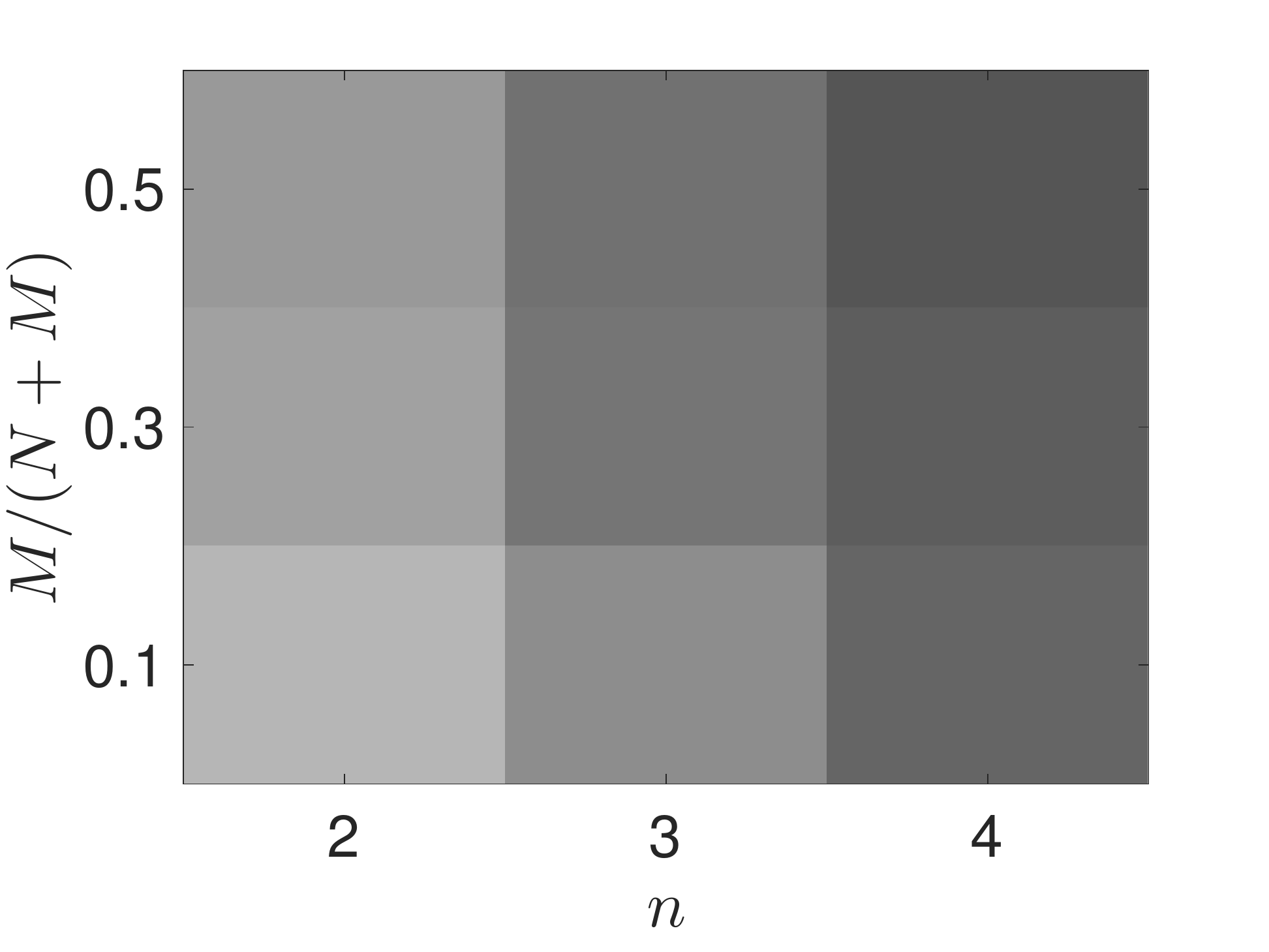}} 
	\subfigure[REAPER, $\alpha=0.8$]{\label{figure:RANSACstyle_R_vs_n_Ns300_T50_REAPER08}\includegraphics[width=0.29\linewidth]{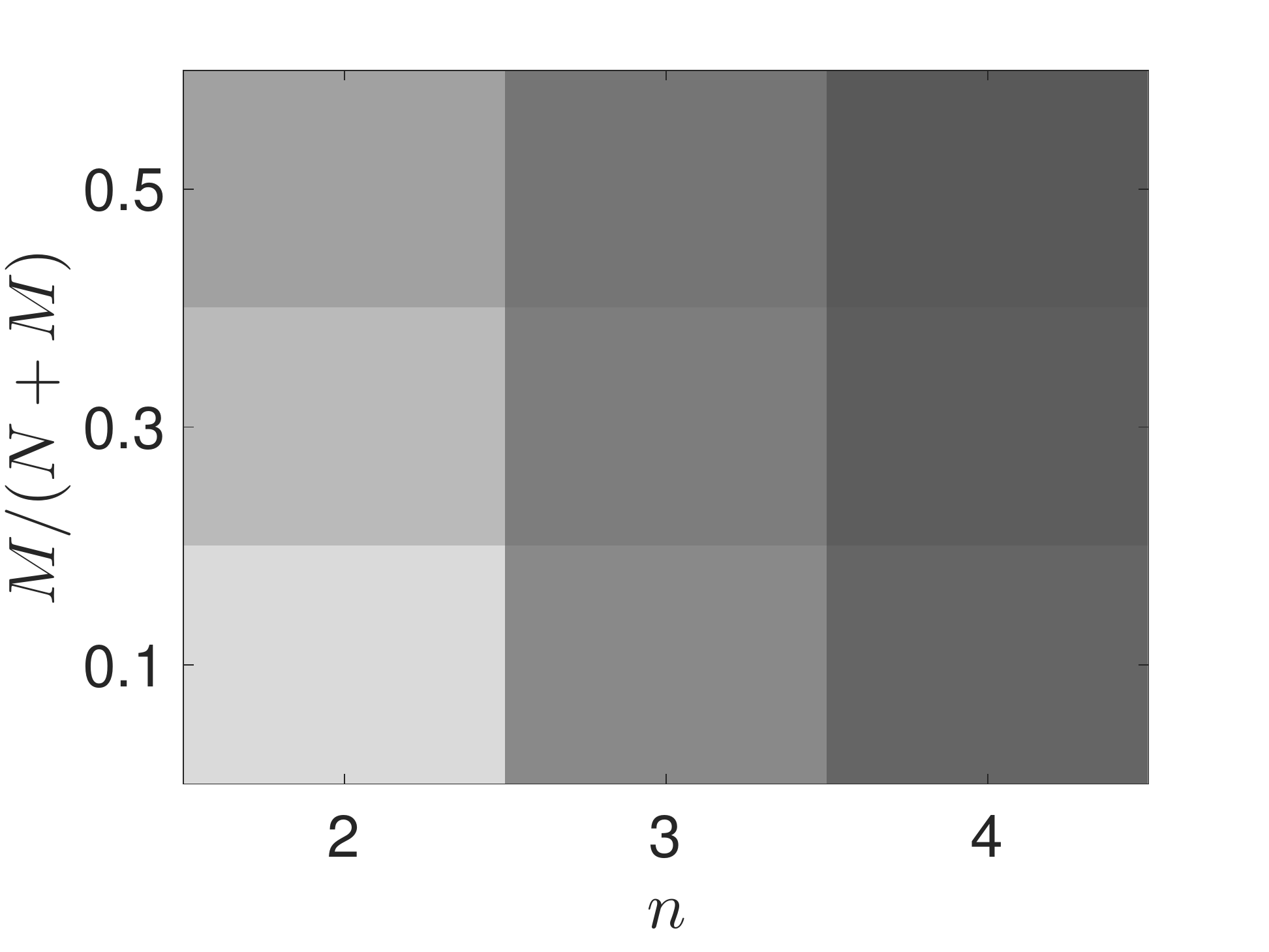}} 	
	\subfigure[REAPER, $\alpha=0.6$]{\label{figure:RANSACstyle_R_vs_n_Ns300_T50_REAPER06}\includegraphics[width=0.29\linewidth]{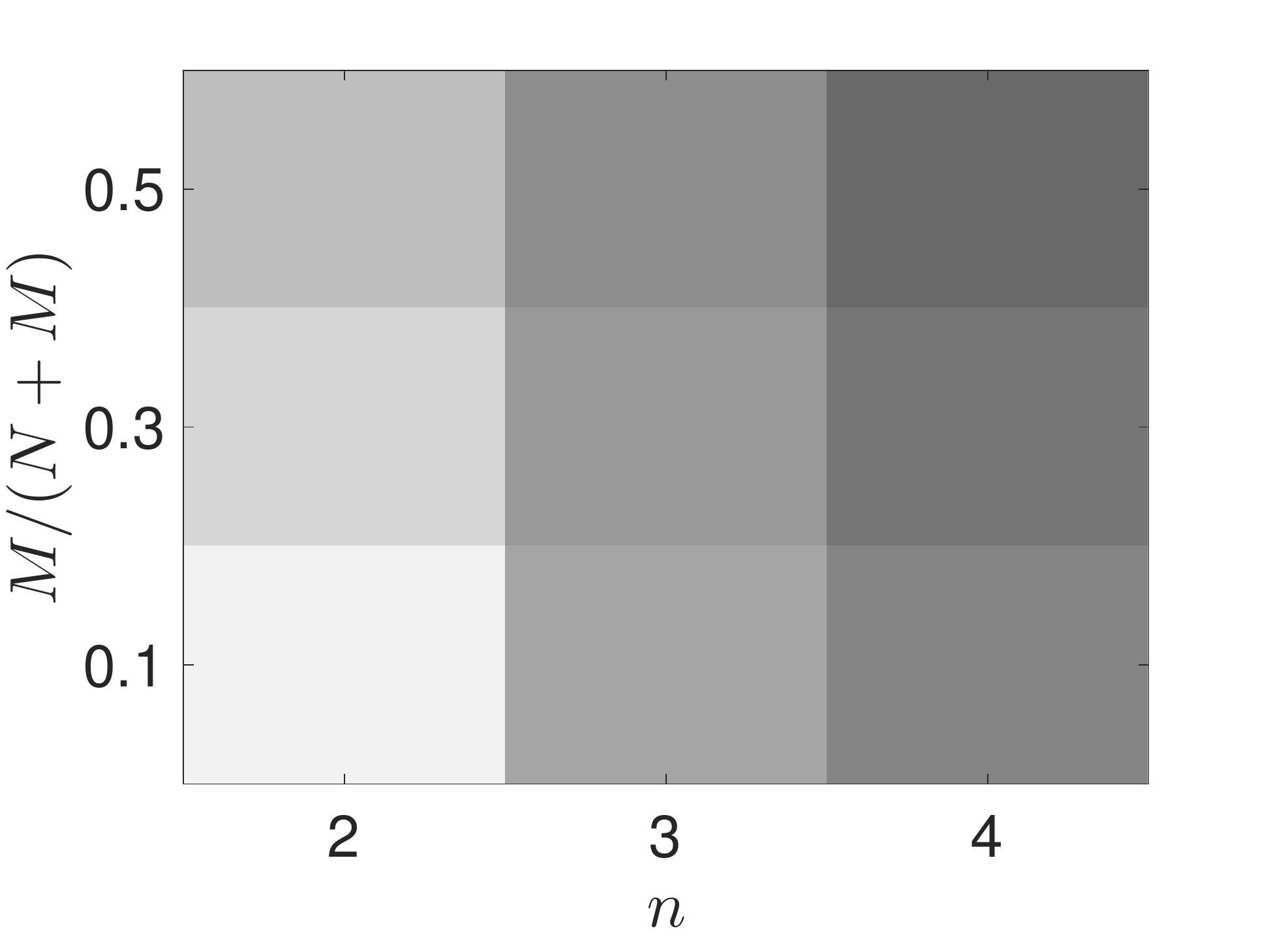}} 
	\subfigure[DPCP-r, $\alpha=1$]{\label{figure:RANSACstyle_R_vs_n_Ns300_T50_DPCP-r1}\includegraphics[width=0.29\linewidth]{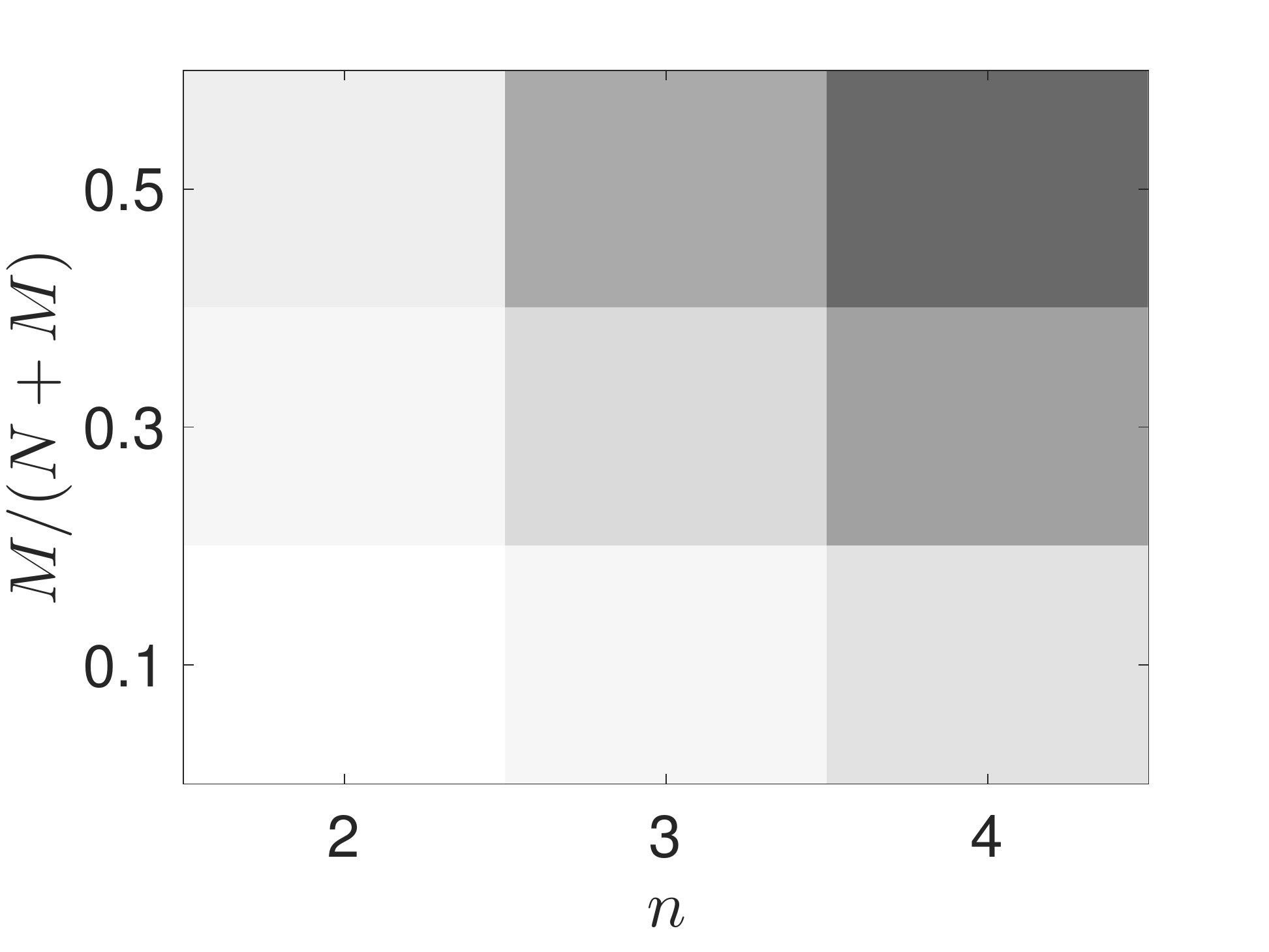}} 
	\subfigure[DPCP-r, $\alpha=0.8$]{\label{figure:RANSACstyle_R_vs_n_Ns300_T50_DPCP-r
	08}\includegraphics[width=0.29\linewidth]{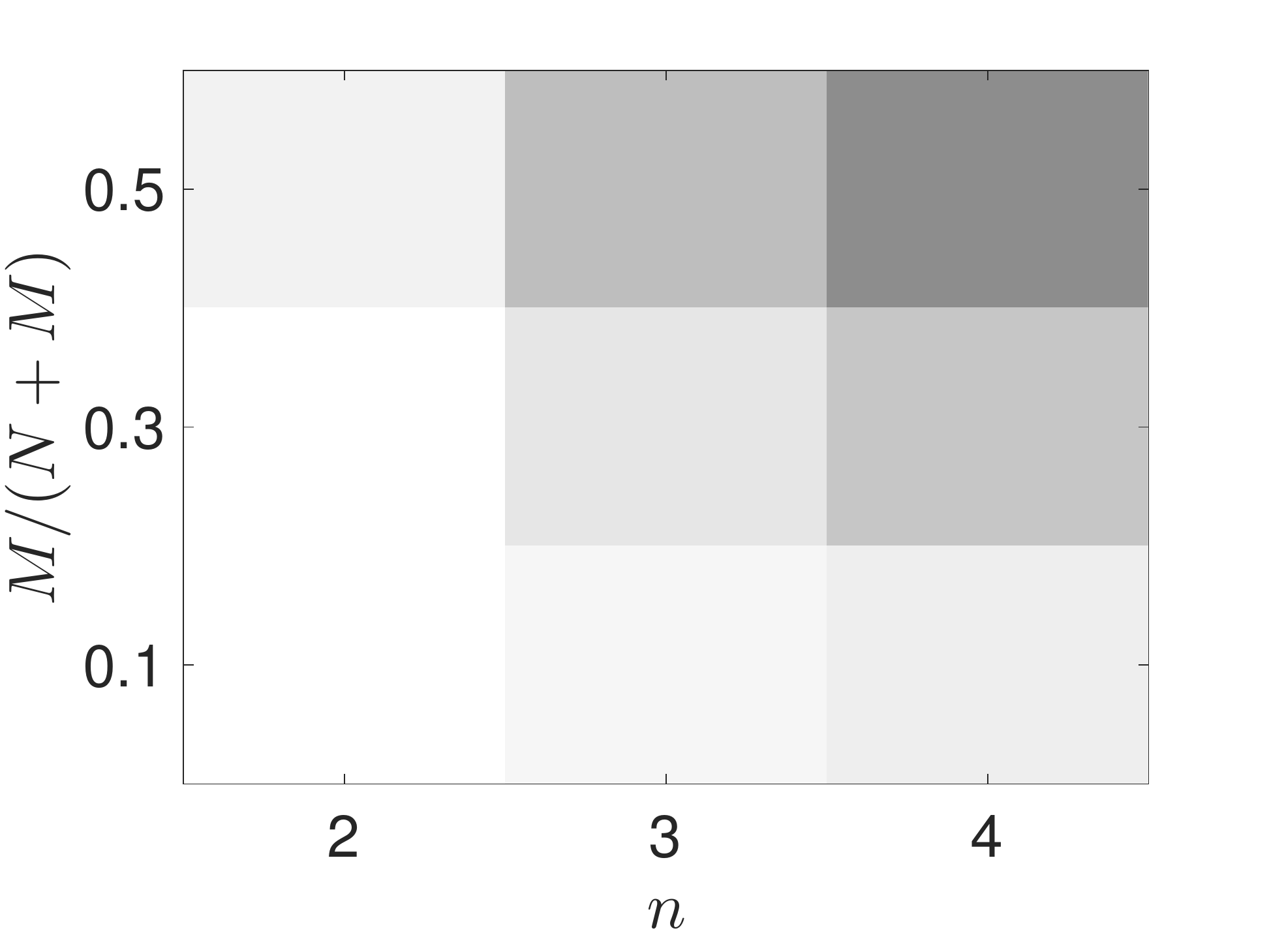}} 	
	\subfigure[DPCP-r, $\alpha=0.6$]{\label{figure:RANSACstyle_R_vs_n_Ns300_T50_DPCP-r06}\includegraphics[width=0.29\linewidth]{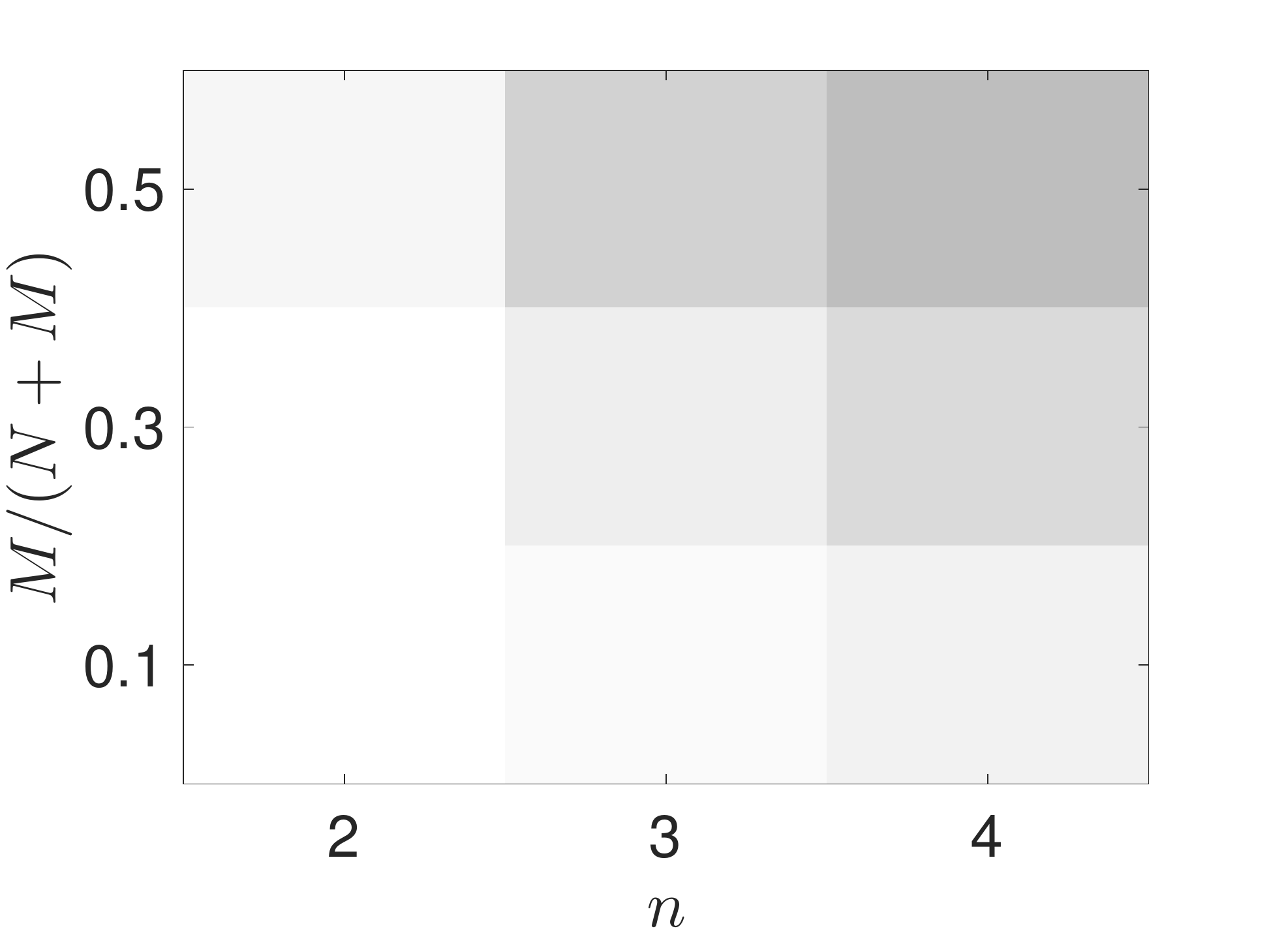}} 		
	\subfigure[DPCP-IRLS, $\alpha=1$]{\label{figure:RANSACstyle_R_vs_n_Ns300_T50_DPCP-IRLS1}\includegraphics[width=0.29\linewidth]{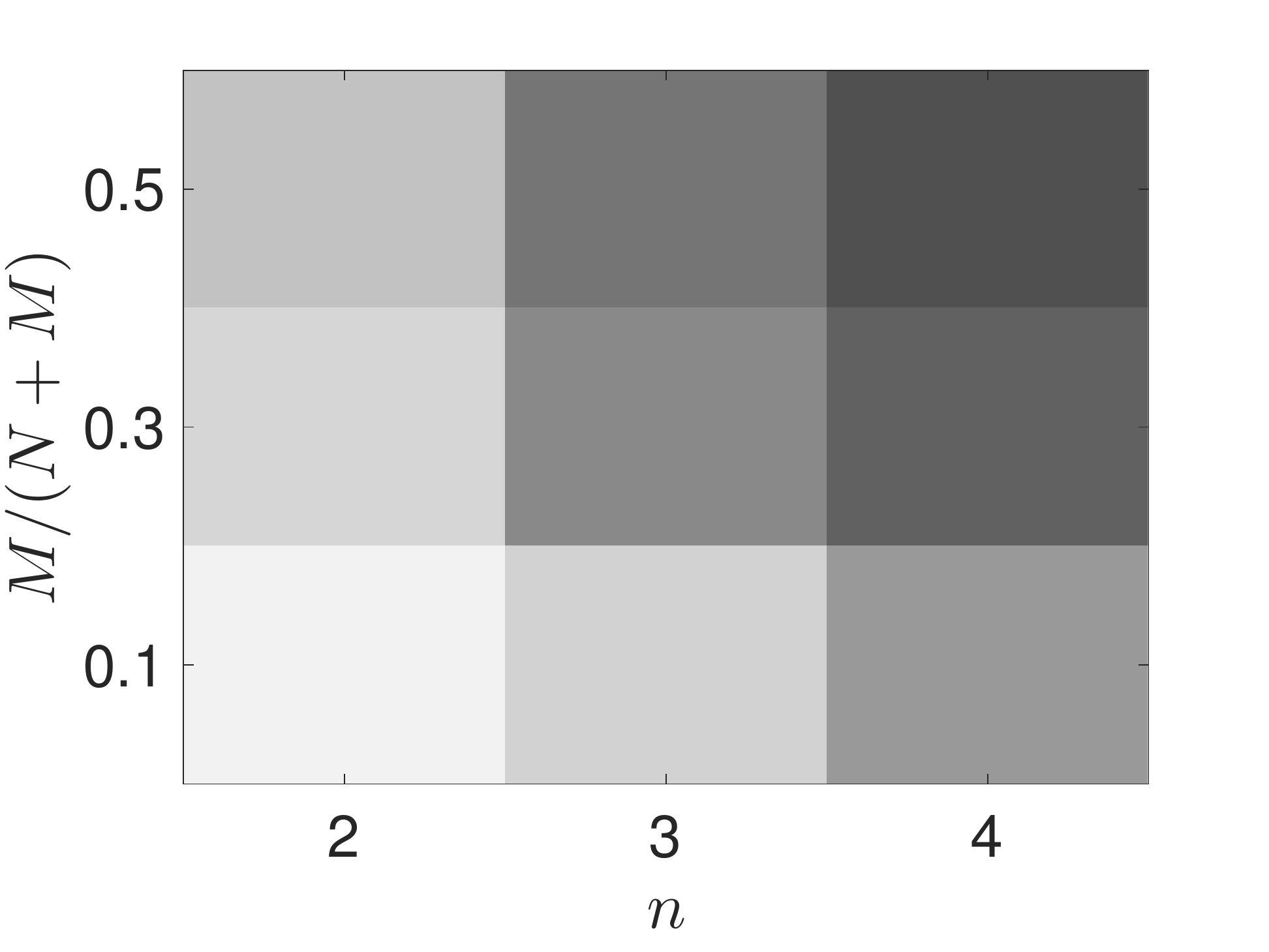}} 
	\subfigure[DPCP-IRLS, $\alpha=0.8$]{\label{figure:RANSACstyle_R_vs_n_Ns300_T50_DPCP-IRLS
	08}\includegraphics[width=0.29\linewidth]{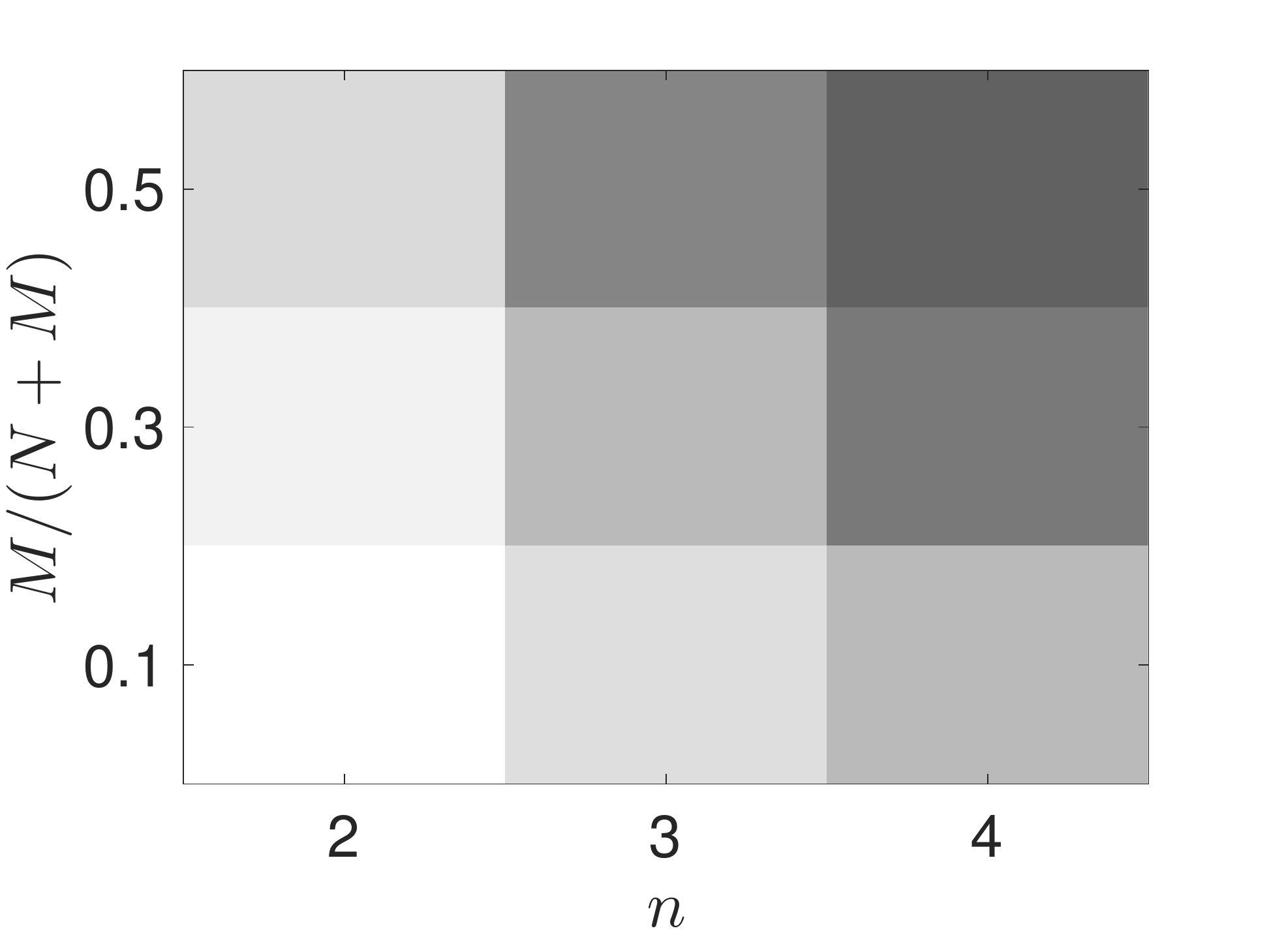}} 	
	\subfigure[DPCP-IRLS, $\alpha=0.6$]{\label{figure:RANSACstyle_R_vs_n_Ns300_T50_DPCP-IRLS06}\includegraphics[width=0.29\linewidth]{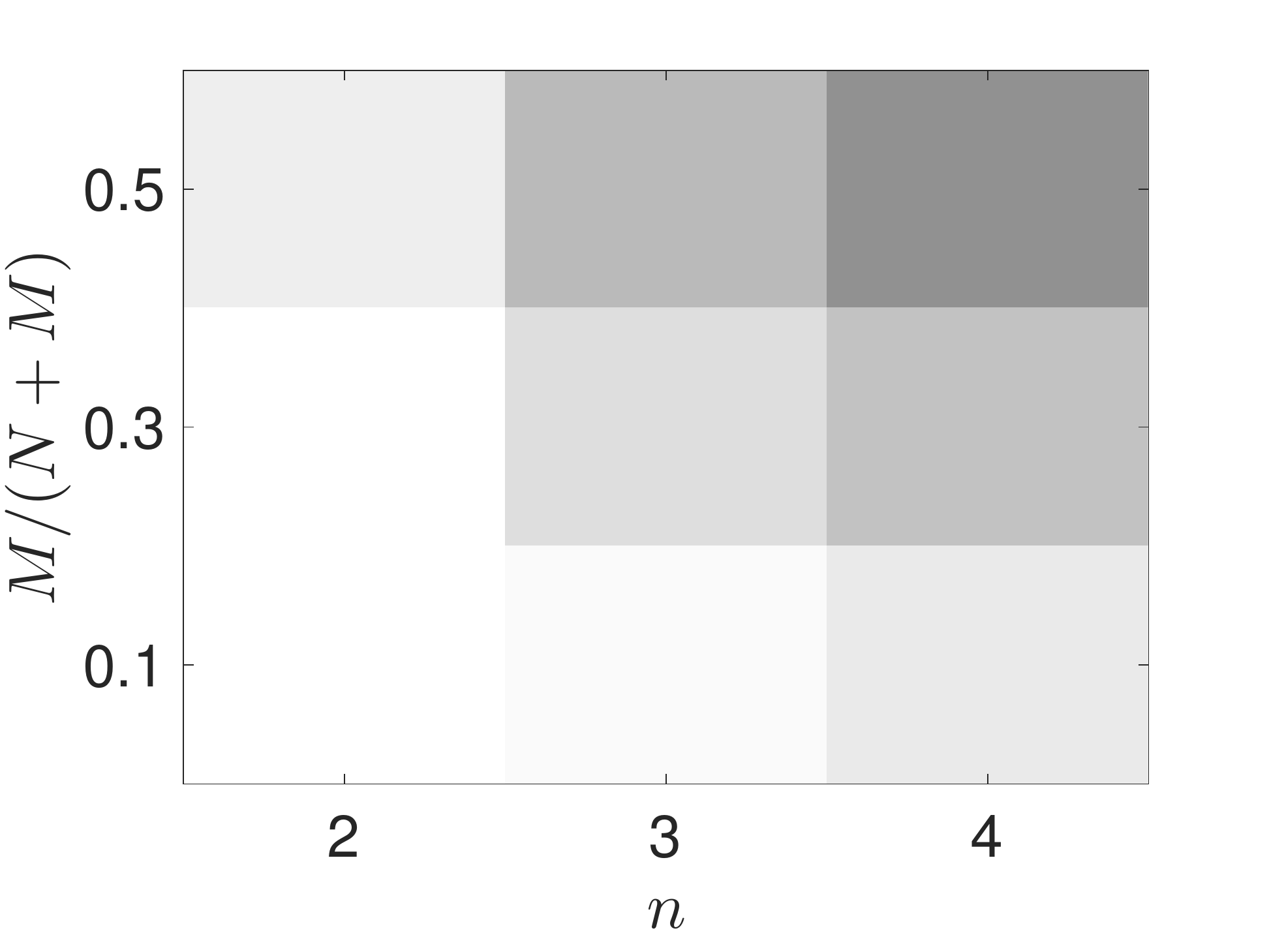}} 
	\caption{Sequential Hyperplane Learning: Clustering accuracy as a function of the number of 
	hyperplanes $n$ vs outlier ratio vs data balancing ($\alpha$). White corresponds to $1$, black to $0$.}
	\label{fig:MultipleDPCPRansacStyleRvsnalpha}
		\end{figure} 

\begin{figure}[t!]
	\centering
	\subfigure[RANSAC]{\label{figure:RANSACstyle_R_vs_n_Ns300_T50_RANSAC08}\includegraphics[width=0.29\linewidth]{RANSACstyle_R_vs_n_Ns300_T50_RANSAC08-eps-converted-to.pdf}} 	
	\subfigure[REAPER]{\label{figure:RANSACstyle_R_vs_n_Ns300_T50_REAPER08}\includegraphics[width=0.29\linewidth]{RANSACstyle_R_vs_n_Ns300_T50_REAPER08-eps-converted-to.pdf}} 	
		\subfigure[DPCP-r]{\label{figure:RANSACstyle_R_vs_n_Ns300_T50_DPCP-r
	08}\includegraphics[width=0.29\linewidth]{RANSACstyle_R_vs_n_Ns300_T50_DPCP-r08-eps-converted-to.pdf}} 	
	\subfigure[DPCP-r-d]{\label{figure:RANSACstyle_R_vs_n_Ns300_T50_DPCP-r-d
	08}\includegraphics[width=0.3\linewidth]{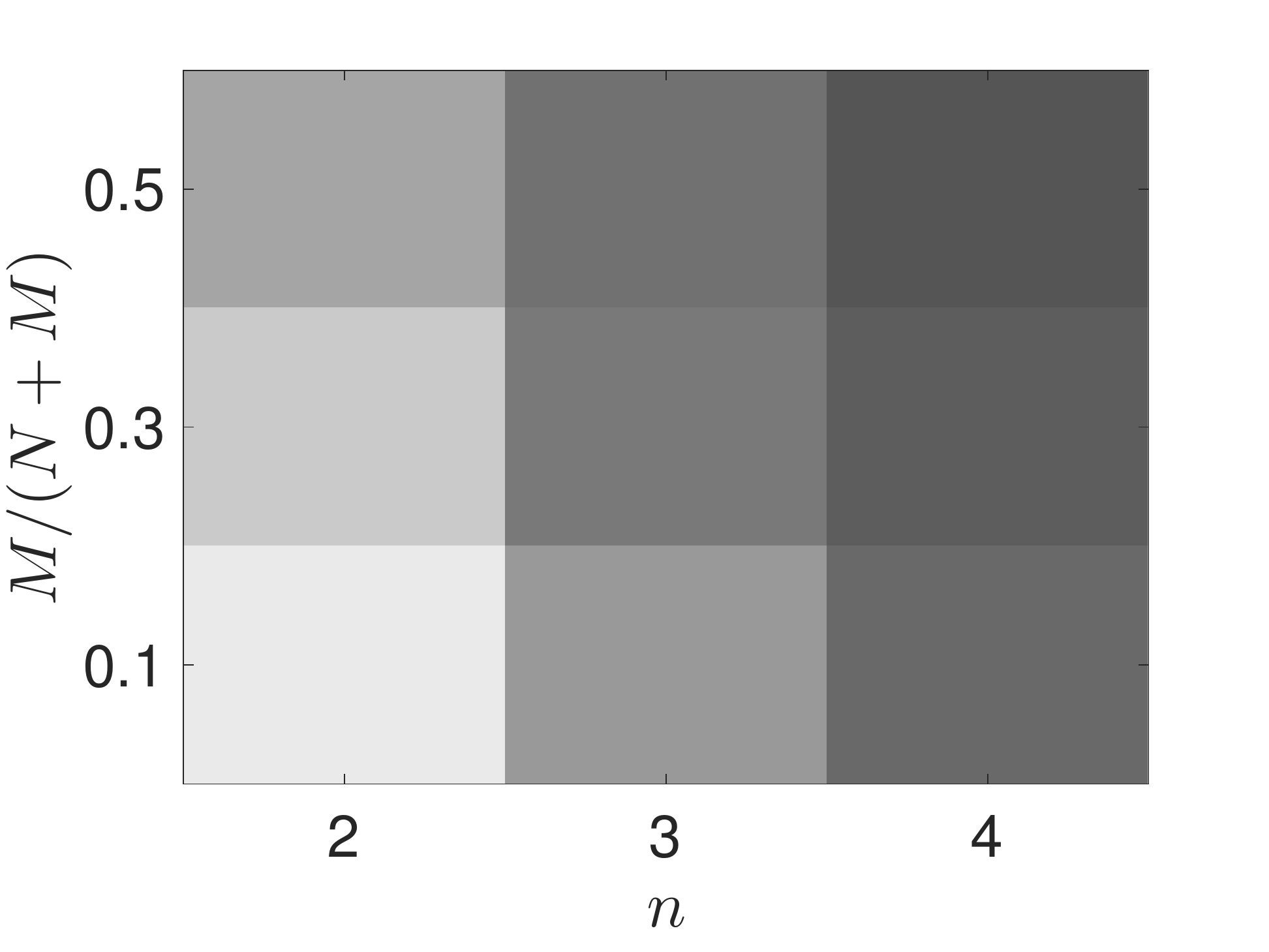}} 		
	\subfigure[DPCP-d]{\label{figure:RANSACstyle_R_vs_n_Ns300_T50_DPCP-d
	08}\includegraphics[width=0.29\linewidth]{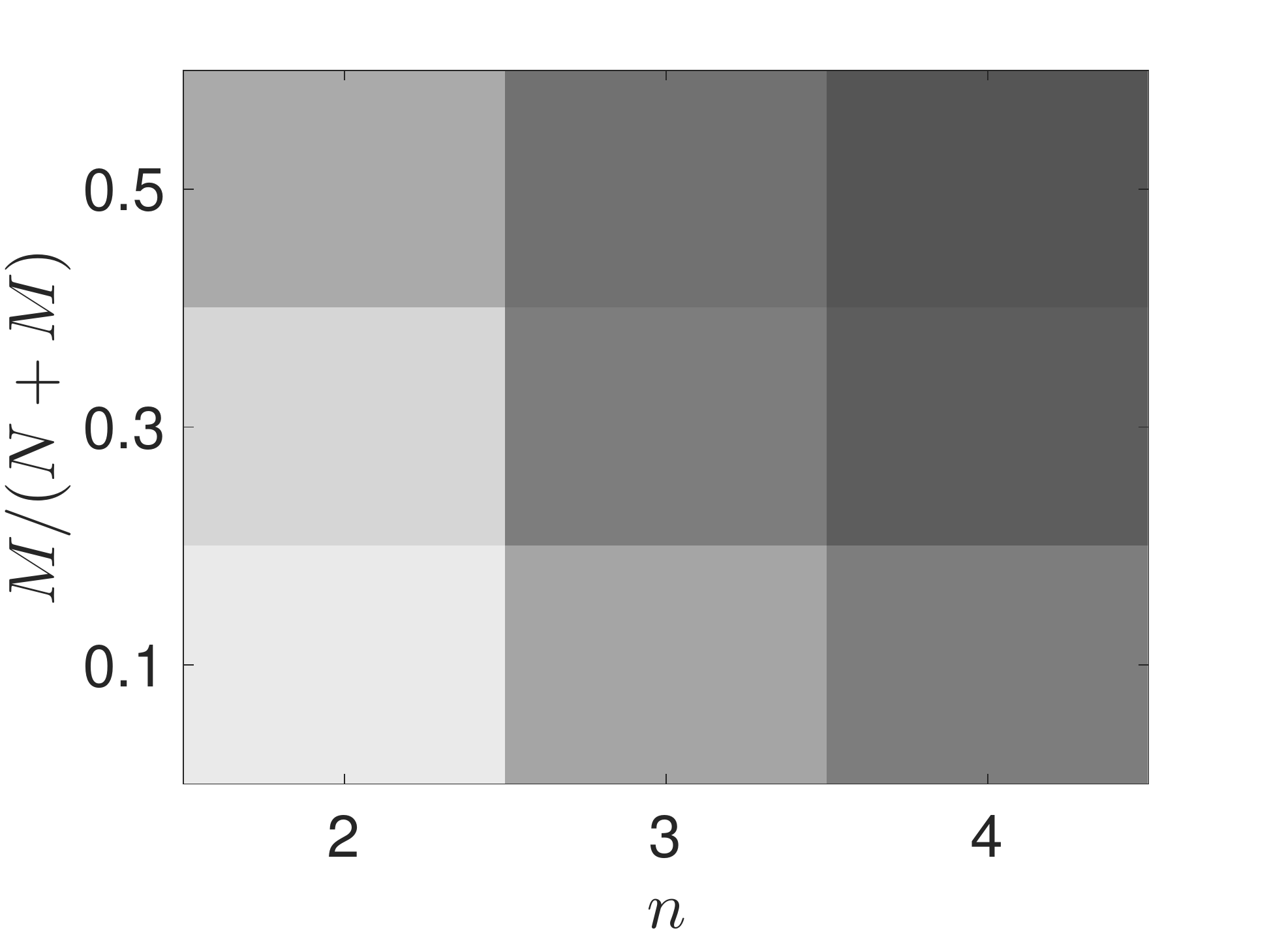}} 	
	\subfigure[DPCP-IRLS]{\label{figure:RANSACstyle_R_vs_n_Ns300_T50_DPCP-IRLS
	08}\includegraphics[width=0.3\linewidth]{RANSACstyle_R_vs_n_Ns300_T50_DPCP-IRLS08-eps-converted-to.pdf}} 	
	\caption{Sequential Hyperplane Learning: Clustering accuracy as a function of the number of 
	hyperplanes $n$ vs outlier ratio. Data balancing parameter is set to $\alpha = 0.8$.}
	\label{fig:MultipleDPCPRansacStyleRvsn08}
		\end{figure}

\indent \myparagraph{Dataset design} We begin by evaluating experimentally the sequential hyperplane learning Algorithm \ref{alg:MultipleDPCPsequential} using synthetic data. The coordinate dimension $D$ of the data is inspired by major applications where hyperplane arrangements appear. In particular, we recall that 
\begin{itemize}
\item In $3D$ point cloud analysis, the coordinate 
dimension is $3$, but since the various planes do not necessarily pass through a common origin, i.e., they are affine, one may work with homogeneous coordinates, which increases the coordinate dimension of the data by $1$
(see \cite{Tsakiris:AffineASC-ArXiv15}), i.e., $D=4$.
\item In two-view geometry one works with correspondences between pairs of $3D$ points. Each such correspondence is treated as a point itself, equal to the tensor product of the two $3D$ corresponding points, 
thus having coordinate dimension $D=9$.
  \end{itemize} As a consequence, we choose $D=4,9$ as well as $D=30$, where the choice of $30$ is a-posteriori justified as being sufficiently larger than $4$ or $9$, so that the clustering problem becomes more challenging.
For each choice of $D$ we randomly generate $n=2,3,4$ hyperplanes of $\Re^D$ and sample each 
hyperplane as follows. For each choice of $n$ the total number of points in the dataset is set to
$300n$, and the number $N_i$ of points sampled from hyperplane $i >1$ is set to $N_i = \alpha^{i-1} N_{i-1}$, so that
\begin{align}
\sum_{i=1}^n N_i = (1+\alpha+\cdots+\alpha^{n-1})N_1 = 300n.
\end{align} Here $\alpha \in (0,1]$ is a parameter that controls the \emph{balancing} of the clusters: $\alpha=1$ means the clusters are perfectly balanced, while smaller values of $\alpha$ lead to less balanced clusters. 
In our experiment we try $\alpha=1,0.8,0.6$. Having specified the size of each cluster, the points of each cluster are sampled from a zero-mean unit-variance Gaussian distribution with support in the corresponding hyperplane. To make the experiment more realistic, we corrupt points from each hyperplane by adding white Gaussian noise of standard deviation $\sigma=0.01$ with support in the direction orthogonal to the hyperplane. Moreover, we corrupt the dataset by adding $10\%$ outliers sampled from a standard zero-mean unit-variance Gaussian distribution with support in the entire ambient space. 

\indent \myparagraph{Algorithms and parameters} In Algorithm \ref{alg:MultipleDPCPsequential} we solve the DPCP problem by using all four variations DPCP-r, DPCP-r-d, DPCP-d and DPCP-IRLS 
(see Section \ref{subsection:DPCPcomputation}),
thus leading to four different versions of the algorithm. All DPCP algorithms are set to terminate if either
a maximal number of $20$ iterations for DPCP-r or $100$ iterations for DPCP-r-d,DPDP-d, DPCP-IRLS 
is reached, or if the algorithm converges within accuracy of $10^{-3}$. 
We also compare with the REAPER analog of Algorithm \ref{alg:MultipleDPCPsequential}, where the computation of each normal vector is done by the IRLS version of REAPER (see Section \ref{section:PriorArt}) instead of DPCP. As with the DPCP algorithms, its maximal number of iterations is $100$ and its convergence accuracy is $10^{-3}$. 

Finally, we compare with RANSAC, which is the predominant method for clustering hyperplanes in low ambient dimensions (e.g., for $D=4,9$). For fairness, we implement a version of RANSAC which involves the same weighting scheme as Algorithm \ref{alg:MultipleDPCPsequential}, but instead of weighting the points, it uses the normalized weights as a discrete probability distribution on the data points; thus points that lie close to some of the already computed hyperplanes, have a low probability of being selected. Moreover, we control the running time of RANSAC so that it is as slow as DPCP-r, the latter being the slowest among the four DPCP algorithms. 

\indent \myparagraph{Results} Since not all results can fit in a single figure, we show the mean clustering accuracy over $50$ independent experiments in Fig. \ref{fig:MultipleDPCPRansacStyleDvsnalpha} only for RANSAC, REAPER, DPCP-r and
DPDP-IRLS (i.e., not including DPCP-r-d and DPCP-d), but for all values $\alpha=1,0.8,0.6$, as well as in Fig. \ref{fig:MultipleDPCPRansacStyleDvsn08} for all methods but only for $\alpha=0.8$. The accuracy is normalized to range from $0$ to $1$, with 
$0$ corresponding to black color, and $1$ corresponding to white. 
			
First, observe that the performance of almost all methods improves as the clusters become
more unbalanced ($\alpha=1 \rightarrow \alpha=0.6$). This is intuitively expected, as the smaller $\alpha$ is the more 
dominant is the $i$-th hyperplane with respect to hyperplanes $i+1,\dots,n$, and so the more likely it is to be identified at iteration $i$ of the sequential algorithm. The only exception to this intuitive phenomenon is 
RANSAC, which appears to be insensitive to the balancing of the data. This is because 
RANSAC is configured to run a relatively long amount of time, approximately equal to the running
time of DPCP-r, and as it turns out this compensates for the unbalancing of the data, since many
different samplings take place, thus leading to approximately constant behavior across different $\alpha$. 

In fact, notice that RANSAC is the best among all methods when $D=4$,
with mean clustering accuracy ranging from $99\%$ when $n=2$, to $97\%$ when $n=4$.
 On the other hand, 
 RANSAC's performance drops dramatically when we move to higher coordinate dimensions and more than
$2$ hyperplanes. For example, for $\alpha=0.8$ and $n=4$, the mean clustering accuracy of RANSAC 
drops from $97\%$ for $D=4$, to $44\%$ for $D=9$, to $28\%$ for $D=30$. This is due to the fact
that the probability of sampling $D-1$ points from the same hyperplane decreases as $D$ increases.

Secondly, the proposed Algorithm \ref{alg:MultipleDPCPsequential} using DPCP-r is uniformly the best method (and using DPCP-IRLS is the second best), with the slight exception of $D=4$, where its clustering accuracy ranges for $\alpha=0.8$ from $99\%$ for $n=2$ (same as RANSAC), to $89\%$ for $n=4$, as opposed to the $97\%$ of RANSAC for the latter case. In fact, all DPCP variants were superior than RANSAC or REAPER in the challenging scenario of $D=30,n=4$, where for $\alpha=0.6$, DPCP-r, DPCP-IRLS, DPCP-d and DPCP-r-d gave $86\%,81\%,74\%$ and $52\%$ accuracy respectively, as opposed to $28\%$ for RANSAC and $42\%$ for REAPER. 

Table \ref{table:synthetic_D_vs_n_noise_outliers_T50_rt} reports running times in seconds for $\alpha=1$
and $n=2,4$. It is readily seen that DPCP-r is the slowest among all methods (recall that RANSAC has 
been configured to be as slow as DPCP-r). Remarkably, DPCP-d and REAPER are the fastest among all
methods with a difference of approximately two orders of magnitude from DPCP-r. However, as we saw 
above, none of these methods performs nearly as well as DPCP-r. From that perspective, DPCP-IRLS
is interesting, since it seems to be striking a balance between running time and performance. 

\setlength{\tabcolsep}{0.11em}
\begin{table}[t!]
	\centering	
	\ra{0.7}
	\caption{Mean running times in seconds, corresponding to the experiment of Fig. \ref{fig:MultipleDPCPRansacStyleDvsnalpha} for data balancing parameter $\alpha = 1$.} 
	\small	
	\begin{tabular}
		{lllrrrrrrrrrrrrrrrrrrrrrrrrrrrrrrrrrrrrrrrrrrrrrrrrrrrr}\toprule[1pt]
		 \phantom{a}  &&& \multicolumn{7}{c}{$n = 2$} &&&&&&&&& \multicolumn{7}{c}{$n = 4$} \\
		\cmidrule{5-8}		\cmidrule{21-24} 	
		 \phantom{a}  &&& \multicolumn{1}{c}{$D = 4$} &&& 
		 \multicolumn{1}{c}{$D= 9$} &&&  \multicolumn{1}{c}{$D=30$} &&&&&&&&& \multicolumn{1}{c}{$D = 4$} &&& 
		 \multicolumn{1}{c}{$D = 9$} &&& \multicolumn{1}{c}{$D=30$} \\		
		 	\cmidrule{3-4}	\cmidrule{7-8}   \cmidrule{10-11}   \cmidrule{18-19}  \cmidrule{22-22} \cmidrule{25-25}					 RANSAC &&&  $1.18$ &&& $1.25$ &&&  $1.76$ &&&&&&&&&  $8.27$ &&& $11.61$ &&& $16.00$	\\			 
			  REAPER &&&  $0.05$ &&& $0.04$ &&&  $0.04$ &&&&&&&&&  $0.18$ &&& $0.17$ &&& $0.19$	\\	
			  DPCP-r &&&  $1.18$ &&& $1.24$ &&&  $1.75$ &&&&&&&&&  $8.21$ &&& $11.55$ &&& $15.89$	\\			  
			  DPCP-d &&&  $0.02$ &&& $0.02$ &&&  $0.05$ &&&&&&&&&  $0.10$ &&& $0.16$ &&& $0.42$	\\	
			  DPCP-IRLS &&&  $0.12$ &&& $0.14$ &&&  $0.21$ &&&&&&&&&  $0.77$ &&& $0.81$ &&& $0.82$	\\				  		  			  \bottomrule[1pt]	
	\end{tabular}	\label{table:synthetic_D_vs_n_noise_outliers_T50_rt}	
\end{table} \normalsize

Moving on, we fix $D=9$ and vary the outlier ratio as $10\%,30\%,50\%$ (in the previous experiment the outlier ratio was $10\%$). Then the mean clustering accuracy over $50$ independent trials is 
shown in Fig. \ref{fig:MultipleDPCPRansacStyleRvsnalpha} and Fig. \ref{fig:MultipleDPCPRansacStyleRvsn08}.
In this experiment the hierarchy of the methods is clear: Algorithm \ref{alg:MultipleDPCPsequential} using DPCP-r and using DPCP-IRLS are the best and second best methods, respectively, while the rest of the methods perform
equally poorly. As an example, in the challenging scenario of $n=4,D=30$ and $50\%$ outliers, for $\alpha=0.6$, DPCP-r gives $74\%$ accuracy, while the next best method is DPCP-IRLS with $58\%$ accuracy; in that scenario RANSAC and REAPER give $38\%$ and $41\%$ accuracy respectively, while DPCP-r-d and DPCP-d give $41\%$ and $40\%$ respectively. Moreover, for $n=2,D=30$ and $\alpha=0.8$ DPCP-r and DPCP-IRLS give $95\%$ and $86\%$ accuracy, while all other methods give about $65\%$.

\subsection{3D Plane clustering of real Kinect data} \label{subsection:ExperimentsReal}

\indent \myparagraph{Dataset and objective} In this section we explore various Iterative Hyperplane Clustering\footnote{Recall from \S \ref{section:PriorArt} and \S \ref{subsection:AlgorithmsKHstyle} that by \emph{iterative hyperplane clustering}, we mean the process of estimating $n$ hyperplanes, then assigning each point to its closest hyperplane, then refining the hyperplanes associated to a cluster only from the points of the cluster, re-assigning points to hyperplanes and so on.} (IHL) algorithms using the benchmark dataset
NYUdepthV2 \cite{Silberman:ECCV12}. This dataset consists of $1449$ RGBd data instances acquired using
the Microsoft kinect sensor. Each instance of NYUdepthV2 corresponds to an indoor scene, and consists of the $480 \times 640 \times 3$ RGB data together with depth data for each pixel, i.e., a total of $480 \cdot 640$ depth values. In turn, the depth data can be used to reconstruct a $3D$ point cloud associated to the scene. In this experiment we use such $3D$ point clouds to learn plane arrangements and segment 
the pixels of the corresponding RGB images based on their plane membership. This is an important problem 
in robotics, where estimating the geometry of a scene is essential for successful robot navigation.

\indent \myparagraph{Manual annotation} 
While the coarse geometry of most indoor scenes can be approximately described by a union of a few ($\le 9$) planes, many points in the scene do not lie in these planes, and may thus be viewed as outliers. Moreover,
it is not always clear how many planes one should select or which these planes are. In fact, NYUdepthV2 
does not contain any ground truth annotation based on planes, rather the scenes are annotated semantically with a view to object recognition. For this reason, among a total of $1449$ scenes, we manually annotated $92$ scenes, in which the dominant planes are well-defined and capture most of the scene; see for example 
Figs. \ref{figure:IMAGE2_GC0}-\ref{figure:IMAGE2_AnnotationGC0} and \ref{figure:IMAGE5_GC0}-\ref{figure:IMAGE5_AnnotationGC0}. Specifically, for each of the $92$ images, at most $9$ dominant
planar regions were manually marked in the image and the set of pixels within these regions were declared inliers,
while the remaining pixels were declared outliers. For each planar region a ground truth normal vector was computed using DPCP-r. Finally, two planar regions that were considered distinct during manual annotation, were merged if the absolute inner product of their corresponding normal vectors was above $0.999$.  

\indent \myparagraph{Pre-processing} For computational reasons, the hyperplane clustering algorithms that we use (to be described in the next paragraph) do not act directly on the original $3D$ point cloud, rather on a weighted subset of it, corresponding to a superpixel representation of each image. In particular, each
$480 \times 640 \times 3$ RGB image is segmented to about $1000$ superpixels and the entire $3D$ point sub-cloud corresponding to each superpixel is replaced by the point in the geometric center of the superpixel.  To account for
the fact that the planes associated with an indoor scene are affine, i.e., they do not pass through a common origin,
we work in homogeneous coordinates, i.e., we append a fourth coordinate to each $3D$ point representing a superpixel and normalize it to unit $\ell_2$-norm. Finally, a weight is assigned to each representative $3D$ point, equal to the number of pixels in the underlying superpixel. The role of this weight is to regulate the influence of each point in the modeling error (points representing larger superpixels should have more influence). 

\indent \myparagraph{Algorithms} The first algorithm that we test is the sequential RANSAC algorithm (SHL-RANSAC), which identifies one plane at a time. Secondly, we explore a family of variations of the IHL algorithm (see \S \ref{section:PriorArt})
based on SVD, DPCP, REAPER and RANSAC. In particular, IHL(2)-SVD indicates the classic IHL algorithm which computes normal vectors through the Singular Value Decomposition (SVD), and minimizes an $\ell_2$ objective (this is K-Hyperplanes). IHL(1)-DPCP-r-d, IHL(1)-DPCP-d and IHL(1)-DPCP-IRLS, denote IHL variations of DPCP according to Algorithm \ref{alg:IHL-DPCP}, depending on which method is used to solve the DPCP problem \eqref{eq:ell1} \footnote{IHL(1)-DPCP-r was not included since it was slowing down the experiment considerably, while its performance was similar to the rest of the DPCP methods.}. Similarly, IHL(1)-REAPER and IHL(1)-RANSAC denote the obvious adaptation of IHL where the normals are computed with REAPER and RANASC, respectively, and an $\ell_1$ objective is minimized. 

A third method that we explore is a hybrid between Algebraic Subspace Clustering (ASC), RANSAC and IHL,
(IHL-ASC-RANSAC). First, the
vanishing polynomial associated to ASC (see \S \ref{section:PriorArt}) is computed with RANSAC instead of SVD, which is the traditional way; this ensures robustness to outliers.
Then spectral clustering applied on the angle-based affinity associated to ASC (see equation \eqref{eq:ABA})  
yields $n$ clusters. Finally, one iteration of IHL-RANSAC refines these clusters and yields a normal vector per cluster (the normal vectors are necessary for the post-processing step).

\setlength{\tabcolsep}{0.17em}
\begin{table}[t]
	\centering	
	\ra{0.7}
	\caption{$3D$ plane clustering error for a subset of the real Kinect dataset NYUdepthV2. $n$ is the number of fitted planes. GC(0) and GC(1) refer to clustering error without or with spatial smoothing, respectively. } 
	\small	
	\begin{tabular}
		{lllllrrrrrrrrrrrrrrrrrrrrrrrrr}\toprule[1pt]		
		\phantom{a}  && \phantom{a}  &&& \multicolumn{3}{c}{$n \le 2$} & 
		\phantom{a} && \multicolumn{3}{c}{$n \le 4$} & \phantom{a} && \multicolumn{3}{c}{all} \\	
		\cmidrule{5-8} \cmidrule{10-13} \cmidrule{15-18}   
	 \phantom{a}  &&	\phantom{a} &&& GC(0) && GC(1) & \phantom{a} && GC(0) && GC(1) & \phantom{a} &&GC(0) && GC(1)  \\ 				 	
		\midrule[0.5pt]
		$(\tau=10^{-1})$&& IHL-ASC-RANSAC &&& $16.25$&&$12.68$  &&&  $28.22$ &&$19.56$ &&&  $32.95$&&$19.83$  \\ 
		&& SHL-RANSAC &&& $16.01$&&$12.89$  &&&  $29.63$ &&$23.96$ &&&  $36.00$&&$28.70$  \\          		
		&& IHL(1)-RANSAC    &&& $12.73$&&$9.12$  &&&  $22.16$ &&$14.56$ &&&  $28.61$&&$18.22$  \\
		&& IHL(1)-DPCP-r-d   &&& $\boldsymbol{12.25}$&&$\boldsymbol{8.72}$  &&&  $\boldsymbol{19.78}$ &&$\boldsymbol{13.15}$ &&&  $\boldsymbol{24.37}$&&$\boldsymbol{15.55}$ \\		
		&& IHL(1)-DPCP-d        &&& $12.45$&&$8.95$  &&&  $19.91$ &&$13.30$ &&&  $24.66$&&$16.04$  \\
		\midrule[0.1pt]
		$(\tau=10^{-2})$&& IHL-ASC-RANSAC &&& $9.50$&&$5.19$  &&&  
		$19.72$ &&$10.29$ &&&  $25.15$&&$12.18$  \\ 
		&& SHL-RANSAC &&& $\boldsymbol{6.27}$&&$\boldsymbol{3.29}$  &&&  $\boldsymbol{12.84}$ &&$\boldsymbol{6.75}$ &&&  $\boldsymbol{19.17}$&&$\boldsymbol{9.81}$ \\          		
		&& IHL(1)-RANSAC   &&& $7.97$&&$5.06$  &&&  $14.37$ &&$8.78$ &&&  $20.34$&&$12.43$  \\
		&& IHL(1)-DPCP-r-d   &&& $12.70$&&$9.07$  &&&  $21.46$ &&$13.98$ &&&  $25.94$&&$16.20$  \\		
		&& IHL(1)-DPCP-d       &&& $12.70$&&$9.08$  &&&  $21.50$ &&$14.03$ &&&  $26.04$&&$16.22$  \\
		\midrule[0.1pt]
		$(\tau=10^{-3})$&& IHL-ASC-RANSAC &&& $8.75$&&$4.80$  &&&  
		$20.35$ &&$10.72$ &&&  $24.46$&&$\boldsymbol{11.95}$  \\ 
		&&  SHL-RANSAC &&& $15.26$&&$8.34$  &&&  $25.89$ &&$11.49$ &&&  $33.08$&&$13.90$  \\          		
		&& IHL(1)-RANSAC    &&& $\boldsymbol{7.48}$&&$\boldsymbol{4.79}$  &&&  $\boldsymbol{13.86}$ &&$\boldsymbol{8.79}$ &&&  $\boldsymbol{19.39}$&&$12.07$  \\
		&& IHL(1)-DPCP-r-d  &&& $12.93$&&$9.33$  &&&  $21.06$ &&$13.60$ &&&  $25.65$&&$16.27$  \\		
		&& IHL(1)-DPCP-d      &&& $12.93$&&$9.33$  &&&  $21.06$ &&$13.59$ &&&  $25.63$&&$16.23$  \\	
		\midrule[0.5pt]
		$(mean)$&& IHL-ASC-RANSAC&&& $11.50$&&$7.56$  &&&  
		$22.76$ &&$13.52$ &&&  $27.52$&&$14.65$  \\ 
		$(mean)$&&  SHL-RANSAC &&& $12.51$&&$8.17$  &&&  
		$22.78$ &&$14.07$ &&&  $29.42$&&$17.47$\\          		
		$(mean)$&& IHL(1)-RANSAC   &&& $\boldsymbol{9.39}$&&$\boldsymbol{6.32}$  &&&  
		$\boldsymbol{16.80}$ &&$\boldsymbol{10.71}$ &&&  $\boldsymbol{22.78}$&&$\boldsymbol{14.24}$ \\
		$(mean)$&& IHL(1)-DPCP-r-d   &&& $12.63$&&$9.04$  &&&  
		$20.77$ &&$13.58$ &&&  $25.32$&&$16.01$ \\		
		$(mean)$&& IHL(1)-DPCP-d       &&& $12.69$&&$9.12$  &&&  
		$20.83$ &&$13.64$ &&&  $25.45$&&$16.16$  \\			
		&& IHL(2)-SVD   &&& $13.36$&&$9.96$  &&&  
		$21.85$ &&$14.40$ &&&  $26.22$&&$16.71$    \\		
		&& IHL(1)-REAPER   &&& $12.45$&&$8.98$  &&&  
		$20.94$ &&$13.71$ &&&  $25.52$&&$16.27$  \\
		&& IHL(1)-DPCP-IRLS    &&& $12.47$&&$9.01$  &&&  
		$20.77$ &&$13.64$ &&&  $25.38$&&$16.10$ \\ 					
		\bottomrule[1pt]	
	\end{tabular}	\label{table:NYU}
\end{table} \normalsize

\begin{figure}[t!]
	\centering
	\subfigure[original image]{\label{figure:IMAGE5_GC0}\includegraphics[width=0.3\linewidth]{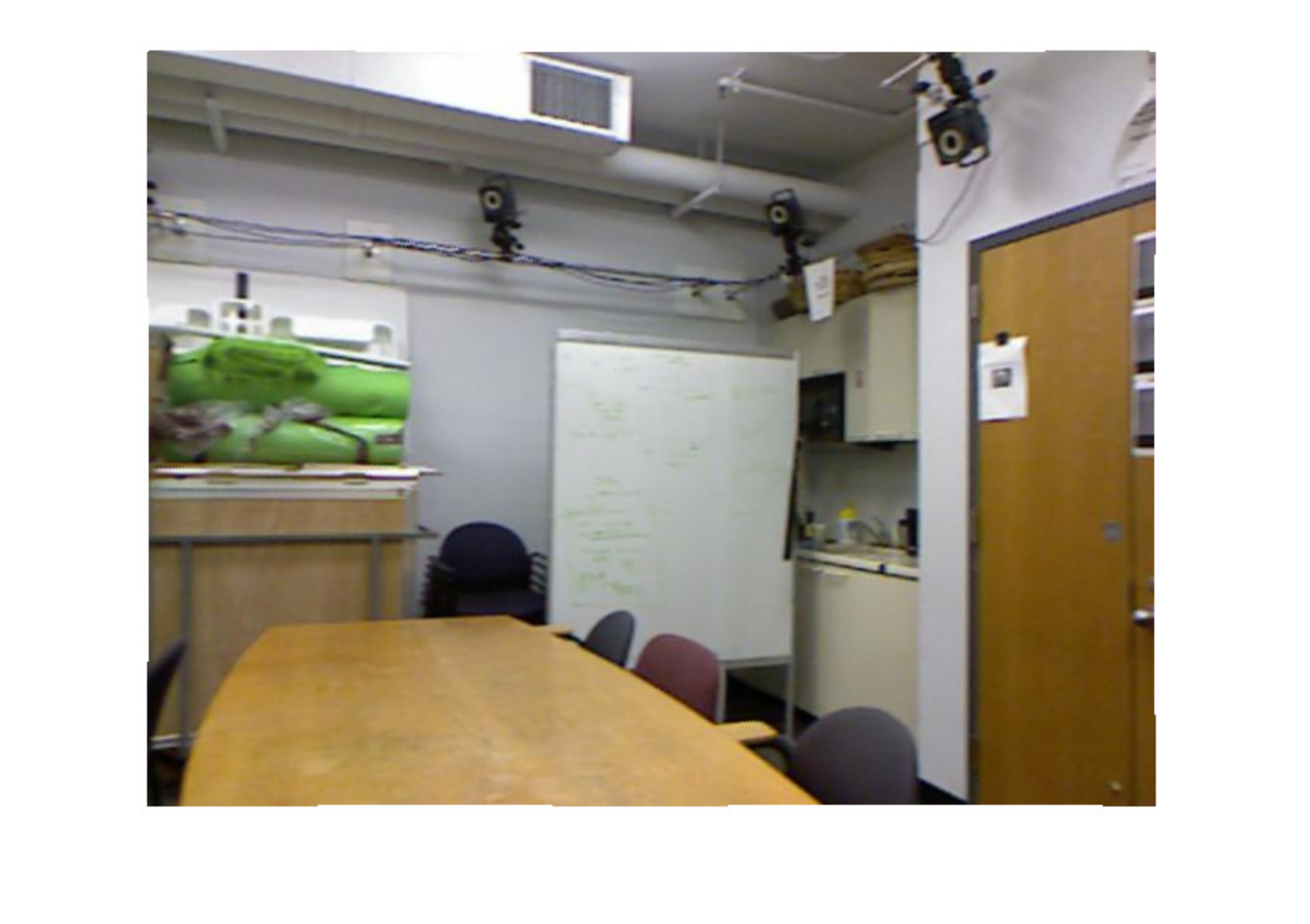}}			\subfigure[annotation]{\label{figure:IMAGE5_AnnotationGC0}\includegraphics[width=0.3\linewidth]{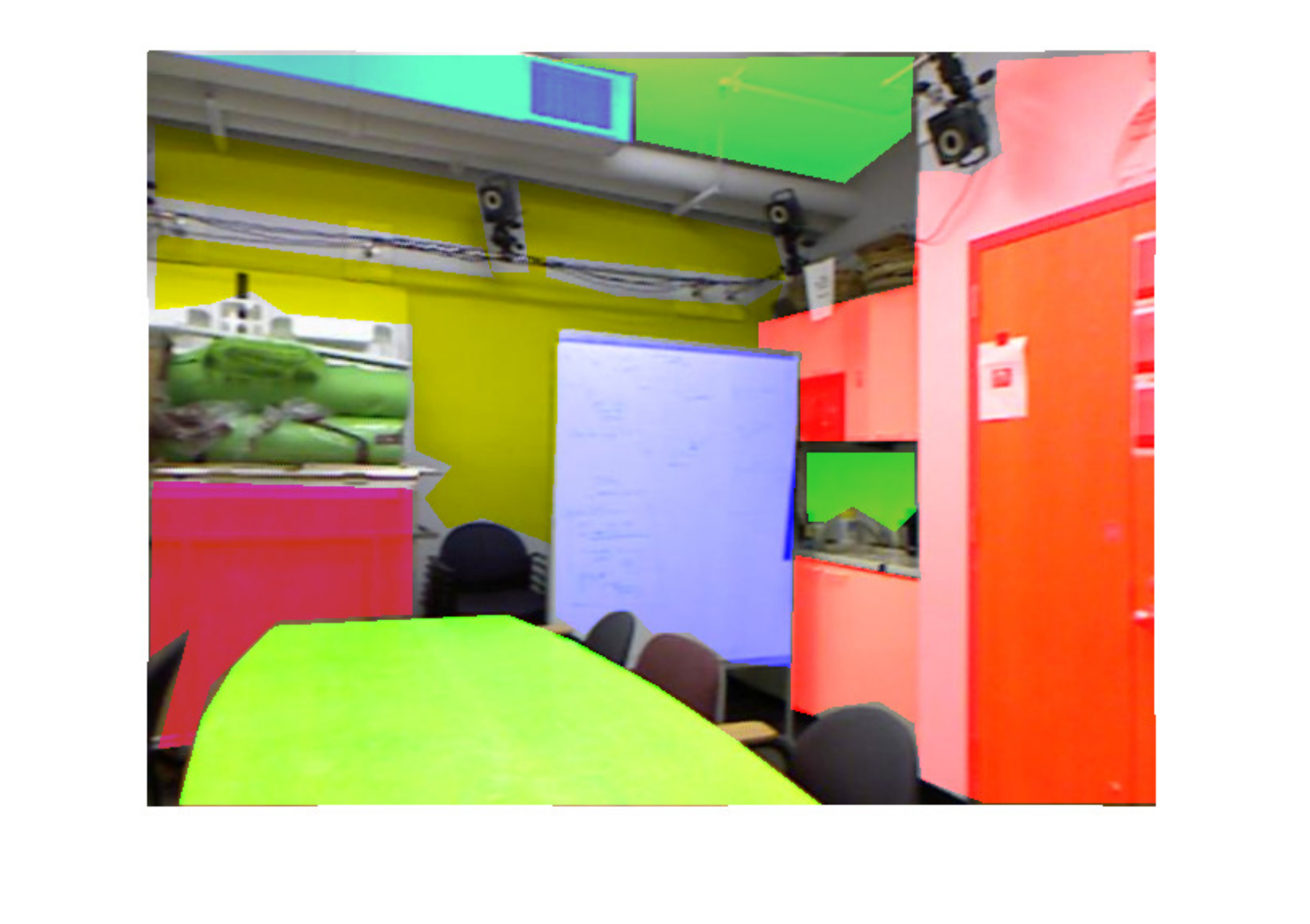}}	
	\subfigure[IHL-ASC-RANSAC ($24.7\%$)]{\label{figure:I5_ASC_RANSAC_GC0}\includegraphics[width=0.3\linewidth]{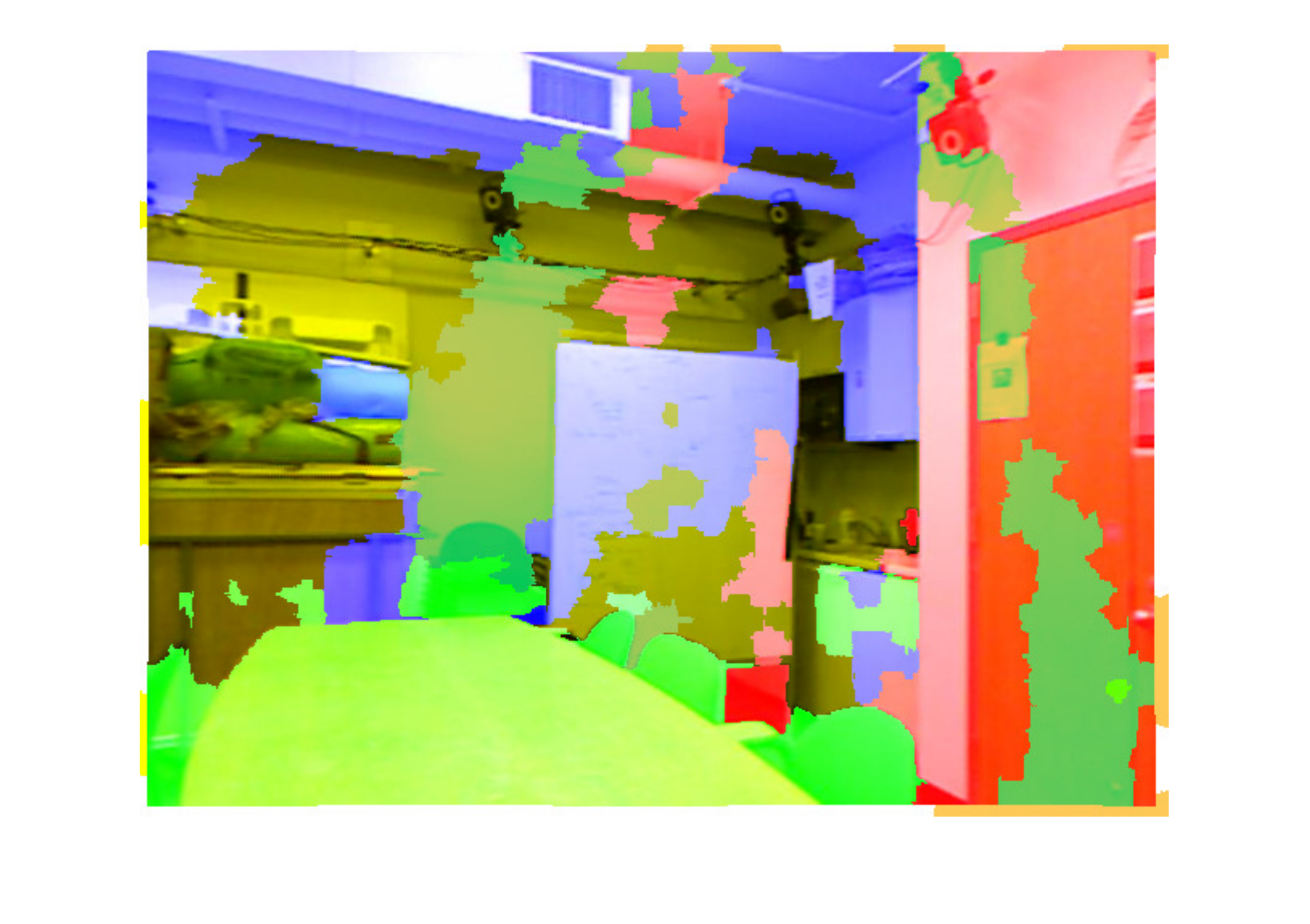}}\subfigure[SHL-RANSAC ($9.11\%$)]{\label{figure:I5_RANSAC_RANSAC_GC0}\includegraphics[width=0.3\linewidth]{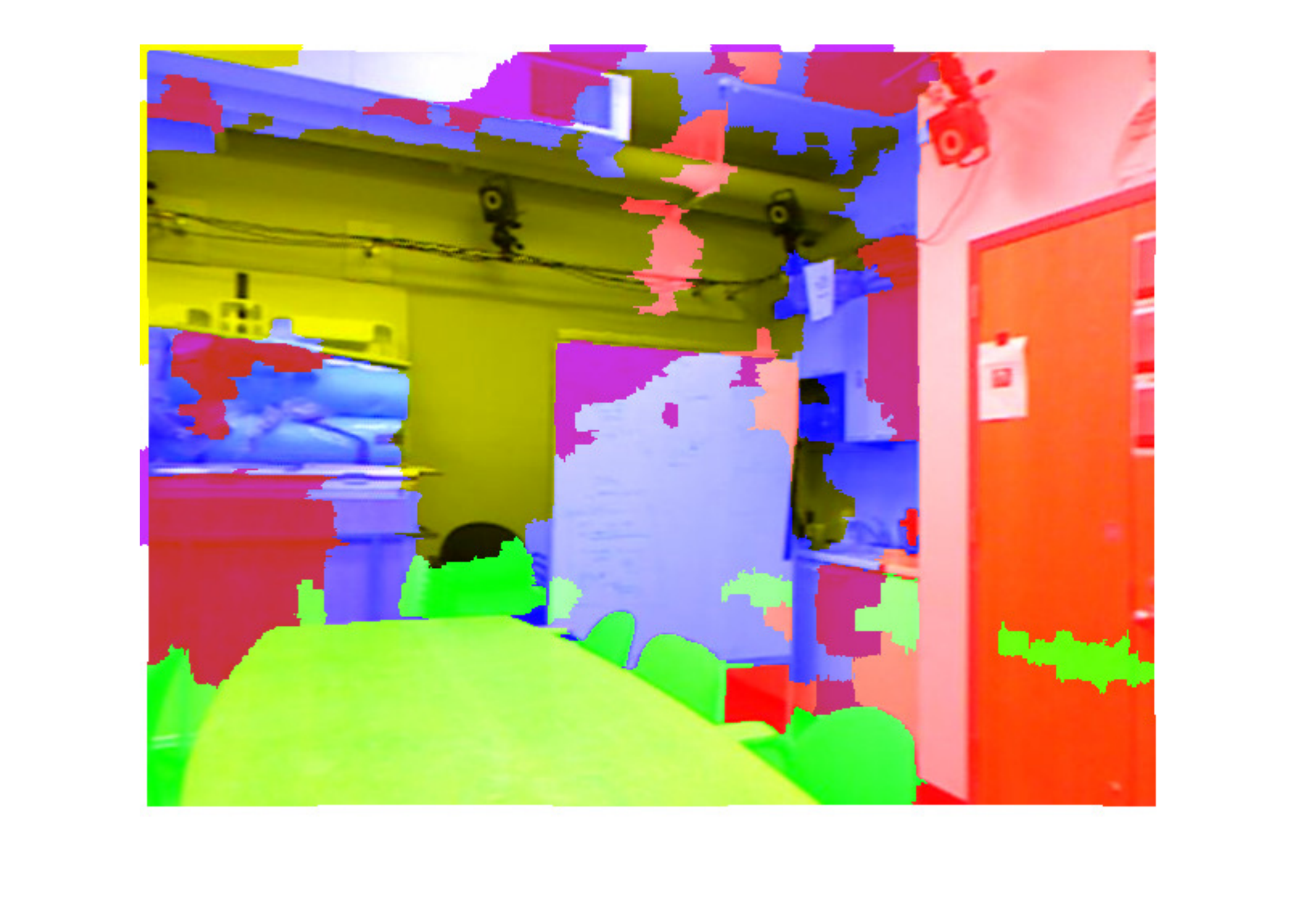}}
	\subfigure[IHL(1)-RANSAC ($11.9\%$)]{\label{figure:I5_KH_RANSAC_GC0}\includegraphics[width=0.3\linewidth]{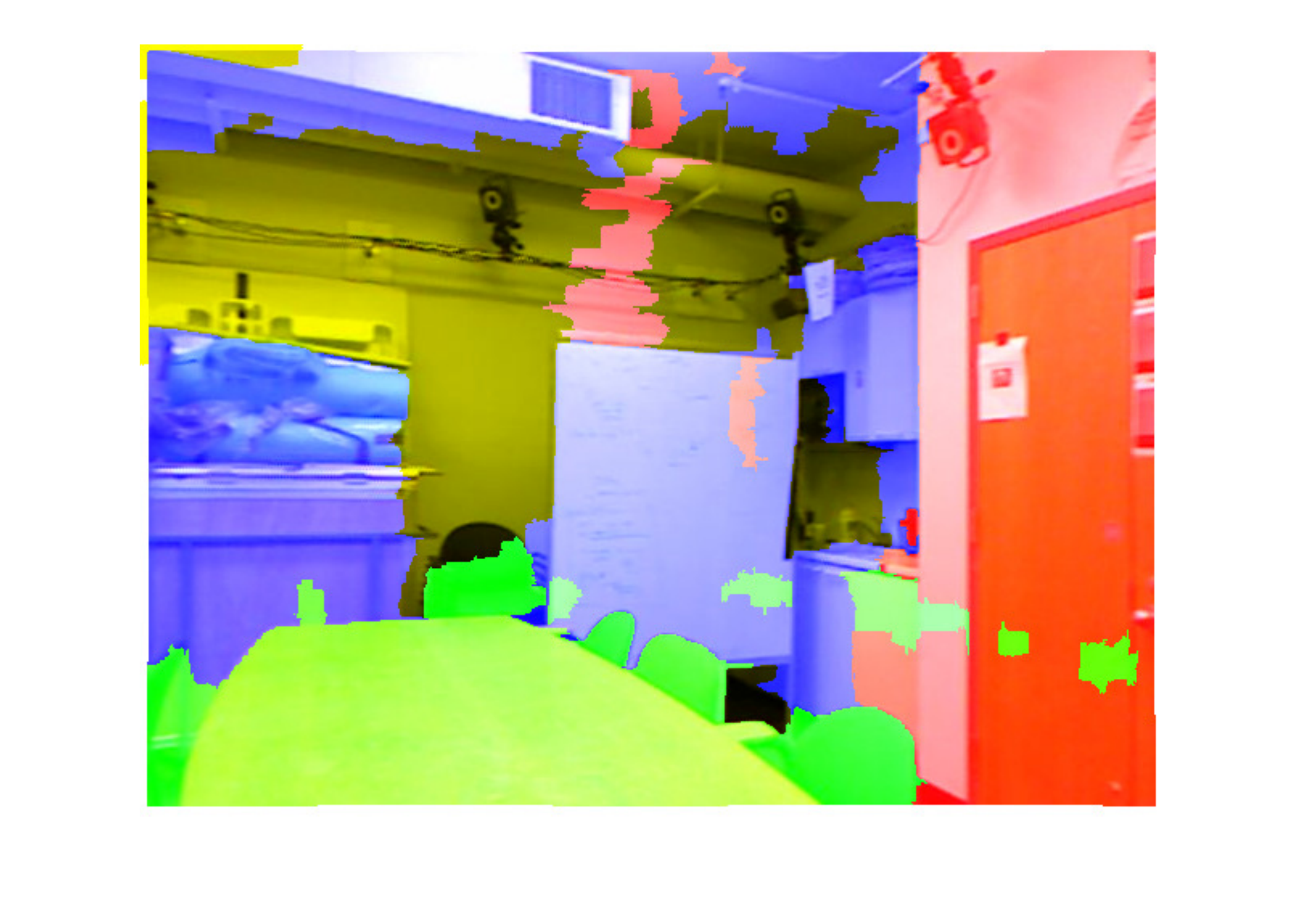}}	
	\subfigure[IHL(1)-DPCP-r-d ($9.88\%$)]{\label{figure:I5_KH_DPCP_GC0}\includegraphics[width=0.3\linewidth]{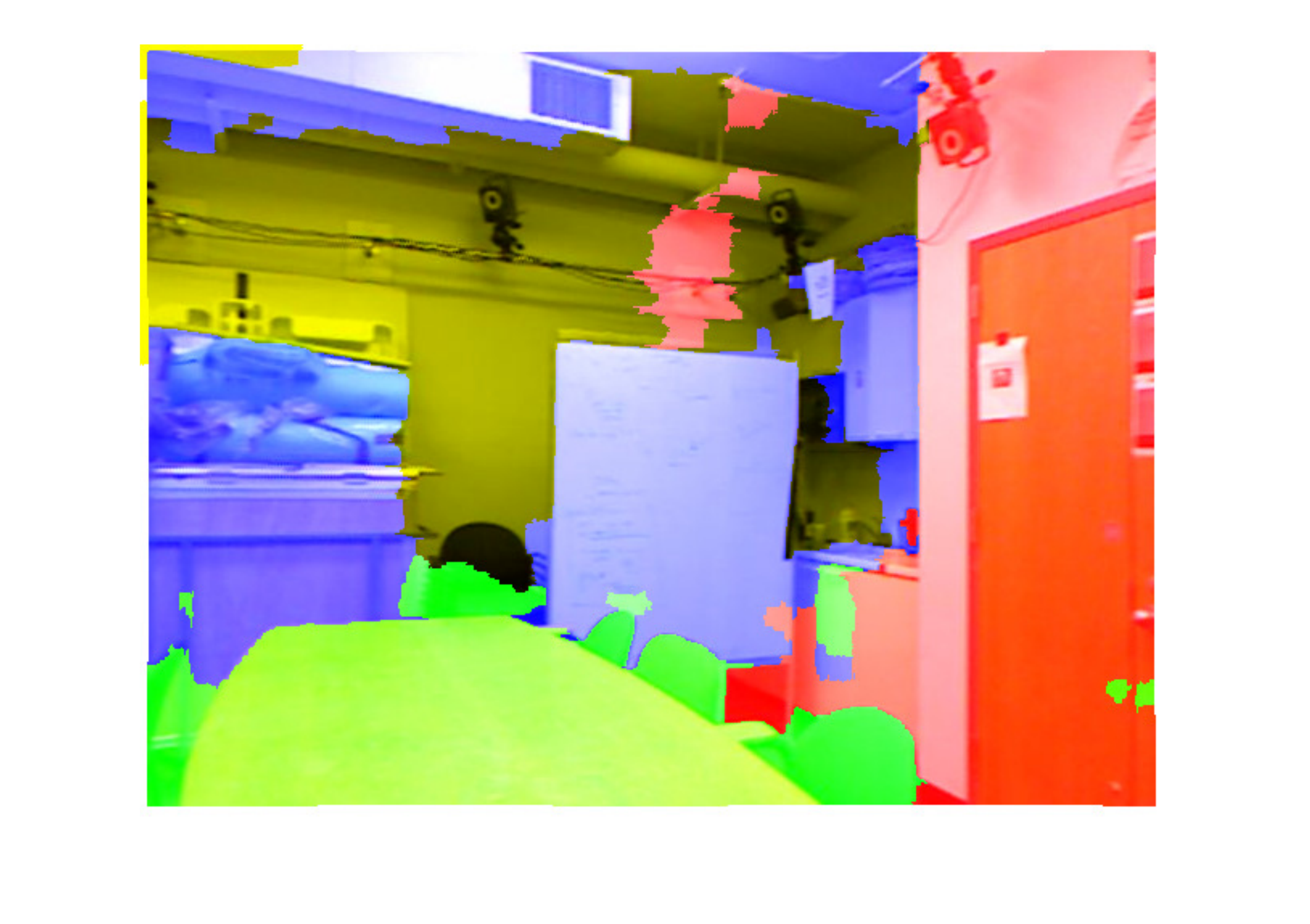}}	
	\subfigure[IHL(1)-DPCP-d ($10.24\%$)]{\label{figure:I5_KH_ADM_GC0}\includegraphics[width=0.3\linewidth]{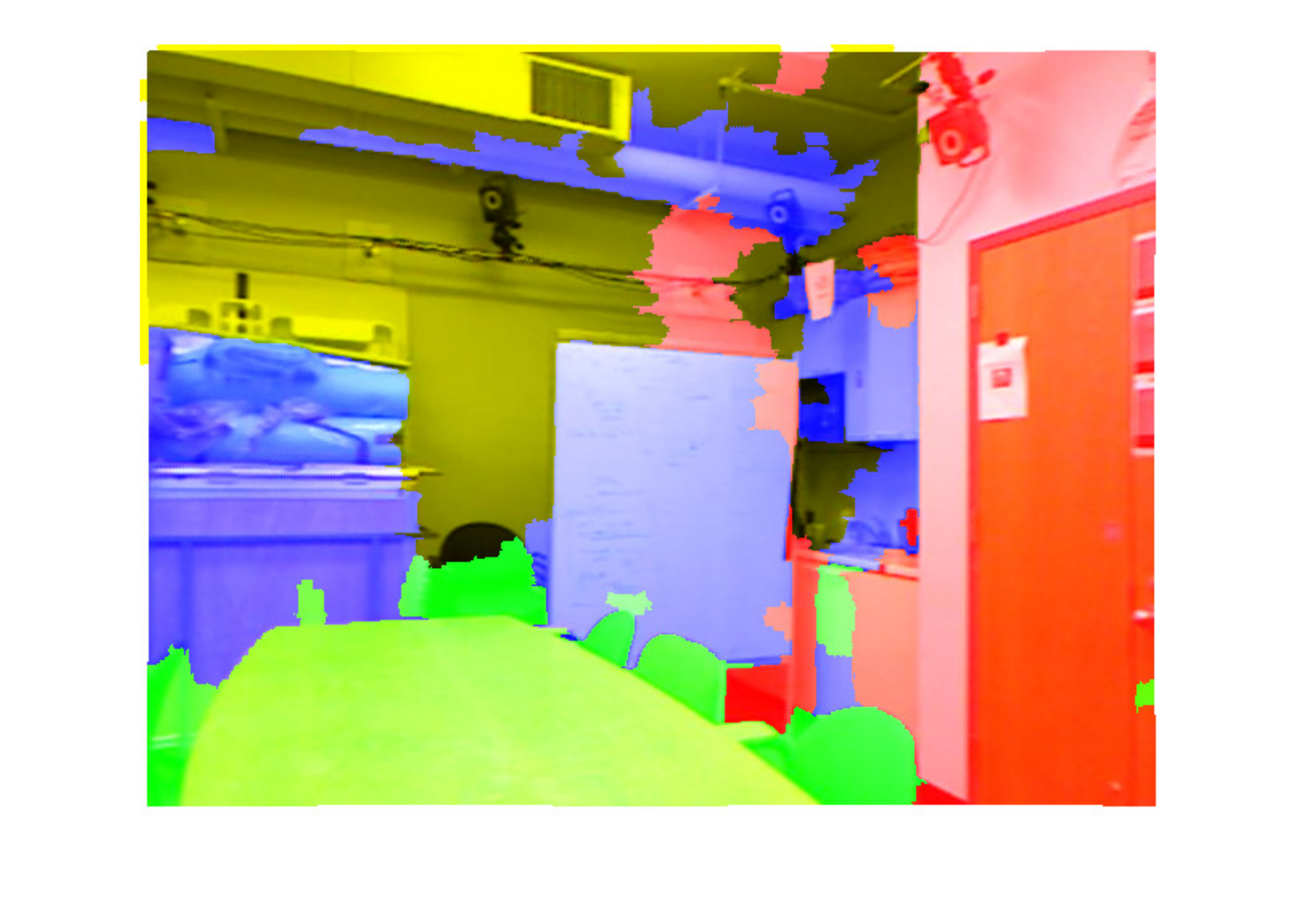}}		
	\subfigure[IHL(2)-SVD ($9.78\%$)]{\label{figure:I5_KH_SVD_GC0}\includegraphics[width=0.3\linewidth]{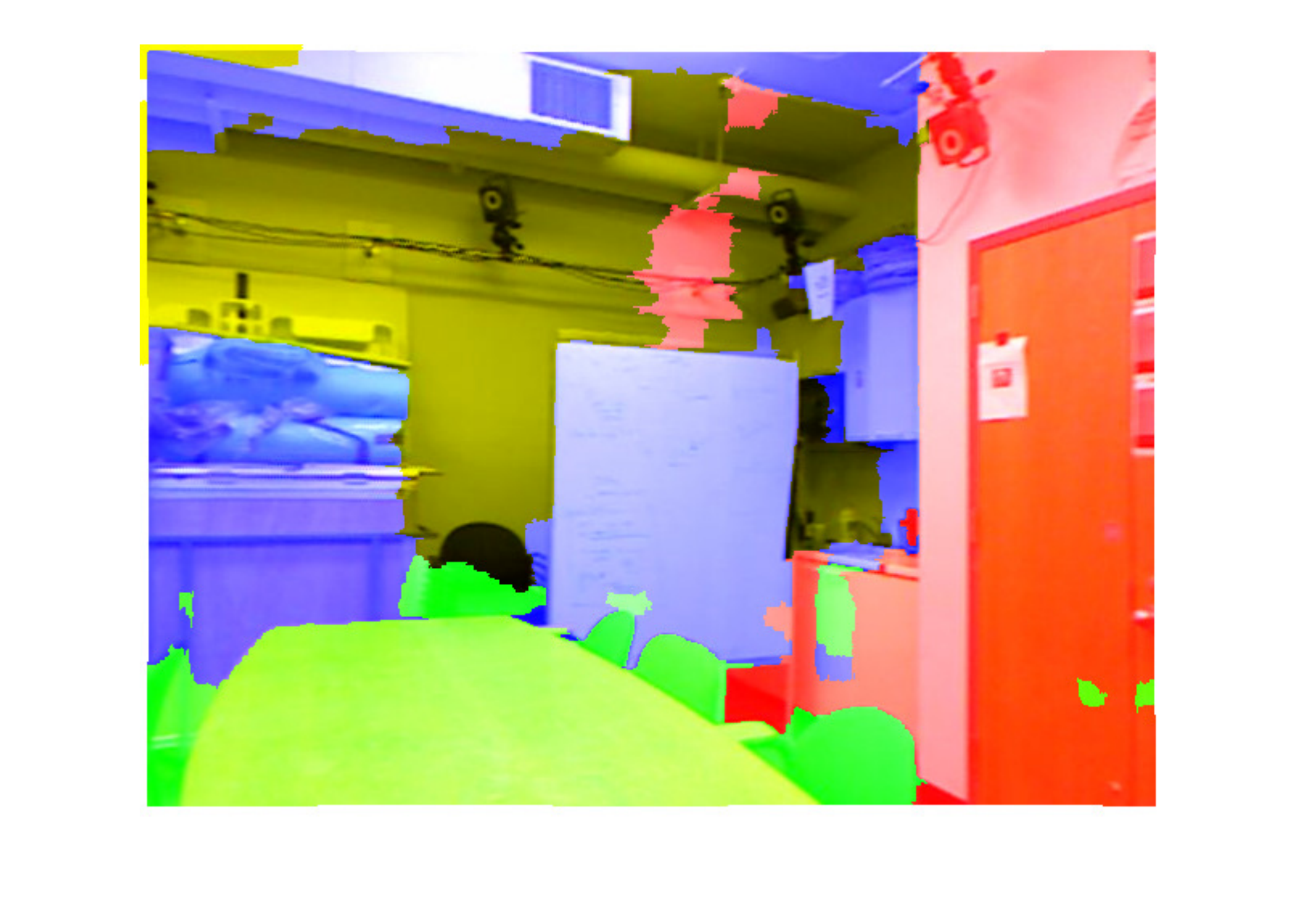}}	
	\subfigure[IHL(1)-REAPER ($9.05\%$)]{\label{figure:I5_KH_REAPER_GC0}\includegraphics[width=0.3\linewidth]{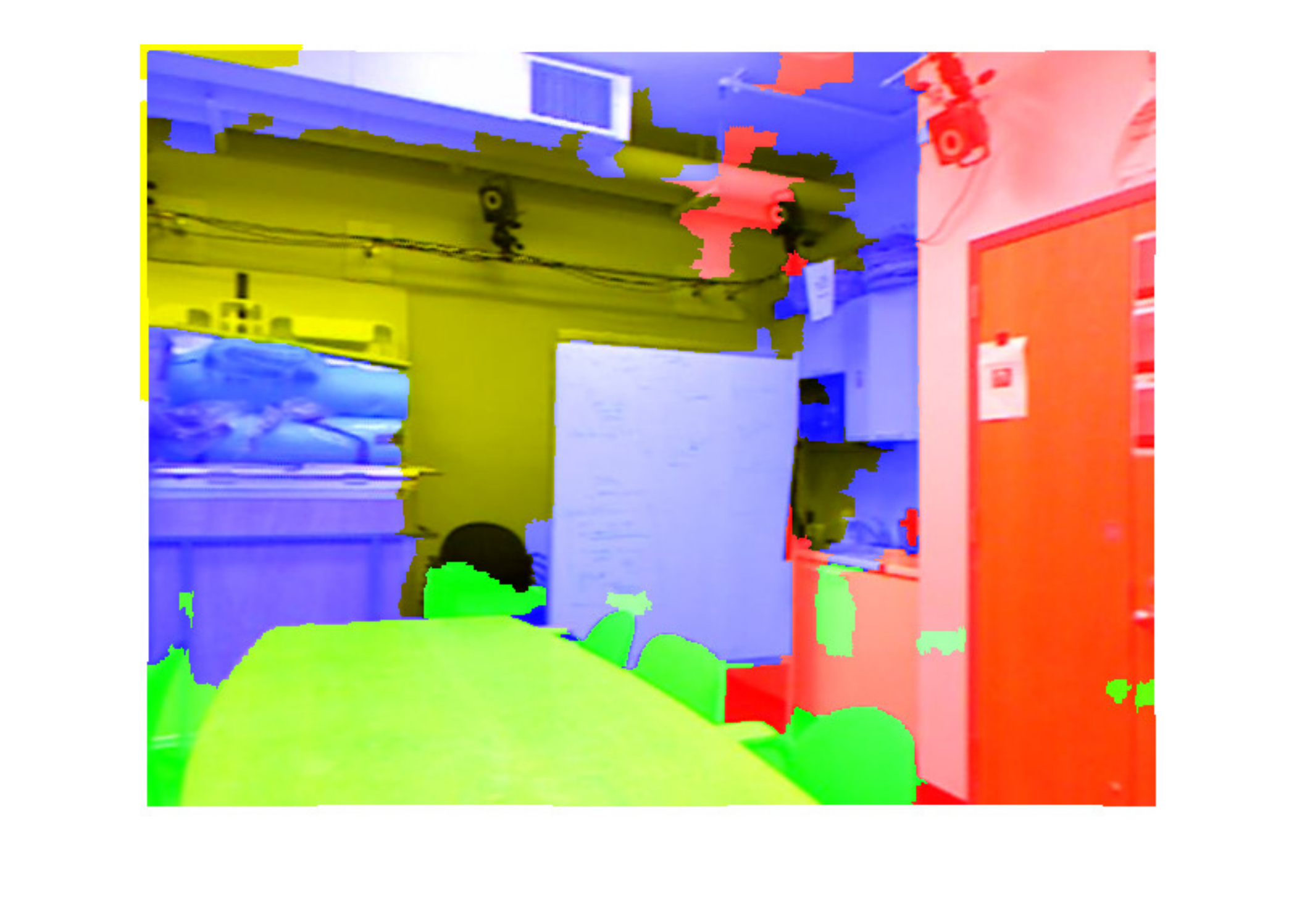}}
	\subfigure[IHL(1)-DPCP-IRLS ($9.78\%$)]{\label{figure:I5_KH_L12_IRLS_GC0}\includegraphics[width=0.3\linewidth]{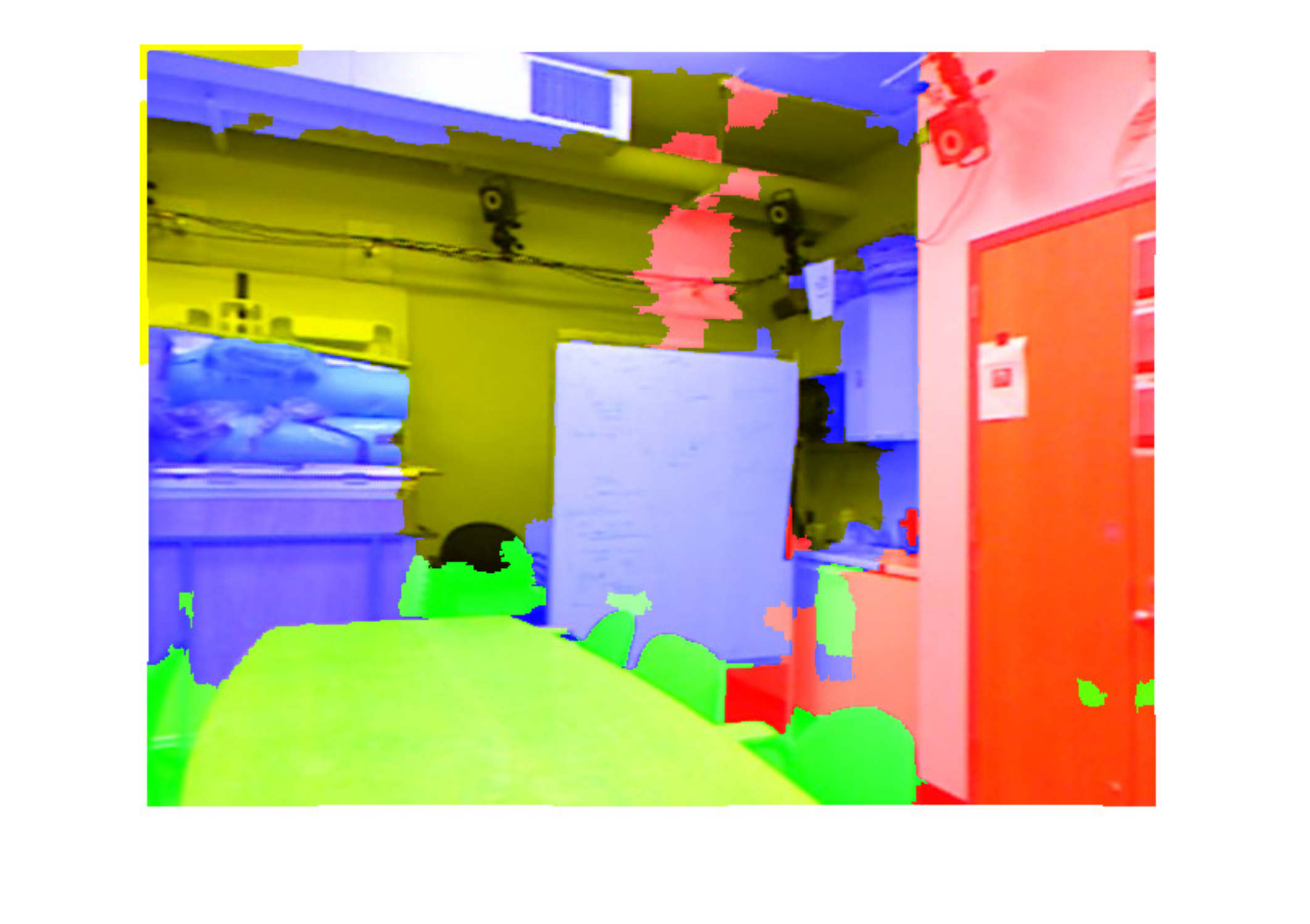}}			
	\caption{Segmentation into planes of image $5$ in dataset NYUdepthV2 without spatial smoothing. Numbers are segmentation errors.}\label{figure:IMAGE5-GC0}
\end{figure} 

\begin{figure}[t!]
	\centering
	\subfigure[original image]{\label{figure:OriginalImage}\includegraphics[width=0.3\linewidth]{IMAGE5-eps-converted-to.pdf}}			\subfigure[annotation]{\label{figure:IMAGE5_Annotation}\includegraphics[width=0.3\linewidth]{IMAGE5_Annotation-eps-converted-to.pdf}}\\	\subfigure[IHL-ASC-RANSAC ($7.98\%$)]{\label{figure:I5_ASC_RANSAC_GC1}\includegraphics[width=0.3\linewidth]{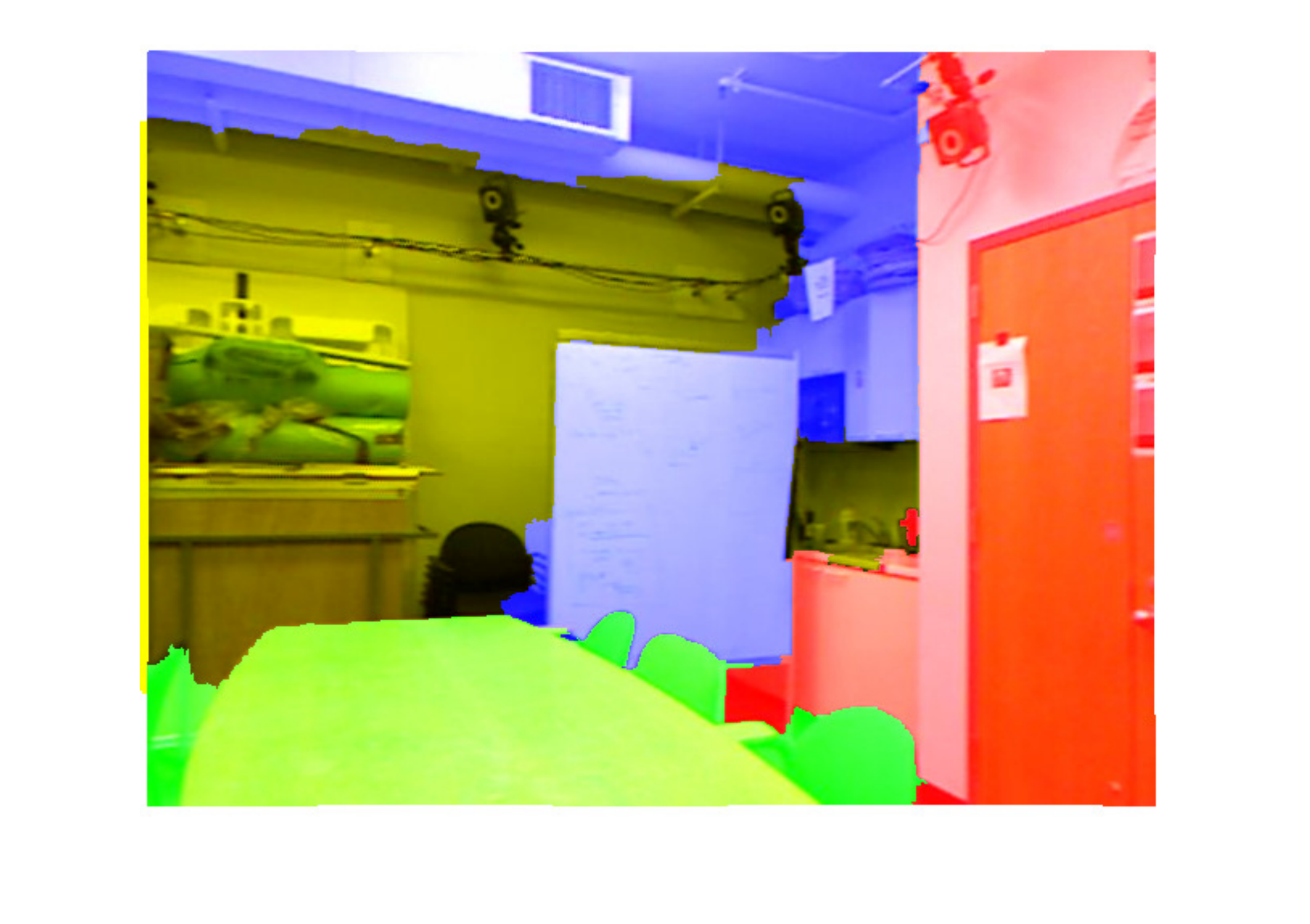}}
	\subfigure[SHL-RANSAC ($3.24\%$)]{\label{figure:I5_RANSAC_RANSAC_GC1}\includegraphics[width=0.3\linewidth]{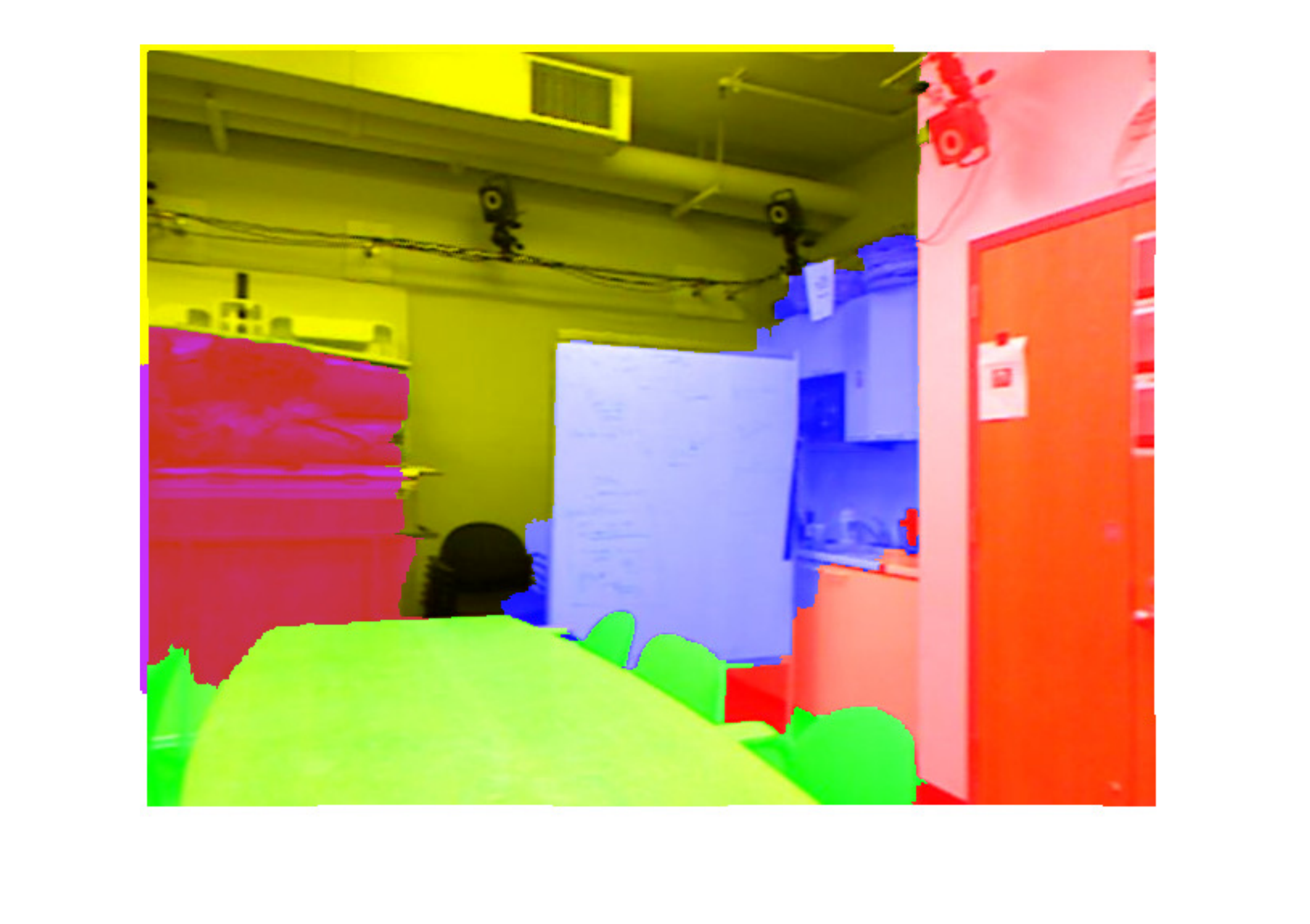}}
	\subfigure[IHL(1)-RANSAC ($8.36\%$)]{\label{figure:I5_KH_RANSAC_GC1}\includegraphics[width=0.3\linewidth]{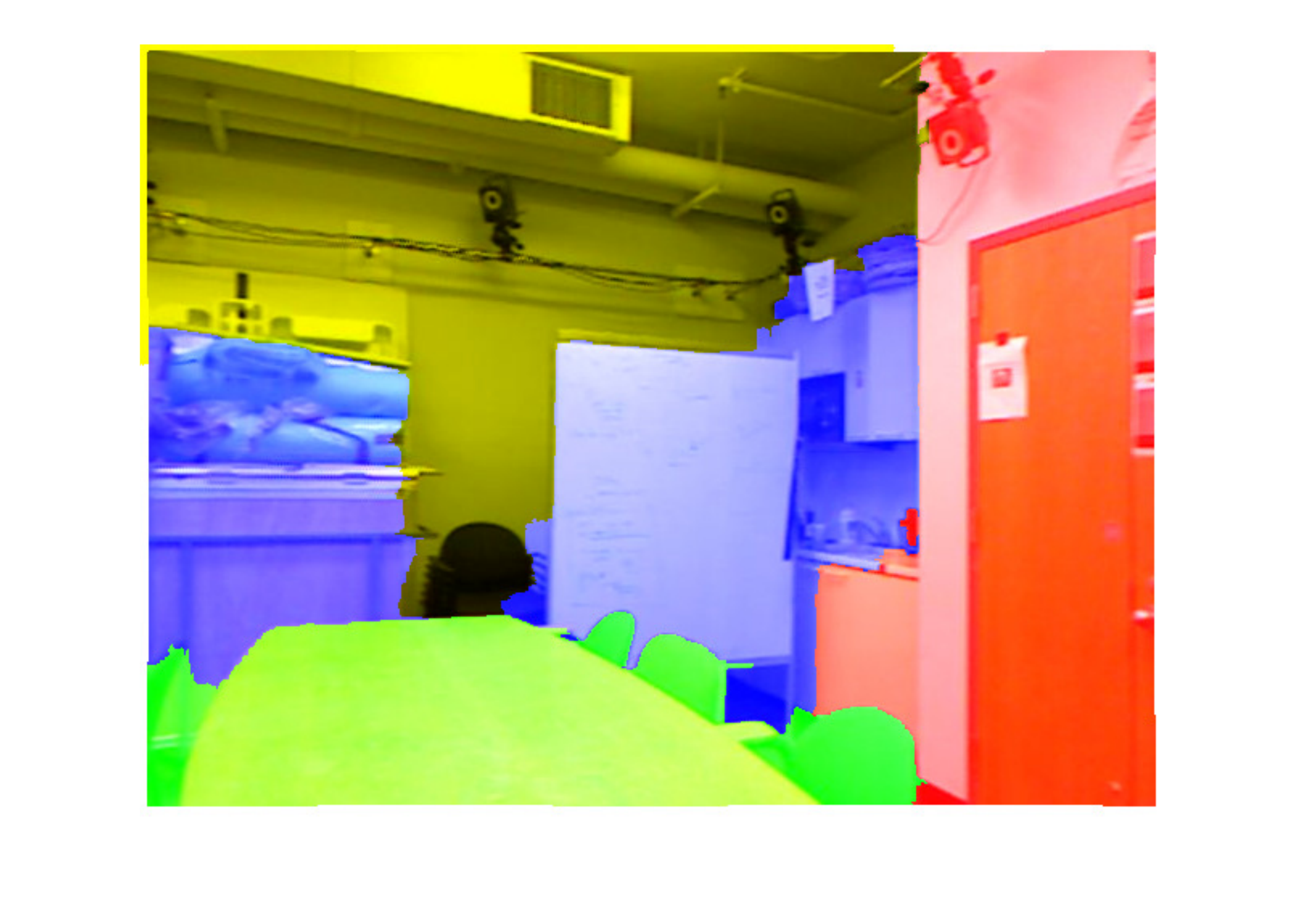}}	\subfigure[IHL-DPCP-ADM ($8.36\%$)]{\label{figure:I5_KH_DPCP_GC1}\includegraphics[width=0.3\linewidth]{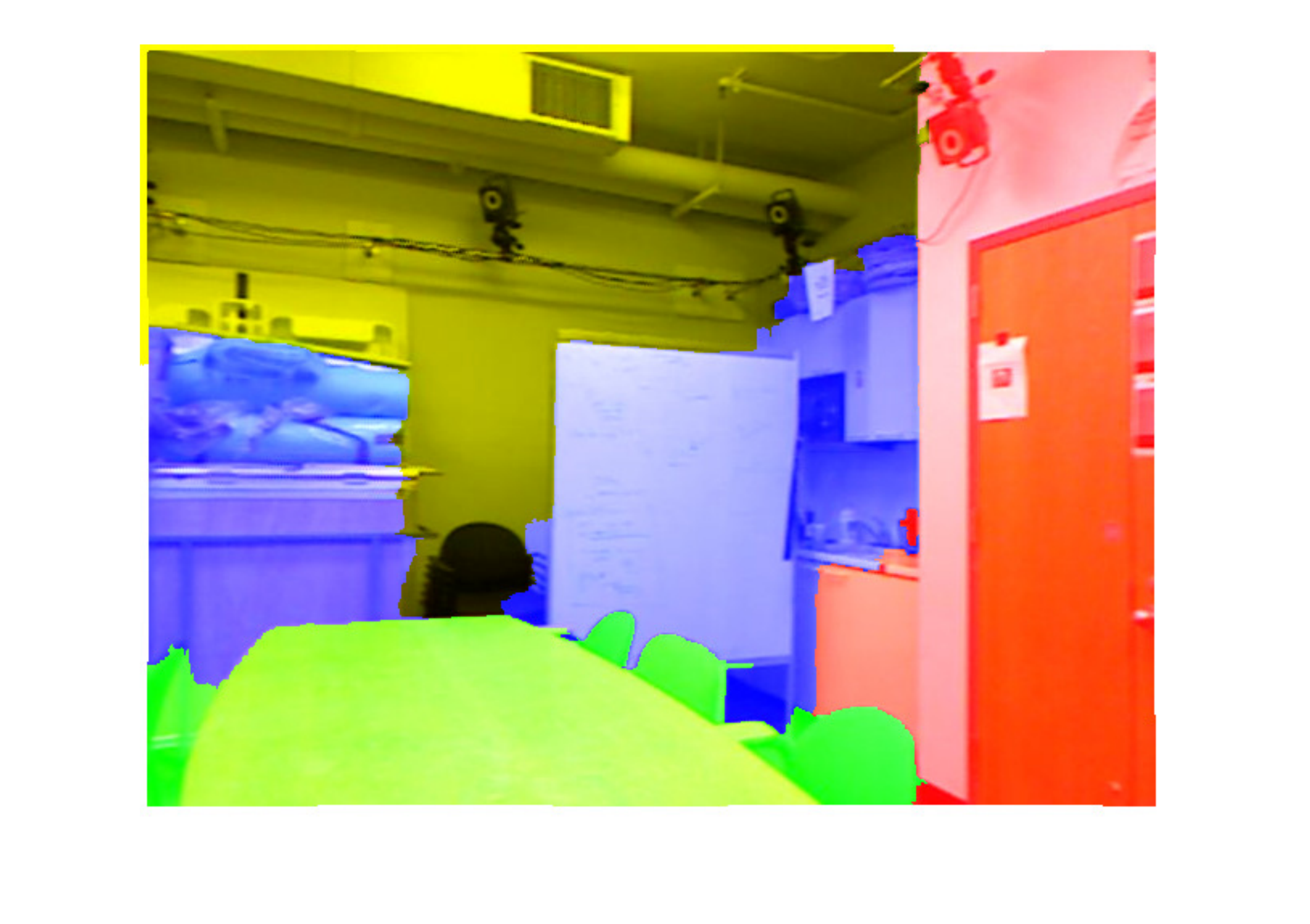}}	
	\subfigure[IHL(1)-DPCP-d ($8.057\%$)]{\label{figure:I2_KH_ADM_GC1}\includegraphics[width=0.3\linewidth]{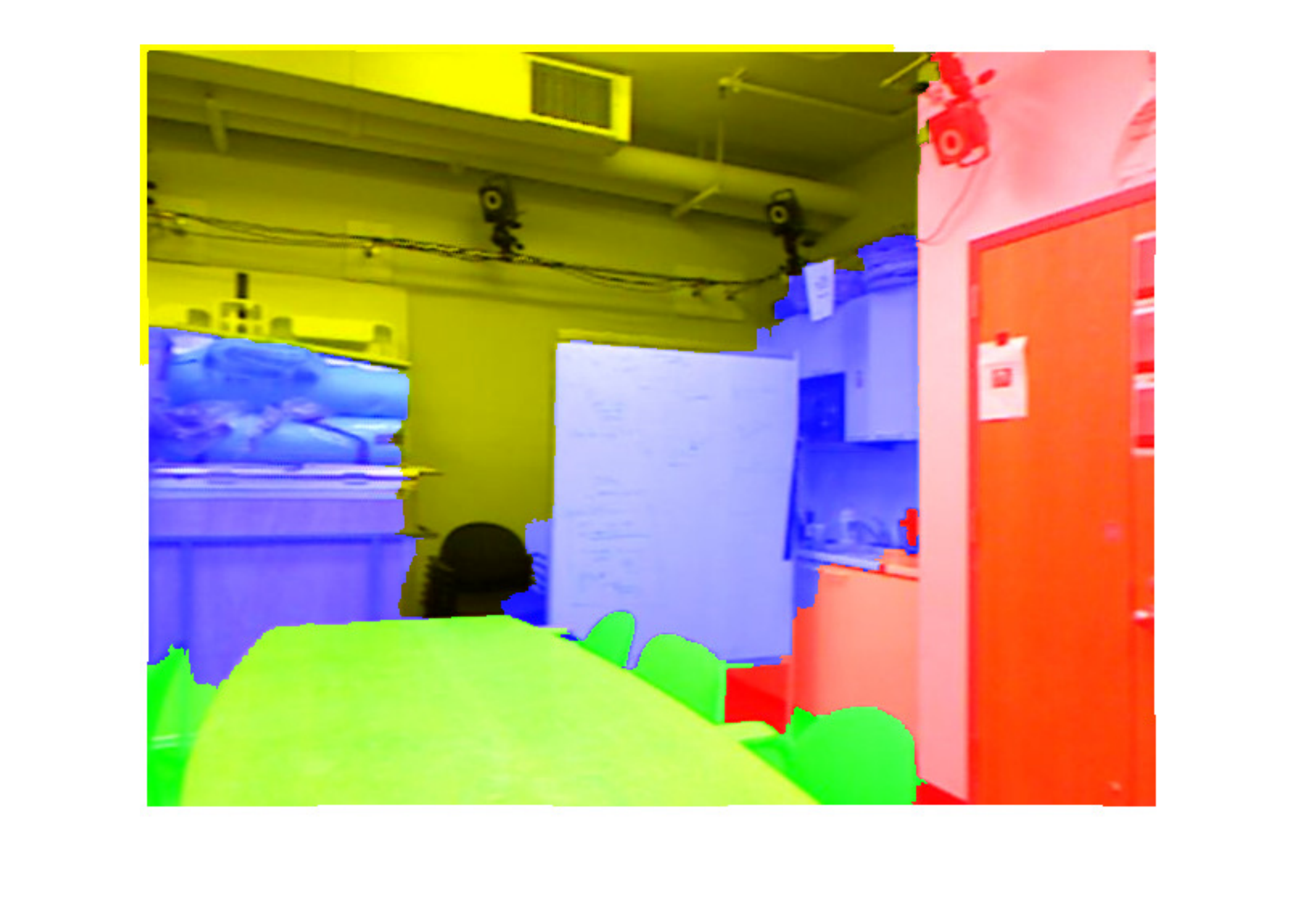}}		
	\subfigure[IHL(2)-SVD ($8.36\%$)]{\label{figure:I5_KH_SVD_GC1}\includegraphics[width=0.3\linewidth]{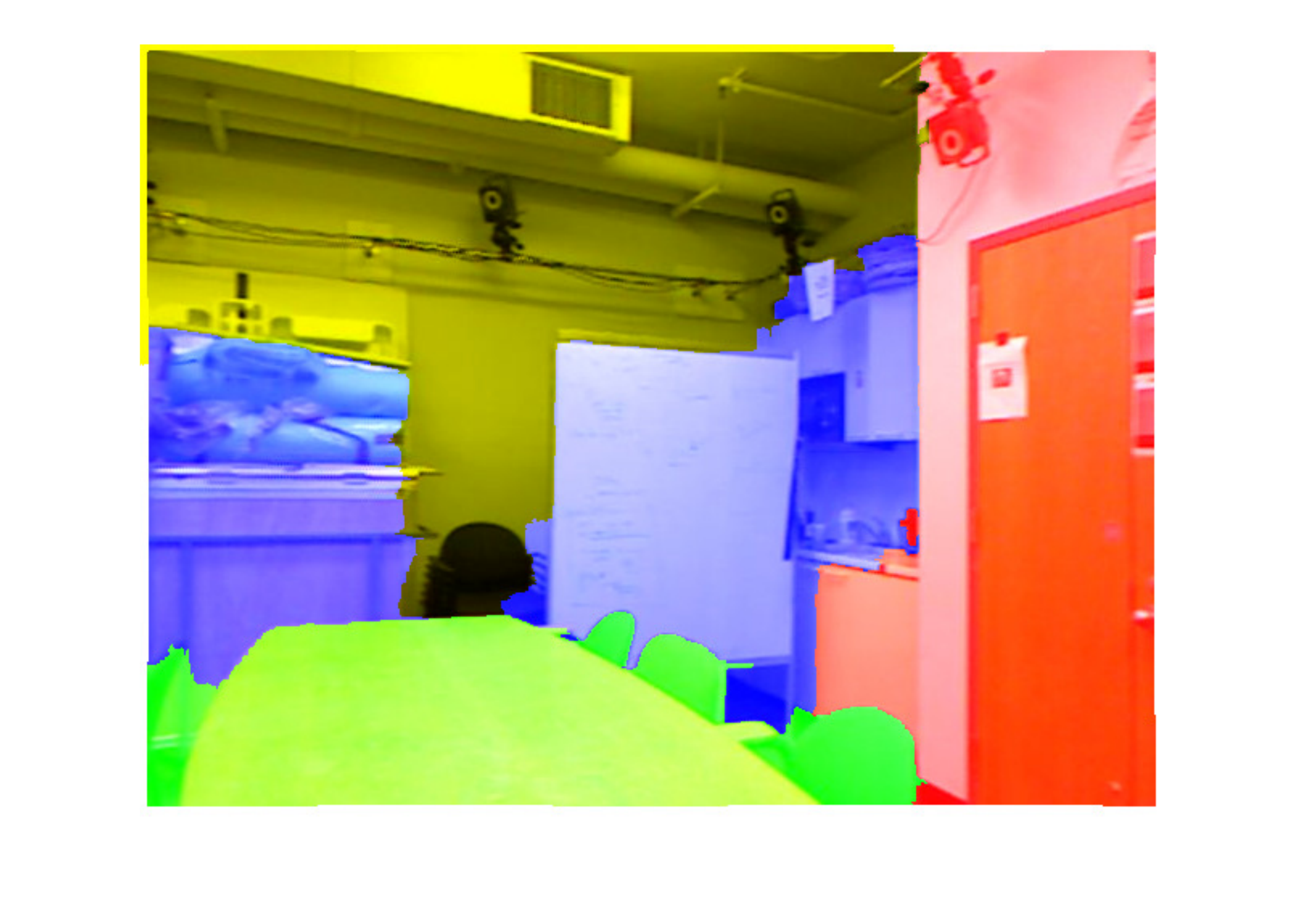}}	
	\subfigure[IHL(1)-REAPER ($8.07\%$)]{\label{figure:I5_KH_REAPER_GC1}\includegraphics[width=0.3\linewidth]{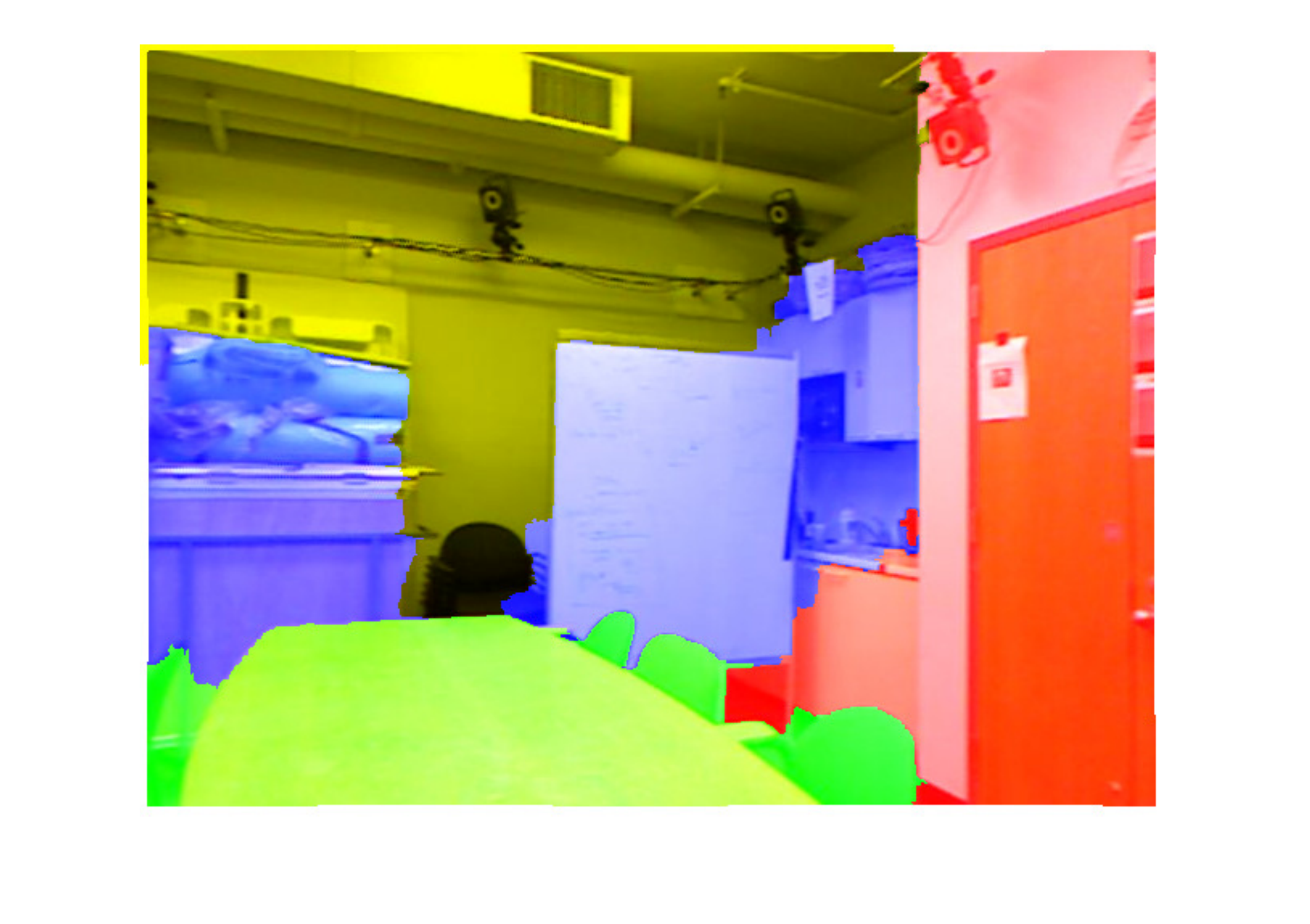}}
	\subfigure[IHL(1)-DPCP-IRLS ($8.36\%$)]{\label{figure:I5_KH_L12_IRLS_GC1}\includegraphics[width=0.3\linewidth]{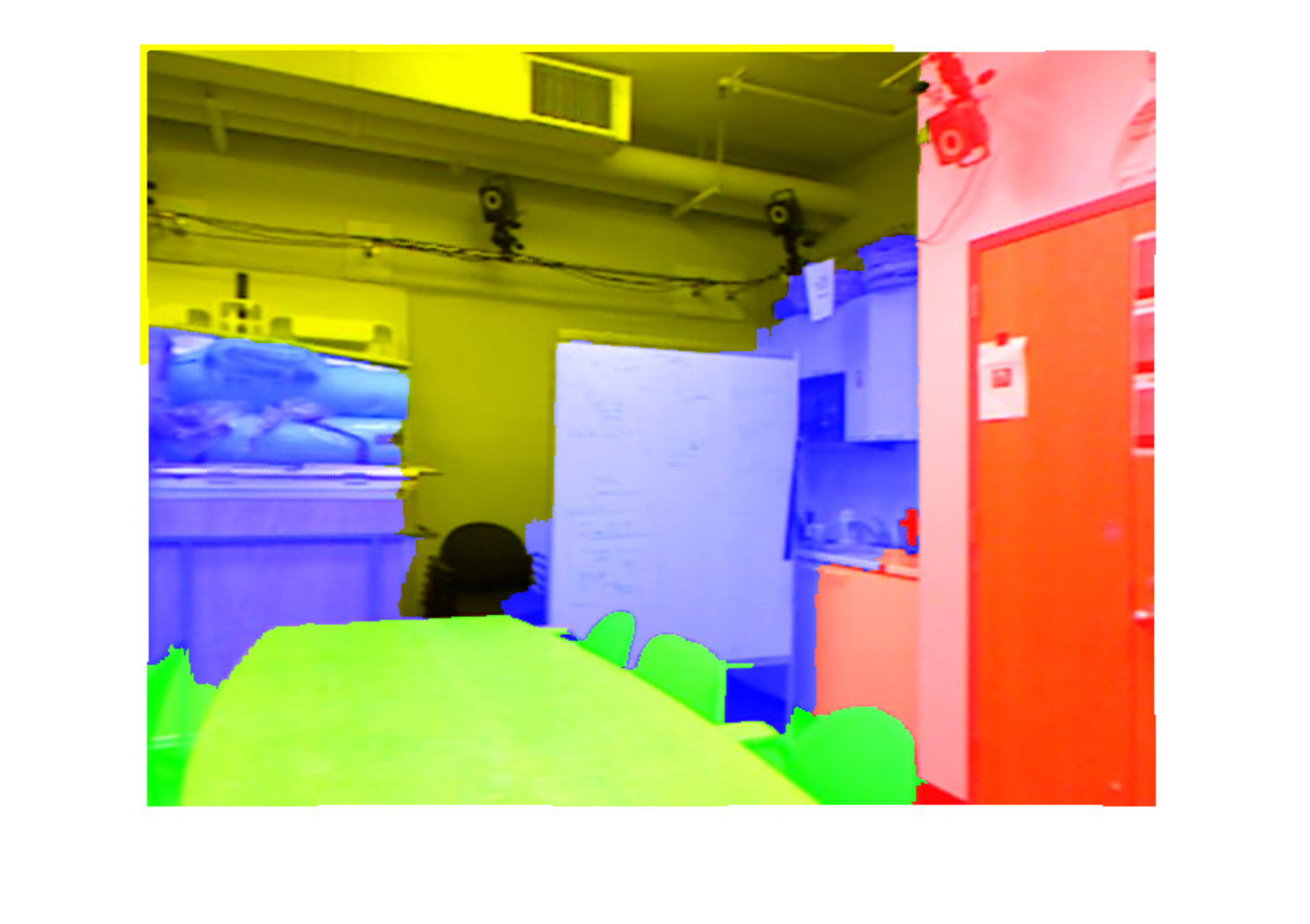}}			
	\caption{Segmentation into planes of image $5$ in dataset NYUdepthV2 with spatial smoothing. Numbers are segmentation errors.}\label{figure:IMAGE5-GC1}
\end{figure} 

 \begin{figure}[t!]
	\centering
	\subfigure[original image]{\label{figure:IMAGE2_GC0}\includegraphics[width=0.3\linewidth]{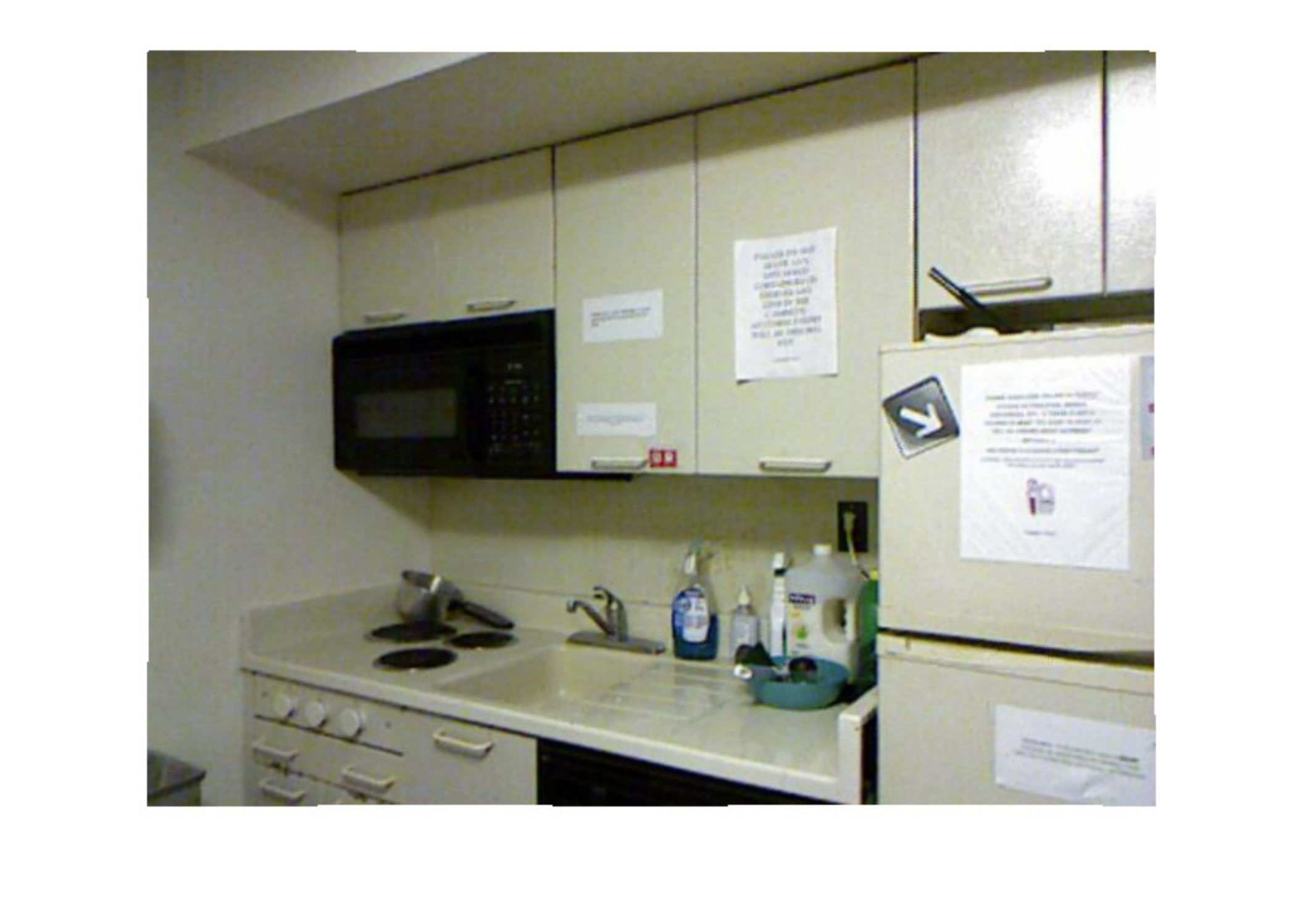}}			\subfigure[annotation]{\label{figure:IMAGE2_AnnotationGC0}\includegraphics[width=0.3\linewidth]{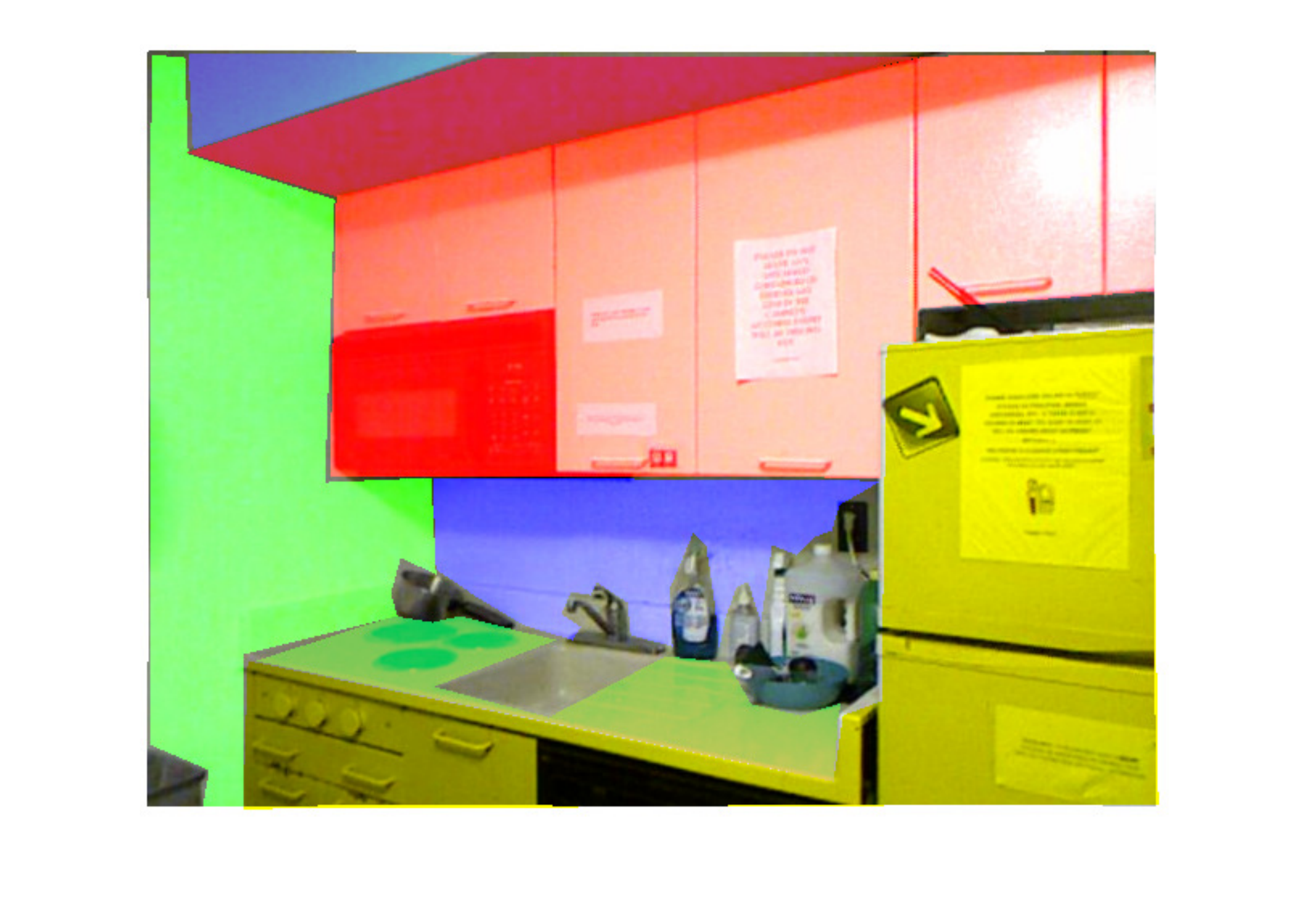}}	\subfigure[IHL-ASC-RANSAC ($23.6\%$)]{\label{figure:I2_ASC_RANSAC_GC0}\includegraphics[width=0.3\linewidth]{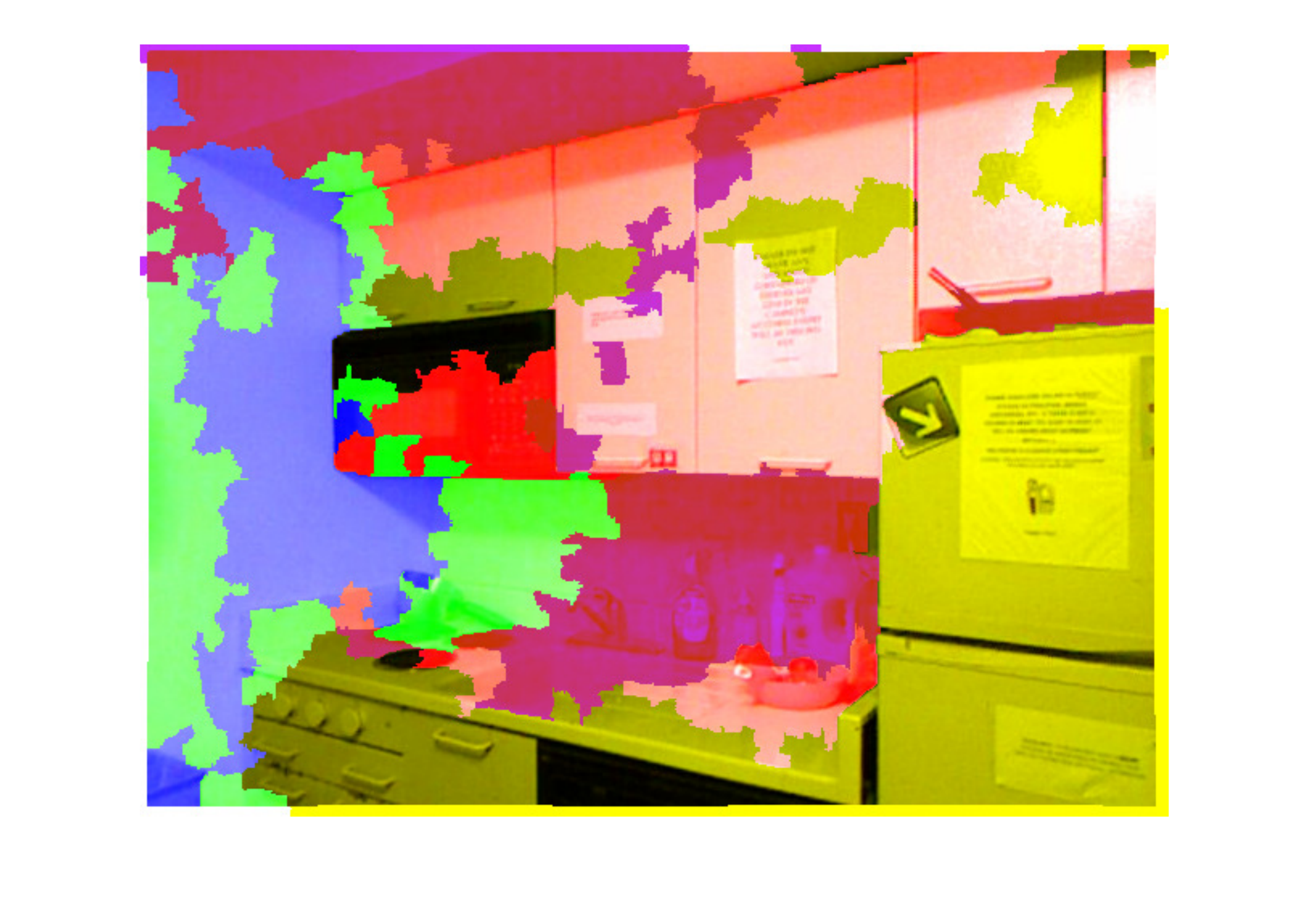}}\subfigure[SHL-RANSAC ($7.9\%$)]{\label{figure:I2_RANSAC_RANSAC_GC0}\includegraphics[width=0.3\linewidth]{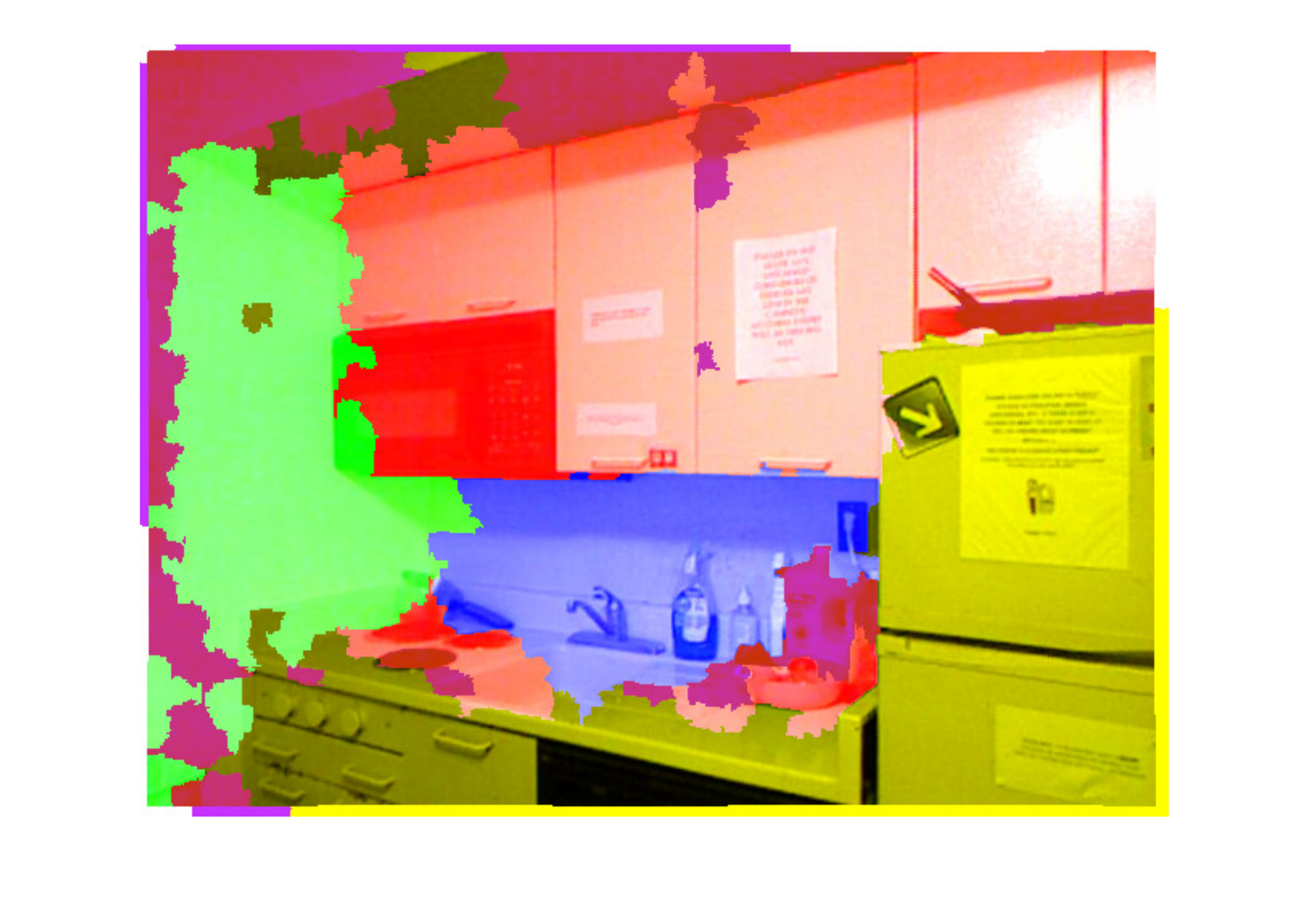}}
	\subfigure[IHL(1)-RANSAC ($8.6\%$)]{\label{figure:I2_KH_RANSAC_GC0}\includegraphics[width=0.3\linewidth]{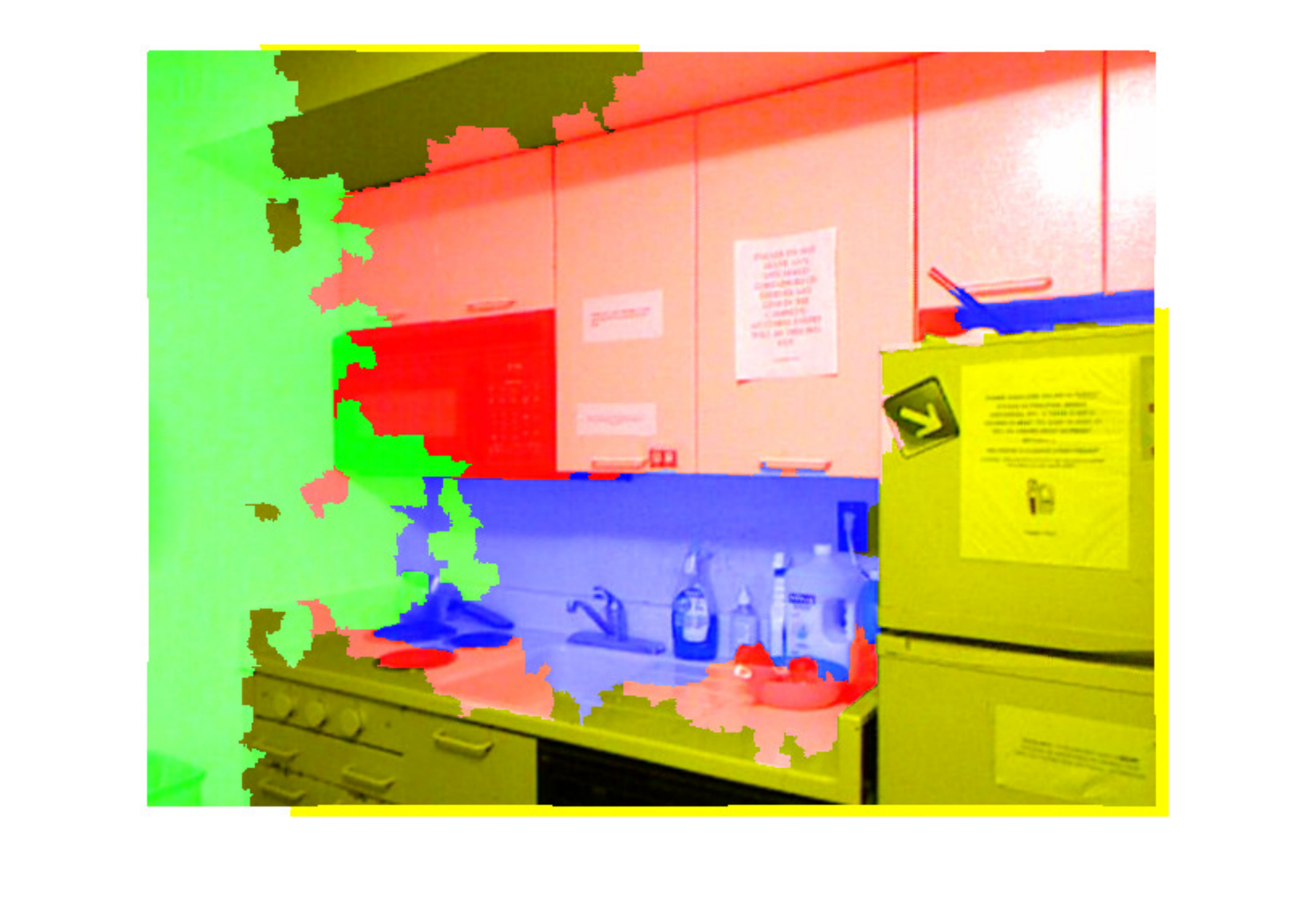}}	
	\subfigure[IHL(1)-DPCP-r-d ($12.24\%$)]{\label{figure:I2_KH_DPCP_GC0}\includegraphics[width=0.3\linewidth]{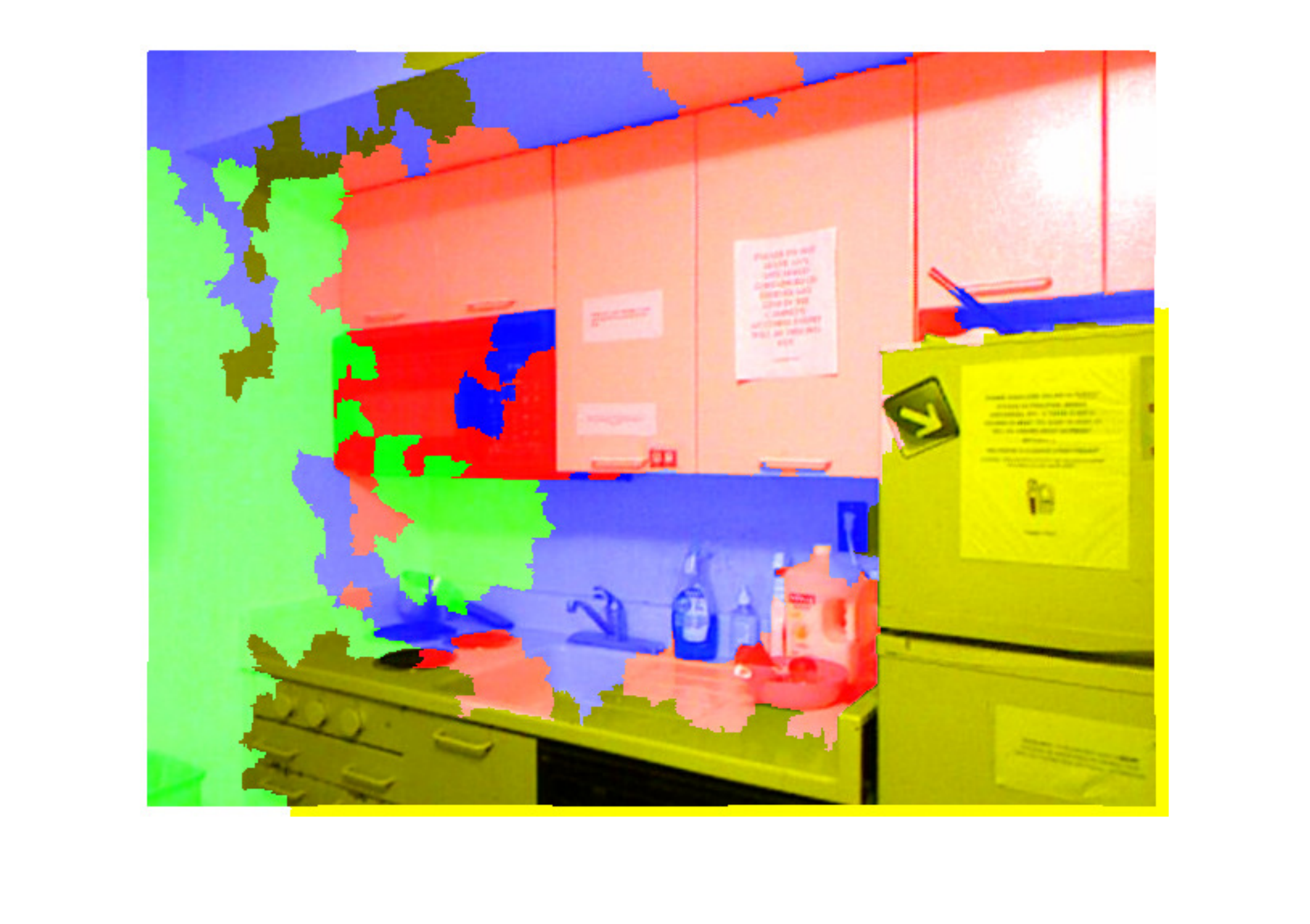}}
		\subfigure[IHL(1)-DPCP-d ($12.73\%$)]{\label{figure:I2_KH_ADM_GC0}\includegraphics[width=0.3\linewidth]{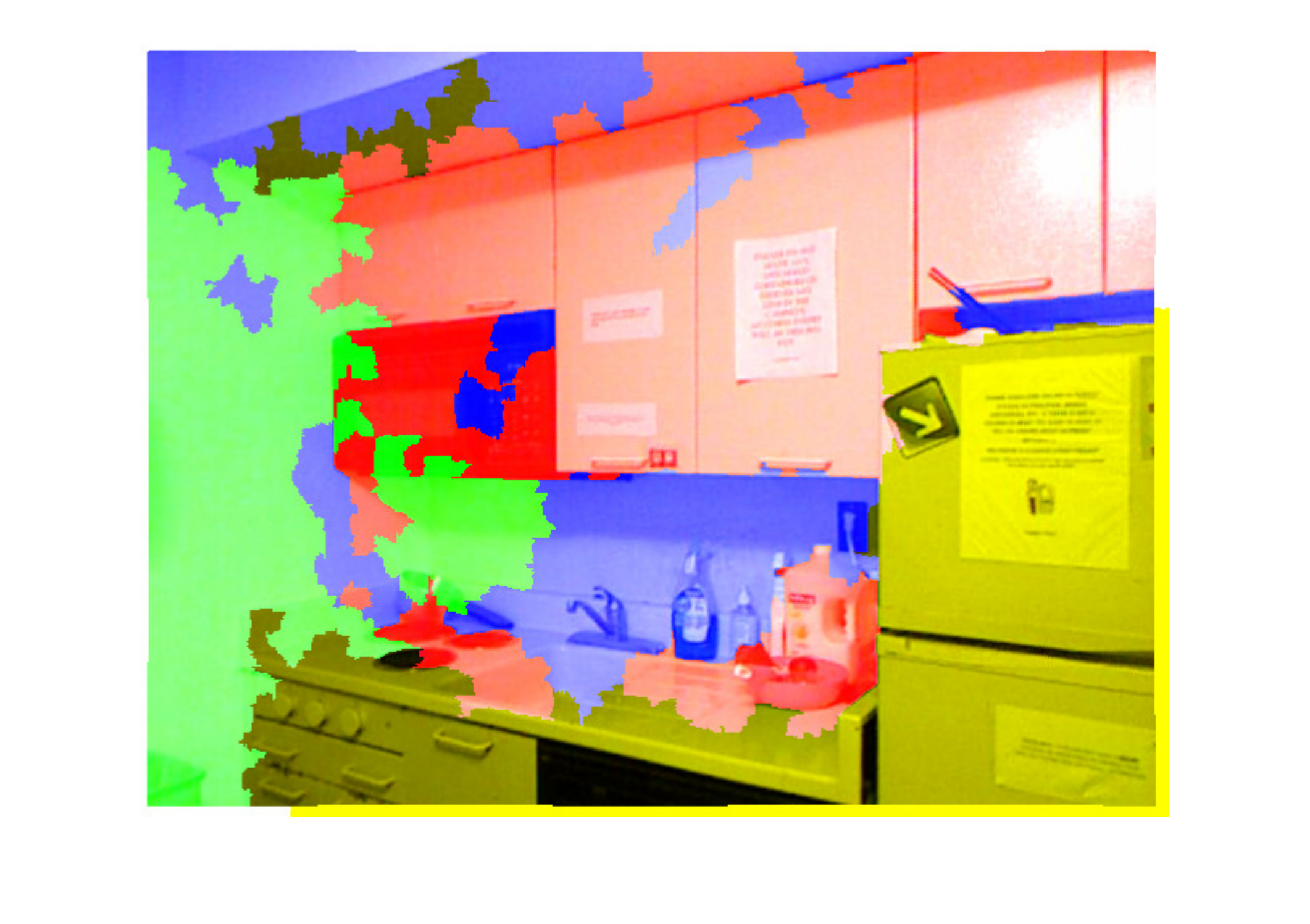}}		
	\subfigure[IHL(2)-SVD ($43.0\%$)]{\label{figure:I2_KH_SVD_GC0}\includegraphics[width=0.3\linewidth]{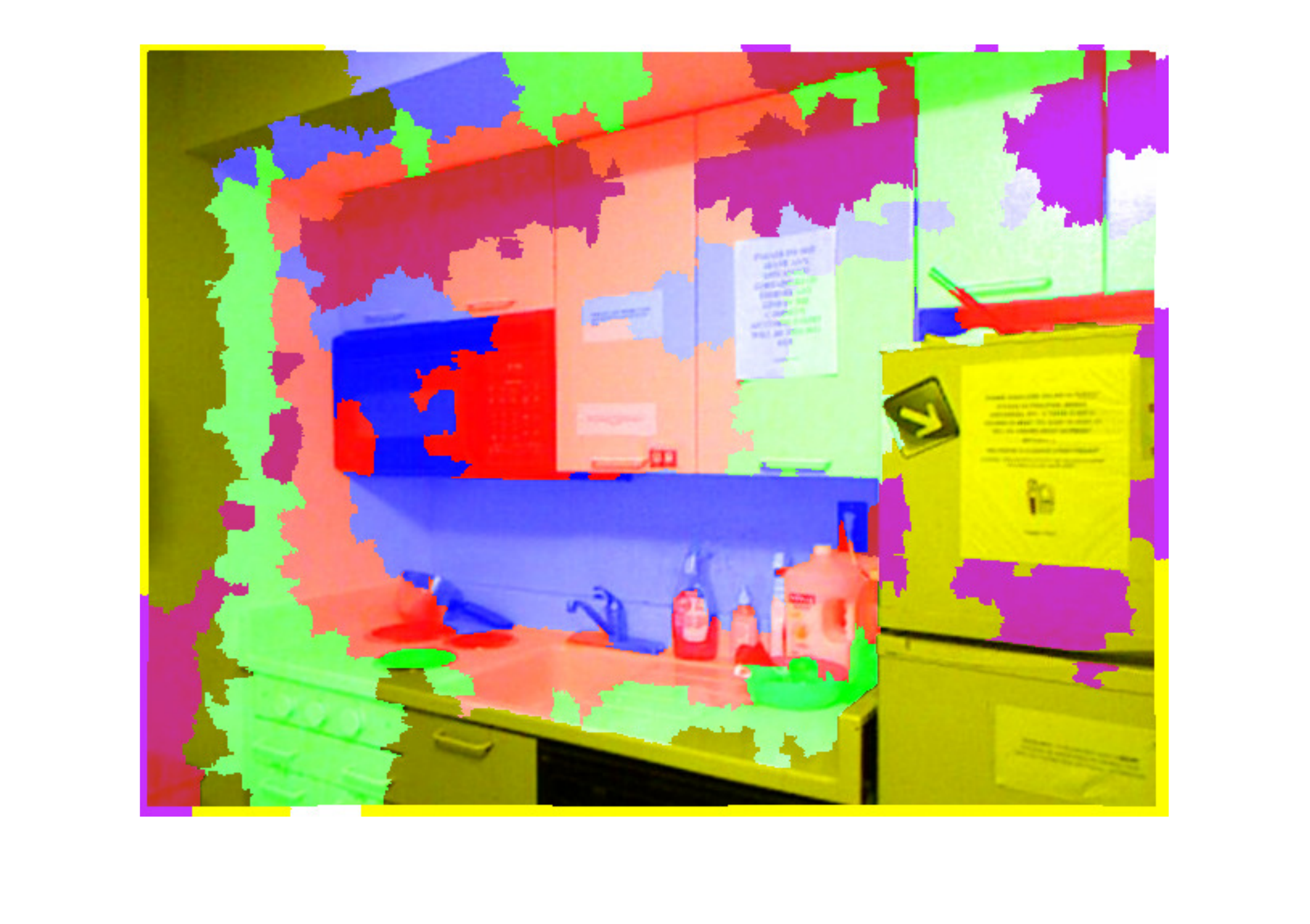}}	
	\subfigure[IHL(1)-REAPER ($22.82\%$)]{\label{figure:I2_KH_REAPER_GC0}\includegraphics[width=0.3\linewidth]{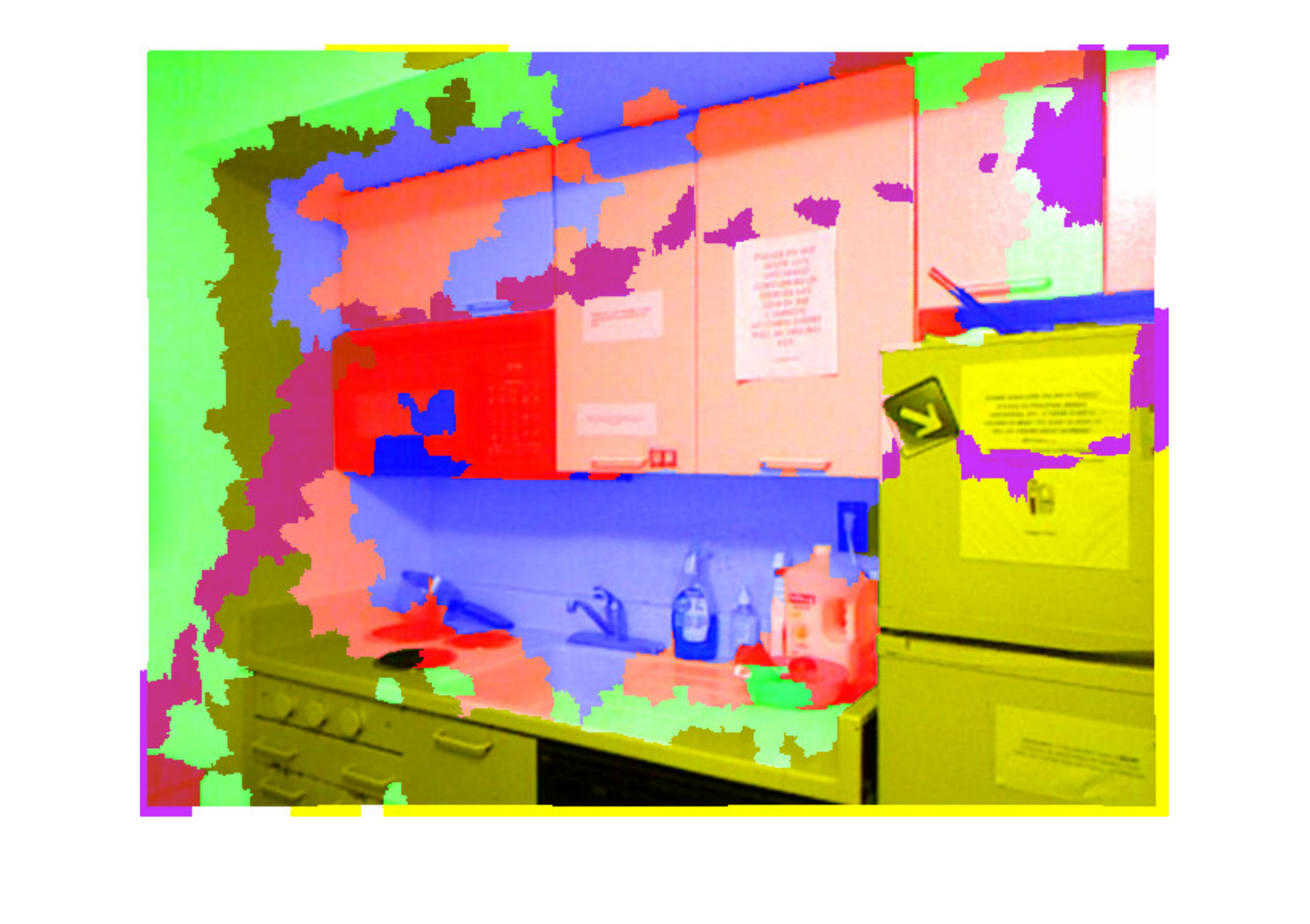}}
	\subfigure[IHL(1)-DPCP-IRLS ($13.9\%$)]{\label{figure:I2_KH_L12_IRLS_GC0}\includegraphics[width=0.3\linewidth]{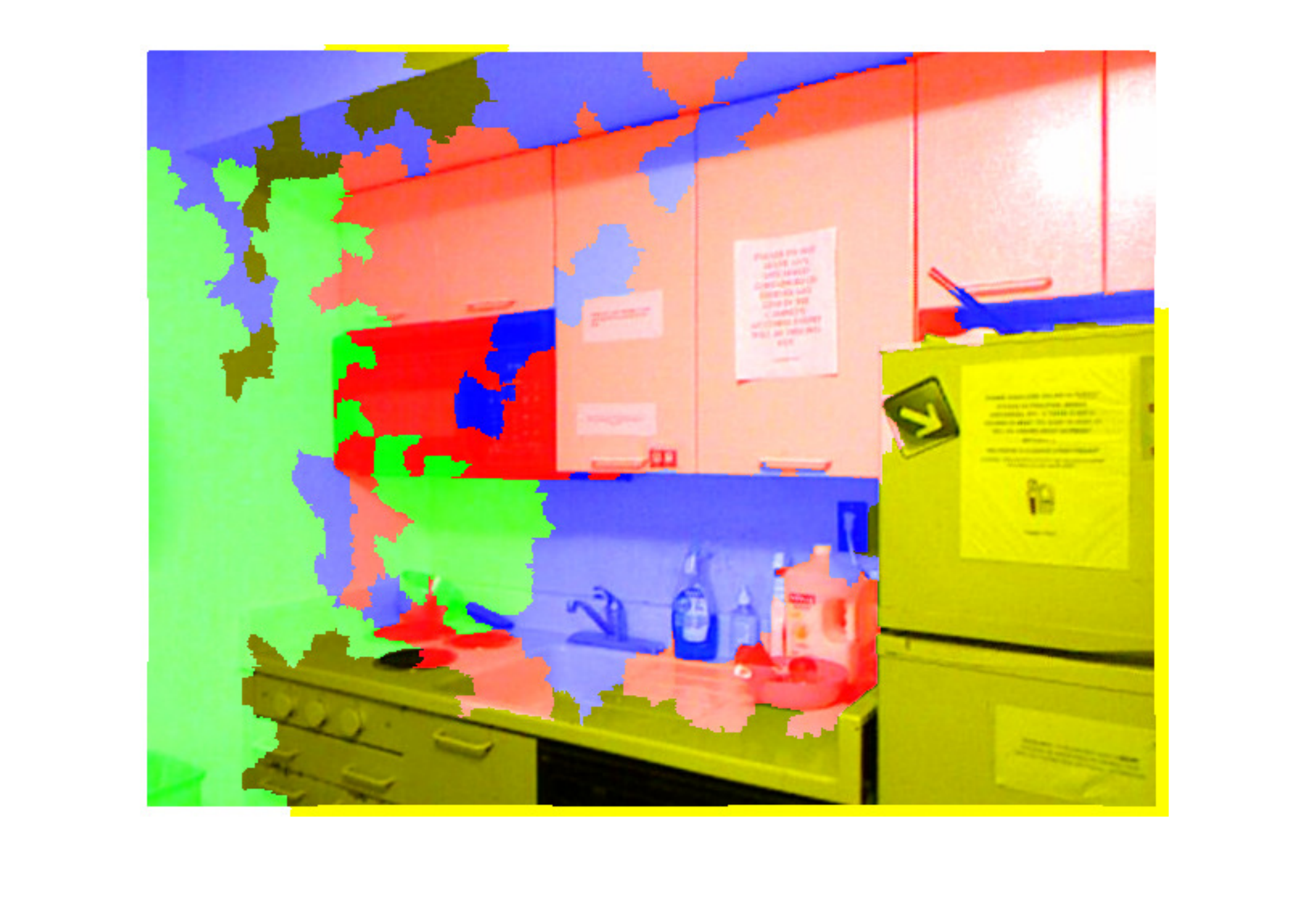}}			
	\caption{Segmentation into planes of image $2$ in dataset NYUdepthV2 without spatial smoothing. Numbers are segmentation errors.}\label{figure:IMAGE2-GC0}
\end{figure}

\begin{figure}[t!]
	\centering
	\subfigure[original image]{\label{figure:OriginalImage}\includegraphics[width=0.3\linewidth]{IMAGE2-eps-converted-to.pdf}}			\subfigure[annotation]{\label{figure:IMAGE2_Annotation}\includegraphics[width=0.3\linewidth]{IMAGE2_Annotation-eps-converted-to.pdf}}\\	\subfigure[IHL-ASC-RANSAC ($12.3\%$)]{\label{figure:I2_ASC_RANSAC_GC1}\includegraphics[width=0.3\linewidth]{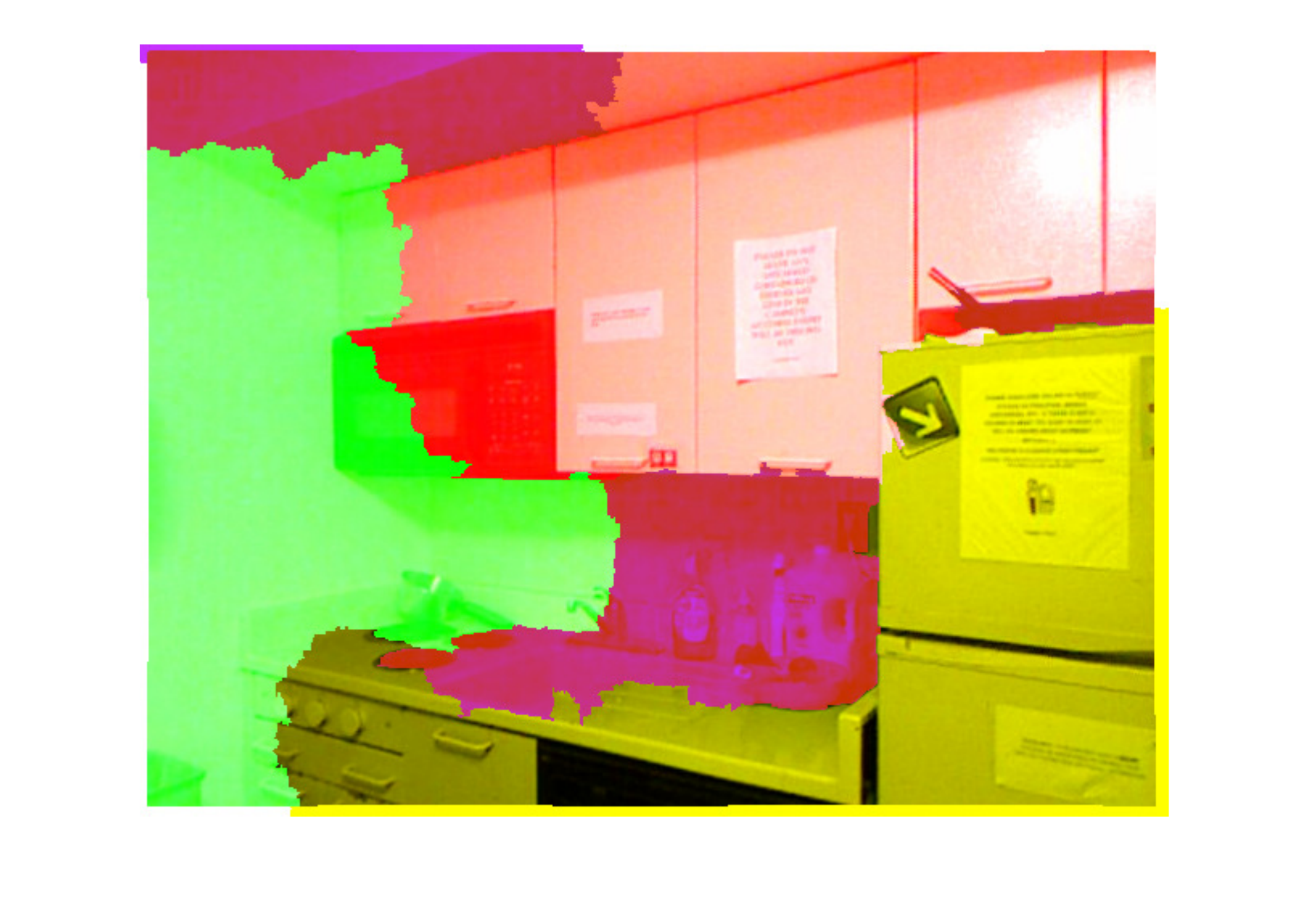}}
	\subfigure[SHL-RANSAC ($5.91\%$)]{\label{figure:I2_RANSAC_RANSAC_GC1}\includegraphics[width=0.3\linewidth]{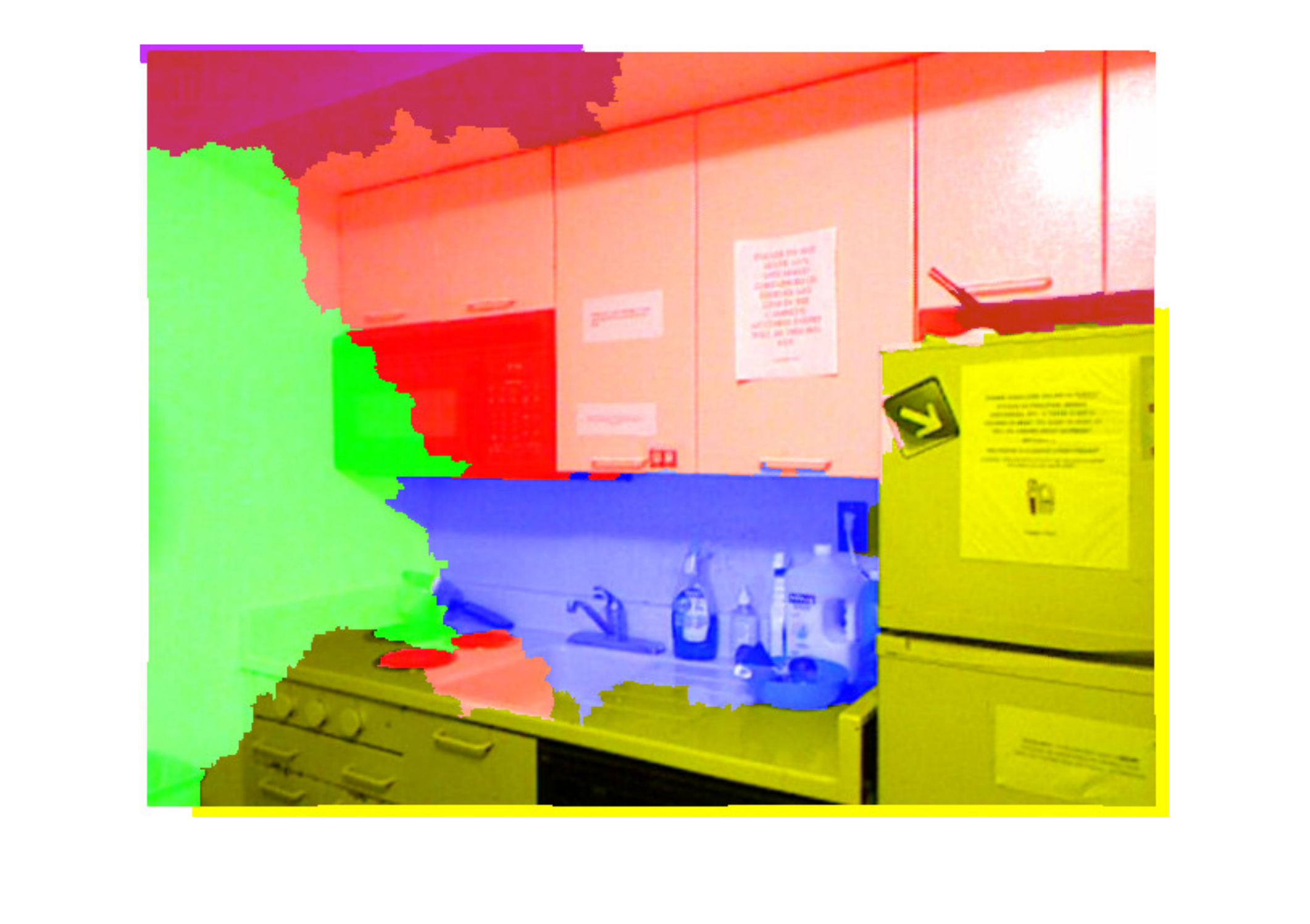}}
	\subfigure[IHL(1)-RANSAC ($9.39\%$)]{\label{figure:I2_KH_RANSAC_GC1}\includegraphics[width=0.3\linewidth]{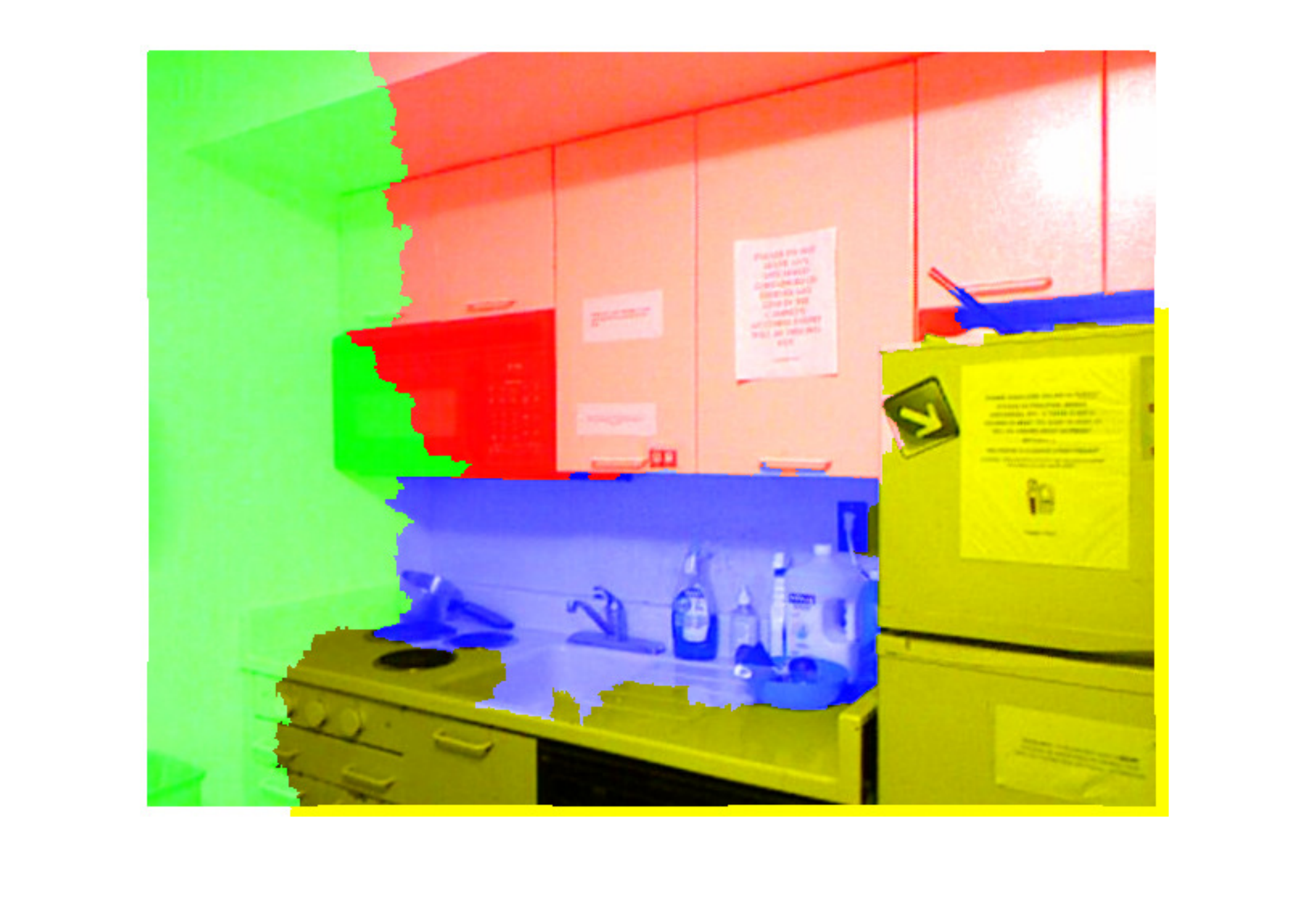}}	
	\subfigure[IHL(1)-DPCP-r-d ($10.05\%$)]{\label{figure:I2_KH_DPCP_GC1}\includegraphics[width=0.3\linewidth]{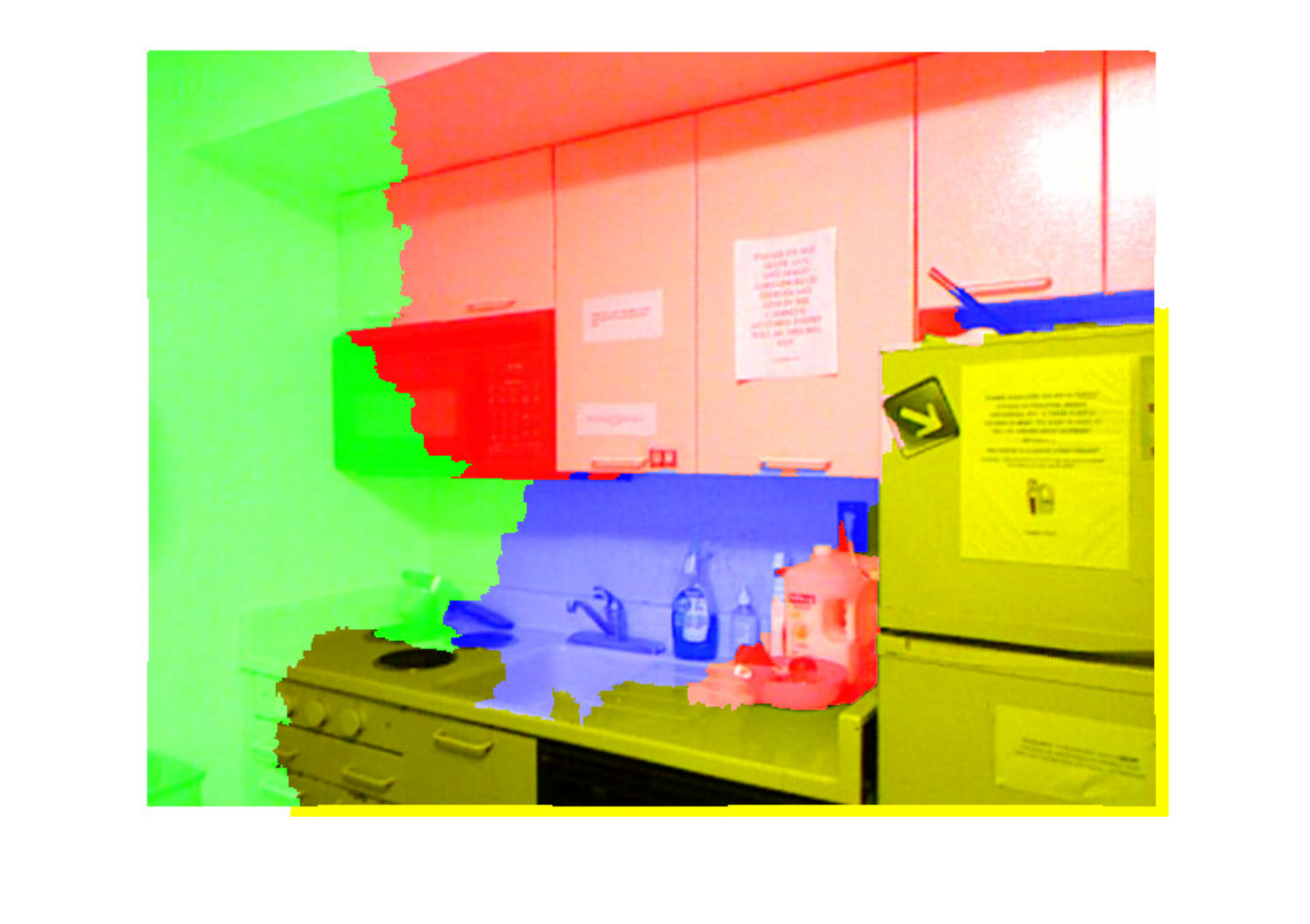}}	
	\subfigure[IHL(1)-DPCP-d ($10.05\%$)]{\label{figure:I2_KH_ADM_GC1}\includegraphics[width=0.3\linewidth]{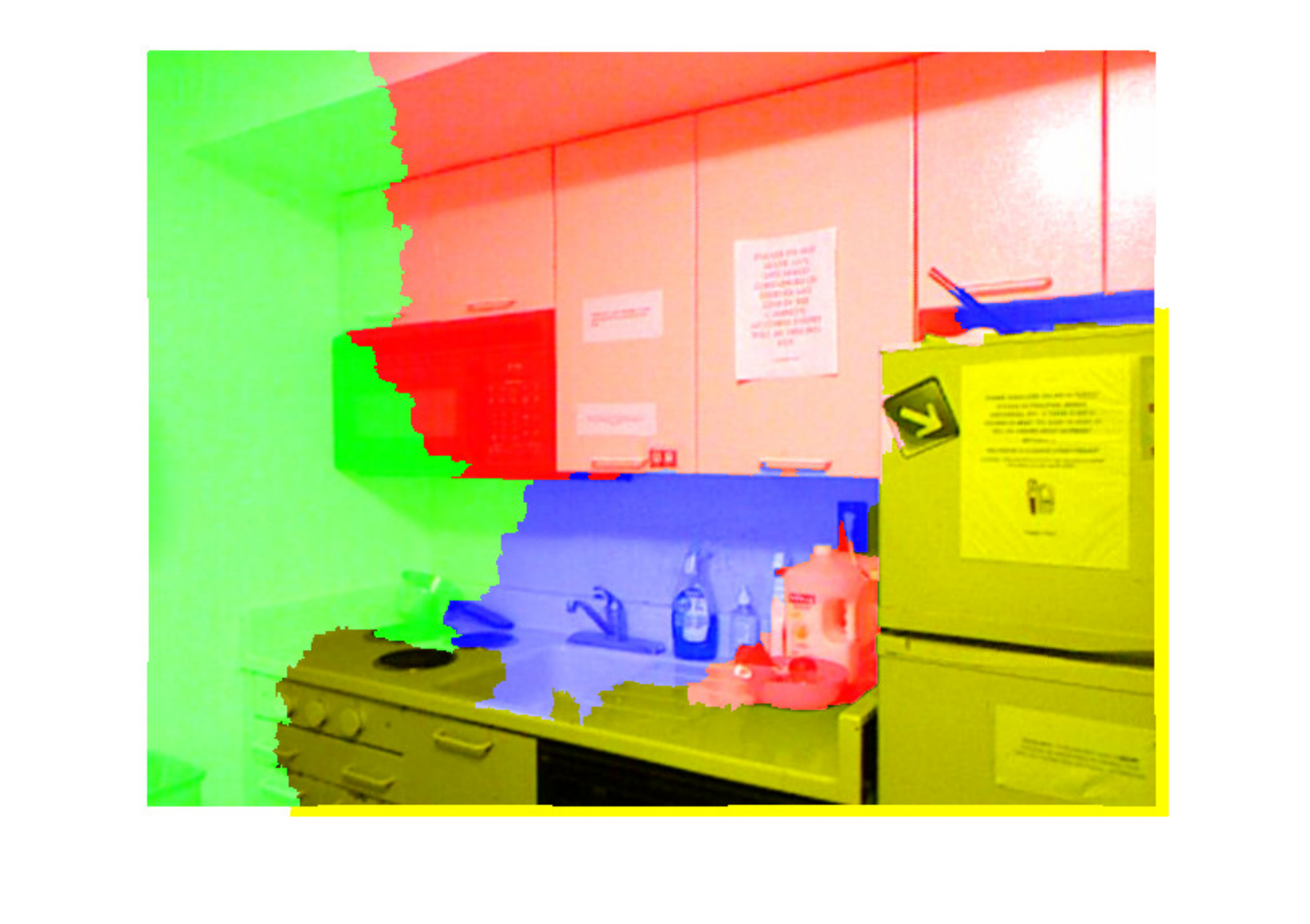}}		
	\subfigure[IHL(2)-SVD ($32.37\%$)]{\label{figure:I2_KH_SVD_GC1}\includegraphics[width=0.3\linewidth]{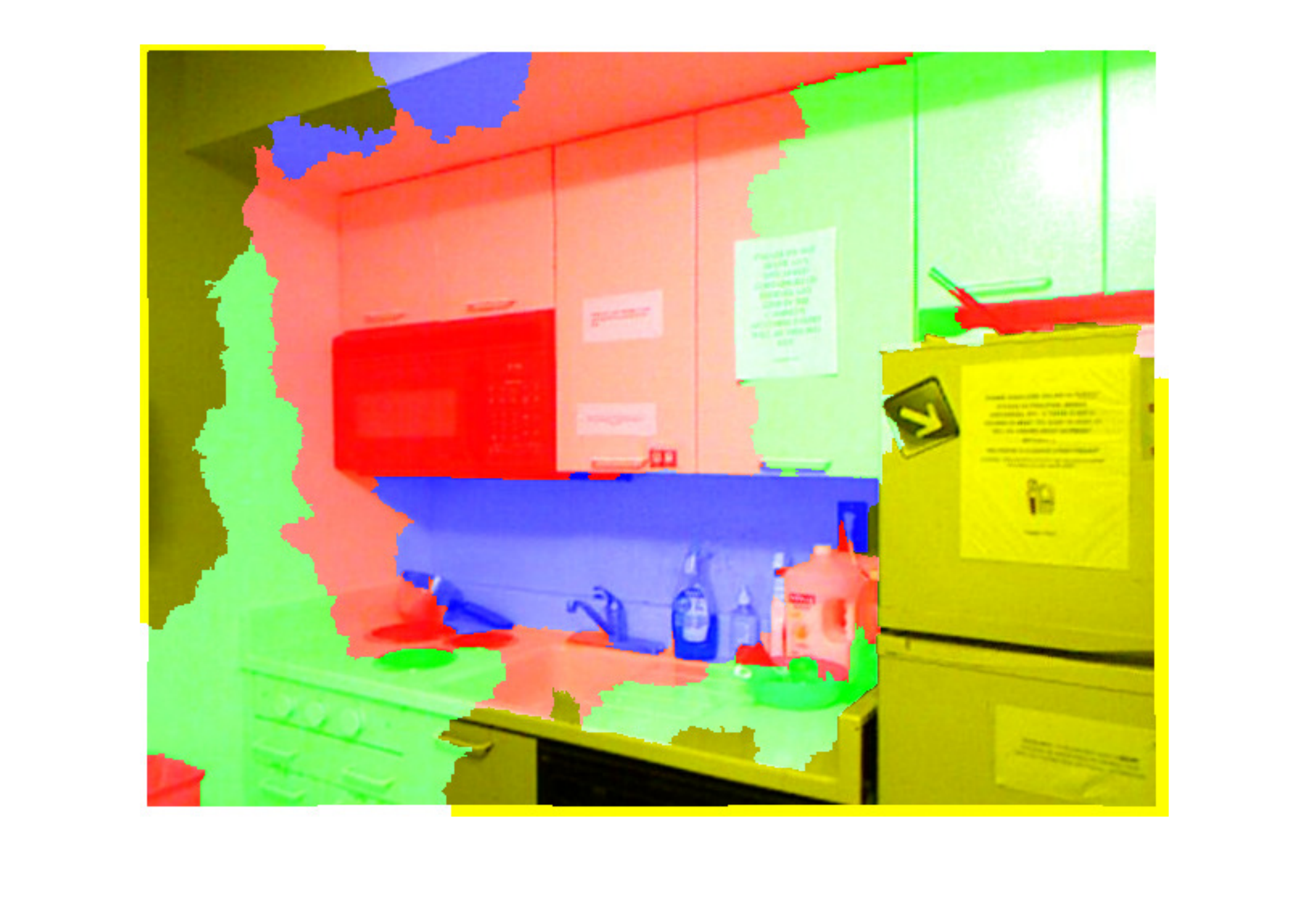}}	
	\subfigure[IHL(1)-REAPER ($13.70\%$)]{\label{figure:I2_KH_REAPER_GC1}\includegraphics[width=0.3\linewidth]{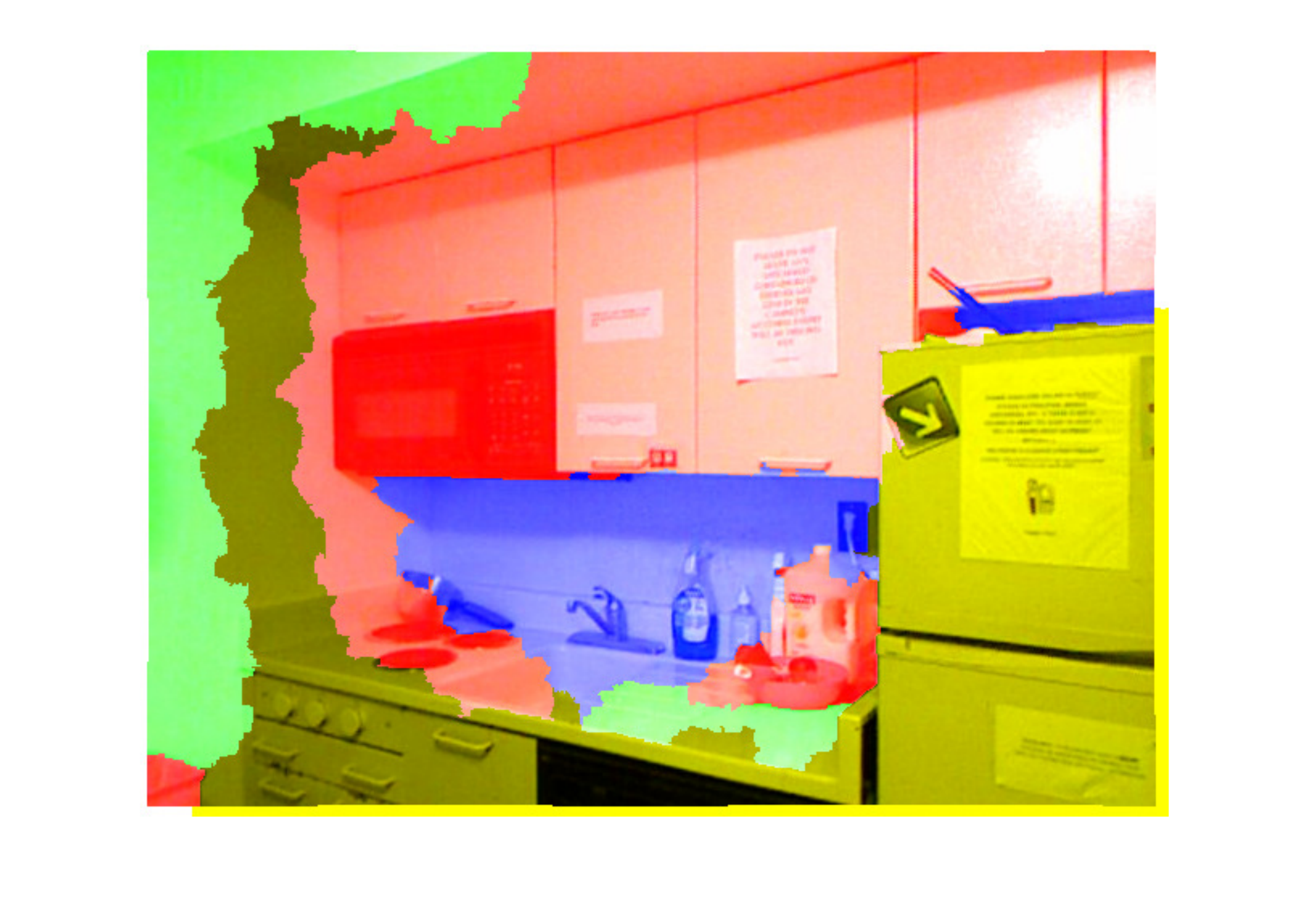}}
	\subfigure[IHL(1)-DPCP-IRLS ($10.0\%$)]{\label{figure:I2_KH_L12_IRLS_GC1}\includegraphics[width=0.3\linewidth]{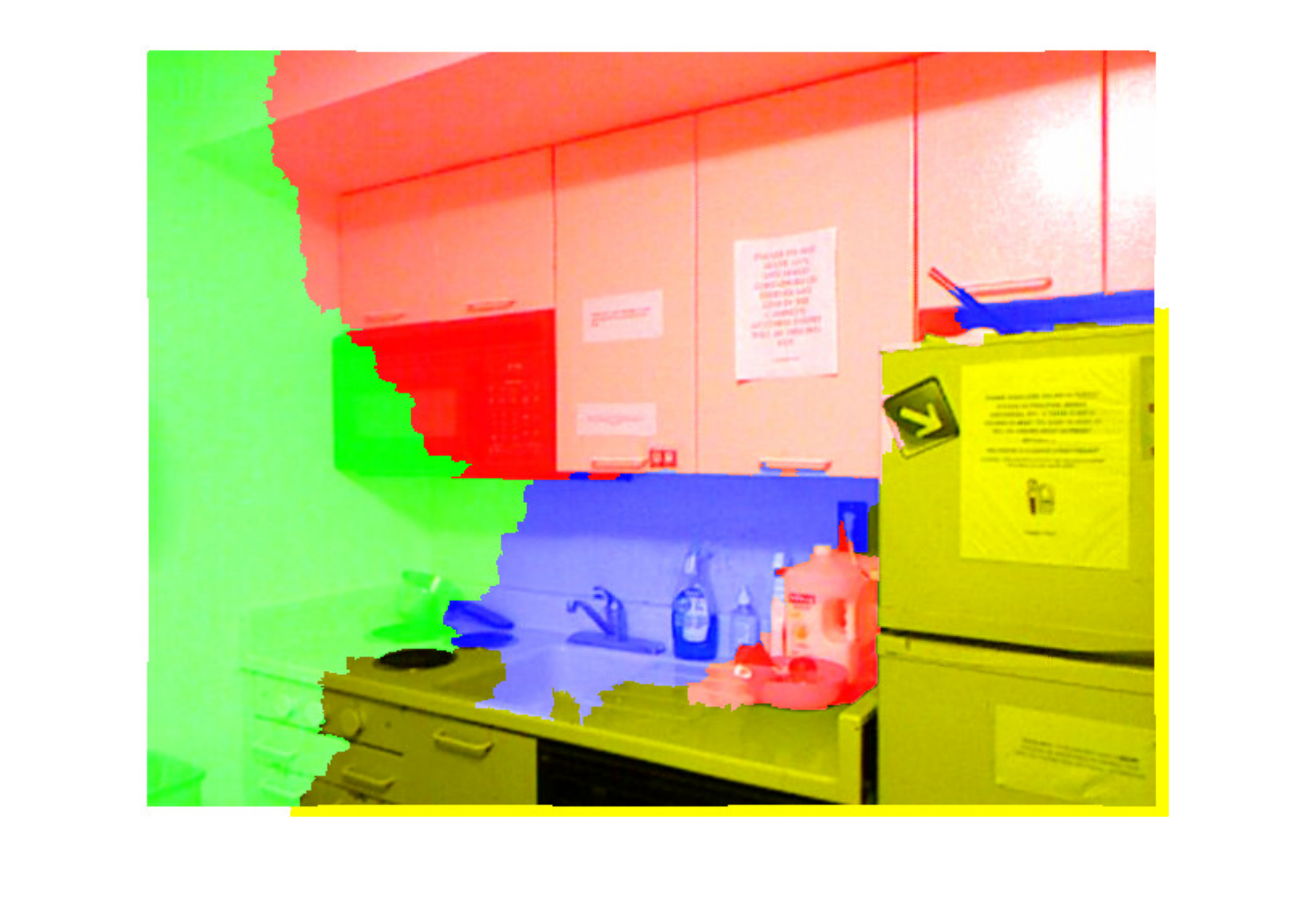}}			
	\caption{Segmentation into planes of image $2$ in dataset NYUdepthV2 with spatial smoothing. Numbers are segmentation errors.}\label{figure:IMAGE2-GC1}
\end{figure} 

\indent \myparagraph{Post-processing} The algorithms described above, are generic hyperplane clustering 
algorithms. On the other hand, we know that nearby points in a $3D$ point cloud have a high 
chance of lying in the same plane, simply because indoor scenes are spatially coherent. Thus
to associate a spatially smooth image segmentation to each algorithm, we use the normal vectors $\b_1,\dots,\b_n$
that the algorithm produced to minimize a Conditional-Random-Field 
\citep{SuttonMcCallum:MITPress06} type of energy function, given by 
\begin{align}
E(y_1,\dots,y_N) := \sum_{j=1}^N d(\b_{y_j},\x_j) + \lambda \sum_{k \in \N_j} w(\x_j,\x_k) \delta (y_j \neq y_k).
\label{eq:CRF}
\end{align} In \eqref{eq:CRF} $y_j \in \left\{1,\dots,n \right\}$ is the plane label of point $\x_j$, 
 $d(\b_{\y_j},\x_j)$ is a unary term that measures the cost of assigning 3D point $\x_j$ to the plane with normal $\b_{\y_j}$, $w(\x_j,\x_k) $ is a pairwise term that measures the similarity between points $\x_j$ and $\x_k$, $\lambda > 0$ is a chosen parameter, $\N_j$ indexes the neighbors of $\x_j$, and $\delta(\cdot)$ is the indicator function. The unary term is defined as $d(\b_{y_j},\x_j) = |\b_{y_j}^\transpose \x_j |$, which is the Euclidean distance from point $\x_j$ to the plane with normal $\b_{y_j}$, and the pairwise term is defined as
\begin{align}
w(\x_j,\x_k) := \text{CB}_{j,k} \, \exp\left({-\frac{\|\x_j - \x_k \|_2^2}{2 \sigma_d^2}}\right),
 \end{align} where $\left\|\x_j - \x_k \right\|_2$ is the Euclidean distance from $\x_j$ to $\x_k$, and CB$_{j,k}$ is the length of the common boundary between superpixels $j$ and $k$. The minimization of the energy function is done via Graph-Cuts \citep{Boykov:PAMI01}.
 
\indent \myparagraph{Parameters} 
For the thresholding parameter of SHL-RANSAC, denoted by $\tau$,
we test the values $0.1,0.01,0.001$. For the parameter $\tau$ 
of IHL(1)-DPCP-d and IHL(1)-DPCP-r-d we test the values $0.1,0.01,0.001$. We also use the same values for the thresholding parameter of SHL-RANSAC, which we also denote by $\tau$. The rest of the parameters of DPCP and REAPER are set as in Section \ref{subsection:ExperimentsSynthetic}.
The convergence accuracy of the IHL algorithms is set to $10^{-3}$.
Moreover, the IHL algorithms are configured to allow for a maximal number
of $10$ random restarts and $100$ iterations per restart, but the overall running time of each IHL algorithm should not exceed $5$ seconds; this latter constraint is also enforced to  SHL-RANSAC and IHL-ASC-RANSAC.

The parameter $\sigma_d$ in \eqref{eq:CRF} is set to the mean distance between $3D$ points representing neighboring superpixels. The parameter $\lambda$ in \eqref{eq:CRF} is set to the inverse of twice the maximal 
row-sum of the pairwise matrix $\left\{w(\x_j,\x_k) \right\}$; this is to achieve a balance between unary and pairwise terms. 

\indent \myparagraph{Evaluation} Recall that none of the algorithms considered in this section is explicitly configured to detect outliers, rather it assigns each and every point to some plane. 
Thus we compute the clustering error as follows. First, we restrict the output labels of each algorithm to the indices of the dominant ground-truth cluster, and measure how far are these restricted labels from being identical (identical labels would signify that the algorithm identified perfectly well the plane); this is done by computing the ratio of the restricted labels that are different from the dominant label. Then the dominant
label is \emph{disabled} and a similar error is computed for the second dominant ground-truth plane, and so on. Finally the clustering error is taken to be the weighted sum of the errors associated with each dominant plane, with the weights proportional to the size of the ground-truth cluster. 

We evaluate the algorithms in several different settings. First, we test how well the algorithms can cluster
the data into the first $n$ dominant planes, where $n$ is $2,4$ or equal to the total number of annotated
planes for each scene. Second, we report the clustering error before spatial smoothing, i.e., without refining
the clustering by minimizing \eqref{eq:CRF}, and after spatial smoothing. The former case is denoted by
GC(0), indicating that no graph-cuts takes place, while the latter is indicated by GC(1). Finally,
to account for the randomness in RANSAC as well as the random initialization of IHL, we average the clustering errors over $10$ independent experiments.

\indent \myparagraph{Results}  The results are reported in Table \ref{table:NYU}, where the clustering error of 
the methods that depend on $\tau$ is shown for each value of $\tau$ individually, as well as 
averaged over all three values. 

Notice that spatial smoothing improves the clustering accuracy considerably (GC(0) vs GC(1)); e.g., the clustering error of the traditional IHL(2)-SVD for all ground-truth planes drops from $26.22\%$  to $16.71\%$, when spatial smoothing is employed. Moreover, as it is intuitively expected, the clustering error increases when fitting more planes (larger $n$) is required; e.g., for the GC(1) case, the error of IHL(2)-SVD increases from $9.96\%$ for $n=2$ to $16.71\%$ for all planes ($n \approx 9$).

Next, we note the remarkable insensitivity of the DPCP-based methods IHL(1)-DPCP-d and IHL(1)-DPCP-r-d to variations of the parameter $\tau$. In sharp contrast, SHL-RANSAC is very sensitive to $\tau$;  e.g., for $\tau=0.01$ and $n=2$, SHL-RANSAC is the best method with $6.27\%$, while for $\tau=0.1, 0.001$ its error increases to $16.01\%$ and $15.26\%$ respectively. Interestingly, the hybrid IHL(1)-RANSAC is significantly more robust; in fact, in terms of clustering error it is the best method. On the other hand, 
by looking at the lower part of Table \ref{table:NYU}, we conclude that on average the rest of the methods have very similar behavior.

Figs. \ref{figure:IMAGE5-GC0}-\ref{figure:IMAGE2-GC1} show some segmentation results for two scenes,
with and without spatial smoothing. It is remarkable that, even though the segmentation in
Fig. \ref{figure:IMAGE5-GC0} contains artifacts, which are expected due to the lack of spatial smoothing, its quality is actually very good, in that most of the dominant planes have been correctly identified. Indeed, applying spatial smoothing  (Fig. \ref{figure:IMAGE5-GC1}) further drops the error for most methods only by about $1\%$.


\section{Conclusions}

We studied theoretically and algorithmically the application of the recently proposed single subspace learning
method \emph{Dual Principal Component Pursuit (DPCP)} to the problem of clustering data that lie close to
a union of hyperplanes. We gave theoretical conditions under which the non-convex cosparse problem 
associated with DPCP admits a unique (up to sign) global solution equal to the normal vector of 
the underlying dominant hyperplane. We proposed sequential and parallel hyperplane clustering methods,
which on synthetic data dramatically improved upon state-of-the-art methods such as RANSAC or REAPER, while were competitive to the latter in the case of learning unions of $3D$ planes from real Kinect data. 
Future research directions include analysis in the presence of noise, generalizations to unions of subspaces of arbitrary dimensions, even more scalable algorithms, and applications to deep networks.

\section*{Acknowledgement}
This work was supported by grants NSF 1447822 and NSF 1618637. The authors thank Prof. Daniel P. Robinson of the Applied Mathematics and Statistics department of the Johns Hopkins University for many useful conversations, as well as for his many comments that helped improve this manuscript. 

\appendix
\section{Results on Problems \eqref{eq:ell1} and \eqref{eq:ConvexRelaxations} following \cite{Spath:Numerische87}} \label{appendix:Spath}

In this Section we state three results that are important for our mathematical
analysis, already known in \cite{Spath:Numerische87}; detailed proofs can be
found in \cite{Tsakiris:DPCP-ArXiv17}. Let $\bY$ be a $D \times N$ matrix of full rank $D$. Then we have the following.

\begin{lem} \label{lem:NonConvexMaximalInterpolation}
	Any global solution $\b^*$ to $\min_{\b^\transpose \b=1} \left\|\bY^\transpose \b \right\|_1$, must be orthogonal to $(D-1)$ linearly independent points of $\bY$.
\end{lem}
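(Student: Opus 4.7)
The plan is to argue by contradiction: if $\b^*$ were orthogonal to at most $D-2$ linearly independent columns of $\bY$, I shall exhibit a one-parameter curve on the sphere along which the $\ell_1$ objective is strictly smaller than $\|\bY^\transpose \b^*\|_1$, contradicting global optimality.

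\textbf{Setup and construction.} Let $S := \{j \in [N] : \y_j^\transpose \b^* = 0\}$ and $k := \dim \Span\{\y_j : j \in S\}$, and suppose for contradiction $k \le D-2$. Set $W := (\Span\{\y_j : j \in S\})^\perp$, so that $\dim W = D-k \ge 2$ and $\b^* \in W$. Since $\dim(W \cap (\b^*)^\perp) = \dim W - 1 \ge 1$, I can pick a nonzero $\bv \in W$ with $\bv^\transpose \b^* = 0$, and consider the curve
\[
\tilde{\b}(t) := \frac{\b^* + t\bv}{\sqrt{1 + t^2 \|\bv\|_2^2}} \in \Sp^{D-1},
\]
which passes through $\b^*$ at $t=0$.

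\textbf{Evaluating the objective along the curve.} For $j \in S$ one has $\y_j \perp \b^*$ by the definition of $S$, and $\y_j \perp \bv$ since $\y_j \in W^\perp$ while $\bv \in W$; hence $\y_j^\transpose \tilde{\b}(t) \equiv 0$. For $j \notin S$ the scalar $\y_j^\transpose \b^*$ is nonzero, so by continuity $\Sign(\y_j^\transpose \tilde{\b}(t)) = s_j := \Sign(\y_j^\transpose \b^*)$ for all $|t|$ sufficiently small. Writing $\bg := \sum_{j \notin S} s_j \y_j$ and noting $\bg^\transpose \b^* = \|\bY^\transpose \b^*\|_1$, for such small $|t|$ the objective admits the exact closed form
\[
\bigl\|\bY^\transpose \tilde{\b}(t)\bigr\|_1 \;=\; \bg^\transpose \tilde{\b}(t) \;=\; \frac{\|\bY^\transpose \b^*\|_1 + t\,\bg^\transpose \bv}{\sqrt{1 + t^2 \|\bv\|_2^2}}.
\]

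\textbf{Deriving the contradiction.} Observe that $\|\bY^\transpose \b^*\|_1 > 0$, since $\bY$ having rank $D$ forces $\bY^\transpose \b^* \ne \0$ whenever $\b^* \ne \0$. If $\bg^\transpose \bv \ne 0$, pick $t$ of small magnitude with sign opposite to $\bg^\transpose \bv$; the numerator then lies strictly in $(0,\|\bY^\transpose \b^*\|_1)$ while the denominator is at least $1$, so the ratio is strictly less than $\|\bY^\transpose \b^*\|_1$. If instead $\bg^\transpose \bv = 0$, then for every $t \ne 0$ the expression equals $\|\bY^\transpose \b^*\|_1 / \sqrt{1 + t^2 \|\bv\|_2^2} < \|\bY^\transpose \b^*\|_1$. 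Either way the global optimality of $\b^*$ is violated, forcing $k \ge D-1$, as claimed.

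\textbf{The main obstacle.} The crucial structural observation is the choice of perturbation direction $\bv \in W \cap (\b^*)^\perp$ rather than an arbitrary tangent vector to the sphere: orthogonality to each $\y_j$ with $j \in S$ freezes the nondifferentiable terms of the $\ell_1$ function along the curve, while orthogonality to $\b^*$ forces the only $t$-dependence, after the sphere normalization is taken into account, to come through the Pythagorean factor $\sqrt{1 + t^2 \|\bv\|_2^2}$, which strictly shrinks the objective. Such a $\bv$ exists precisely when $\dim W \ge 2$, which is the negation of the desired conclusion; this is what turns the structural assumption into a genuine strict decrease.
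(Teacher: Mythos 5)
Your proof is correct, and it is essentially the same argument as the one the paper relies on (the paper defers this lemma to Sp\"ath's work and the DPCP arXiv paper, where the proof is exactly this kind of perturbation: move along a direction orthogonal to both $\b^*$ and the points already annihilated, so the signs of the nonzero residuals are frozen and the spherical normalization factor forces a strict decrease unless $D-1$ linearly independent orthogonality constraints are active). No gaps: the sign-preservation for small $|t|$, the positivity of $\left\|\bY^\transpose \b^*\right\|_1$ from $\Rank(\bY)=D$, and the case split on $\bg^\transpose \bv$ are all handled properly.
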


\begin{lem} \label{lem:LPMaximalInterpolation}
	Problem $\min_{\b^\transpose \hat{\bn}_k=1} \left\|\bY^\transpose \b \right\|_1$ admits a computable solution $\bn_{k+1}$ that is orthogonal to $(D-1)$ linearly independent points of $\bY$. 
\end{lem}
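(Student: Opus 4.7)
The plan is to reformulate the problem as a standard-form linear program, invoke the fact that the simplex method terminates at a basic feasible solution (BFS), and then translate the BFS structure into the desired statement about orthogonality.

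First I would rewrite the problem $\min_{\b^\transpose \hat{\bn}_k=1} \|\bY^\transpose \b\|_1$ in LP standard form by introducing slack variables $\y^+,\y^- \in \Re^N_{\ge 0}$ and split $\b = \b^+ - \b^-$ with $\b^\pm \ge 0$, giving
\begin{align*}
\min \mathbf{1}^\transpose(\y^+ + \y^-) \quad \text{s.t.} \quad \bY^\transpose(\b^+-\b^-) - \y^+ + \y^- = \0,\ \hat{\bn}_k^\transpose(\b^+-\b^-) = 1,\ \b^\pm,\y^\pm \ge 0.
\end{align*}
I would then check that the LP is feasible (take $\b = \hat{\bn}_k/\|\hat{\bn}_k\|_2^2$) and that the objective is bounded below (by zero); since $\bY$ has rank $D$, the objective is in fact coercive in $\b$, so the minimum is attained. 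Hence the simplex method returns an optimal basic feasible solution $\bn_{k+1}$.

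Next, I would translate the BFS condition geometrically. It is cleaner to think of the original formulation: the feasible set is the affine hyperplane $\mathcal{H} := \{\b \in \Re^D : \b^\transpose \hat{\bn}_k = 1\}$, which is $(D-1)$-dimensional, and the objective $\|\bY^\transpose \b\|_1 = \sum_j |\y_j^\transpose \b|$ is piecewise linear with kinks along the hyperplanes $\{\y_j^\transpose \b = 0\}$. These kink hyperplanes, together with $\mathcal{H}$, induce a polyhedral subdivision of $\mathcal{H}$, and a BFS of the LP reformulation corresponds exactly to a vertex of this subdivision (this correspondence is the crux of the argument and where I expect the only real bookkeeping).

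Finally, I would count active constraints at such a vertex $\bn_{k+1}$. Because $\mathcal{H}$ has dimension $D-1$, a vertex of the subdivision is cut out by at least $D-1$ independent kink hyperplanes being active, i.e.\ there exist indices $j_1,\dots,j_{D-1}$ with $\y_{j_i}^\transpose \bn_{k+1} = 0$ such that the linear system
\begin{align*}
\y_{j_1}^\transpose \b = \cdots = \y_{j_{D-1}}^\transpose \b = 0,\qquad \hat{\bn}_k^\transpose \b = 1
\end{align*}
has a unique solution. Uniqueness forces the $D \times D$ matrix $[\y_{j_1}\ \cdots\ \y_{j_{D-1}}\ \hat{\bn}_k]$ to be invertible, which in particular implies that $\y_{j_1},\dots,\y_{j_{D-1}}$ are linearly independent. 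Since these are columns of $\bY$ orthogonal to $\bn_{k+1}$, this yields the claim. The only nontrivial step is verifying that the BFS returned by the simplex method really corresponds to such a vertex (rather than a point on a higher-dimensional optimal face), which follows from the standard fact that simplex steps move between basic feasible solutions; cycling can be avoided with any standard anti-cycling rule.
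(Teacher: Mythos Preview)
Your proposal is correct and is precisely the classical simplex/basic-feasible-solution argument underlying this result. Note that the paper itself does not supply a proof of this lemma: it is stated in Appendix~\ref{appendix:Spath} with a pointer to \cite{Spath:Numerische87} and to \cite{Tsakiris:DPCP-ArXiv17} for the details, and those references use essentially the route you describe (recast as a linear program, invoke that the simplex method terminates at a vertex, and read off the $D-1$ active kink hyperplanes).

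Two minor remarks on execution. First, since the hat in $\hat{\bn}_k$ already denotes unit $\ell_2$-norm in this paper, feasibility is witnessed simply by $\b=\hat{\bn}_k$. Second, the step you flag as ``the only real bookkeeping'' does require a little care once you split $\b=\b^+-\b^-$: the lifted feasible polyhedron now has extra vertices, and degenerate bases can make the naive variable count inconclusive. The cleanest way to close the gap is to observe that on the optimal face one necessarily has $\y^+_j\y^-_j=0$ and $\b^+_i\b^-_i=0$, so the projection $(\b^+,\b^-,\y^+,\y^-)\mapsto \b^+-\b^-$ restricts to a linear bijection from the optimal face onto the optimal set $P^*\subset\mathcal{H}$; linear bijections between convex sets send extreme points to extreme points, so the simplex vertex projects to an extreme point of $P^*$. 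Your final paragraph then applies verbatim: an extreme point of $P^*$ must lie at a $0$-cell of the arrangement of kink hyperplanes in $\mathcal{H}$ (the arrangement is essential because $\bY$ has rank $D$), which yields the $D-1$ linearly independent columns of $\bY$ orthogonal to $\bn_{k+1}$.
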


\begin{lem} \label{lem:LPconvergence}
	Suppose that for each problem $\min_{\b^\transpose \hat{\bn}_k=1} \left\|\bY^\transpose \b \right\|_1$, a solution $\bn_{k+1}$ is chosen such that $\bn_{k+1}$ is orthogonal to $D-1$ linearly independent points of $\bY$, in accordance with Lemma \ref{lem:LPMaximalInterpolation}. Then the sequence $\left\{\bn_k\right\}$ converges to a critical point of problem $\min_{\b^\transpose \b=1} \left\|\bY^\transpose \b \right\|_1$ in a finite number of steps.
\end{lem}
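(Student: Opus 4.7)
The plan is to prove the lemma in three stages: first establish a monotonicity property of the objective along the iterates, then combine it with a finiteness property coming from Lemma \ref{lem:LPMaximalInterpolation} to force termination, and finally identify the terminal iterate as a critical point of the non-convex problem via the KKT conditions of the underlying linear program. The driving intuition is that each LP step is a descent step for the original objective that can only select one of finitely many candidate directions, so a descent-plus-finite-options argument buys finite termination.

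The first step is to verify that $\|\bY^\transpose \hat{\bn}_{k+1}\|_1 \le \|\bY^\transpose \hat{\bn}_k\|_1$. Since $\hat{\bn}_k$ itself is feasible for the LP $\min_{\b^\transpose \hat{\bn}_k = 1} \|\bY^\transpose \b\|_1$ (as $\|\hat{\bn}_k\|_2 = 1$), the LP optimum $\bn_{k+1}$ satisfies $\|\bY^\transpose \bn_{k+1}\|_1 \le \|\bY^\transpose \hat{\bn}_k\|_1$. Combined with $\|\bn_{k+1}\|_2 \ge 1$, an immediate consequence of Cauchy--Schwarz applied to $\bn_{k+1}^\transpose \hat{\bn}_k = 1$, normalization gives the desired inequality, with equality throughout only if $\hat{\bn}_k$ is itself an LP optimum and $\|\bn_{k+1}\|_2 = 1$.

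Next I would invoke Lemma \ref{lem:LPMaximalInterpolation}: for every $k \ge 1$ the direction $\hat{\bn}_k$ is, up to a sign, the unique unit normal to some $(D-1)$-subset of linearly independent columns of $\bY$, so the normalized iterates live inside a finite set $\mathcal{F}$ of at most $2\binom{N}{D-1}$ candidates. The sequence $\{\|\bY^\transpose \hat{\bn}_k\|_1\}$ is non-increasing and takes values in the finite set $\{\|\bY^\transpose \hat{\bn}\|_1 : \hat{\bn} \in \mathcal{F}\}$, hence it must stabilize at some value $J^*$ after finitely many iterations. Passing from stabilization of the objective to stabilization of the iterate itself is where I expect the main technical friction: one must exclude the possibility that $\hat{\bn}_k$ cycles between distinct elements of $\mathcal{F}$ that share the value $J^*$. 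I would handle this by appealing to a deterministic anti-cycling pivoting rule for the simplex method (Bland's rule, or equivalently a lexicographic tie-breaker on $\mathcal{F}$), which guarantees that no vertex of the feasible polytope can be revisited with the same LP objective; hence the sequence eventually becomes constant at some $\hat{\bn}^*$.

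To close, at the fixed point $\hat{\bn}^* = \hat{\bn}_k = \hat{\bn}_{k+1}$, the LP $\min_{\b^\transpose \hat{\bn}^* = 1} \|\bY^\transpose \b\|_1$ attains its minimum at $\b = \hat{\bn}^*$ itself. Writing the KKT conditions for this LP, with the $\ell_1$ objective expressed through the subdifferential $\Sgn$, yields a vector $\bxi^* \in \Sgn(\bY^\transpose \hat{\bn}^*)$ and a multiplier $\lambda^* \in \Re$ with $\bY \bxi^* + \lambda^* \hat{\bn}^* = \0$, which is exactly the first-order critical-point condition for $\min_{\b^\transpose \b = 1} \|\bY^\transpose \b\|_1$. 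The hard parts are therefore the no-cycling argument in step three and the careful translation of LP optimality into non-convex subdifferential optimality; everything else reduces to bookkeeping.
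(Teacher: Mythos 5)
Your overall architecture (monotone decrease of the normalized objective, finitely many candidate iterates coming from Lemma \ref{lem:LPMaximalInterpolation}, and identification of the fixed point as a critical point via the LP optimality/subdifferential condition $\bY\bxi^*+\lambda^*\hat{\bn}^*=\0$) is the same descent-plus-finiteness argument used by \cite{Spath:Numerische87} and in \cite{Tsakiris:DPCP-ArXiv17}, to which this paper defers. The first and last stages of your write-up are correct.

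The genuine gap is the middle step, where you pass from stabilization of the objective value to stabilization of the iterate by ``appealing to a deterministic anti-cycling pivoting rule (Bland's rule).'' This is both unwarranted and off-target: the lemma assumes only that each $\bn_{k+1}$ is orthogonal to $D-1$ linearly independent columns of $\bY$, not that a particular pivoting rule is used; and Bland's rule prevents cycling of bases \emph{within a single simplex run}, whereas the cycling you need to exclude is recurrence of the \emph{outer} iterates across successive LPs, each of which has a different constraint $\b^\transpose\hat{\bn}_k=1$ and hence a different feasible set, so the anti-cycling guarantee simply does not apply. The correct argument is simpler and is already latent in your first step: if $\|\bY^\transpose\hat{\bn}_{k+1}\|_1=\|\bY^\transpose\hat{\bn}_k\|_1$, then since $\|\bY^\transpose\hat{\bn}_{k+1}\|_1 = \|\bY^\transpose\bn_{k+1}\|_1/\|\bn_{k+1}\|_2 \le \|\bY^\transpose\hat{\bn}_k\|_1/\|\bn_{k+1}\|_2 \le \|\bY^\transpose\hat{\bn}_k\|_1$ and $\|\bY^\transpose\hat{\bn}_k\|_1>0$ (as $\bY$ has rank $D$), both inequalities must be tight, forcing $\|\bn_{k+1}\|_2=1$; together with $\bn_{k+1}^\transpose\hat{\bn}_k=1$, the Cauchy--Schwarz equality case gives $\bn_{k+1}=\hat{\bn}_k$. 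Hence any step that changes the iterate strictly decreases the objective, and since the objective values of the iterates for $k\ge 1$ lie in a finite set, the iterate can change only finitely many times; no pivoting rule is needed, and no cycling among distinct points of equal value can occur. With that replacement the proof goes through as you outlined.
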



\bibliography{DPCP-HC-ArXiv17.bbl}

\begin{thebibliography}{58}
\providecommand{\natexlab}[1]{#1}
\providecommand{\url}[1]{\texttt{#1}}
\expandafter\ifx\csname urlstyle\endcsname\relax
  \providecommand{\doi}[1]{doi: #1}\else
  \providecommand{\doi}{doi: \begingroup \urlstyle{rm}\Url}\fi

\bibitem[Bako(2011)]{Bako:Automatica11}
L.~Bako.
\newblock Identification of switched linear systems via sparse optimization.
\newblock \emph{Automatica}, 47\penalty0 (4):\penalty0 668--677, 2011.

\bibitem[Boult and Brown(1991)]{Boult:WMU91}
T.E. Boult and L.G. Brown.
\newblock Factorization-based segmentation of motions.
\newblock In \emph{IEEE Workshop on Motion Understanding}, pages 179--186,
  1991.

\bibitem[Boykov et~al.(2001)Boykov, Veksler, and Zabih]{Boykov:PAMI01}
Y.~Boykov, O.~Veksler, and R.~Zabih.
\newblock Fast approximate energy minimization via graph cuts.
\newblock \emph{{IEEE} Transactions on Pattern Analysis and Machine
  Intelligence}, 23\penalty0 (11):\penalty0 1222--1239, 2001.

\bibitem[Bradley and Mangasarian(2000)]{Bradley:JGO00}
P.~S. Bradley and O.~L. Mangasarian.
\newblock k-plane clustering.
\newblock \emph{Journal of Global Optimization}, 16\penalty0 (1):\penalty0
  23--32, 2000.
\newblock ISSN 0925-5001.

\bibitem[Cand\`es et~al.(2008)Cand\`es, Wakin, and Boyd]{Candes:JFAA08}
E.~Cand\`es, M.~Wakin, and S.~Boyd.
\newblock Enhancing sparsity by reweighted $\ell_1$ minimization.
\newblock \emph{Journal of Fourier Analysis and Applications}, 14\penalty0
  (5):\penalty0 877--905, 2008.

\bibitem[Chartrand and Yin(2008)]{Chartrand:ICASSP08}
R.~Chartrand and W.~Yin.
\newblock Iteratively reweighted algorithms for compressive sensing.
\newblock In \emph{2008 IEEE International Conference on Acoustics, Speech and
  Signal Processing}, pages 3869--3872. IEEE, 2008.

\bibitem[Chen and Lerman(2009)]{Chen:IJCV09}
G.~Chen and G.~Lerman.
\newblock Spectral curvature clustering ({SCC}).
\newblock \emph{International Journal of Computer Vision}, 81\penalty0
  (3):\penalty0 317--330, 2009.
\newblock ISSN 0920-5691.

\bibitem[Costeira and Kanade(1998)]{Costeira:IJCV98}
J.~Costeira and T.~Kanade.
\newblock A multibody factorization method for independently moving objects.
\newblock \emph{International Journal of Computer Vision}, 29\penalty0
  (3):\penalty0 159--179, 1998.

\bibitem[Cox et~al.(2007)Cox, Little, and O'Shea]{Cox:2007}
D.A. Cox, J.~Little, and D.~O'Shea.
\newblock \emph{Ideals, Varieties, and Algorithms}.
\newblock Springer, 2007.

\bibitem[Daubechies et~al.(2010)Daubechies, DeVore, Fornasier, and
  G{\"u}nt{\"u}rk]{Daubechies:CPAM10}
I.~Daubechies, R.~DeVore, M.~Fornasier, and C.~S. G{\"u}nt{\"u}rk.
\newblock Iteratively reweighted least squares minimization for sparse
  recovery.
\newblock \emph{Communications on Pure and Applied Mathematics}, 63\penalty0
  (1):\penalty0 1--38, 2010.

\bibitem[Derksen(2007)]{Derksen:JPAA07}
H.~Derksen.
\newblock Hilbert series of subspace arrangements.
\newblock \emph{Journal of Pure and Applied Algebra}, 209\penalty0
  (1):\penalty0 91--98, 2007.

\bibitem[Draper et~al.(2014)Draper, Kirby, Marks, Marrinan, and
  Peterson]{draper2014flag}
B.~Draper, M.~Kirby, J.~Marks, T.~Marrinan, and C.~Peterson.
\newblock A flag representation for finite collections of subspaces of mixed
  dimensions.
\newblock \emph{Linear Algebra and its Applications}, 451:\penalty0 15--32,
  2014.

\bibitem[Elhamifar and Vidal(2009)]{Elhamifar:CVPR09}
E.~Elhamifar and R.~Vidal.
\newblock Sparse subspace clustering.
\newblock In \emph{{IEEE} Conference on Computer Vision and Pattern
  Recognition}, pages 2790--2797, 2009.

\bibitem[Elhamifar and Vidal(2010)]{Elhamifar:ICASSP10}
E.~Elhamifar and R.~Vidal.
\newblock Clustering disjoint subspaces via sparse representation.
\newblock In \emph{{IEEE} International Conference on Acoustics, Speech, and
  Signal Processing}, pages 1926--1929, 2010.

\bibitem[Elhamifar and Vidal(2013)]{Elhamifar:TPAMI13}
E.~Elhamifar and R.~Vidal.
\newblock Sparse subspace clustering: Algorithm, theory, and applications.
\newblock \emph{{IEEE} Transactions on Pattern Analysis and Machine
  Intelligence}, 35\penalty0 (11):\penalty0 2765--2781, 2013.

\bibitem[Favaro et~al.(2011)Favaro, Vidal, and Ravichandran]{Favaro:CVPR11}
P.~Favaro, R.~Vidal, and A.~Ravichandran.
\newblock A closed form solution to robust subspace estimation and clustering.
\newblock In \emph{IEEE Conference on Computer Vision and Pattern Recognition},
  pages 1801 --1807, 2011.

\bibitem[Fischler and Bolles(1981)]{RANSAC}
M.~A. Fischler and R.~C. Bolles.
\newblock {RANSAC} random sample consensus: A paradigm for model fitting with
  applications to image analysis and automated cartography.
\newblock \emph{Communications of the ACM}, 26:\penalty0 381--395, 1981.

\bibitem[Ghalieh and Hajja(1996)]{ghalieh199680}
K.~Ghalieh and M.~Hajja.
\newblock The fermat point of a spherical triangle.
\newblock \emph{The Mathematical Gazette}, 80\penalty0 (489):\penalty0
  561--564, 1996.

\bibitem[Grabner and Tichy(1993)]{Grabner:MathComp93}
P.~J. Grabner and R.F. Tichy.
\newblock Spherical designs, discrepancy and numerical integration.
\newblock \emph{Math. Comp.}, 60\penalty0 (201):\penalty0 327--336, 1993.
\newblock ISSN 0025-5718.
\newblock \doi{10.2307/2153170}.
\newblock URL \url{http://dx.doi.org/10.2307/2153170}.

\bibitem[Grabner et~al.(1997)Grabner, Klinger, and Tichy]{Grabner:MR97}
P.~J. Grabner, B.~Klinger, and R.F. Tichy.
\newblock Discrepancies of point sequences on the sphere and numerical
  integration.
\newblock \emph{Mathematical Research}, 101:\penalty0 95--112, 1997.

\bibitem[Gruber and Weiss(2004)]{Gruber-Weiss:CVPR04}
A.~Gruber and Y.~Weiss.
\newblock Multibody factorization with uncertainty and missing data using the
  {EM} algorithm.
\newblock In \emph{IEEE Conference on Computer Vision and Pattern Recognition},
  volume~I, pages 707--714, 2004.

\bibitem[Gurobi~Optimization(2015)]{gurobi}
Inc. Gurobi~Optimization.
\newblock Gurobi optimizer reference manual, 2015.
\newblock URL \url{http://www.gurobi.com}.

\bibitem[Hotelling(1933)]{Hotelling-1933}
H.~Hotelling.
\newblock Analysis of a complex of statistical variables into principal
  components.
\newblock \emph{Journal of Educational Psychology}, 24:\penalty0 417--441,
  1933.

\bibitem[Jolliffe(2002)]{Jolliffe-2002}
I.~Jolliffe.
\newblock \emph{Principal Component Analysis}.
\newblock Springer-Verlag, 2nd edition, 2002.

\bibitem[Kanatani(2001)]{Kanatani:ICCV01}
K.~Kanatani.
\newblock Motion segmentation by subspace separation and model selection.
\newblock In \emph{{IEEE} International Conference on Computer Vision},
  volume~2, pages 586--591, 2001.

\bibitem[Lerman and Zhang(2014)]{Lerman:CA14}
G.~Lerman and T.~Zhang.
\newblock $\ell_p$-recovery of the most significant subspace among multiple
  subspaces with outliers.
\newblock \emph{Constructive Approximation}, 40\penalty0 (3):\penalty0
  329--385, 2014.

\bibitem[Lerman et~al.(2015)Lerman, McCoy, Tropp, and Zhang]{Lerman:FCM15}
G.~Lerman, M.~B. McCoy, J.~A. Tropp, and T.~Zhang.
\newblock Robust computation of linear models by convex relaxation.
\newblock \emph{Foundations of Computational Mathematics}, 15\penalty0
  (2):\penalty0 363--410, 2015.

\bibitem[Liu et~al.(2010)Liu, Lin, and Yu]{Liu:ICML10}
G.~Liu, Z.~Lin, and Y.~Yu.
\newblock Robust subspace segmentation by low-rank representation.
\newblock In \emph{International Conference on Machine Learning}, pages
  663--670, 2010.

\bibitem[Liu et~al.(2013)Liu, Lin, Yan, Sun, and Ma]{Liu:TPAMI13}
G.~Liu, Z.~Lin, S.~Yan, J.~Sun, and Y.~Ma.
\newblock Robust recovery of subspace structures by low-rank representation.
\newblock \emph{IEEE Transactions on Pattern Analysis and Machine
  Intelligence}, 35\penalty0 (1):\penalty0 171--184, Jan 2013.

\bibitem[Lu et~al.(2012)Lu, Min, Zhao, Zhu, Huang, and Yan]{Lu:ECCV12}
C-Y. Lu, H.~Min, Z-Q. Zhao, L.~Zhu, D-S. Huang, and S.~Yan.
\newblock Robust and efficient subspace segmentation via least squares
  regression.
\newblock In \emph{European Conference on Computer Vision}, 2012.

\bibitem[Ma and Vidal(2005)]{Ma-Vidal:HSCC05}
Y.~Ma and R.~Vidal.
\newblock Identification of deterministic switched {ARX} systems via
  identification of algebraic varieties.
\newblock In \emph{Hybrid Systems: Computation and Control}, pages 449--465.
  Springer Verlag, 2005.

\bibitem[Ma et~al.(2007)Ma, Derksen, Hong, and Wright]{Ma:PAMI07}
Y.~Ma, H.~Derksen, W.~Hong, and J.~Wright.
\newblock Segmentation of multivariate mixed data via lossy coding and
  compression.
\newblock \emph{IEEE Transactions on Pattern Analysis and Machine
  Intelligence}, 29\penalty0 (9):\penalty0 1546--1562, 2007.

\bibitem[Pearson(1901)]{Pearson-1901}
K.~Pearson.
\newblock On lines and planes of closest fit to systems of points in space.
\newblock \emph{The London, Edinburgh and Dublin Philosphical Magazine and
  Journal of Science}, 2:\penalty0 559--572, 1901.

\bibitem[Qu et~al.(2014)Qu, Sun, and Wright]{Qu:NIPS14}
Q.~Qu, J.~Sun, and J.~Wright.
\newblock Finding a sparse vector in a subspace: Linear sparsity using
  alternating directions.
\newblock In \emph{Advances in Neural Information Processing Systems}, pages
  3401--3409, 2014.

\bibitem[Sampath and Shan(2010)]{Sampath:2010segmentation}
A.~Sampath and J.~Shan.
\newblock Segmentation and reconstruction of polyhedral building roofs from
  aerial lidar point clouds.
\newblock \emph{IEEE Transactions on Geoscience and Remote Sensing},
  48\penalty0 (3):\penalty0 1554--1567, 2010.

\bibitem[Silberman et~al.(2012)Silberman, Kohli, Hoiem, and
  Fergus]{Silberman:ECCV12}
N.~Silberman, P.~Kohli, D.~Hoiem, and R.~Fergus.
\newblock Indoor segmentation and support inference from rgbd images.
\newblock In \emph{European Conference on Computer Vision}, 2012.

\bibitem[Sp{\"a}th and Watson(1987)]{Spath:Numerische87}
H.~Sp{\"a}th and G.A. Watson.
\newblock On orthogonal linear $\ell_1$ approximation.
\newblock \emph{Numerische Mathematik}, 51\penalty0 (5):\penalty0 531--543,
  1987.

\bibitem[Sutton and McCallum(2006)]{SuttonMcCallum:MITPress06}
C.~Sutton and A.~McCallum.
\newblock \emph{An introduction to conditional random fields for relational
  learning}, volume~2.
\newblock Introduction to statistical relational learning. MIT Press, 2006.

\bibitem[Tipping and Bishop(1999{\natexlab{a}})]{Tipping-PPCA:99}
M.~Tipping and C.~Bishop.
\newblock Probabilistic principal component analysis.
\newblock \emph{Journal of the Royal Statistical Society}, 61\penalty0
  (3):\penalty0 611--622, 1999{\natexlab{a}}.

\bibitem[Tipping and Bishop(1999{\natexlab{b}})]{Tipping-mixtures:99}
M.~Tipping and C.~Bishop.
\newblock Mixtures of probabilistic principal component analyzers.
\newblock \emph{Neural Computation}, 11\penalty0 (2):\penalty0 443--482,
  1999{\natexlab{b}}.

\bibitem[Tsakiris and Vidal(2014)]{Tsakiris:Asilomar14}
M.~C. Tsakiris and R.~Vidal.
\newblock Abstract algebraic-geometric subspace clustering.
\newblock In \emph{Asilomar Conference on Signals, Systems and Computers},
  2014.

\bibitem[Tsakiris and Vidal(2017{\natexlab{a}})]{Tsakiris:AffinePAMI17}
M.~C. Tsakiris and R.~Vidal.
\newblock Algebraic clustering of affine subspaces.
\newblock \emph{{IEEE} Transactions on Pattern Analysis and Machine
  Intelligence}, 2017{\natexlab{a}}.

\bibitem[Tsakiris and Vidal(2017{\natexlab{b}})]{Tsakiris:DPCP-ArXiv17}
M.~C. Tsakiris and R.~Vidal.
\newblock Dual principal component pursuit.
\newblock \emph{arXiv:1510.04390v2 [cs.CV]}, 2017{\natexlab{b}}.

\bibitem[Tsakiris and Vidal(2017{\natexlab{c}})]{Tsakiris:SIAM17}
M.~C. Tsakiris and R.~Vidal.
\newblock Filtrated algebraic subspace clustering.
\newblock \emph{{SIAM} Journal on Imaging Sciences}, 10\penalty0 (1):\penalty0
  372--415, 2017{\natexlab{c}}.

\bibitem[Tsakiris and Vidal(2015{\natexlab{a}})]{Tsakiris:DPCPICCV15}
M.C. Tsakiris and R.~Vidal.
\newblock Dual principal component pursuit.
\newblock In \emph{ICCV Workshop on Robust Subspace Learning and Computer
  Vision}, pages 10--18, 2015{\natexlab{a}}.

\bibitem[Tsakiris and Vidal(2015{\natexlab{b}})]{Tsakiris:FSASCICCV15}
M.C. Tsakiris and R.~Vidal.
\newblock Filtrated spectral algebraic subspace clustering.
\newblock In \emph{ICCV Workshop on Robust Subspace Learning and Computer
  Vision}, pages 28--36, 2015{\natexlab{b}}.

\bibitem[Tseng(2000)]{Tseng:JOTA00}
P.~Tseng.
\newblock Nearest $q$-flat to $m$ points.
\newblock \emph{Journal of Optimization Theory and Applications}, 105\penalty0
  (1):\penalty0 249--252, 2000.

\bibitem[Vidal(2011)]{Vidal:SPM11-SC}
R.~Vidal.
\newblock Subspace clustering.
\newblock \emph{{IEEE} Signal Processing Magazine}, 28\penalty0 (3):\penalty0
  52--68, March 2011.

\bibitem[Vidal and Favaro(2014)]{Vidal:PRL14}
R.~Vidal and P.~Favaro.
\newblock Low rank subspace clustering {(LRSC)}.
\newblock \emph{Pattern Recognition Letters}, 43:\penalty0 47--61, 2014.

\bibitem[Vidal and Hartley(2008)]{Vidal:PAMI08}
R.~Vidal and R.~Hartley.
\newblock Three-view multibody structure from motion.
\newblock \emph{{IEEE} Transactions on Pattern Analysis and Machine
  Intelligence}, 30\penalty0 (2):\penalty0 214--227, February 2008.

\bibitem[Vidal et~al.(2003)Vidal, Ma, and Sastry]{Vidal:CVPR03-gpca}
R.~Vidal, Y.~Ma, and S.~Sastry.
\newblock {Generalized Principal Component Analysis (GPCA}).
\newblock In \emph{{IEEE} Conference on Computer Vision and Pattern
  Recognition}, volume~I, pages 621--628, 2003.

\bibitem[Vidal et~al.(2005)Vidal, Ma, and Sastry]{Vidal:PAMI05}
R.~Vidal, Y.~Ma, and S.~Sastry.
\newblock {Generalized Principal Component Analysis (GPCA)}.
\newblock \emph{{IEEE} Transactions on Pattern Analysis and Machine
  Intelligence}, 27\penalty0 (12):\penalty0 1--15, 2005.

\bibitem[Vidal et~al.(2006)Vidal, Ma, Soatto, and
  Sastry]{Vidal:IJCV06-multibody}
R.~Vidal, Y.~Ma, S.~Soatto, and S.~Sastry.
\newblock Two-view multibody structure from motion.
\newblock \emph{International Journal of Computer Vision}, 68\penalty0
  (1):\penalty0 7--25, 2006.

\bibitem[Vidal et~al.(2008)Vidal, Tron, and Hartley]{Vidal:IJCV08}
R.~Vidal, R.~Tron, and R.~Hartley.
\newblock Multiframe motion segmentation with missing data using
  {PowerFactorization}, and {GPCA}.
\newblock \emph{International Journal of Computer Vision}, 79\penalty0
  (1):\penalty0 85--105, 2008.

\bibitem[Vidal et~al.(2016)Vidal, Ma, and Sastry]{Vidal:GPCAbook}
R.~Vidal, Y.~Ma, and S.~Sastry.
\newblock \emph{Generalized Principal Component Analysis}.
\newblock Springer Verlag, 2016.

\bibitem[von Luxburg(2007)]{vonLuxburg:StatComp2007}
U.~von Luxburg.
\newblock A tutorial on spectral clustering.
\newblock \emph{Statistics and Computing}, 17, 2007.

\bibitem[Wang et~al.(2015)Wang, Dicle, Sznaier, and Camps]{WangCamps:CVPR15}
Y.~Wang, C.~Dicle, M.~Sznaier, and O.~Camps.
\newblock Self scaled regularized robust regression.
\newblock In \emph{Proceedings of the IEEE Conference on Computer Vision and
  Pattern Recognition}, pages 3261--3269, 2015.

\bibitem[Zhang et~al.(2009)Zhang, Szlam, and Lerman]{Zhang:WSM09}
T.~Zhang, A.~Szlam, and G.~Lerman.
\newblock Median $k$-flats for hybrid linear modeling with many outliers.
\newblock In \emph{Workshop on Subspace Methods}, pages 234--241, 2009.

\end{thebibliography}


\end{document}